\documentclass{article}


\PassOptionsToPackage{numbers}{natbib}


\usepackage[final]{neurips_2021}


\usepackage[utf8]{inputenc} 
\usepackage[T1]{fontenc}    
\usepackage{hyperref}       
\usepackage{url}            
\usepackage{booktabs}       
\usepackage{amsfonts}       
\usepackage{nicefrac}       
\usepackage{microtype}      
\usepackage{xcolor}         

\usepackage{amsmath}
\usepackage{amsthm}
\usepackage{amssymb}
\usepackage{mathrsfs}

\usepackage{algorithm}
\usepackage{algorithmic}

\newtheorem{assumption}{Assumption}
\newtheorem{lemma}{Lemma}
\newtheorem{theorem}{Theorem}
\newtheorem{remark}{Remark}
\newtheorem{corollary}{Corollary}
\newtheorem{definition}{Definition}
\newtheorem{proposition}{Proposition}

\usepackage{graphicx}
\usepackage{subfigure}
\usepackage{wrapfig}

\usepackage{multirow}

\usepackage{float}

\title{Stochastic Anderson Mixing for Nonconvex Stochastic Optimization}

%

\author{%
  Fuchao Wei \\
  Department of Computer Science and Technology\\
  Tsinghua University\\
  Beijing, China 100084 \\
  \texttt{wfc16@mails.tsinghua.edu.cn} \\
  \AND
  Chenglong Bao \\
  Yau Mathematical Science Center\\
  Tsinghua University\\
  Beijing, China 100084 \\
  clbao@mail.tsinghua.edu.cn \\
  \And
  Yang Liu \\
  Department of Computer Science and Technology \\
  Tsinghua University \\
  Beijing, China 100084 \\
  \texttt{liuyang2011@tsinghua.edu.cn} \\  
}

\begin{document}

\maketitle

\begin{abstract}

Anderson mixing (AM) is an acceleration method for fixed-point iterations. Despite its success and wide usage in scientific computing, the convergence theory of AM remains unclear, and its applications to machine learning problems are not well explored. In this paper, by introducing damped projection and adaptive regularization to classical AM, we propose a Stochastic Anderson Mixing (SAM) scheme to solve nonconvex stochastic optimization problems. Under mild assumptions, we establish the convergence theory of SAM, including the almost sure convergence to stationary points and the worst-case iteration complexity. Moreover, the complexity bound can be improved when randomly choosing an iterate as the output. To further accelerate the convergence, we incorporate a variance reduction technique into the proposed SAM. We also propose a preconditioned mixing strategy for SAM which can empirically achieve faster convergence or better generalization ability. Finally, we apply the SAM method to train various neural networks including the vanilla CNN, ResNets, WideResNet, ResNeXt, DenseNet and RNN. Experimental results on image classification and language model demonstrate the advantages of our method.

\end{abstract}

\section{Introduction}
 Stochastic optimization is important in various areas such as statistics~\citep{friedman2001elements} and machine learning~\citep{Bottou2018Optim,shalev2014understanding}, power systems~\citep{huang2018stochastic}. In this paper, we consider  the following  stochastic optimization problem:
\begin{align}
	\min\limits_{x \in \mathbb{R}^d} f(x)=\mathbb{E}_{\xi}\left[ F(x;\xi) \right], \label{eq:problem}
\end{align}
where $ F: \mathbb{R}^d \times \mathbb{R}^{d_{\xi}} \rightarrow \mathbb{R}$ is continuously differentiable and possibly nonconvex and the random variable $\xi \in \mathbb{R}^{d_{\xi}} $ may follow an unknown probability distribution.  It is assumed that only noisy information about the  gradient of $f$ is available through calls to some {\em stochastic first-order oracle} ($\mathcal{SFO}$). One special case of \eqref{eq:problem} is the {\em empirical risk minimization} problem:
\begin{align}
	\min\limits_{x \in \mathbb{R}^d} f (x) \overset{\text{def}}{=} \frac{1}{T}\sum_{i=1}^{T}f_{\xi_i}(x), \label{eq:erm}
\end{align}
where $f_{\xi_i}: \mathbb{R}^d \rightarrow \mathbb{R}$ is the loss function corresponding to the $i$-th data sample and $T$ denotes the number of data samples. $ T $ can be extremely large such that it prohibits the computation of the full gradient $\nabla f$. Thus designing efficient and effective numerical algorithm for solving problem \eqref{eq:problem} or \eqref{eq:erm} with rigorous theoretical analysis is a challenging task.

One classical approach for solving \eqref{eq:problem} is the stochastic gradient descent (SGD) method \citep{robbins1951stochastic}. It mimics GD method by using noisy gradients and exhibits optimal convergence rate for some strongly convex stochastic problems~\citep{chung1954stochastic,sacks1958asymptotic}. Some early related works  of SGD in convex optimization can be found in \citep{nemirovski2009robust,polyak1990new,polyak1992acceleration}. For nonconvex cases, \citet{ghadimi2013stochastic} propose a randomized stochastic gradient (RSG) method that randomly selects an solution $\bar{x}$ from previous iterates. To ensure  $ \bar{x} $ satisfying
$ \mathbb{E}\left[ \|\nabla f(\bar{x}) \right\|^2_2] \leq \epsilon $, the total number of $ \mathcal{SFO} $-calls needed by RSG is $ O\left( \epsilon^{-2} \right)$.  
 Spurred by the development of machine learning,  adaptive learning rate methods are proposed to accelerate SGD, e.g. Adagrad \cite{duchi2011adaptive}, RMSprop \cite{tieleman2012lecture} and Adam \cite{kingma2014adam}, though the convergence analyses of these methods only apply to convex cases. 
There are also many works on second-order optimization methods \citep{schraudolph2007stochastic,martens2010deep, mokhtari2020stochastic}.
 One notable work is the framework of stochastic quasi-Newton (SQN) method proposed by \citet{wang2017stochastic}, which  covers a class of SQN methods and has theoretical guarantees in nonconvex stochastic optimization. However,  these second-order methods usually demand more gradient evaluations in every iteration and less noisy gradient information to achieve actual acceleration \citep{Bottou2018Optim}. 
 
 In this paper, we develop a novel second-order method based on Anderson mixing (AM) \citep{anderson1965iterative},  a sequence acceleration method \citep{brezinski2018shanks} in scientific computing. AM is widely used to accelerate the slow convergence of nonlinear fixed-point iterations arisen  in computational physics and quantum chemistry, e.g., the Self-Consistent Field (SCF) iteration in electronic structure calculations \citep{garza2012comparison,cances2000can}, where the function evaluation is costly. AM is appropriate to high dimensional problem since it extrapolates a new iterate using a limit number of historical iterates. It turns out that AM is closely related to multisecant quasi-Newton methods in nonlinear problems \citep{fang2009two,brezinski2018shanks} or the generalized minimal residual (GMRES) method \citep{saad1986gmres} in linear problems \citep{walker2011anderson,potra2013characterization}.
 
 Inspired by the great success of AM in accelerating fixed-point iterations, it is natural to ask whether AM can be applied to accelerate nonlinear optimization since the gradient descent (GD) method for searching a saddle point in unconstrained optimization is a fixed-point iteration when using constant stepsize. This idea has been explored in \citep{scieur2016regularized,scieur2017nonlinear}, but the proposed  Regularized Nonlinear Acceleration (RNA) method is built on the minimal polynomial extrapolation (MPE)  approach \cite{brezinski2018shanks}, a sequence transformation method that has subtle difference from AM. Also, their methods rely heavily on the contraction assumption of the fixed-point map and the strong convexity. For AM, although current research has proved linear convergence of AM for fixed-point iterations under some conditions \citep{toth2015convergence,toth2017local,bian2021anderson}, 
  there exists no version of AM that guarantees convergence for nonconvex optimization, let alone stochastic optimization. 

In this paper, we develop a stochastic extension of AM. 
 Due to the nonconvexity and noise inside the problems and lack of line search or trust-region, a straightforward migration of AM to nonconvex stochastic optimization is infeasible.
 As a result, we make several fundamental modifications to AM. We highlight the main contributions of our works as follows:
\begin{enumerate}
    \item We develop a stochastic version of AM, namely Stochastic Anderson Mixing (SAM), by introducing {\em damped projection} and {\em adaptive regularization}. We prove its almost sure convergence to a stationary point and analyze its work complexity. When a randomly chosen iterate $ x_R $ is returned as the output of SAM, we prove that the worst-case $ \mathcal{SFO}$-calls complexity to guarantee 
$ \mathbb{E}\left[ \| \nabla f(x_R) \|_2^2 \right]\leq \epsilon $ is $ O\left( \epsilon^{-2} \right) $. (See Theorem~\ref{them:nonconvexStochastic} - \ref{them:random_output}.)
    \item We give a variance reduced extension of SAM by borrowing the stochastic variance reduced gradient (SVRG) \citep{johnson2013accelerating} technique and analyze its $ \mathcal{SFO} $-calls complexity. (See Theorem~\ref{them:sam_vr}.) 
    We also propose a  preconditioned mixing strategy for AM and obtain the preconditioned SAM method which can empirically converge faster or generalize better. (See Section~\ref{subsec:enhance}.)
    \item Extensive experiments on training Convolutional Neural Network (CNN), ResNet, WideResNet, ResNeXt, DenseNet, and  Recurrent Neural Network (RNN) on different tasks and datasets show the faster convergence or better generalization ability of our method compared with the state-of-the-art methods. (See Section~\ref{sec:expr}.)
\end{enumerate}


\section{Methodology}
 \subsection{Anderson Mixing}
 AM is proposed for acceleration of fixed-point iterations. We assume the fixed-point iteration is $ x_{k+1} = g(x_k)\overset{\text{def}}{=}x_k+r_k $, where $ r_k \overset{\text{def}}{=} -\nabla f(x_k) $. Then $ g(x_k)-x_k =r_k $. Here, we adopt the description of AM in \citep{fang2009two,walker2011anderson}.
 Let $ \Delta $ denote the forward difference operator, say, $ \Delta x_k = x_{k+1}-x_k $.  Let $ X_k $ and $ R_k $  record the most recent $ m  ( m\leq k )$ iterations:
\begin{align}
X_k = [
\Delta x_{k-m} , \Delta x_{k-m+1} , \cdots , \Delta x_{k-1} ],
R_k = [
\Delta r_{k-m} , \Delta r_{k-m+1} , \cdots , \Delta r_{k-1} ]. \label{Xk_Rk}
\end{align}
 AM can be decoupled into two steps. We call them {\em the projection step} and {\em the mixing step}:
\begin{subequations}
\begin{align}
\bar{x}_k &= x_k - X_k\Gamma_k,  \quad\mbox{(Projection step),} \label{xbar} \\
x_{k+1} &= \bar{x}_k + \beta_k\bar{r}_k, \quad~\mbox{ (Mixing step),} \label{xnext}
\end{align}
\end{subequations}
 where  $ \beta_k $ is the mixing parameter, and $ \bar{r}_k \overset{\text{def}}{=} r_k - R_k\Gamma_k $ is reminiscent of {\em extragradient} \citep{korpelevich1976extragradient}.  $ \Gamma_k $ is determined by solving 
\begin{equation}
\Gamma_k = \mathop{\arg\min}_{\Gamma \in \mathbb{R}^m}\| r_k - R_k\Gamma \|_2.\label{lsq}
\end{equation} 
Combining (\ref{xbar}) and (\ref{xnext}), we obtain the full form of AM
 \citep{fang2009two,walker2011anderson,brezinski2018shanks,pratapa2016anderson}:
\begin{equation}
x_{k+1} = x_k + \beta_k r_k - \left( X_k + \beta_k R_k \right)\Gamma_k. \label{eq:aa}
\end{equation}
\begin{remark} \label{remark:minimal}
To see the rationality of AM, we assume $ f $ is  twice continuously differentiable. 
  Then a quadratic approximation of $ f $ implies $ \nabla^2 f(x_k) \left(x_j-x_{j-1}\right) \approx \nabla f(x_j) - \nabla f(x_{j-1}) $ in a local small region around $ x_k $, so it is reasonable to assume $ R_k \approx -\nabla^2f(x_k)X_k $. Thus we see 
$ \| r_k - R_k\Gamma \|_2 \approx \| r_k + \nabla^2f(x_k) X_k\Gamma \|_2 $. Hence,  we can recognize (\ref{lsq})  as solving 
$ \nabla^2 f(x_k)p_k = \nabla f(x_k) $ in a least-squares sense, where $ p_k =X_k\Gamma_k $. When the quadratic approximation is exact, solving (\ref{lsq}) is a minimal residual procedure, thus being verified as a {\em residual projection} method \citep{saad2003iterative}. From this viewpoint, the fixed-point assumption is unnecessary as long as  $ R_k \approx -\nabla^2f(x_k)X_k $. 
Moreover, let $H_k$  be the solution to a constrained optimization problem \citep{fang2009two}:
\begin{align}
\min\limits_{H_k}\| H_k-\beta_k I \|_F \text{ subject to } H_kR_k = -X_k,\label{mulsecant}
\end{align}
then iterate \eqref{eq:aa} is $ x_{k+1} = x_k + H_kr_k $, which is indeed a multisecant quasi-Newton method.
Note that a key simplification in AM is using differences of historical gradients $ R_k $  to approximate $ -\nabla^2f(x_k)X_k $, which reduces the heavy cost to compute Hessian-vector products~\citep{byrd2016stochastic,gower2019rsn,huang2020span}.

\end{remark}
 
\subsection{Stochastic Anderson Mixing} \label{sec:sam}
 We describe our method Stochastic Anderson Mixing (SAM) in this section.  
 At the $k$-th iteration, let $ S_k\subseteq \left[T\right] \overset{\text{def}}{=} \{ 1,2,\ldots,T \} $ be the sampled mini-batch and the corresponding objective function value is $f_{S_k}\left(x_k\right) = \frac{1}{\vert S_k \vert}\sum_{i\in S_k}f_{\xi_i}(x_k)$. Then $r_k\overset{\text{def}}{=} -\nabla f_{S_k}\left(x_k\right) $ and the noisy $ R_k $ is defined correspondingly (cf. (\ref{Xk_Rk})). Due to the instability and inaccurate estimation of $R_k$, we stabilize the projection step by proposing {\em damped projection} and {\em adaptive regularization} techniques. 
 Algorithm~\ref{alg:sam} is a sketch of our method. 
 We elaborate the mechanism of this algorithm in the following subsections. 
 
\begin{algorithm}[tb]
\caption{Stochastic Anderson Mixing (SAM)}
\label{alg:sam}
\textbf{Input}: $ x_0\in\mathbb{R}^d, m=10, \alpha_k=1, \beta_k=1,  \mu=10^{-8}, max\_iter>0 $.\\
\textbf{Output}: $ x\in\mathbb{R}^d $
\begin{algorithmic}[1] 
\FOR{$k = 0,1,\dots, max\_iter$ }
\STATE $ r_k = -\nabla f_{S_k}\left(x_k\right) $ 
\IF {$k = 0$}
\STATE {$ x_{k+1} = x_k+\beta_kr_k $}
\ELSE
\STATE $ m_k = \min\{m,k\} $
\STATE $ X_k = [
\Delta x_{k-m_k} , \Delta x_{k-m_k+1} , \cdots , \Delta x_{k-1} ] $ 
\STATE $ R_k = [
\Delta r_{k-m_k} , \Delta r_{k-m_k+1} , \cdots , \Delta r_{k-1} ] $ 
\STATE Check Condition~(\ref{ineq:check_alpha_k}) and use smaller $\alpha_k$ if (\ref{ineq:check_alpha_k}) is violated.
\STATE $ x_{k+1} = x_k + \beta_k r_k - \left( \alpha_k X_k + \alpha_k\beta_k R_k \right) \left( R_k^{\mathrm{T}}R_k +\delta_k X_k^{\mathrm{T}}X_k\right)^{\dagger} R_k^{\mathrm{T}}r_k $ 
\ENDIF
\STATE Apply learning rate schedule of $ \alpha_k, \beta_k $
\ENDFOR

\STATE \textbf{return} $ x_k $
\end{algorithmic}
\end{algorithm}

\paragraph{Damped projection}
 From Remark~\ref{remark:minimal}, we see the determination of $ \Gamma_k $  in (\ref{lsq}) relies on the local quadratic approximation of (\ref{eq:erm}), which can be rather inexact in general nonlinear optimization. To improve the stability, we propose a {\em damped projection} method for \eqref{xbar}. Let $\alpha_k$ be the damping parameter, we obtain $\bar x_k$ via 
\begin{align}
\bar{x}_k = (1-\alpha_k)x_k+\alpha_k(x_k-X_k\Gamma_k) 
 =x_k - \alpha_k X_k\Gamma_k, \label{xbar_new}
\end{align}
 Combining (\ref{xbar_new}) and (\ref{xnext})  and noting that $ \bar{r}_k=r_k-\alpha_kR_k\Gamma_k $, the new iterate $ x_{k+1} $ is given by
\begin{equation}
x_{k+1} = x_k + \beta_k r_k - \alpha_k\left(  X_k + \beta_k R_k \right)\Gamma_k.
\label{eq:sam}
\end{equation} 
 It is worth noting that $\beta_k$ and $ \alpha_k$ in \eqref{eq:sam} behave like stepsize or learning rate  in SGD, and the extra term $  \left( \alpha_k X_k + \alpha_k\beta_k R_k \right)\Gamma_k $ can be viewed as a generalized momentum term.
 
\paragraph{Adaptive regularization}
Since $R_k$ may be rank deficient and no safeguard method is used in AM, the least square problem \eqref{lsq} can be unstable. A remedy is to add {\em regularization}~\citep{brezinski2020shanks} to \eqref{lsq}. One well known choice is the Tikhonov regularization introduced in~\citep{scieur2016regularized,scieur2020regularized}, which can be viewed as  forcing $ \|\Gamma_k \|_2 $ to be small \cite{scieur2020regularized}, leading to a penalty term to  (\ref{lsq}):
 \begin{equation}
\Gamma_k = \mathop{\arg\min}_{\Gamma \in \mathbb{R}^m}\| r_k - R_k\Gamma \|_2^2 + \delta \| \Gamma \|_2^2 ,\label{rlsq}
\end{equation} 
where $ \delta \geq 0 $ is the penalized constant. The solution of \eqref{rlsq} is 
$\Gamma_k = \left( R_k^{\mathrm{T}}R_k +\delta I\right)^{\dagger}R_k^{\mathrm{T}}r_k, $
where ``$\dagger$" denotes the Penrose-Moore inverse. We name this regularized variant of AM as RAM. 
 
 Here, we propose a new regularization, namely {\em adaptive regularization}, to better suit the  stochastic optimization. Since $ -X_k \Gamma_k =  \bar{x}_k - x_k  $ denotes the update from $ x_k $ to $ \bar{x}_k $, a large magnitude of $ \| X_k\Gamma_k \|_2 $ tends to make the intermediate step $ \bar{x}_k $ overshoot the trust region  around $ x_k $. Thus it is more reasonable to force $ \| X_k\Gamma_k \|_2 $ rather than $ \| \Gamma_k \|_2 $ to be small. We formulate this idea as 
 \begin{equation}
\min\limits_{\Gamma}\| r_k - R_k \Gamma  \|_2^2 + \delta_k \| X_k \Gamma  \|_2^2,\label{adalsq}
\end{equation}
where $ \delta_k \geq 0 $ is a variable determined in each iteration. Explicitly  solving (\ref{adalsq}) leads to
\begin{equation}
\Gamma_k = \left( R_k^{\mathrm{T}}R_k +\delta_k X_k^{\mathrm{T}}X_k\right)^{\dagger} R_k^{\mathrm{T}}r_k, \label{AdaRAA:Gamma_k}
\end{equation}
We call AM with this regularization and damped projection as SAM, i.e. the algorithm given in Algorithm~\ref{alg:sam}. 
The choice of $ \delta_k $ should reflect the curvature change in the vicinity of $ x_k $, so we give a special choice of $ \delta_k $:
\begin{equation}
\delta_k = \max\left\lbrace \frac{c_1\| r_k \|_2^2}{\| x_k - x_{k-1}  \|_2^2+\epsilon}, c_2 \beta_k^{-2}\right\rbrace  ,\label{deltak}
\end{equation}
where $c_1, c_2 $ are constants, $ \epsilon $ is a small constant to prevent the denominator from being zero.   Such form of $ \delta_k $ is reminiscent of AdaDelta \cite{zeiler2012adadelta}. A large $ \|r_k\|_2 $ indicates a potential dramatic change in landscape, suggesting using a precautious tiny stepsize. The denominator in (\ref{deltak}) behaves like {\em annealing}. In Secant Penalized BFGS \cite{irwin2020secant}, this term measures the noise in gradients. In Section~\ref{anal2}, we will further explain the rationality of (\ref{deltak}). We name this new method as AdaSAM.
 
\paragraph{Positive definiteness.}
 From \eqref{eq:sam} and \eqref{AdaRAA:Gamma_k},  the SAM update is $ x_{k+1} = x_k+H_kr_k$, where
$  H_k = \beta_k I - \alpha_k Y_k Z_k^{\dagger}R_k^{\mathrm{T}}$, $ Y_k = X_k+\beta_kR_k $, $ Z_k = R_k^{\mathrm{T}}R_k +\delta_k X_k^{\mathrm{T}}X_k $.  $ H_k $ is generally not symmetric.  A critical condition for the convergence analysis of SAM is the {\em positive definiteness} of $ H_k $, i.e. 
 \begin{equation}
   p_k^{\mathrm{T}}H_kp_k \geq \beta_k\mu \|p_k\|_2^2, \quad \forall p_k \in \mathbb{R}^d, \label{ineq:pos_Hk}
 \end{equation}
 where $ \mu\in (0,1) $ is a constant. Next, we give an approach to guarantee it.
 
  Let $ \lambda_{min}(\cdot) $ denote the smallest eigenvalue, $ \lambda_{max}(\cdot)$ denote the largest eigenvalue. Since $ p_k^{\mathrm{T}}H_kp_k = \frac{1}{2}p_k^{\mathrm{T}} (H_k+H_k^{\mathrm{T}})p_k $, Condition (\ref{ineq:pos_Hk}) is equivalent to $ \lambda_{min}\left( \frac{1}{2} \left(H_k+H_k^{\mathrm{T}} \right) \right) \geq \beta_k\mu $.
  With some simple algebraic operations, we obtain  $ \lambda_{min}\left( \frac{1}{2}\left(H_k+H_k^\mathrm{T}\right) \right) =\beta_k - \frac{1}{2}\alpha_k\lambda_{max}(Y_kZ_k^{\dagger}R_k^\mathrm{T}+R_kZ_k^{\dagger}Y_k^\mathrm{T})$. Let $ \lambda_k \overset{\text{def}}{=}\lambda_{max}(Y_kZ_k^{\dagger}R_k^\mathrm{T}+R_kZ_k^{\dagger}Y_k^\mathrm{T})$, then
   Condition (\ref{ineq:pos_Hk}) is equivalent to 
  \begin{equation}
 \alpha_k\lambda_k \leq
 2\beta_k(1-\mu). \label{ineq:check_alpha_k}
 \end{equation}
 To check Condition~(\ref{ineq:check_alpha_k}),  note that
 \begin{align}
\lambda_k
 = \lambda_{max} \left( \begin{pmatrix}
 Y_k & R_k
 \end{pmatrix} 
 \begin{pmatrix}
 0 & Z_k^{\dagger} \\
 Z_k^{\dagger} & 0 
 \end{pmatrix}
 \begin{pmatrix}
 Y_k^\mathrm{T} \\
 R_k^\mathrm{T}
 \end{pmatrix} \right) 
 =\lambda_{max}\left( 
 \begin{pmatrix}
 Y_k^\mathrm{T} \\
 R_k^\mathrm{T}
 \end{pmatrix} 
 \begin{pmatrix}
 Y_k & R_k
 \end{pmatrix}
 \begin{pmatrix}
 0 & Z_k^{\dagger} \\
 Z_k^{\dagger} & 0 
 \end{pmatrix} 
 \right). \label{eig}
 \end{align}
 Since $ \begin{pmatrix}
 Y_k^\mathrm{T} \\
 R_k^\mathrm{T}
 \end{pmatrix} 
  \begin{pmatrix}
 Y_k & R_k
 \end{pmatrix}, 
 \begin{pmatrix}
 0 & Z_k^{\dagger} \\
 Z_k^{\dagger} & 0 
 \end{pmatrix} \in \mathbb{R}^{2m\times 2m} $, and $ m \ll d $,  $ \lambda_k $ can be computed efficiently, say, using an eigenvalue decomposition algorithm with the time complexity of $ O(m^3)$. This cost is negligible compared with those  to form $ X_k^\mathrm{T}X_k, R_k^\mathrm{T}R_k $, which need $ O(m^2d) $ flops. After that, to guarantee the positive definiteness, we check if $ \alpha_k $  satisfies (\ref{ineq:check_alpha_k})
 and use a smaller $ \alpha_k $ if necessary. 

\subsection{Enhancement of Stochastic Anderson Mixing}  \label{subsec:enhance}
  To further enhance SAM, we introduce two techniques, namely {\em variance reduction} and {\em preconditioned mixing}. 
 
\textbf{Variance reduction.}
 Variance reduction techniques are proved to be effective  if a scan over the full dataset is feasible \citep{allen2016variance,reddi2016stochastic}. Similar to SdLBFGS-VR proposed in \citep{wang2017stochastic}, we also incorporate SVRG to SAM, which we call SAM-VR (Algorithm~\ref{alg:sam-vr}), for solving (\ref{eq:erm}). To simplify the description, we denote one iteration of SAM in Algorithm \ref{alg:sam} as $SAM\_update(x_k,g_k)$, i.e. one update of $ x_k $ given the gradient estimate $ g_k$. 
 
\textbf{Preconditioned mixing.}
 Motivated by the great success of preconditioning in solving linear systems and eigenvalue computation \citep{golub2013matrix}, we present a preconditioned version of SAM. The key modification is the mixing step (\ref{xnext}). We replace the simple mixing $ x_{k+1} = \bar{x}_k+\beta_k\bar{r}_k $ with $ x_{k+1} = \bar{x}_k+M_{k}^{-1}\bar{r}_k $ where $ M_{k} $ approximates the Hessian. Combining it with (\ref{xbar_new}) and (\ref{AdaRAA:Gamma_k}), we obtain 
 \begin{align}
 x_{k+1} = x_{k}+\left( M_k^{-1}- \alpha_k\left( X_k+M_k^{-1}R_k\right)\left( R_k^{\mathrm{T}}R_k +\delta_k X_k^{\mathrm{T}}X_k \right)^{\dagger}R_k^{\mathrm{T}} \right)r_{k}. \label{psam}
 \end{align}
 Setting $\alpha_k\equiv 1$ and $ \delta_k\equiv 0 $, (\ref{psam})  reduces to a preconditioned AM update, which can be recast as the solution to the constrained optimization problem:
$ \min\limits_{H_k}\| H_k-M_k^{-1} \|_F \text{ subject to } H_kR_k = -X_k,$ 
 a direct extension of (\ref{mulsecant}). This preconditioned version of AM is related to quasi-Newton updates \citep{gower2017randomized}. We also point out that the action of $ M_k^{-1} $ can be implicitly done via an update of any optimizer at hand, i.e. $ x_{k+1} = optim(\bar{x}_k,-\bar{r}_k) $, where $ optim $ updates $ \bar{x}_k$ given the {\em extragradient} $-\bar{r}_k$. If $ R_k = -\nabla^2f(x_k)X_k $, which is the case in deterministic quadratic optimization, the projection step in preconditioned AM is still a minimal residual procedure.

 \begin{algorithm}[tb]
\caption{Stochastic Anderson Mixing with variance reduction (SAM-VR)}
\label{alg:sam-vr}
\textbf{Input}: {$ \tilde{x}_0\in\mathbb{R}^d $; $ \beta_t^k, \alpha_t^k, \delta_t^k $  for $SAM\_update(x_{t}^{k},g_t^k)$; Batch size $ n\geq 1 $ }.\\
\textbf{Output}: $ x \in\mathbb{R}^d $
\begin{algorithmic}[1] 
\FOR{$k = 0,\dots, N-1$ }
\STATE $ x_0^k = \tilde{x}_k $ 
\STATE $ \nabla f(\tilde{x}_k) = \frac{1}{T}\sum_{i=1}^{T}\nabla f_{\xi_i}(\tilde{x}_k) $
 \FOR{$ t = 0,\dots,q-1 $}
 \STATE Sample a subset $ \mathcal{K}\subseteq\left[ T\right] $ with $\vert \mathcal{K} \vert = n$
 \STATE {$ g_t^k = \nabla f_\mathcal{K}(x_t^k) -\nabla f_\mathcal{K}(\tilde{x}_k)  + \nabla f(\tilde{x}_k)$ where $ \nabla f_\mathcal{K}(x_t^k) = \frac{1}{\vert \mathcal{K} \vert }\sum_{i\in\mathcal{K}}\nabla f_{\xi_i}(x_t^{k}) $}
 \STATE {$ x_{t+1}^{k} = SAM\_update(x_{t}^{k},g_t^k) $}
 \ENDFOR
\STATE {Set $ \tilde{x}_{k+1} = x_q^{k} $}
\ENDFOR
\STATE {\textbf{return} Iterate $ x $ chosen uniformly random from $ \lbrace\lbrace x_t^{k} \rbrace_{t=0}^{q-1} \rbrace_{k=0}^{N-1} $ }
\end{algorithmic}
\end{algorithm}  

 \begin{remark} \label{remark:impl}
Similar to SdLBFGS \citep{wang2017stochastic}, SAM needs another $ 2md $ space to store $ X_k $ and $ R_k $. The extra main computational cost for SAM compared with SGD is $ O(m^2d)+O(m^3)$, which accounts for the matrix multiplications ($ \mathbb{R}^{m\times d} \times \mathbb{R}^{d\times m} $) and matrix decomposition of  a small $ \mathbb{R}^{m\times m} $ matrix.  Since dense matrix multiplication can be ideally parallelized  and the cost of gradient evaluations often dominates the computing, the benefit from SAM pays for this extra cost. Besides, we incorporate sanity check of the positive definiteness, alternating iteration and moving average in our implementation and the details are given in the supplementary materials.
\end{remark}

\section{Theory}\label{sec:theory}
 In this section, we give the main results about the convergence and  complexity of SAM. All the proofs are left to the Appendix.   It is worth noting that 
 since the approximate Hessian $ H_k $  of SAM  may depend on the data samples $\lbrace \xi_i\rbrace_{i\in S_k}$ of current mini-batch, which violates the assumption AS.4 in \citep{wang2017stochastic}, the framework of \citep{wang2017stochastic} does not apply. 
 
 We first give  assumptions about the objective function $ f $.

\begin{assumption} \label{assume:Lips}
 $ f: \mathbb{R}^d \mapsto \mathbb{R}$ is {\em continuously differentiable}. $ f(x)\geq f^{low}>-\infty $ for any $ x\in \mathbb{R}^d $. $ \nabla f $ is globally  $ L $-Lipschitz continuous; namely
 $ \|\nabla f(x)-\nabla f(y)\|_2 \leq L\|x-y\|_2 \ $  for any $ x,y \in \mathbb{R}^d $.
\end{assumption}

\begin{assumption} \label{assume:noise}
For any iteration $ k$, the stochastic gradient $ \nabla f_{\xi_k}(x_k) $ satisfies
$\mathbb{E}_{\xi_k}[\nabla f_{\xi_k}(x_k)] =\nabla f(x_k),$
$\mathbb{E}_{\xi_k}[\| \nabla f_{\xi_k}(x_k)-\nabla f(x_k)\|_2^2] \leq \sigma^2, 
$
where $ \sigma>0$, and $ \xi_k, k=0,1,\ldots $ are independent samples that are independent of $ \{ x_j\}_{j=0}^{k} $.
\end{assumption}
 
  We also state the diminishing condition about $ \beta_k $ as
\begin{equation}
\sum_{k=0}^{+\infty}\beta_k = +\infty, \sum_{k=0}^{+\infty}\beta_k^2 < +\infty. 
\label{cond:diminish}
\end{equation}

 \subsection{Convergence and complexity}
 \begin{theorem} \label{them:nonconvexStochastic}
Suppose that Assumptions~\ref{assume:Lips} and \ref{assume:noise} hold for $ \lbrace x_k\rbrace $ generated by SAM with batchsize $ n_k = n $ for all $ k$. $ C>0$ is a constant. If $ \beta_k $ satisfies (\ref{cond:diminish}) and  $ 0<\beta_k\leq \frac{\mu}{4L(1+C^{-1})}, \delta_k\geq C\beta_k^{-2} , 0\leq\alpha_k\leq\min\{ 1, \beta_k^{\frac{1}{2}}\}$ and satisfies (\ref{ineq:check_alpha_k}) , then
\begin{equation}
\liminf\limits_{k\rightarrow\infty} \|\nabla f(x_k) \|_2 = 0 \ \text{with probability} \ 1.
\label{them1:ineq1}
\end{equation} 
Moreover, there exists a positive constant $ M_f $ such that
\begin{equation}
\mathbb{E}[f(x_k)]\leq M_f \quad \forall k. \label{them1:ineq2}
\end{equation}
\end{theorem}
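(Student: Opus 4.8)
The plan is to regard SAM as a scaled stochastic gradient iteration, $x_{k+1}=x_k-H_k\nabla f_{S_k}(x_k)$ with $H_k=\beta_k I-\alpha_k Y_kZ_k^{\dagger}R_k^{\mathrm{T}}$, $Y_k=X_k+\beta_kR_k$, $Z_k=R_k^{\mathrm{T}}R_k+\delta_k X_k^{\mathrm{T}}X_k$, and to run the classical ``descent lemma $+$ almost-supermartingale'' argument. Two properties of $H_k$ hold for every realization and do most of the work. First, because $\alpha_k$ is forced to satisfy \eqref{ineq:check_alpha_k}, the positive-definiteness \eqref{ineq:pos_Hk} holds, i.e.\ $v^{\mathrm{T}}H_kv\ge\mu\beta_k\|v\|_2^2$ for all $v$. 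Second, a matching operator-norm bound: from $Z_k\succeq\delta_k X_k^{\mathrm{T}}X_k$ and $Z_k\succeq R_k^{\mathrm{T}}R_k$ one gets $\|X_kZ_k^{\dagger}X_k^{\mathrm{T}}\|\le\delta_k^{-1}$ and $\|R_kZ_k^{\dagger}R_k^{\mathrm{T}}\|\le1$, hence $\|X_kZ_k^{\dagger}R_k^{\mathrm{T}}\|\le\delta_k^{-1/2}$ and $\|Y_kZ_k^{\dagger}R_k^{\mathrm{T}}\|\le\delta_k^{-1/2}+\beta_k$; the hypotheses $\delta_k\ge C\beta_k^{-2}$ and $\alpha_k\le\min\{1,\beta_k^{1/2}\}$ then give $\|\alpha_k Y_kZ_k^{\dagger}R_k^{\mathrm{T}}\|\le(1+C^{-1/2})\beta_k^{3/2}$ and $\|H_k\|\le(2+C^{-1/2})\beta_k$. (The $k=0$ step has $H_0=\beta_0 I$ and is trivial.)

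I would then apply the $L$-smoothness descent lemma (Assumption~\ref{assume:Lips}) to $x_{k+1}-x_k=-H_k\nabla f_{S_k}(x_k)$ and write $\nabla f_{S_k}(x_k)=\nabla f(x_k)+e_k$ with $\mathbb{E}[e_k\mid\mathcal{F}_k]=0$, $\mathbb{E}[\|e_k\|_2^2\mid\mathcal{F}_k]\le\sigma^2/n$ (Assumption~\ref{assume:noise}; here $\mathcal{F}_k=\sigma(x_0,\dots,x_k)$). The term $-\langle\nabla f(x_k),H_k\nabla f(x_k)\rangle\le-\mu\beta_k\|\nabla f(x_k)\|_2^2$ by the first property. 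The delicate term is the cross term $-\langle\nabla f(x_k),H_ke_k\rangle$: because the last column of $R_k$ contains $r_k=-\nabla f_{S_k}(x_k)$, the matrix $H_k$ genuinely depends on the current sample, so $\mathbb{E}[\langle\nabla f(x_k),H_ke_k\rangle\mid\mathcal{F}_k]\ne0$ and the usual unbiasedness shortcut of stochastic quasi-Newton analyses (such as that of \citep{wang2017stochastic}) is unavailable --- this is the main obstacle. The fix is to split along $H_k=\beta_k I-\alpha_k Y_kZ_k^{\dagger}R_k^{\mathrm{T}}$: the piece $-\beta_k\langle\nabla f(x_k),e_k\rangle$ \emph{does} have zero conditional mean since $\nabla f(x_k)$ is $\mathcal{F}_k$-measurable, while the residual piece is at most $(1+C^{-1/2})\beta_k^{3/2}\|\nabla f(x_k)\|_2\|e_k\|_2$ by Cauchy--Schwarz and the norm estimate above, and a weighted Young's inequality turns this into $\tfrac{\mu}{4}\beta_k\|\nabla f(x_k)\|_2^2+O(\beta_k^2)\|e_k\|_2^2$. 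This is exactly where the constraints $\delta_k\ge C\beta_k^{-2}$ and $\alpha_k\le\beta_k^{1/2}$ are used: together they force the non-identity part of $H_k$ to have norm $O(\beta_k^{3/2})$, so that the noise residue left by Young's inequality is $\sum_k\beta_k^2$-summable rather than divergent. Finally $\tfrac L2\|H_k\nabla f_{S_k}(x_k)\|_2^2\le L(2+C^{-1/2})^2\beta_k^2(\|\nabla f(x_k)\|_2^2+\|e_k\|_2^2)$, whose gradient part is $\le\tfrac\mu4\beta_k\|\nabla f(x_k)\|_2^2$ by the stated upper bound on $\beta_k$. Collecting terms and taking conditional expectation yields, for some constant $c_\sigma>0$,
\[
\mathbb{E}\bigl[f(x_{k+1})\mid\mathcal{F}_k\bigr]\le f(x_k)-\tfrac{\mu}{2}\beta_k\|\nabla f(x_k)\|_2^2+c_\sigma\beta_k^2 .
\]

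Both conclusions follow from this recursion. Dropping the nonpositive gradient term and taking full expectations gives $\mathbb{E}[f(x_{k+1})]\le\mathbb{E}[f(x_k)]+c_\sigma\beta_k^2$, so telescoping together with $\sum_k\beta_k^2<\infty$ from \eqref{cond:diminish} gives $\mathbb{E}[f(x_k)]\le f(x_0)+c_\sigma\sum_{j\ge0}\beta_j^2=:M_f<\infty$, which is \eqref{them1:ineq2}. For \eqref{them1:ineq1}, set $V_k:=f(x_k)-f^{low}\ge0$ (Assumption~\ref{assume:Lips}); the recursion is precisely the hypothesis of the Robbins--Siegmund almost-supermartingale convergence theorem with summable perturbation $c_\sigma\beta_k^2$, so almost surely $V_k$ converges to a finite limit and $\sum_k\beta_k\|\nabla f(x_k)\|_2^2<\infty$. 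Together with $\sum_k\beta_k=\infty$ this forces $\liminf_k\|\nabla f(x_k)\|_2=0$ a.s.: otherwise, on an event of positive probability, $\|\nabla f(x_k)\|_2^2$ would stay bounded below by a positive constant for all large $k$ and the series would diverge.
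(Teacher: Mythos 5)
Your overall strategy is the same as the paper's: positive definiteness of $H_k$ from \eqref{ineq:check_alpha_k}, an operator-norm bound on the non-identity part $M_k=\alpha_k Y_kZ_k^{\dagger}R_k^{\mathrm{T}}$ (the bounds $\|X_kZ_k^{\dagger}X_k^{\mathrm{T}}\|\le\delta_k^{-1}$, $\|R_kZ_k^{\dagger}R_k^{\mathrm{T}}\|\le 1$ are exactly what the paper establishes, via a pseudo-inverse limit, inside Lemma~\ref{lemma:bound}), the split $H_k=\beta_kI-M_k$ to isolate the zero-mean part of the cross term — which is indeed the crux, since $H_k$ depends on the current sample — followed by Cauchy--Schwarz, Young, the descent lemma, and a supermartingale (Robbins--Siegmund) conclusion, which mirrors the paper's construction of $\gamma_k=f(x_k)+\tilde L\frac{\sigma^2}{n}\sum_{i\ge k}\beta_i^2$.

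However, there is a concrete gap in how you control the second-order term. You bound $\|H_k\|$ by the triangle inequality, $\|H_k\|\le\beta_k+(1+C^{-1/2})\beta_k^{3/2}$, and then claim $L\,(2+C^{-1/2})^2\beta_k^2\|\nabla f(x_k)\|_2^2\le\frac{\mu}{4}\beta_k\|\nabla f(x_k)\|_2^2$ ``by the stated upper bound on $\beta_k$.'' This does not follow: the hypothesis is $\beta_k\le\frac{\mu}{4L(1+C^{-1})}$, while your constant requires $\beta_k\le\frac{\mu}{4L(2+C^{-1/2})^2}$, and $(2+C^{-1/2})^2=4+4C^{-1/2}+C^{-1}>1+C^{-1}$ for every $C>0$. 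Worse, at the maximal admissible $\beta_k$ the quadratic term can contribute up to about $\tfrac{5}{4}\mu\beta_k\|\nabla f(x_k)\|_2^2$ (e.g.\ $C=4$), so after adding the $\tfrac{\mu}{4}\beta_k$ lost to Young's inequality the coefficient of $\|\nabla f(x_k)\|_2^2$ in your recursion need not be negative at all; the claimed $-\tfrac{\mu}{2}\beta_k$ descent does not hold with your constants. (Your step $\|H_k\|\le(2+C^{-1/2})\beta_k$ also tacitly assumes $\beta_k\le1$, which the hypotheses do not guarantee.) The missing ingredient is the paper's Lemma~\ref{lemma:Hkrk_sq}: instead of a triangle inequality, use the optimality of $\Gamma_k$ in \eqref{adalsq} (compare with $\Gamma=0$) to get $\|v-R_k\Gamma_k\|_2^2+\delta_k\|X_k\Gamma_k\|_2^2\le\|v\|_2^2$, which yields $\|H_kv\|_2^2\le 2\bigl(\beta_k^2(1+2\alpha_k^2-2\alpha_k)+\alpha_k^2\delta_k^{-1}\bigr)\|v\|_2^2\le 2\beta_k^2(1+C^{-1})\|v\|_2^2$ (Corollary~\ref{coro1}); this is exactly the constant the stepsize restriction $\beta_k\le\frac{\mu}{4L(1+C^{-1})}$ is calibrated to, and with it your descent recursion closes (with $-\tfrac{\mu}{4}\beta_k$, as in Lemma~\ref{lemma:descent}) and the rest of your argument goes through.
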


\begin{theorem}\label{them:bounded_noisy_gradient}
Assume the same assumptions hold as in Theorem~\ref{them:nonconvexStochastic}.
If we require that the noisy gradient is bounded, i.e.,
\begin{equation}
\mathbb{E}_{\xi_k}[\| \nabla f_{\xi_k}(x_k)\|_2^2] \leq M_g,\label{cond:bounded}
\end{equation}
where $ M_g>0$ is a constant, we can obtain a stronger convergence result:
\begin{equation}
\lim\limits_{k\rightarrow\infty} \| \nabla f(x_k)\| = 0 \ \text{with probability} \ 1.
\label{them2:ineq1}
\end{equation}
\end{theorem}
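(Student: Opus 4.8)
The plan is to promote the $\liminf$ in Theorem~\ref{them:nonconvexStochastic} to a genuine limit by the classical oscillation argument for stochastic descent methods (in the spirit of the stochastic quasi--Newton analysis of \citep{wang2017stochastic}), now fed with the extra boundedness \eqref{cond:bounded}. From the supermartingale (Robbins--Siegmund) estimate underlying the proof of Theorem~\ref{them:nonconvexStochastic} (or by re--running that estimate, which \eqref{cond:bounded} only makes cleaner) I would reuse, almost surely: (i) $\liminf_k\|\nabla f(x_k)\|_2=0$; (ii) $\sum_k\beta_k\|\nabla f(x_k)\|_2^2<\infty$ and $\{f(x_k)\}$ converges; and (iii) the operator bound $\|H_k\|_2\le c_H\beta_k$ for a deterministic constant $c_H$, which is exactly what $\delta_k\ge C\beta_k^{-2}$, $\alpha_k\le\beta_k^{1/2}$ and \eqref{ineq:check_alpha_k} were imposed to guarantee. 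Writing $e_k:=\nabla f_{S_k}(x_k)-\nabla f(x_k)$ and $\mathcal F_k:=\sigma(x_0,\xi_0,\dots,\xi_{k-1})$, Assumption~\ref{assume:noise} gives $\mathbb E[e_k\mid\mathcal F_k]=0$ and $\mathbb E[\|e_k\|_2^2\mid\mathcal F_k]\le\sigma^2$, while \eqref{cond:bounded} gives $\mathbb E[\|\nabla f_{S_k}(x_k)\|_2^2\mid\mathcal F_k]\le M_g$ and in particular $\|\nabla f(x_k)\|_2\le\sqrt{M_g}$.

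First I would show the increments vanish: from $x_{k+1}-x_k=H_kr_k$, facts (iii) and \eqref{cond:bounded} give $\mathbb E\|x_{k+1}-x_k\|_2^2\le c_H^2M_g\beta_k^2$, so $\sum_k\mathbb E\|x_{k+1}-x_k\|_2^2<\infty$ by \eqref{cond:diminish}, whence $\sum_k\|x_{k+1}-x_k\|_2^2<\infty$ and $\|x_{k+1}-x_k\|_2\to0$ a.s.; by $L$--Lipschitz continuity of $\nabla f$, consecutive values of $\{\|\nabla f(x_k)\|_2\}$ then differ by $o(1)$ a.s. Next I would argue by contradiction: if $\mathbb P(\limsup_k\|\nabla f(x_k)\|_2>0)>0$, choose a rational $\varepsilon>0$ with $\mathbb P(B)>0$ for $B:=\{\limsup_k\|\nabla f(x_k)\|_2>2\varepsilon\}$. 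On $B$, facts (i)--(ii) force $\|\nabla f(x_k)\|_2$ to enter $[0,\varepsilon)$ and $(2\varepsilon,\infty)$ infinitely often, and since its increments are $o(1)$, for all large $j$ one can extract indices $l_j<n_j\to\infty$ with $\|\nabla f(x_{l_j})\|_2<\varepsilon$, $\|\nabla f(x_{n_j})\|_2\ge2\varepsilon$, and $\varepsilon/2\le\|\nabla f(x_k)\|_2<2\varepsilon$ for $l_j\le k<n_j$. Two estimates then collide on each crossing interval. By $L$--Lipschitzness, $\|x_{n_j}-x_{l_j}\|_2\ge\frac1L\big(\|\nabla f(x_{n_j})\|_2-\|\nabla f(x_{l_j})\|_2\big)\ge\varepsilon/L$. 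On the other hand, with $\rho_j:=\sum_{k\ge l_j}\beta_k\|\nabla f(x_k)\|_2^2\to0$ (tail of the a.s.\ convergent series of (ii)), the bound $\|\nabla f(x_k)\|_2\ge\varepsilon/2$ yields $\sum_{k=l_j}^{n_j-1}\beta_k\le 4\rho_j/\varepsilon^2\to0$, and $\|x_{k+1}-x_k\|_2\le c_H\beta_k\|\nabla f_{S_k}(x_k)\|_2\le c_H\beta_k(2\varepsilon+\|e_k\|_2)$ on the interval, so
\[ \frac{\varepsilon}{L}\ \le\ \sum_{k=l_j}^{n_j-1}\|x_{k+1}-x_k\|_2\ \le\ 2c_H\varepsilon\sum_{k=l_j}^{n_j-1}\beta_k\ +\ c_H\sum_{k=l_j}^{n_j-1}\beta_k\|e_k\|_2 , \]
whose first right--hand term is $O(\rho_j)\to0$.

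The main obstacle is the residual noise sum $\sum_{k=l_j}^{n_j-1}\beta_k\|e_k\|_2$ over the data--dependent intervals $[l_j,n_j)$: the series $\sum_k\beta_k\|e_k\|_2$ is not summable, and the per--step errors are not martingale differences because $H_k$ depends on the current batch $S_k$. I would handle this by centering. The process $M_n:=\sum_{k=0}^{n-1}\beta_k\big(\|e_k\|_2-\mathbb E[\|e_k\|_2\mid\mathcal F_k]\big)$ is a martingale with $\mathbb E[M_n^2]=\sum_{k<n}\beta_k^2\,\mathbb E[\mathrm{Var}(\|e_k\|_2\mid\mathcal F_k)]\le\sigma^2\sum_k\beta_k^2<\infty$ by Assumption~\ref{assume:noise} and \eqref{cond:diminish}, hence converges a.s.; since $l_j\to\infty$ this gives $M_{n_j}-M_{l_j}\to0$ a.s. Combining with $\mathbb E[\|e_k\|_2\mid\mathcal F_k]\le\sigma$ (Jensen) and $\sum_{k=l_j}^{n_j-1}\beta_k\le4\rho_j/\varepsilon^2$,
\[ \sum_{k=l_j}^{n_j-1}\beta_k\|e_k\|_2\ =\ \sum_{k=l_j}^{n_j-1}\beta_k\,\mathbb E[\|e_k\|_2\mid\mathcal F_k]\ +\ (M_{n_j}-M_{l_j})\ \le\ \frac{4\sigma}{\varepsilon^2}\rho_j\ +\ \big|M_{n_j}-M_{l_j}\big|\ \longrightarrow\ 0 . \]
Substituting back gives $\varepsilon/L\le o(1)$ as $j\to\infty$, a contradiction, so $\mathbb P(B)=0$; a union over rational $\varepsilon>0$ then yields $\limsup_k\|\nabla f(x_k)\|_2=0$ a.s., i.e.\ \eqref{them2:ineq1}. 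What remains is routine bookkeeping: making the crossing times $l_j,n_j$ well defined and increasing to $\infty$ (so that the tails $\rho_j$, $M_{n_j}-M_{l_j}$ and $\sum_{k\ge l_j}\|x_{k+1}-x_k\|_2^2$ all vanish) and checking the intervals are nondegenerate, both of which follow from the vanishing increments established above; the one new ingredient beyond Theorem~\ref{them:nonconvexStochastic} is the martingale truncation of the accumulated gradient noise, and it is precisely the bounded--second--moment hypothesis \eqref{cond:bounded}, through $\|H_k\|_2\le c_H\beta_k$ and $\mathbb E\|x_{k+1}-x_k\|_2^2\lesssim\beta_k^2$, that keeps every such tail controllable.
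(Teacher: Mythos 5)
Your proposal is correct, and its skeleton coincides with the paper's: argue by contradiction using \eqref{them1:ineq1}, extract crossing index sequences on which the gradient norm rises from below $\epsilon$ to above $2\epsilon$, use the almost-sure summability $\sum_k\beta_k\|\nabla f(x_k)\|_2^2<\infty$ (the paper's \eqref{proof1:ineq2}) to force $\sum_{k}\beta_k\to 0$ over each crossing interval, bound the displacement over the interval via $\|H_k\|_2\le\sqrt{2(1+C^{-1})}\,\beta_k$ together with the boundedness \eqref{cond:bounded}, and contradict the $L$-Lipschitz continuity of $\nabla f$. Where you genuinely diverge is in how the displacement is controlled: the paper bounds $\mathbb{E}[\|x_{k+1}-x_k\|_2\mid x_k]\lesssim\beta_k\sqrt{M_g}$ and passes from $\mathbb{E}\|x_{t_i}-x_{s_i}\|_2\lesssim\sum_{k=s_i}^{t_i-1}\beta_k$ to almost-sure convergence, a step that is delicate because the crossing indices are random; you instead work pathwise, writing $\|x_{k+1}-x_k\|_2\le c_H\beta_k(\|\nabla f(x_k)\|_2+\|e_k\|_2)$ and taming the accumulated noise $\sum\beta_k\|e_k\|_2$ over the random intervals by centering at $\mathbb{E}[\|e_k\|_2\mid\mathcal F_k]\le\sigma$ and invoking Doob convergence of the $L^2$-bounded martingale $M_n$ (using \eqref{cond:diminish} and Assumption~\ref{assume:noise}). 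This costs you extra machinery (the martingale lemma plus the preliminary step $\|x_{k+1}-x_k\|_2\to0$ a.s., which the paper's construction does not need), but it buys a fully pathwise argument that avoids mixing expectation bounds with almost-sure statements at random indices, and so is arguably tighter than the paper's own treatment of that step. Two small constants to tidy: by Jensen \eqref{cond:bounded} gives $\mathbb{E}[\|r_k\|_2^2\mid x_k]\le M_g$ (not $M_g$ times your $c_H^2$ alone; your displayed bound should read $c_H^2(M_g+\sigma^2)\beta_k^2$ or simply $c_H^2M_g\beta_k^2$ after noting $\mathbb{E}\|r_k\|_2^2=\|\nabla f(x_k)\|_2^2+\mathbb{E}\|e_k\|_2^2\le M_g$), and the lower bound $\epsilon/2\le\|\nabla f(x_{l_j})\|_2$ at the left endpoint needs the vanishing-increment property you established (or simply start the interval at $l_j+1$); neither affects the argument.
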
 

Now, we give the iteration complexity of SAM.
\begin{theorem}\label{them:complexity}
Suppose that Assumptions~\ref{assume:Lips} and \ref{assume:noise} hold for $ \lbrace x_k\rbrace $ generated by SAM with batchsize $ n_k = n $ for all $ k$. $ C>0$ is a constant. $ \beta_k $ is specially chosen as 
$ \beta_k = \frac{\mu}{4L(1+C^{-1})}(k+1)^{-r} $ with $r\in(0.5,1)$.   $ \delta_k\geq C\beta_k^{-2} $, 
$ 0\leq\alpha_k\leq\min\{ 1, \beta_k^{\frac{1}{2}}\}$ and  satisfies (\ref{ineq:check_alpha_k}). Then
\begin{align}
 \frac{1}{N}\sum_{k=0}^{N-1}\mathbb{E}\|\nabla f(x_k)\|_2^2 
\leq \frac{16L(1+C^{-1})(M_f-f^{low})}{\mu^2}N^{r-1}  +\frac{(1+L^{-1}\mu^{-1})\sigma^2}{(1-r)n}(N^{-r}-N^{-1}), \label{them3:ineq1}
\end{align}
where $ N $ denotes the iteration number. Moreover, for a given $ \epsilon\in(0,1) $, to guarantee that $ \frac{1}{N}\sum_{k=0}^{N-1}\mathbb{E}\|\nabla f(x_k)\|_2^2 < \epsilon$, the number of iterations $ N $ needed is at most $ O(\epsilon^{-\frac{1}{1-r}}) $.
\end{theorem}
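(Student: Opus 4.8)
The plan is to convert the $L$-smoothness of $f$ into a one-step expected-descent inequality for the SAM update $x_{k+1}=x_k-H_k\nabla f_{S_k}(x_k)$, where $H_k=\beta_kI-E_k$ with $E_k=\alpha_k(X_k+\beta_kR_k)Z_k^{\dagger}R_k^{\mathrm{T}}$ and $Z_k=R_k^{\mathrm{T}}R_k+\delta_kX_k^{\mathrm{T}}X_k$, and then telescope it over the prescribed schedule. Two properties of $H_k$ drive the argument. The first is positive definiteness, $p^{\mathrm{T}}H_kp\ge\mu\beta_k\|p\|_2^2$, in force because $\alpha_k$ is required to satisfy \eqref{ineq:check_alpha_k}. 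The second, and the genuinely new ingredient, is that the correction term $E_k$ is not merely bounded but small: from $Z_k\succeq\delta_kX_k^{\mathrm{T}}X_k$ one gets $\|X_kZ_k^{\dagger}X_k^{\mathrm{T}}\|_2\le\delta_k^{-1}$, from $Z_k\succeq R_k^{\mathrm{T}}R_k$ one gets $\|R_kZ_k^{\dagger}R_k^{\mathrm{T}}\|_2\le1$, and Cauchy--Schwarz in the semi-inner product $a^{\mathrm{T}}Z_k^{\dagger}b$ gives $\|X_kZ_k^{\dagger}R_k^{\mathrm{T}}\|_2\le\delta_k^{-1/2}$; combined with $\delta_k\ge C\beta_k^{-2}$ and $\alpha_k\le\min\{1,\beta_k^{1/2}\}$ this yields $\|E_k\|_2=\|H_k-\beta_kI\|_2\le(1+C^{-1/2})\beta_k^{3/2}$ and, in particular, $\|H_k\|_2\le\kappa\beta_k$ for a constant $\kappa=\kappa(C)$.

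Writing $g_k=\nabla f_{S_k}(x_k)$ and $e_k=g_k-\nabla f(x_k)$, the descent lemma gives $f(x_{k+1})\le f(x_k)-\langle\nabla f(x_k),H_kg_k\rangle+\tfrac{L}{2}\|H_kg_k\|_2^2$. I would split $\langle\nabla f(x_k),H_kg_k\rangle=\langle\nabla f(x_k),H_k\nabla f(x_k)\rangle+\beta_k\langle\nabla f(x_k),e_k\rangle-\langle\nabla f(x_k),E_ke_k\rangle$ and take the conditional expectation over $S_k$: the first term is $\ge\mu\beta_k\|\nabla f(x_k)\|_2^2$; the second vanishes by Assumption~\ref{assume:noise}; and the third is at most $\|E_k\|_2\|\nabla f(x_k)\|_2\|e_k\|_2=O(\beta_k^{3/2}\|\nabla f(x_k)\|_2\|e_k\|_2)$, which Young's inequality (together with $\mathbb{E}_k\|e_k\|_2^2\le\sigma^2/n$) bounds by $\tfrac{\mu\beta_k}{8}\|\nabla f(x_k)\|_2^2+O(\beta_k^2\sigma^2/n)$. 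Using $\tfrac{L}{2}\|H_kg_k\|_2^2\le\tfrac{L\kappa^2\beta_k^2}{2}(\|\nabla f(x_k)\|_2^2+\sigma^2/n)$ and the hypothesis $\beta_k\le\frac{\mu}{4L(1+C^{-1})}$ to absorb every $O(\beta_k^{3/2}\|\nabla f(x_k)\|_2^2)$ and $O(\beta_k^2\|\nabla f(x_k)\|_2^2)$ term into $-\mu\beta_k\|\nabla f(x_k)\|_2^2$, I obtain $\mathbb{E}_k[f(x_{k+1})]\le f(x_k)-\tfrac{\mu\beta_k}{4}\|\nabla f(x_k)\|_2^2+C_\sigma\beta_k^2\sigma^2/n$ with $C_\sigma=O(1+L^{-1}\mu^{-1})$.

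Then I take total expectations, sum over $k=0,\dots,N-1$, and use $f(x_0)\le M_f$ from \eqref{them1:ineq2} (which applies since $r\in(0.5,1)$ makes \eqref{cond:diminish} hold) together with $\mathbb{E}[f(x_N)]\ge f^{low}$, obtaining $\tfrac{\mu}{4}\sum_{k=0}^{N-1}\beta_k\mathbb{E}\|\nabla f(x_k)\|_2^2\le M_f-f^{low}+C_\sigma\tfrac{\sigma^2}{n}\sum_{k=0}^{N-1}\beta_k^2$. Substituting $\beta_k=\frac{\mu}{4L(1+C^{-1})}(k+1)^{-r}$, lower-bounding $\beta_k\ge\beta_{N-1}$ on the left so as to pass to $\tfrac{1}{N}\sum_{k=0}^{N-1}\mathbb{E}\|\nabla f(x_k)\|_2^2$, and estimating $\sum_{k=0}^{N-1}\beta_k^2$ by an integral comparison (finite because $2r>1$), the constants collapse to \eqref{them3:ineq1} — here $r<1$ is what keeps the stepsizes non-summable and $r>1/2$ is what makes the variance sum converge. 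Since the right-hand side of \eqref{them3:ineq1} is $O(N^{r-1})$ (the dominant term, as $r-1>-r$ for $r>1/2$), making it smaller than $\epsilon$ requires $N=O(\epsilon^{-1/(1-r)})$.

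The main obstacle is exactly the dependence flagged at the start of Section~\ref{sec:theory}: $R_k$ contains $\Delta r_{k-1}=r_k-r_{k-1}$, so $H_k$ depends on the current mini-batch $S_k$ and is correlated with $e_k$, whence $\mathbb{E}_k\langle\nabla f(x_k),H_ke_k\rangle\neq0$ and the textbook unbiasedness argument breaks down. The resolution — and the technical heart — is to peel off the identity part $\beta_kI$, whose cross term with $e_k$ genuinely vanishes, and to show that the correlated remainder $E_k$ has spectral norm only $O(\beta_k^{3/2})$, so that the residual noise it produces is $O(\beta_k^2)$ and hence summable under \eqref{cond:diminish}; this is precisely what the adaptive regularization $\delta_k\ge C\beta_k^{-2}$ (forcing $\|X_k\Gamma_k\|_2$ small) and the cap $\alpha_k\le\beta_k^{1/2}$ are designed to buy. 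Establishing the pseudoinverse spectral bounds $\|X_kZ_k^{\dagger}R_k^{\mathrm{T}}\|_2\le\delta_k^{-1/2}$ and $\|R_kZ_k^{\dagger}R_k^{\mathrm{T}}\|_2\le1$ cleanly (allowing $R_k$ to be rank deficient) is the most delicate bookkeeping.
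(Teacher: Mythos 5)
Your one-step analysis is essentially the paper's: the decomposition $H_k=\beta_kI-E_k$, the pseudoinverse spectral bounds $\|X_kZ_k^{\dagger}R_k^{\mathrm{T}}\|_2\le\delta_k^{-1/2}$ and $\|R_kZ_k^{\dagger}R_k^{\mathrm{T}}\|_2\le 1$ (the paper proves them via the limit representation $Z_k^{\dagger}=\lim_{t\to0^+}Z_k^{1/2}(Z_k^2+tI)^{-1}Z_k^{1/2}$, precisely to handle rank deficiency, so your flagged ``delicate bookkeeping'' is real but doable), the resulting $\|E_k\|_2\le\alpha_k(\delta_k^{-1/2}+\beta_k)\le(1+C^{-1/2})\beta_k^{3/2}$, and the conclusion $\mathbb{E}_{S_k}[f(x_{k+1})]\le f(x_k)-\tfrac{1}{4}\mu\beta_k\|\nabla f(x_k)\|_2^2+O(\beta_k^2\sigma^2/n)$ all match the paper's Lemmas~\ref{lemma:Hkrk_sq}--\ref{lemma:descent} and Corollary~\ref{coro1}, including the correct treatment of the $S_k$-dependence of $H_k$.

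The gap is in the aggregation step, and it is not merely cosmetic: your scheme does not yield the stated inequality \eqref{them3:ineq1}. You keep the weights $\beta_k$ on the left, telescope $f$, and then lower-bound $\beta_k\ge\beta_{N-1}$, which gives
$\frac{1}{N}\sum_k\mathbb{E}\|\nabla f(x_k)\|_2^2\le \frac{4(M_f-f^{low})}{\mu\beta_{N-1}N}+\frac{4C_\sigma\sigma^2}{\mu n}\cdot\frac{\sum_k\beta_k^2}{\beta_{N-1}N}$. Since $\sum_k\beta_k^2$ is $O(1)$ (as $2r>1$) and $\beta_{N-1}N\propto N^{1-r}$, your variance term is $O(N^{r-1})$, whereas \eqref{them3:ineq1} asserts a variance term $\frac{(1+L^{-1}\mu^{-1})\sigma^2}{(1-r)n}(N^{-r}-N^{-1})=O(N^{-r})$, which is strictly smaller for $r>1/2$; so ``the constants collapse to \eqref{them3:ineq1}'' is false along your route. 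The paper instead divides the descent inequality by $\beta_k$ \emph{before} summing, so the variance contribution is $\tilde L\frac{\sigma^2}{n}\sum_k\beta_k=O(N^{1-r})$ (hence $O(N^{-r})$ after dividing by $N$), and it handles the now non-telescoping sum $\sum_k\beta_k^{-1}(\mathbb{E}[f(x_k)]-\mathbb{E}[f(x_{k+1})])$ by summation by parts, bounding every intermediate $\mathbb{E}[f(x_k)]$ by $M_f$ from Theorem~\ref{them:nonconvexStochastic} (this is exactly why $M_f$, not just $f(x_0)$, appears in the bound — you invoked Theorem~\ref{them:nonconvexStochastic} but only used it at $k=0$). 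Your weaker bound still has leading term $O(N^{r-1})$, so the final complexity claim $N=O(\epsilon^{-\frac{1}{1-r}})$ survives, but the displayed estimate \eqref{them3:ineq1} requires the divide-then-Abel-sum argument.
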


 We analyze the $\mathcal{SFO}$-calls complexity of SAM when the output $x_R$ is randomly selected from previous iterates  according to some specially defined probability mass function $ P_R $. 
 We show below that under similar conditions, SAM has the same complexity $ O(\epsilon^{-2}) $ as RSG \citep{ghadimi2013stochastic} and SQN \citep{wang2017stochastic}.
 
 \begin{theorem}\label{them:random_output}
Suppose that Assumptions~\ref{assume:Lips} and \ref{assume:noise} hold. Batch size $ n_k = n $ for $ k=0,\dots,N-1 $. $ C>0$ is a constant. $ \beta_k = \frac{\mu}{4L(1+C^{-1})} $.   $ \delta_k\geq C\beta_k^{-2} $, 
$ 0\leq\alpha_k\leq\min\{ 1, \beta_k^{\frac{1}{2}}\}$ and  satisfies (\ref{ineq:check_alpha_k}).  Let $ R $ be a random variable following $  P_R(k)\overset{\mathrm{def}}{=} Prob\lbrace R=k \rbrace =1/N $,
 and $ \bar{N} $ be the total number of $ \mathcal{SFO}$-calls needed to calculate stochastic gradients $ \nabla f_{S_k}(x_k) $ in SAM. 
 
 For a given accuracy $ \epsilon > 0 $, we assume that 
$
 \bar{N}\geq \left\lbrace  \frac{C_1^2}{\epsilon^2}+\frac{4C_2}{\epsilon}, \frac{\sigma^2}{L^2\tilde{D}} \right\rbrace, 
 $
 where
 $
 C_1 = \frac{32D_f(1+C^{-1})\sigma}{\mu^2\sqrt{\tilde{D}}}+(L+\mu^{-1})\sigma\sqrt{\tilde{D}}, 
 C_2 = \frac{32D_fL(1+C^{-1})}{\mu^2}, 
 $
 where $ D_f \overset{\mathrm{def}}{=} f(x_0)-f^{low}$ and $ \tilde{D} $ is a problem-independent positive constant. Moreover, we assume that the batch size satisfies
 $
 n_k=n:=\left\lceil \min\left\lbrace \bar{N},\max \left\lbrace 1,\frac{\sigma}{L}\sqrt{\frac{\bar{N}}{\tilde{D}}}\right\rbrace \right\rbrace \right\rceil. 
 $
  Then we obtain $ \mathbb{E}\left[ \|\nabla f(x_R)\|_2^2\right] \leq \epsilon $, where the expectation is taken with respect to $ R $ and  $\lbrace S_j\rbrace_{j=0}^{N-1}$. In other words, to ensure $ \mathbb{E}\left[ \|\nabla f(x_R)\|_2^2\right] \leq \epsilon $, the number of $ \mathcal{SFO}-$calls is $  O(\epsilon^{-2})$.

\end{theorem}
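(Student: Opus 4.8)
The plan is to follow the randomized-stochastic-gradient template of \citet{ghadimi2013stochastic}, with the SGD step replaced by the SAM step $x_{k+1}=x_k-H_k\nabla f_{S_k}(x_k)$, the main technical wrinkle being that $H_k$ is correlated with the minibatch $S_k$. Everything hinges on a one-step descent estimate, the same one driving the proof of Theorem~\ref{them:nonconvexStochastic}, which I would establish first. Starting from $L$-smoothness,
\[
f(x_{k+1})\le f(x_k)-\nabla f(x_k)^{\mathrm T}H_k g_k+\tfrac{L}{2}\|H_k g_k\|_2^2,\qquad g_k:=\nabla f_{S_k}(x_k),
\]
I would write $g_k=\nabla f(x_k)+e_k$ with $\mathbb E_{S_k}[e_k]=0$ and $\mathbb E_{S_k}\|e_k\|_2^2\le\sigma^2/n$, and split $H_k=\beta_k I-\alpha_k Y_kZ_k^{\dagger}R_k^{\mathrm T}$. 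The term $\nabla f(x_k)^{\mathrm T}H_k\nabla f(x_k)\ge\beta_k\mu\|\nabla f(x_k)\|_2^2$ holds pointwise by \eqref{ineq:pos_Hk}; the stochastic cross term $\nabla f(x_k)^{\mathrm T}H_k e_k$ splits into $\beta_k\nabla f(x_k)^{\mathrm T}e_k$, which has zero conditional mean, and $-\alpha_k\nabla f(x_k)^{\mathrm T}Y_kZ_k^{\dagger}R_k^{\mathrm T}e_k$. This last piece is the crux: from $\delta_k\ge C\beta_k^{-2}$ one gets $\|R_kZ_k^{\dagger}R_k^{\mathrm T}\|_2\le1$ and $\|X_kZ_k^{\dagger}R_k^{\mathrm T}\|_2\le\delta_k^{-1/2}\le C^{-1/2}\beta_k$, hence $\|Y_kZ_k^{\dagger}R_k^{\mathrm T}\|_2\le(1+C^{-1/2})\beta_k$, and then $\alpha_k\le\beta_k^{1/2}$ makes this whole correction of size $O(\beta_k^{3/2})$; Young's inequality converts it to $\tfrac{\mu\beta_k}{8}\|\nabla f(x_k)\|_2^2+O(\beta_k^2\sigma^2/n)$. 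The quadratic term is controlled by $\|H_k\|_2\le(1+O(\beta_k^{1/2}))\beta_k$ and $\mathbb E_{S_k}\|g_k\|_2^2\le\|\nabla f(x_k)\|_2^2+\sigma^2/n$, using $\beta_k\le\frac{\mu}{4L(1+C^{-1})}$ to absorb the $\beta_k^2\|\nabla f(x_k)\|_2^2$ contribution. The net outcome is, for all $k$ and a constant $c=c(L,\mu,C)$,
\[
\mathbb E_{S_k}[f(x_{k+1})]\le f(x_k)-\tfrac{\mu\beta_k}{8}\|\nabla f(x_k)\|_2^2+\tfrac{c\beta_k^2\sigma^2}{n}.
\]

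Next I would specialize to the constant stepsize $\beta_k\equiv\beta=\frac{\mu}{4L(1+C^{-1})}$. Taking total expectations, summing over $k=0,\dots,N-1$, and using $\mathbb E[f(x_N)]\ge f^{low}$,
\[
\tfrac{\mu\beta}{8}\sum_{k=0}^{N-1}\mathbb E\|\nabla f(x_k)\|_2^2\le D_f+Nc\beta^2\tfrac{\sigma^2}{n},\qquad D_f=f(x_0)-f^{low}.
\]
Dividing by $\tfrac{\mu\beta N}{8}$ and using that $P_R(k)=1/N$ makes $\mathbb E[\|\nabla f(x_R)\|_2^2]=\frac1N\sum_{k=0}^{N-1}\mathbb E\|\nabla f(x_k)\|_2^2$ (expectation over $R$ and $\{S_j\}$) gives
\[
\mathbb E[\|\nabla f(x_R)\|_2^2]\le\frac{32L(1+C^{-1})D_f}{\mu^2N}+\frac{c'\sigma^2}{n},\qquad c'=1+L^{-1}\mu^{-1},
\]
up to the absorbed constants. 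Finally I would substitute $N=\bar N/n$ together with the prescribed $n=\lceil\min\{\bar N,\max\{1,\tfrac{\sigma}{L}\sqrt{\bar N/\tilde D}\}\}\rceil$, which is exactly the value that balances these two terms against the oracle budget $\bar N$. In the regime where $n\approx\tfrac{\sigma}{L}\sqrt{\bar N/\tilde D}$ one has $N,n=\Theta(\sqrt{\bar N})$ and the bound collapses to $C_1/\sqrt{\bar N}$ with $C_1$ as stated; in the corner case $n=1$ one necessarily has $\sigma^2\le L^2\tilde D/\bar N$, so both remaining terms are $O(1/\bar N)$ and are absorbed into $C_2/\bar N$ (the ceiling $\lceil\cdot\rceil$ only contributes further $O(1/\bar N)$). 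Either way $\mathbb E[\|\nabla f(x_R)\|_2^2]\le C_1/\sqrt{\bar N}+C_2/\bar N$, and the assumption $\bar N\ge C_1^2/\epsilon^2+4C_2/\epsilon$ forces the right-hand side below $\epsilon$. Since $\bar N=O(\epsilon^{-2})$ suffices, the number of $\mathcal{SFO}$-calls is $O(\epsilon^{-2})$.

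I expect the main obstacle to be the one-step descent estimate, specifically bounding the cross term $\nabla f(x_k)^{\mathrm T}H_k e_k$ without the usual conditional independence of the preconditioner from the current sample (the very point that breaks the framework of \citet{wang2017stochastic}). The resolution rests on the design choices $\alpha_k\le\beta_k^{1/2}$ and $\delta_k\ge C\beta_k^{-2}$: together they force the non-scalar part of $H_k$ to have operator norm $O(\beta_k^{3/2})$, so its interaction with the gradient noise is of the same order $O(\beta_k^2\sigma^2/n)$ one already incurs for plain SGD, and the scalar part $\beta_k I$ interacts with $e_k$ with zero conditional mean. Once this estimate is in place, the remainder is the standard Ghadimi--Lan bias/variance balancing against the oracle budget.
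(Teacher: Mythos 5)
Your proposal is correct and follows essentially the same route as the paper: the one-step descent estimate built from $L$-smoothness, the pointwise bound $\nabla f(x_k)^{\mathrm T}H_k\nabla f(x_k)\ge\beta_k\mu\|\nabla f(x_k)\|_2^2$ from \eqref{ineq:pos_Hk}, and the deterministic operator-norm bound $\|M_k\|_2\le\alpha_k(\delta_k^{-1/2}+\beta_k)$ combined with Cauchy--Schwarz/Young to tame the correlation between $H_k$ and $S_k$ is exactly the paper's Lemma~\ref{lemma:bound}--Lemma~\ref{lemma:descent} chain, and the subsequent uniform-$P_R$ averaging and Ghadimi--Lan balancing of $N=\lceil\bar N/n\rceil$ against the prescribed $n$ and the assumptions on $\bar N$ mirrors the paper's proof verbatim. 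The only difference is that your Young split and operator-norm bound on $H_k$ are slightly looser than the paper's (which keeps a $\tfrac14\beta_k\mu$ descent coefficient and the noise coefficient $(L+\mu^{-1})(1+C^{-1})\beta_k^2$), so recovering the exact stated constants $C_1,C_2$ requires the sharper estimates of Corollary~\ref{coro1}, though this does not affect the $O(\epsilon^{-2})$ conclusion.
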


 We analyze the $ \mathcal{SFO}$-calls complexity of Algorithm \ref{alg:sam-vr}.
 
 \begin{theorem}\label{them:sam_vr}
For Algorithm~\ref{alg:sam-vr}, suppose that Assumptions~\ref{assume:Lips} and \ref{assume:noise} hold. $ x_* $ denotes the minima. Batch size $ n_k = n $ for all $ k$. Let $ \nu, \mu_0 \in (0,1) $ be two positive constants satisfying
$\frac{\mu_0\mu}{2(1+C^{-1})^{1/2}}-\frac{\mu_0^2(2L+\mu^{-1})(e-1)}{L}-2\mu_0^2n
 - \frac{2\mu_0^3(2L+\mu^{-1})(e-1)n}{L} \geq \nu, $
where $e$ is the Euler's number. $ C>0$ is a constant. Set $ \beta_t^k = \beta :=\frac{\mu_0n}{L(1+C^{-1})^{1/2}T^{2/3}}$, $ \delta_t^k\geq C\beta^{-2} $, 
$ 0\leq\alpha_t^k\leq\min\{ 1, \beta^{\frac{1}{2}}\}$ and  satisfies (\ref{ineq:check_alpha_k}). $ q=\left\lfloor \frac{T}{\mu_0nd_0} \right\rfloor$, where $d_0 = 6+\frac{2}{L(1+C^{-1})^{1/2}}$. Then
$ \mathbb{E}\left[ \|\nabla f(x)\|_2^2\right]\leq \frac{T^{2/3}L\left( f(x_0)-f(x_*)\right)}{qNn\nu}.$
 To ensure $ \mathbb{E}\left[ \|\nabla f(x)\|_2^2\right] \leq \epsilon $, the total number of $\mathcal{SFO}$-calls is $O(T^{2/3}/\epsilon)$.
\end{theorem}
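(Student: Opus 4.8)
The plan is to carry out a nonconvex variance-reduction (SVRG-type) analysis in the spirit of \citep{reddi2016stochastic,johnson2013accelerating}, adapted to the non-symmetric preconditioner $H_t^k$ produced by $SAM\_update$. Write $\mathbb{E}_t[\cdot]$ for the conditional expectation over the inner mini-batch $\mathcal{K}$ drawn at inner step $t$ of epoch $k$; one step of Algorithm~\ref{alg:sam-vr} is $x_{t+1}^k = x_t^k - H_t^k g_t^k$ with $g_t^k$ the SVRG estimate. I would first collect the preliminary facts that also underlie Theorems~\ref{them:nonconvexStochastic}--\ref{them:random_output}: (i) positive definiteness $p^{\mathrm{T}}H_t^k p \ge \beta\mu\|p\|_2^2$ -- this is \eqref{ineq:pos_Hk}, guaranteed because $\alpha_t^k$ is enforced to satisfy \eqref{ineq:check_alpha_k}; (ii) the operator-norm bound $\|H_t^k\|_2 \le \beta(1+C^{-1})^{1/2}$ and the refinement $\|H_t^k-\beta I\|_2 \le \beta^{3/2}(1+C^{-1})^{1/2}$, both following from $\delta_t^k \ge C\beta^{-2}$ and $\alpha_t^k \le \min\{1,\beta^{1/2}\}$ by controlling the regularized least-squares solution $\Gamma_t^k$; (iii) the SVRG variance bound $\mathbb{E}_t\|g_t^k-\nabla f(x_t^k)\|_2^2 \le \tfrac{L^2}{n}\|x_t^k-\tilde{x}_k\|_2^2$, from $\mathbb{E}_t[g_t^k]=\nabla f(x_t^k)$, Lipschitz continuity of the component gradients, and $x_0^k=\tilde{x}_k$.

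From the $L$-smoothness inequality of Assumption~\ref{assume:Lips}, $f(x_{t+1}^k) \le f(x_t^k) - \langle\nabla f(x_t^k),H_t^k g_t^k\rangle + \tfrac{L}{2}\|H_t^k g_t^k\|_2^2$. The main obstacle is that $H_t^k$ is assembled from $\Delta r_{t-1}^k = g_{t-1}^k-g_t^k$, hence depends on the current mini-batch, so $\mathbb{E}_t[\langle\nabla f(x_t^k),H_t^k g_t^k\rangle]\ne\langle\nabla f(x_t^k),H_t^k\nabla f(x_t^k)\rangle$ -- precisely the difficulty, noted in Section~\ref{sec:theory}, that prevents reusing the argument of \citep{wang2017stochastic}. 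I would handle it by splitting $H_t^k = \beta I + E_t^k$: on the $\beta I$ part, $\mathbb{E}_t[-\beta\langle\nabla f(x_t^k),g_t^k\rangle] = -\beta\|\nabla f(x_t^k)\|_2^2$ since $\nabla f(x_t^k)$ is measurable, so no Young's inequality and hence no variance inflation are incurred there; the residual $-\langle\nabla f(x_t^k),E_t^k g_t^k\rangle$ is bounded in absolute value using (ii), contributing only higher-order-in-$\beta$ corrections. Keeping $-\beta\mu\|\nabla f(x_t^k)\|_2^2$ via \eqref{ineq:pos_Hk} for uniformity with the earlier theorems, bounding $\tfrac{L}{2}\|H_t^k g_t^k\|_2^2 \le L\beta^2(1+C^{-1})\big(\|\nabla f(x_t^k)\|_2^2 + \|g_t^k-\nabla f(x_t^k)\|_2^2\big)$, and substituting (iii), the inner step becomes $\mathbb{E}_t[f(x_{t+1}^k)] \le f(x_t^k) - b_1\|\nabla f(x_t^k)\|_2^2 + a_1\|x_t^k-\tilde{x}_k\|_2^2$ with $b_1 \gtrsim \tfrac{\beta\mu}{2}$ and $a_1$ an explicit small multiple of $\tfrac{\beta L^2}{n}$.

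To close the recursion I would use the Lyapunov sequence $\Phi_t^k = \mathbb{E}\big[f(x_t^k) + c_t\|x_t^k-\tilde{x}_k\|_2^2\big]$ with $c_q = 0$ and $c_t = c_{t+1}(1+\theta) + a_1$, where $\theta$ is the multiplicative growth of $\mathbb{E}_t\|x_{t+1}^k-\tilde{x}_k\|_2^2$; the latter comes from expanding $x_{t+1}^k-\tilde{x}_k = (x_t^k-\tilde{x}_k) - H_t^k g_t^k$, once more splitting $H_t^k=\beta I+E_t^k$ (using $\mathbb{E}_t[g_t^k]=\nabla f(x_t^k)$ on the $\beta I$ part, (ii) on $E_t^k$, (iii) on the variance) and applying Young with a weight $\gamma$, yielding $\mathbb{E}_t\|x_{t+1}^k-\tilde{x}_k\|_2^2 \le (1+\theta)\|x_t^k-\tilde{x}_k\|_2^2 + b_2\|\nabla f(x_t^k)\|_2^2$ with $\theta = O(\beta\gamma)$ and $b_2 = O(\beta/\gamma)$. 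Choosing $q = \lfloor T/(\mu_0 n d_0)\rfloor$ with $d_0 = 6+2/(L(1+C^{-1})^{1/2})$ and $\gamma$ accordingly makes $q\theta\le 1$ after substituting $\beta = \mu_0 n/(L(1+C^{-1})^{1/2}T^{2/3})$, so $(1+\theta)^q\le e$ and $c_t\le c_0\le a_1\tfrac{e-1}{\theta}$ for every $t$ -- the source of the $(e-1)$ factor. Combining the two recursions gives $\Phi_{t+1}^k \le \Phi_t^k - (b_1 - c_{t+1}b_2)\mathbb{E}\|\nabla f(x_t^k)\|_2^2$; normalizing the bracket by $\tfrac{n}{LT^{2/3}} = \tfrac{\beta(1+C^{-1})^{1/2}}{\mu_0}$ (and using $T^{-2/3}\le 1$ to absorb the $T$-dependence) reproduces exactly the left-hand side of the hypothesis on $\nu,\mu_0$, so that hypothesis is precisely what guarantees $\Phi_{t+1}^k \le \Phi_t^k - \tfrac{\nu n}{LT^{2/3}}\,\mathbb{E}\|\nabla f(x_t^k)\|_2^2$ for all $k,t$.

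Finally I telescope: within epoch $k$ the distance term vanishes at $t=0$ (since $x_0^k=\tilde{x}_k$) and at $t=q$ (since $c_q=0$), and $\tilde{x}_{k+1}=x_q^k$, so summing over $t=0,\dots,q-1$ and then $k=0,\dots,N-1$ gives $\tfrac{\nu n}{LT^{2/3}}\sum_{k=0}^{N-1}\sum_{t=0}^{q-1}\mathbb{E}\|\nabla f(x_t^k)\|_2^2 \le f(x_0) - \mathbb{E}[f(x_q^{N-1})] \le f(x_0)-f(x_*)$. Dividing by $qN$ and identifying the left side with $\mathbb{E}\|\nabla f(x)\|_2^2$ for the uniform output $x$ yields the bound. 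For the complexity, each epoch costs $T$ (for $\nabla f(\tilde{x}_k)$) plus $2n$ per inner step, i.e. $T+2nq=\Theta(T)$ since $q=\Theta(T/n)$; the bound is $\Theta(1/(T^{1/3}N))$, so $N = \Theta(\epsilon^{-1}T^{-1/3})$ epochs give accuracy $\epsilon$, for a total of $\Theta(TN) = O(T^{2/3}/\epsilon)$ $\mathcal{SFO}$-calls. The genuinely delicate point throughout is keeping $b_1 - c_{t+1}b_2$ strictly positive: since the $H_t^k$--$g_t^k$ correlation forces a Young's inequality somewhere, $c_{t+1}b_2$ is only a constant factor below $b_1$, and the $n$-dependent terms in the hypothesis on $\nu,\mu_0$ encode exactly how small $\mu_0$ must be for that margin to survive.
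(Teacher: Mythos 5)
Your proposal follows essentially the same route as the paper's proof: the same Lyapunov function $\Phi_t^k=\mathbb{E}\left[f(x_t^k)+c_t\|x_t^k-\tilde{x}_k\|_2^2\right]$ with $c_q=0$, the same treatment of the sample-dependent $H_t^k$ by splitting off $\beta I$ and bounding the residual via the operator-norm estimate \eqref{ineq:Mk_nd} together with Cauchy--Schwarz and Young, the same parameter choices making $(1+\theta)^q\le e$ (the source of the $(e-1)$ factor), and the same telescoping and $\mathcal{SFO}$-call count, i.e. it reproduces Lemma~\ref{lemma:sam_vr} and its use in the theorem. Apart from immaterial constant slips in the sketch (e.g. the residual bound is $\|H_t^k-\beta I\|_2\le\beta^{3/2}(1+C^{-1/2})$ rather than $\beta^{3/2}(1+C^{-1})^{1/2}$, and the coefficient $a_1$ scales with $\beta^2L^2/n$), the argument is correct and matches the paper's.
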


\subsection{Convergence of AdaSAM} \label{anal2}
Now we  state the convergence of AdaSAM. From the definition of $ \delta_k $ (\ref{deltak}), $ \delta_k \geq c_2\beta_k^{-2} $, thus fulfilling the condition in previous theorems. 
We discuss the rationality of the first term, which is somewhat heuristic. Since $ r_k^TH_kr_k \geq \beta_k\mu \|r_k\|_2^2,$ and $ \|H_{k}r_k\|_2^2 \leq 2\left(\beta_{k}^2+\alpha_{k}^2\delta_{k}^{-1}\right)\|r_k\|_2^2 \leq 2\beta_k^2\left(1+C^{-1}\right)\|r_k\|_2^2 \ $ if $ \alpha_k \leq 1, \delta_k \geq C\beta_k^{-2} $  (see the supplement for this inequality), 
it is sensible to suppose $ \|x_k-x_{k-1}\|_2 \approx \|x_{k+1}-x_k \|_2 = \|H_kr_k\|_2 =\beta_k h_k \|r_k\|_2 $, where $ 0<h_k\leq h <+\infty $ is a number related to $  H_k$. Therefore, 
\begin{align}
\frac{c_1\| r_k \|_2^2}{\| x_k - x_{k-1}  \|_2^2} \approx
\frac{c_1\|r_k\|_2^2}{\beta_k^2h_k^2\|r_k \|_2^2} \geq c_1h^{-2}\beta_k^{-2},
\label{ineq:delta_k_approx}
\end{align}
which coincides with the requirement that $ \delta_k \geq C\beta_k^{-2} $. This observation together with $ \delta_k \geq c_2\beta_k^{-2} $ ensures AdaSAM converges in nonconvex stochastic optimization under some proper assumptions.

\section{Experiments} \label{sec:expr}
 Our implementation is based on Algorithm~\ref{alg:sam}, with additional details mentioned in  Remark~\ref{remark:impl}. The pseudocode is given in the Appendix.
 To evaluate the effectiveness of our method AdaSAM and its preconditioned variant pAdaSAM, we compared them with several first-order and second-order optimizers for mini-batch training of neural networks, which can be highly nonlinear and stochastic, on different machine learning tasks. The datasets are MNIST \citep{lecun1998gradient}, 
 CIFAR-10/CIFAR-100 \citep{krizhevsky2009learning}
 for image classification and Penn TreeBank \citep{marcus1993building} for language model.  For AdaSAM, the individual hyperparameter needs to be tuned is $c_1$ in \eqref{deltak}, others are set as default in Algorithm~\ref{alg:sam}. 
 More Experimental details and hyper-parameter tuning are referred to the Appendix.


\textbf{Experiments on MNIST}.  We trained a simple convolutional neural network (CNN) \footnote{Based on the official PyTorch implementation \href{https://github.com/pytorch/examples/blob/master/mnist}{https://github.com/pytorch/examples/blob/master/mnist}.} 
on MNIST, for which  we are only  concerned about  the minimization of the empirical risk (\ref{eq:erm}), i.e. the training loss, with large batch sizes. The training dataset was preprocessed by randomly selecting 12k images from the total 60k images for training. Neither weight-decay nor dropout was used. We compared AdaSAM with SGDM, Adam \citep{kingma2014adam}, SdLBFGS \citep{wang2017stochastic}, and RAM  (cf. \eqref{rlsq}).  The learning rate was tuned and fixed for each optimizer. The historical length for SdLBFGS, RAM and AdaSAM was set as 20. $ \delta = 10^{-6}$ for RAM and $ c_1=10^{-4} $ for AdaSAM. 

\begin{figure}[ht]
\centering 
\subfigure[Batchsize=6K]{
\includegraphics[width=0.23\textwidth]{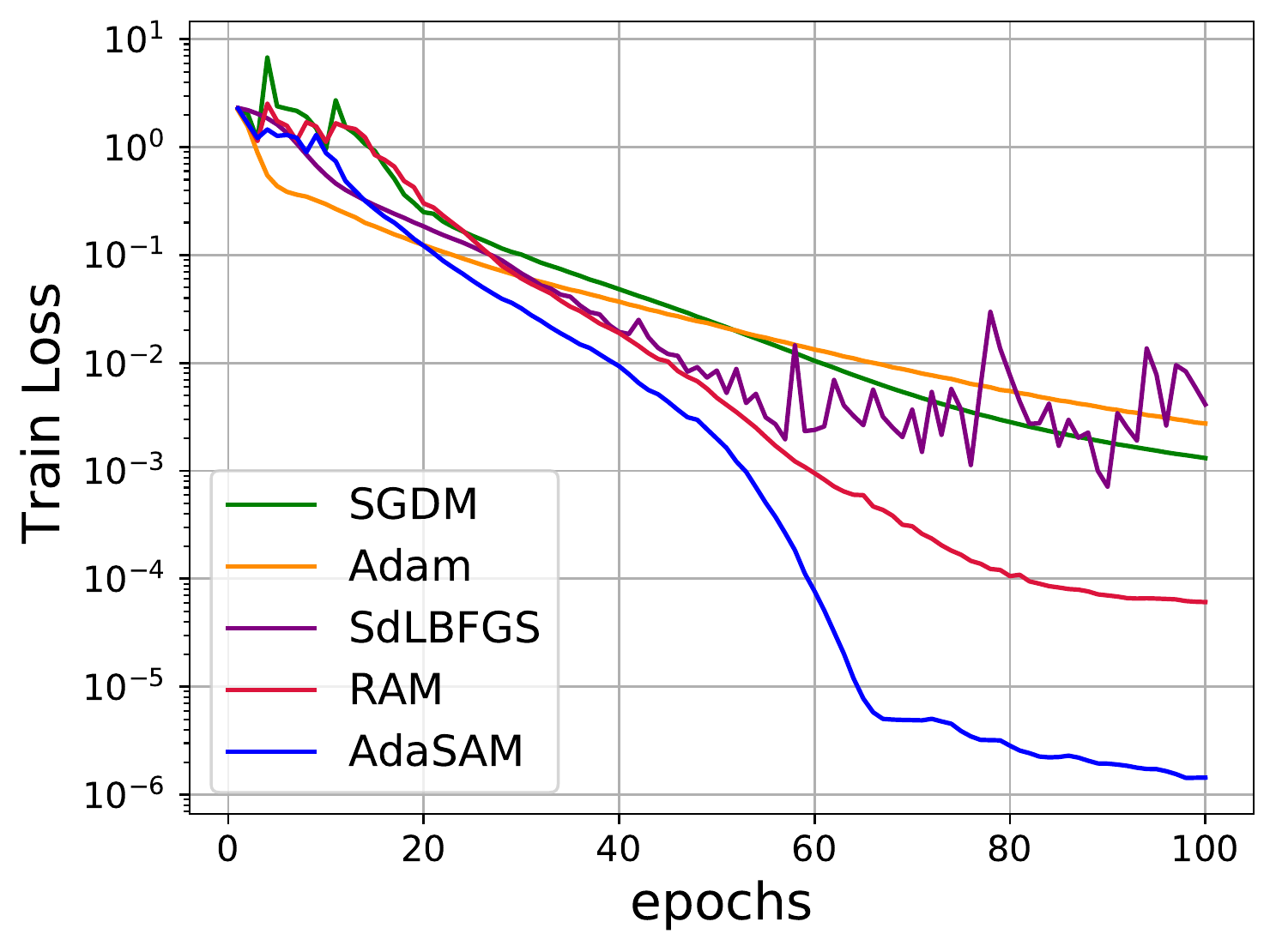}
}
\subfigure[Batchsize=3K]{
\includegraphics[width=0.23\textwidth]{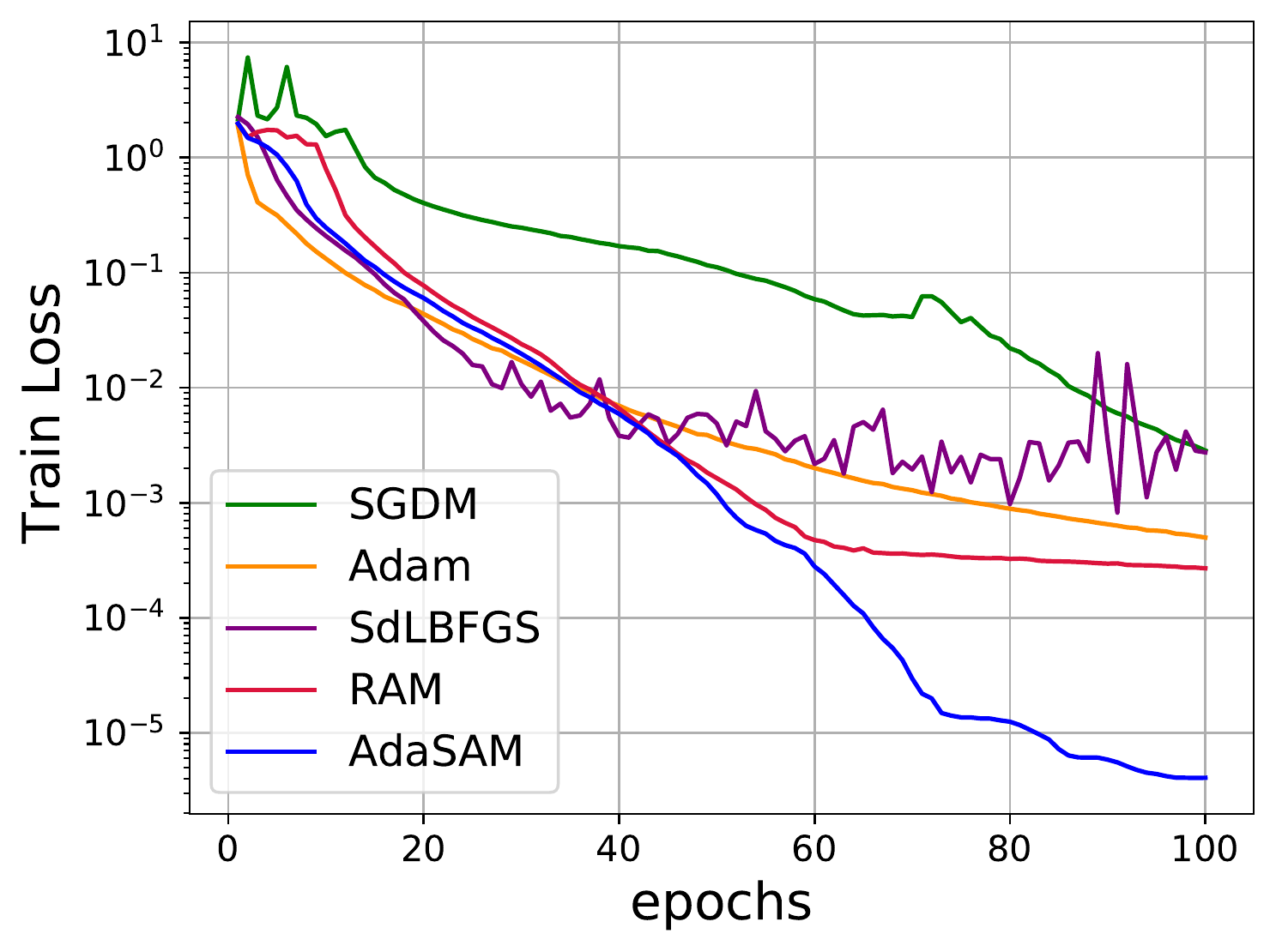}
}
\subfigure[Variance reduction]{
\includegraphics[width=0.23\textwidth]{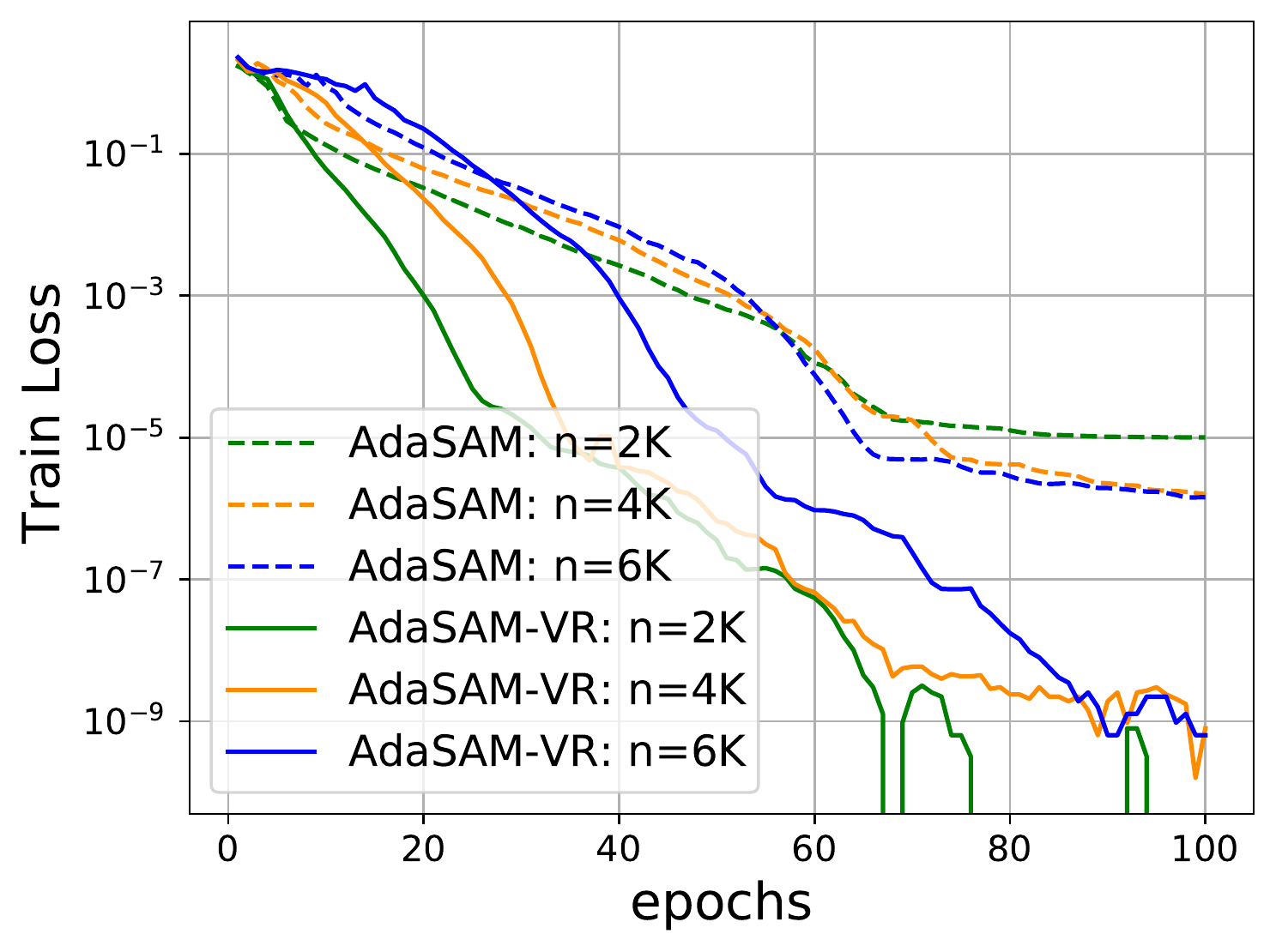}
}
\subfigure[Preconditioning]{
\includegraphics[width=0.23\textwidth]{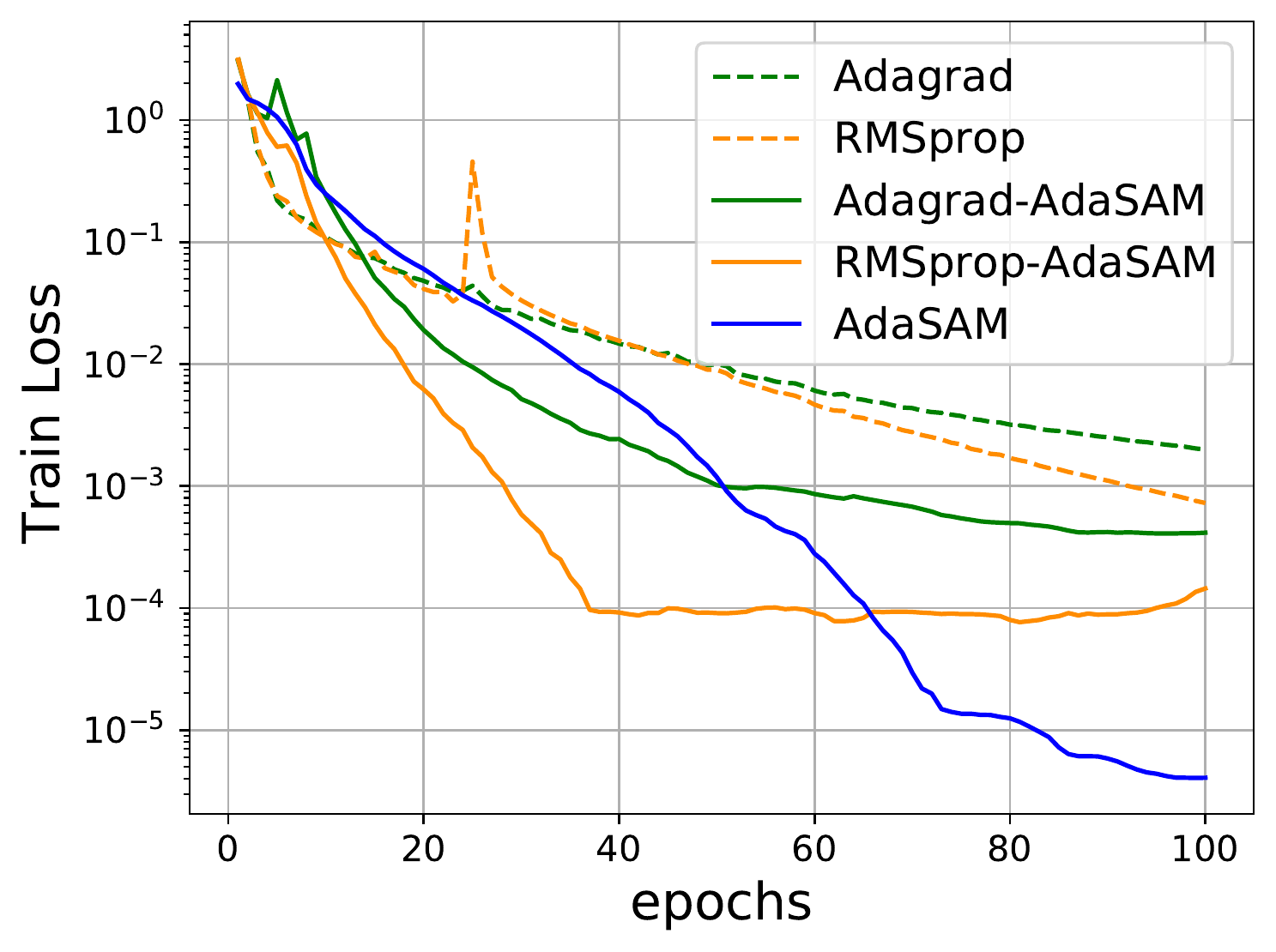}
}
\caption{Experiments on MNIST. (a) Train Loss using batchsize = 6K; (b) Train Loss using batchsize = 3K; (c) AdaSAM with variance reduction; (d) Preconditioned AdaSAM with batchsize = 3K. }
\label{fig:mnist_1}
\end{figure}

 Figure~\ref{fig:mnist_1} (a) and (b) show the curves of training loss when training 100 epochs with batch sizes of 6K and 3K, which indicate that AdaSAM can significantly minimize the empirical risk in large mini-batch training. The comparison with RAM verifies the benefit of adaptive regularization. We also notice that there hardly exists any oscillation in AdaSAM during training except for the first few epochs, which  demonstrates AdaSAM's tolerance to noise. We also tested the effectiveness of variance reduction and preconditioning introduced in Section~\ref{subsec:enhance}. The variance reduced extension of AdaSAM is denoted as AdaSAM-VR and was compared with AdaSAM for different batch sizes. The variants of AdaSAM preconditioned by Adagrad  \citep{duchi2011adaptive} and RMSprop \citep{tieleman2012lecture} are denoted as Adagrad-AdaSAM and RMSprop-AdaSAM respectively. Though AdaSAM-VR demands more gradient evaluations and the preconditioned variants seem to deteriorate the final training loss, we point out that AdaSAM-VR can achieve lower training loss ($ 10^{-9} $) and the preconditioned variants   converge faster to an acceptable training loss (e.g. $ 10^{-3} $).


\begin{table*}[ht]
\centering
\caption{ Training various neural networks on CIFAR10/CIFAR100. The entry (mean $\pm$ standard deviation) denotes the percentage of the final TOP1 accuracy in test dataset. The bold numbers highlight the best results. WideResNet is abbreviated as WResNet. }
\label{table:cifar}
\resizebox{\textwidth}{!}{
\begin{tabular}{l c c c c c c c c c}
\toprule
 \multirow{2}{*}{Method} & \multicolumn{6}{c}{CIFAR10} & \multicolumn{3}{c}{CIFAR100}\\
 \cmidrule(lr){2-7} \cmidrule(lr){8-10}
 & ResNet18 & ResNet20 & ResNet32 & ResNet44 & ResNet56 & WResNet & ResNet18  & ResNeXt & DenseNet \\
 \cmidrule(lr){2-7} \cmidrule(lr){8-10}
 SGDM & 94.82$\pm$.15 & 92.03$\pm$.16 & 92.86$\pm$.15 & 93.10$\pm$.23 & 93.47$\pm$.28 & 94.90$\pm$.09 & 77.27$\pm$.09 & 78.41$\pm$.54 & 78.49$\pm$.12 \\
 Adam & 93.03$\pm$.07 & 91.17$\pm$.13 & 92.03$\pm$.28 & 92.28$\pm$.62 & 92.39$\pm$.23 & 92.45$\pm$.11 & 72.41$\pm$.17 & 73.57$\pm$.17 & 70.80$\pm$.23 \\
 AdaBelief & 94.65$\pm$.13 & 91.15$\pm$.21 & 92.15$\pm$.17 & 92.79$\pm$.24 & 93.30$\pm$.07 & 94.46$\pm$.13 & 76.25$\pm$.06 & 78.27$\pm$.16 & 78.83$\pm$.15 \\
 Lookahead & 94.92$\pm$.33 & 92.07$\pm$.04 & 92.86$\pm$.15 &  93.26$\pm$.24 & 93.36$\pm$.13 & 94.90$\pm$.15 & 77.63$\pm$.35 & 78.93$\pm$.12 & 79.37$\pm$.16 \\
 RNA & 93.45$\pm$.21 & 90.73$\pm$.12 & 91.08$\pm$.51 & 91.61$\pm$.37 & 91.23$\pm$.14 & 93.85$\pm$.24 & 75.12$\pm$.39 & 75.88$\pm$.40 & 75.70$\pm$.49 \\
RAM & 95.10$\pm$.05 & 92.21$\pm$.09 & 93.05$\pm$.43 & 93.42$\pm$.13 & 93.76$\pm$.16 &  95.04$\pm$.09 & 76.19$\pm$.12 & 78.65$\pm$.20 & 78.28$\pm$.62 \\
AdaSAM & \textbf{95.17$\pm$.10} & \textbf{92.43$\pm$.19} & \textbf{93.22$\pm$.32} & \textbf{93.57$\pm$.14} & \textbf{93.77$\pm$.12} & \textbf{95.23$\pm$.07} & \textbf{78.13$\pm$.14} & 79.31$\pm$.27 & \textbf{80.09$\pm$.52} \\
 \cmidrule(lr){2-7} \cmidrule(lr){8-10}
AdaSAM-SGD & 95.04$\pm$.22 & 92.26$\pm$.10 & 92.92$\pm$.28 & 93.01$\pm$.15 & 93.71$\pm$.15 & 94.99$\pm$.19 & 77.81$\pm$.12 & \textbf{79.47$\pm$.44} & 79.58$\pm$.39 \\
AdaSAM-Adam & 93.86$\pm$.23 & 92.27$\pm$.29 & 92.67$\pm$.09 & 92.94$\pm$.30 & 93.22$\pm$.12 & 93.88$\pm$.20 & 74.46$\pm$.51 & 75.34$\pm$.20 & 75.21$\pm$.49 \\
\bottomrule
\end{tabular}}
\end{table*}

 \begin{wrapfigure}{R}{0.42\textwidth}
\includegraphics[width=0.42\textwidth]{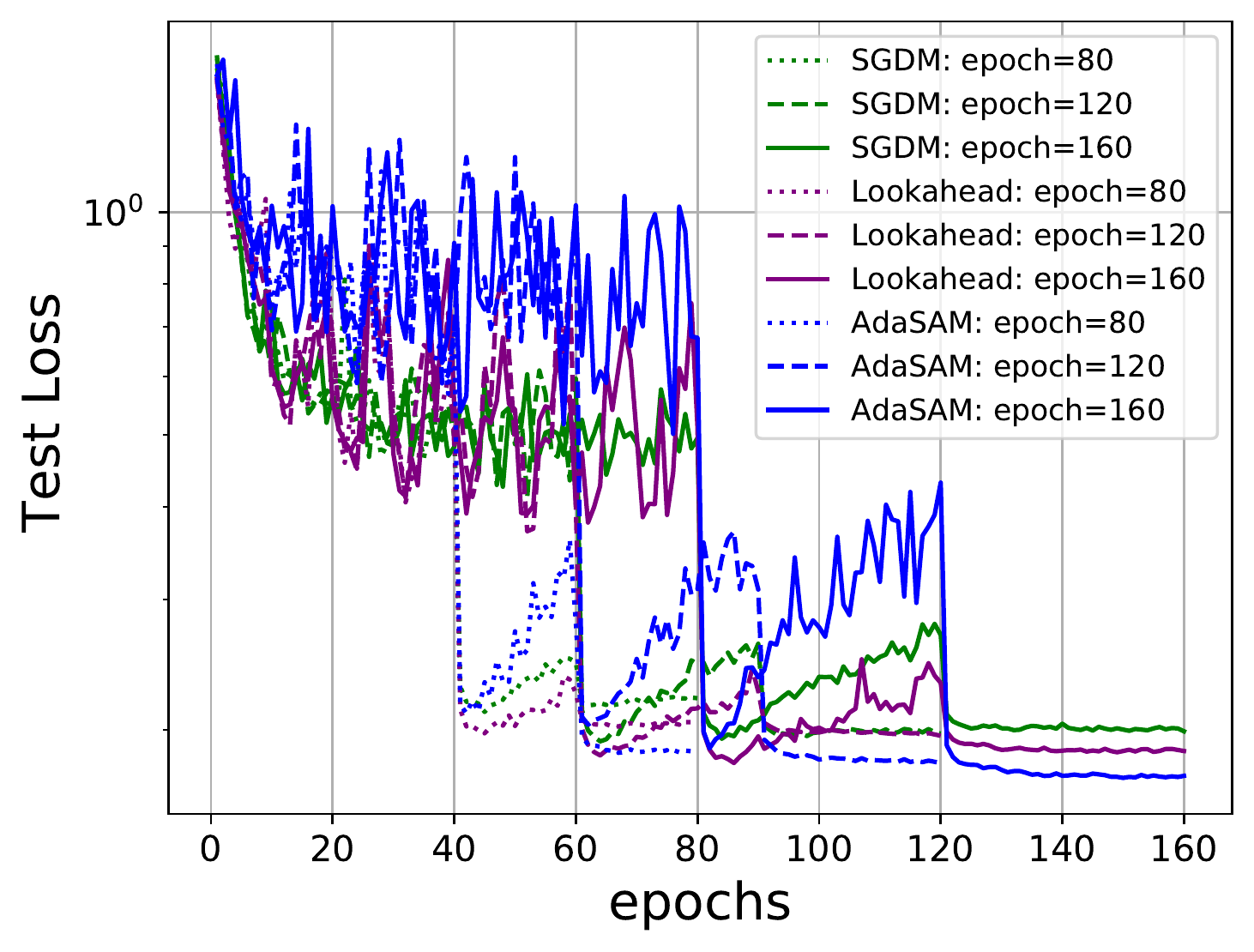}
\caption{Test loss of training ResNet18 on CIFAR-10.} \label{fig:epochs}
\end{wrapfigure}
 
 \textbf{Experiments on CIFAR}. 
 For CIFAR-10 and CIFAR-100, both datasets have 50K images for training and 10K images for test.  The test accuracy at the final epoch was reported as the evaluation metric. We trained ResNet18/20/32/44/56 \cite{he2016deep} and WideResNet16-4 \cite{zagoruyko2016wide} on CIFAR-10, and ResNet18, ResNeXt50 \cite{xie2017aggregated} and DenseNet121 \cite{huang2017densely} on CIFAR-100. The baseline optimizers were SGDM, Adam, AdaBelief \cite{zhuang2020adabelief}, Lookahead \cite{zhang2019lookahead} and RNA \cite{scieur2018nonlinear}. 
 The hyperparameters were kept unchanged across different tests.
 We trained 160 epochs with batch size of 128 and decayed the learning rate at the 80th and 120th epoch. For AdaSAM/RAM, $\alpha_k$ and $\beta_k$ were decayed  at the 80th and 120th epoch.
 
 Table~\ref{table:cifar} demonstrates the   generalization ability of AdaSAM. 
 Compared with SGDM/Lookahead, AdaSAM is built on a noisy quadratic model to extrapolate historical iterates more elaborately, which may explore more information from history.  We also conducted tests on training for 120 epochs and 80 epochs.  Figure~\ref{fig:epochs} shows AdaSAM can achieve comparable or even lower test loss than SGDM/Lookahead when training with fewer epochs, thus saving large number of iterations.   We point out that the slow convergence before the first learning rate decay is attributed to the fact that we use a much larger weight-decay for AdaSAM ($ 1.5\times 10^{-3} $ vs. $ 5\times 10^{-4} $ for SGDM) and large learning rate ($ \alpha_0=\beta_0=1$) which may slow down training but help generalize \citep{li2019towards}.  

 We also explore the scheme of alternating iterations: given an optimizer $ optim $ , in each cycle, we iterate with $ optim $ for $ (p-1) $ steps and then apply AdaSAM in the $ p $-th step, the result of which is the starting point of the next cycle. 
  We tested  vanilla SGD (momentum = 0) alternated with AdaSAM, and Adam alternated with AdaSAM, which are denoted as AdaSAM-SGD and AdaSAM-Adam respectively. The number of steps of one cycle is 5. Results listed at the bottom of Table~\ref{table:cifar} show AdaSAM-Adam gives a thorough improvement over Adam. AdaSAM-SGD can even beat AdaSAM on CIFAR-100/ResNeXt50, and exceed the test accuracy of SGDM by 1.06\%. Hence alternating iteration can reduce computational overhead while achieving comparable test accuracy.

\begin{figure}[ht]
\centering 
\subfigure[1-Layer LSTM]{
\includegraphics[width=0.31\textwidth]{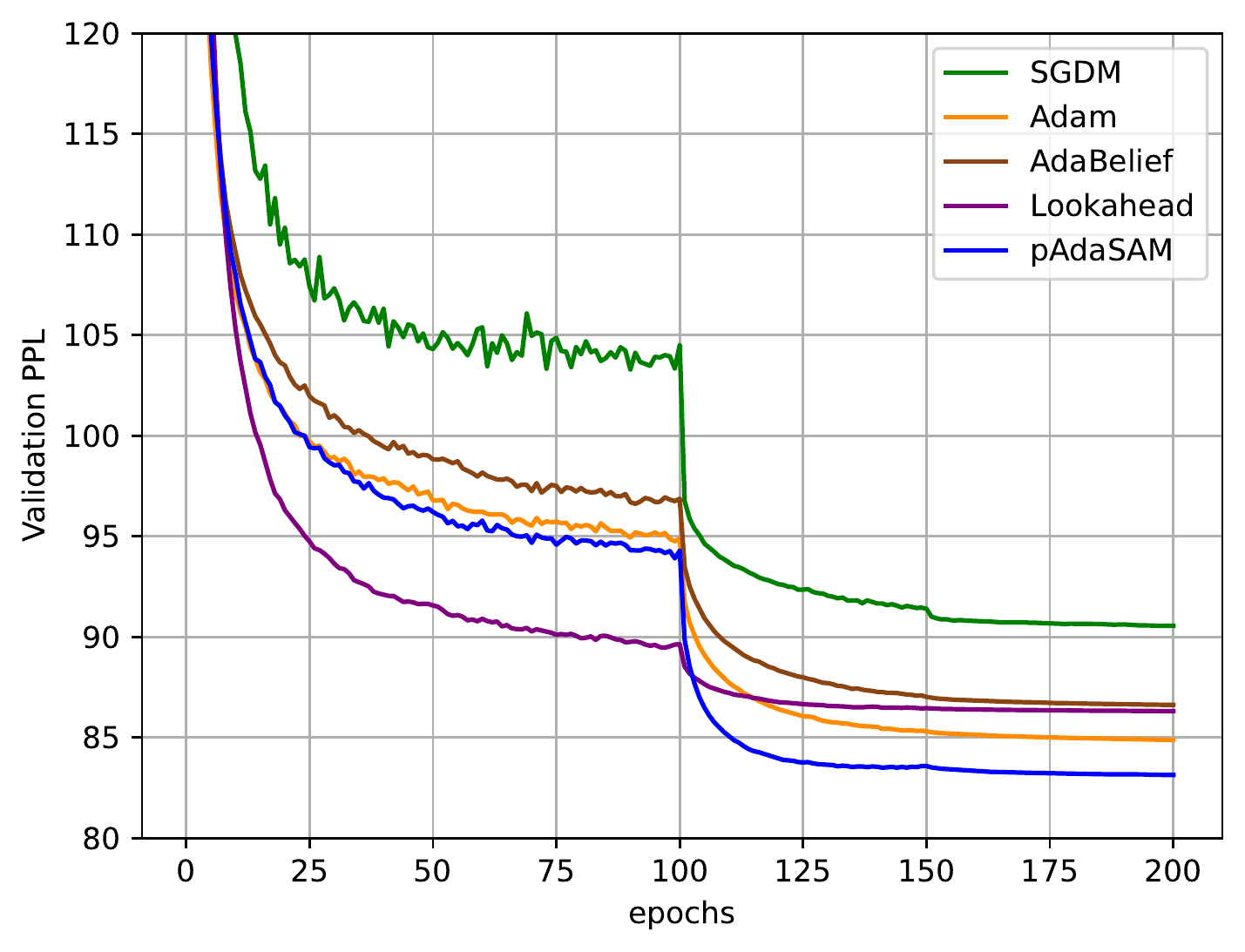}
}
\subfigure[2-Layer LSTM]{
\includegraphics[width=0.31\textwidth]{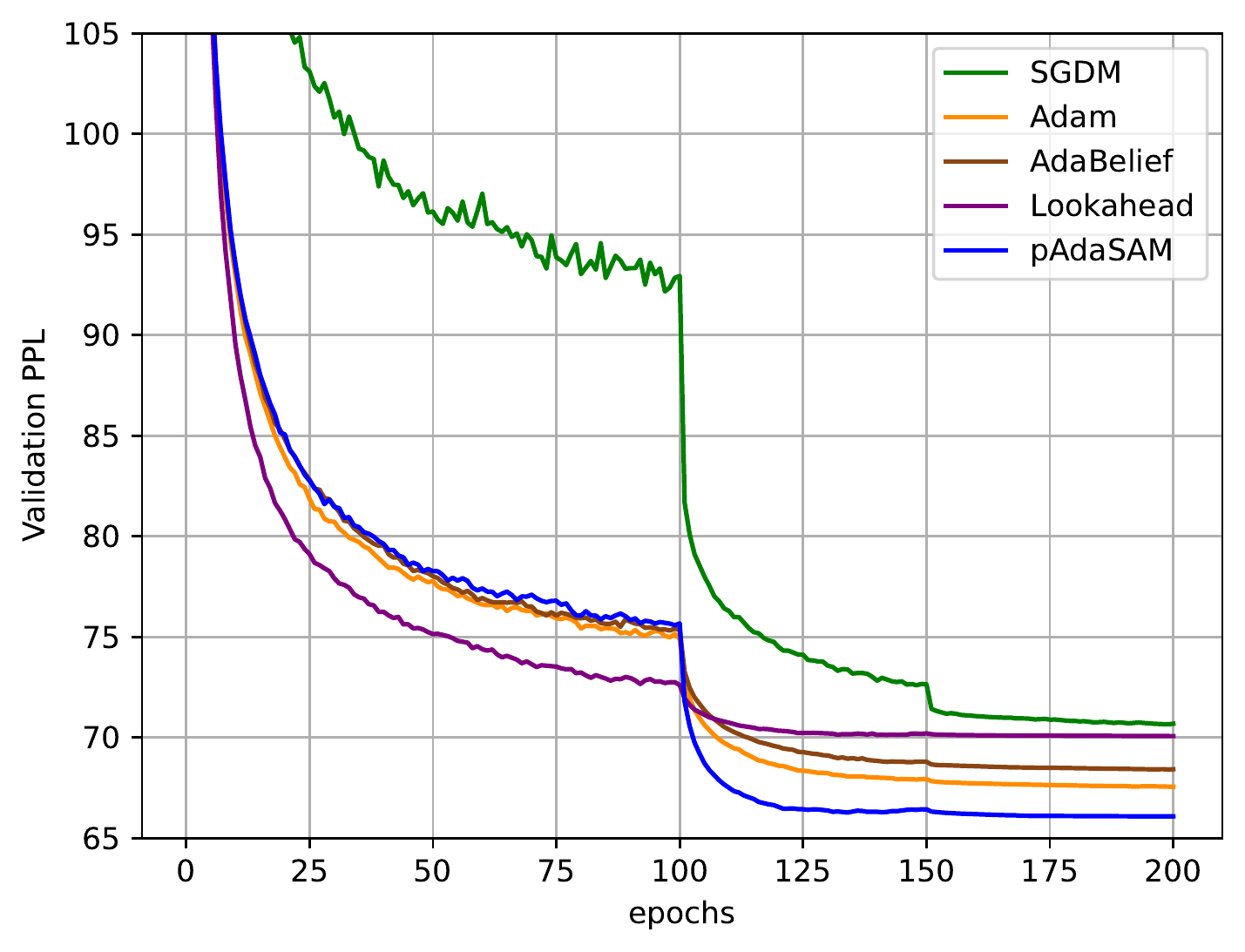}
}
\subfigure[3-Layer LSTM]{
\includegraphics[width=0.31\textwidth]{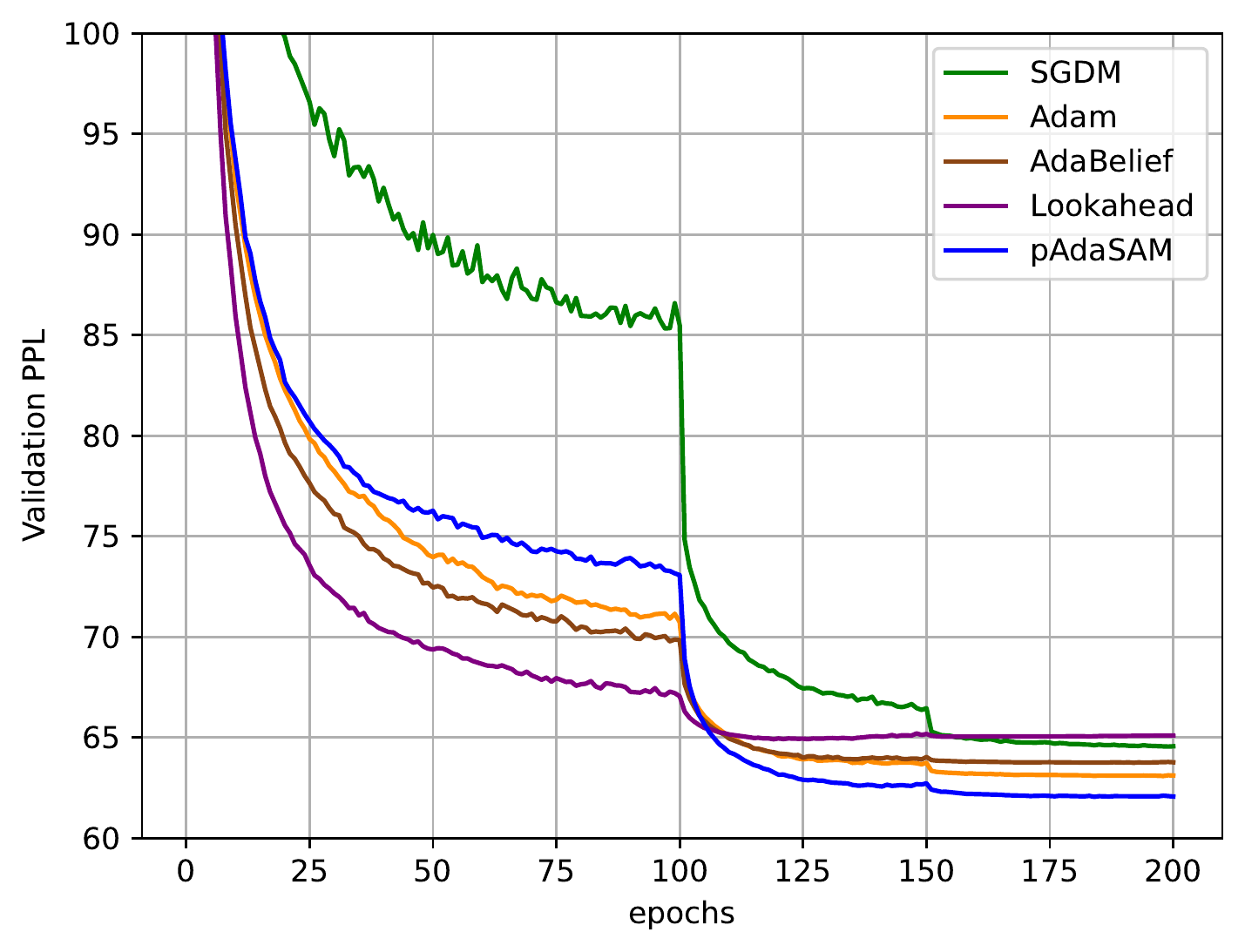}
}
\caption{Experiments on Penn TreeBank. Validation loss of training 1,2,3-Layer LSTM.}
\label{fig:ptb}
\end{figure}

 \begin{wrapfigure}{R}{0.51\textwidth}
 \makeatletter\def\@captype{table}\makeatother
  \caption{Test perplexity on Penn TreeBank for 1,2,3-layer LSTM. Lower is better.} \label{table:ptb}
  \centering
  \resizebox{0.5\textwidth}{!}{
  \begin{tabular}{lccc} 
    \toprule   	
    Method     & 1-Layer     & 2-Layer & 3-Layer \\
    \midrule
    SGDM 		 &  85.21$\pm$.36   & 67.12$\pm$.14     & 61.56$\pm$.14 \\
    Adam     	 &  80.88$\pm$.15   & 64.54$\pm$.18     & 60.34$\pm$.22 \\
    AdaBelief     & 82.41$\pm$.46   & 65.07$\pm$.02    & 60.64$\pm$.14 \\
    Lookahead     & 82.01$\pm$.07   & 66.43$\pm$.33   & 61.80$\pm$.10 \\
    pAdaSAM      & \textbf{79.34$\pm$.09}    & \textbf{63.18$\pm$.22}   & \textbf{59.47$\pm$.08} \\
    \bottomrule
  \end{tabular}
  }
\end{wrapfigure}

\textbf{Experiments on Penn TreeBank}. We trained LSTM on Penn TreeBank and reported the perplexity on the validation set in Figure~\ref{fig:ptb} and testing set in Table~\ref{table:ptb}, where pAdaSAM denotes the variant of AdaSAM preconditioned by Adam. The experimental setting is the same as that in AdaBelief \citep{zhuang2020adabelief}. In our practice, we find that the vanilla AdaSAM with default hyperparameter setting is not suitable for this task.  Nevertheless, a suitable preconditioner (e.g. Adam) can largely improve the behaviour of AdaSAM. Conversely, AdaSAM can also enhance a optimizer when the latter is used as a preconditioner. 

\section{Related work}
AM is  well-known in scientific computing \cite{toth2015convergence, brezinski2018shanks, suryanarayana2019alternating}. 
 Recently, AM also receives attention in machine learning, such as for accelerating EM algorithm \cite{henderson2019damped}, proximal gradient methods \cite{mai2020anderson} and reinforcement learning \cite{geist2018anderson}, but there is no general theoretical convergence analysis. 
 AM utilizes historical iterations through the projection and mixing step.  Besides, there are also other research works which exploit historical information with tools from machine learning, such as using guided policy search \cite{li2016learning} and RNN \cite{andrychowicz2016learning}. However, these methods are much more complicated to implement and apply in practice and the  mechanisms underlying these methods are difficult to interpret.

\section{Conclusion}
 In this paper, we develop an extension of Anderson mixing, namely Stochastic Anderson Mixing, for nonconvex stochastic optimization. By introducing damped projection and adaptive regularization, we establish the convergence theory of our new method. We also analyze its work complexity in terms of $ \mathcal{SFO} $-calls    and show  it can achieve the $ O(1/\epsilon^2)$ complexity for an $\epsilon-$accurate solution. We also give a specific form of adaptive regularization. Then we propose two techniques to further enhance our method. One is the  variance reduction technique, which can further improve the work complexity of our method theoretically and help achieve lower empirical risk in our experiments. The other one is the preconditioned mixing strategy that  directly extends Anderson mixing.  
 Experiments show encouraging results of our method and its enhanced version in terms of convergence rate or generalization ability in training different neural networks in different machine learning tasks.  These results confirm the suitability of Anderson mixing for nonconvex stochastic optimization.

\bibliographystyle{plainnat}
\bibliography{neurips_2021}

\clearpage

\appendix

\section{Proofs} \label{sec:proof} 
 
 \subsection{Nonconvex stochastic optimization} \label{proof:nonconvexStochastic}
 We give proofs of the theorems in section \ref{sec:theory}.
 
 From Assumption~\ref{assume:noise}, for the mini-batch gradient $ f_{S_k}(x_k) = \frac{1}{n_k}\sum_{i\in S_k}f_{\xi_i}(x_k) $, where $ n_k = \vert S_k \vert $, we have 
\begin{subequations}
\begin{align}
&\mathbb{E}[\nabla f_{S_k}(x)\vert x_k] = \nabla f(x_k), \label{minibatch:avg}  \\
&\mathbb{E}[\| \nabla f_{S_k}(x_k)-\nabla f(x_k)\|_2^2 \vert x_k] \leq \frac{\sigma^2}{n_k}. \label{minibatch:std}
\end{align}
\end{subequations} 

  Note that the update of SAM (\ref{eq:sam}) can be written as  $ x_{k+1}=x_k+H_kr_k $, where
 $ r_k = -\nabla f_{S_k}(x_k) $,   
   $ H_0 = \beta_0 I$ and for $ k\geq 1 $,
 \begin{align}
 H_k =  \beta_kI - \left( \alpha_kX_k+\alpha_k\beta_kR_k \right)
 \left( R_k^{\mathrm{T}}R_k+\delta_kX_k^{\mathrm{T}}X_k\right)^{-1}R_k^{\mathrm{T}}.
 \end{align}
 Theorem \ref{them:nonconvexStochastic} - \ref{them:sam_vr} state the same convergence and complexity results as the ones proved in \citep{wang2017stochastic}. To prove these theorems, the critical points are (i) the positive definiteness of the approximate Hessian $ H_k $  and (ii) an adequate suppression of the noise in the gradient estimation. 
 
 We first give some lemmas.

\begin{lemma} \label{lemma:Hkrk_sq}
Suppose that  $ \lbrace x_k\rbrace $ is generated by SAM. If $ \alpha_k\geq 0, \beta_k>0, \delta_k>0 $, then for any $ v_k	\in \mathbb{R}^{d} $, we have
\begin{align}
\| H_kv_k\|_2^2\leq 2\left( \beta_k^2\left(1+2\alpha_k^2-2\alpha_k \right)+\alpha_k^2\delta_k^{-1} \right)\| v_k\|_2^2.
\end{align}
\end{lemma}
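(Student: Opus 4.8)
The plan is to apply $H_k$ to a generic vector, expand the resulting squared norm \emph{exactly}, and then control the only two scalar quantities that are not already pinned down by the regularized Gram matrix. Write the SAM map applied to $v_k$ as $H_k v_k = \beta_k v_k - \alpha_k (X_k + \beta_k R_k) w_k$, where $w_k := Z_k^{-1} R_k^{\mathrm{T}} v_k$ and $Z_k := R_k^{\mathrm{T}} R_k + \delta_k X_k^{\mathrm{T}} X_k$ (for $k=0$ the claim is trivial since $H_0 = \beta_0 I$). The single identity that drives the proof is that $R_k^{\mathrm{T}} v_k = Z_k w_k$, hence
\[
\langle v_k, R_k w_k\rangle = w_k^{\mathrm{T}} Z_k w_k = \|R_k w_k\|_2^2 + \delta_k \|X_k w_k\|_2^2 .
\]
Setting $a := \|R_k w_k\|_2^2$ and $b := \|X_k w_k\|_2^2$, this identity together with Cauchy--Schwarz gives $a + \delta_k b = \langle v_k, R_k w_k\rangle \le \|v_k\|_2 \sqrt{a}$; dropping $\delta_k b \ge 0$ yields $a \le \|v_k\|_2^2$, and maximizing $\sqrt{a}\,\|v_k\|_2 - a$ over $\sqrt a\ge 0$ yields $\delta_k b \le \tfrac14 \|v_k\|_2^2$. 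These a‑priori bounds are the only place where the regularizer $\delta_k$ enters quantitatively.

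Next I would expand $\|H_k v_k\|_2^2 = \beta_k^2\|v_k\|_2^2 - 2\alpha_k\beta_k \langle v_k, (X_k+\beta_k R_k)w_k\rangle + \alpha_k^2 \|(X_k+\beta_k R_k)w_k\|_2^2$ and substitute the identity above, using $\langle v_k, (X_k+\beta_k R_k)w_k\rangle = \langle v_k, X_k w_k\rangle + \beta_k(a+\delta_k b)$ and $\|(X_k+\beta_k R_k)w_k\|_2^2 = b + 2\beta_k\langle X_k w_k, R_k w_k\rangle + \beta_k^2 a$. Collecting terms gives
\[
\|H_k v_k\|_2^2 = \beta_k^2\|v_k\|_2^2 + \beta_k^2(\alpha_k^2 - 2\alpha_k)\,a + (\alpha_k^2 - 2\alpha_k\beta_k^2\delta_k)\,b - 2\alpha_k\beta_k \langle v_k, X_k w_k\rangle + 2\alpha_k^2\beta_k \langle X_k w_k, R_k w_k\rangle .
\]
The two inner products $\langle v_k, X_k w_k\rangle$ and $\langle X_k w_k, R_k w_k\rangle$ are exactly the quantities not controlled by the Gram structure; bounding them by Cauchy--Schwarz ($|\langle v_k, X_k w_k\rangle| \le \|v_k\|_2\sqrt b$, $|\langle X_k w_k, R_k w_k\rangle| \le \sqrt{ab}$) turns the right‑hand side into a quadratic $\Phi(\sqrt a,\sqrt b)$ in the two nonnegative scalars $\sqrt a\le\|v_k\|_2$ and $\sqrt b\le \tfrac12\delta_k^{-1/2}\|v_k\|_2$, in which the terms $\beta_k^2(\alpha_k^2-2\alpha_k)a\le 0$ (for $\alpha_k\le 2$) and $-2\alpha_k\beta_k^2\delta_k b\le 0$ are available to absorb the cross terms. (One sees the origin of the target coefficient from the regrouping $H_k v_k = \beta_k(1-\alpha_k)v_k + \alpha_k\bigl(\beta_k(v_k - R_k w_k) - X_k w_k\bigr)$, which explains the form $1+2\alpha_k^2-2\alpha_k = (1-\alpha_k)^2+\alpha_k^2$.)

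The remaining step is to maximize $\Phi$ over the feasible region (refined by the constraint $a+\delta_k b\le\|v_k\|_2\sqrt a$) and check the maximum does not exceed $2\bigl(\beta_k^2(1+2\alpha_k^2-2\alpha_k) + \alpha_k^2\delta_k^{-1}\bigr)\|v_k\|_2^2$: dividing by $\|v_k\|_2^2$, completing the square in $\sqrt b$, and splitting on the sign of $\alpha_k^2 - 2\alpha_k\beta_k^2\delta_k$ so the $b$‑quadratic closes cleanly, reduces the claim to an elementary polynomial inequality in $(\alpha_k,\beta_k,\delta_k)$. I expect this last optimization to be the main obstacle. The subtlety is that when $0<\alpha_k<1$ the target coefficient $(1-\alpha_k)^2+\alpha_k^2$ is strictly below $1$, so the lazy estimate $\|p+q\|_2^2\le 2\|p\|_2^2 + 2\|q\|_2^2$ applied to $\beta_k(v_k-\alpha_k R_k w_k)$ and $\alpha_k X_k w_k$ is fatally lossy --- it already produces a spurious factor $2$ on the $\beta_k^2\|v_k\|_2^2$ term --- so the negative projection terms must be kept \emph{exactly} and the cross terms handled with a matched Young/AM--GM split (equivalently, by exploiting positive‑semidefiniteness of the Gram matrix of $v_k$, $X_k w_k$, $R_k w_k$) rather than bounded in isolation.
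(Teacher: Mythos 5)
Your setup is sound (the normal-equation identity $\langle v_k,R_k w_k\rangle=\|R_kw_k\|_2^2+\delta_k\|X_kw_k\|_2^2$, the a priori bounds $a\le\|v_k\|_2^2$ and $\delta_k b\le\tfrac14\|v_k\|_2^2$, and the exact expansion of $\|H_kv_k\|_2^2$ are all correct, modulo writing $Z_k^{-1}$ where the algorithm uses $Z_k^{\dagger}$ — harmless since $R_k^{\mathrm T}v_k\in\mathrm{range}(Z_k)$), but the proof is not finished: the decisive step, showing that the relaxed quadratic $\Phi(\sqrt a,\sqrt b)$ stays below $2\bigl(\beta_k^2(1+2\alpha_k^2-2\alpha_k)+\alpha_k^2\delta_k^{-1}\bigr)\|v_k\|_2^2$ over the feasible region, is only announced, and you yourself flag it as "the main obstacle." This is a genuine gap, and it is not the routine "elementary polynomial inequality" you suggest: after bounding the two cross terms by Cauchy--Schwarz with worst-case signs, the inequality survives only if the remaining negative terms are kept exactly. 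For instance, dropping the term $-2\alpha_k\beta_k^2\,\delta_k b$ (i.e.\ the $-2\alpha_k u^2$ contribution after the substitution $u=\sqrt{\delta_k}\,t$) already makes the relaxed bound fail near $\alpha_k\approx\tfrac12$, $a\approx\tfrac14\|v_k\|_2^2$, $\delta_k b\approx\tfrac14\|v_k\|_2^2$; and the lemma places no upper bound on $\alpha_k$, so your parenthetical restriction "for $\alpha_k\le 2$" signals a case analysis your sketch never carries out. Until that constrained maximization is actually done (including the optimization over $\delta_k$, or equivalently over $c=\alpha_k/\sqrt{\delta_k}$), the lemma is not proved.

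The irony is that you already hold the key and do not use it. The regrouping you note in passing, $H_kv_k=\beta_k(1-\alpha_k)v_k+\alpha_k\bigl(\beta_k(v_k-R_kw_k)-X_kw_k\bigr)$, is essentially the paper's proof: write $H_kv_k=\beta_k(1-\alpha_k)\,v_k+\beta_k\alpha_k\,(v_k-R_kw_k)-\alpha_k\delta_k^{-1/2}\,\bigl(\delta_k^{1/2}X_kw_k\bigr)$ and apply the weighted Cauchy--Schwarz inequality $\|\sum_i a_i x_i\|_2^2\le(\sum_i a_i^2)(\sum_i\|x_i\|_2^2)$ to these three terms. The coefficient factor is exactly $\beta_k^2(1-\alpha_k)^2+\beta_k^2\alpha_k^2+\alpha_k^2\delta_k^{-1}=\beta_k^2(1+2\alpha_k^2-2\alpha_k)+\alpha_k^2\delta_k^{-1}$, and the vector factor is $\|v_k\|_2^2+\|v_k-R_kw_k\|_2^2+\delta_k\|X_kw_k\|_2^2\le 2\|v_k\|_2^2$ by the minimizing property of $w_k$ (compare with $\Gamma=0$), which is equivalent to your identity. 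This gives the claimed bound in two lines, for every $\alpha_k\ge 0$, with no optimization over $(\sqrt a,\sqrt b)$ and no sign splitting; your concern about the "lazy" two-term estimate is exactly right, and the matched three-term split above is the fix.
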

\begin{proof}
The result clearly holds when $k=0$ as $ H_0 = \beta_0 I $. For $ k\geq 1 $,
\begin{align}
 H_kv_k = \beta_kv_k-(\alpha_kX_k+\alpha_k\beta_kR_k)\Gamma_k,
\end{align}
where 
\begin{equation}
\Gamma_k = \min\limits_{\Gamma}\| v_k - R_k \Gamma  \|_2^2 + \delta_k \| X_k \Gamma  \|_2^2. 
\end{equation}
Taking $ \Gamma = 0 $, we have
 \begin{equation}
 \|v_k-R_k\Gamma_k\|_2^2+\delta_k\|X_k\Gamma_k\|_2^2 \leq \|v_k\|_2^2. 
 \label{ineq:gamma_k}
 \end{equation}
 Therefore,
 \begin{align}
& \quad \|H_kv_k\|_2^2 \nonumber \\
 &= \|\beta_kv_k-(\alpha_kX_k+\alpha_k\beta_kR_k)\Gamma_k\|_2^2 \nonumber \\
 &=\|\beta_k\left(v_k-\alpha_kR_k\Gamma_k\right) - \alpha_kX_k\Gamma_k\|_2^2 \nonumber \\
 &=\|\beta_k(1-\alpha_k)v_k+\beta_k\alpha_k(v_k-R_k\Gamma_k) 
  -\alpha_k\delta_k^{-\frac{1}{2}}\delta_k^{\frac{1}{2}}X_k\Gamma_k\|_2^2 \nonumber \\
 &\leq \left(\beta_k^2(1-\alpha_k)^2+\beta_k^2\alpha_k^2+\alpha_k^2\delta_k^{-1} \right) \cdot\left( \|v_k \|_2^2+\|v_k-R_k\Gamma_k\|_2^2+\delta_k\|X_k\Gamma_k\|_2^2 \right)  \nonumber \\
&\leq\left( \beta_k^2\left( 1+2\alpha_k^2-2\alpha_k \right)+\alpha_k^2\delta_k^{-1} \right)\left(\|v_k\|_2^2 +\|v_k\|_2^2 \right) \nonumber \\
&= 2\left( \beta_k^2\left( 1+2\alpha_k^2-2\alpha_k \right)+\alpha_k^2\delta_k^{-1} \right)\|v_k\|_2^2. \label{ineq:Hkrk_sq}
 \end{align}
 In the above, the first inequality uses the inequality
 \begin{align}
 \|\sum_{i=1}^{n}a_i \mathbf{x_i}\|_2^2 &\leq \left(\sum_{i=1}^{n}\vert a_i\vert \|\mathbf{x_i} \|_2\right)^2 
 \leq \left(\sum_{i=1}^{n} a_i^2\right)\left( \sum_{i=1}^n \| \mathbf{x_i}\|_2^2\right), \label{ineq:generalCauchy}
 \end{align}
 where $ a_i \in \mathbb{R}, x_i \in\mathbb{R}^d $.
  The second inequality is based on inequality (\ref{ineq:gamma_k}). 
\end{proof}
 
 \begin{lemma}\label{lemma:bound}
Suppose that Assumption~\ref{assume:noise} holds for $ \lbrace x_k \rbrace $ generated by SAM. In addition, if  $  \beta_k > 0, \delta_k >0 $, and $\alpha_k \geq 0$ and satisfies (\ref{ineq:check_alpha_k}), then
\begin{subequations}
\begin{align}
 \mathbb{E}_{S_k}[\| H_kr_k \|_2^2] &\leq 2\left( \beta_k^2\left( 1+2\alpha_k^2-2\alpha_k \right)+\frac{\alpha_k^2}{\delta_k} \right) 
  \cdot\left(\|\nabla f(x_k)\|_2^2+\frac{\sigma^2}{n_k} \right), \label{lemma_bound:ineq1}  \\
\nabla f(x_k)^\mathrm{T}\mathbb{E}_{S_k}[H_kr_k] &\leq -\frac{1}{2}\beta_k\mu \| \nabla f(x_k) \|_2^2 
 +\frac{1}{2}\frac{\alpha_k^2(\delta_k^{-\frac{1}{2}}+\beta_k)^2}{\beta_k\mu}\cdot \frac{\sigma^2}{n_k}, \label{lemma_bound:ineq2}
\end{align}
\end{subequations}
where $ \mu > 0 $ is the constant introduced in \eqref{ineq:pos_Hk}. 
If further assuming $ H_k $ is independent of $S_k$, a better upper bound can be obtained:
\begin{align}
 \nabla f(x_k)^\mathrm{T}\mathbb{E}_{S_k}[H_kr_k] \leq -\beta_k\mu \| \nabla f(x_k) \|_2^2. \label{lemma_bound:ineq3}
\end{align}
\end{lemma}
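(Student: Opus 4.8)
The plan is to handle the three estimates separately, with nearly all the effort concentrated on \eqref{lemma_bound:ineq2}.

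For \eqref{lemma_bound:ineq1} I would invoke Lemma~\ref{lemma:Hkrk_sq} with $v_k=r_k$. Its multiplicative constant $2(\beta_k^2(1+2\alpha_k^2-2\alpha_k)+\alpha_k^2\delta_k^{-1})$ involves only $\alpha_k,\beta_k,\delta_k$ and not the random matrices $X_k,R_k$, so I can take $\mathbb{E}_{S_k}[\cdot]$ through it and then use $\mathbb{E}_{S_k}[\|r_k\|_2^2]=\mathbb{E}_{S_k}[\|\nabla f_{S_k}(x_k)\|_2^2]=\|\nabla f(x_k)\|_2^2+\mathbb{E}_{S_k}[\|\nabla f_{S_k}(x_k)-\nabla f(x_k)\|_2^2]\le\|\nabla f(x_k)\|_2^2+\sigma^2/n_k$, which is exactly \eqref{minibatch:avg}--\eqref{minibatch:std}.

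For \eqref{lemma_bound:ineq2}, write $r_k=-\nabla f(x_k)-e_k$ with noise $e_k:=\nabla f_{S_k}(x_k)-\nabla f(x_k)$, so that $\mathbb{E}_{S_k}[e_k]=0$ and $\mathbb{E}_{S_k}[\|e_k\|_2^2]\le\sigma^2/n_k$. Then $\nabla f(x_k)^{\mathrm T}\mathbb{E}_{S_k}[H_kr_k]=-\mathbb{E}_{S_k}[\nabla f(x_k)^{\mathrm T}H_k\nabla f(x_k)]-\mathbb{E}_{S_k}[\nabla f(x_k)^{\mathrm T}H_ke_k]$; the obstruction is that $H_k$ depends on $S_k$ through $R_k$ (which contains the current residual $r_k$), so $H_k$ is correlated with $e_k$ and cannot be pulled out of the second expectation. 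The first term is bounded by $-\beta_k\mu\|\nabla f(x_k)\|_2^2$ using the positive-definiteness \eqref{ineq:pos_Hk} (available since $\alpha_k$ satisfies \eqref{ineq:check_alpha_k}), pathwise in $S_k$. For the second term I expand $H_ke_k=\beta_ke_k-\alpha_k(X_k+\beta_kR_k)\gamma_k$, where $\gamma_k=(R_k^{\mathrm T}R_k+\delta_kX_k^{\mathrm T}X_k)^{-1}R_k^{\mathrm T}e_k$ is the minimizer of $\|e_k-R_k\gamma\|_2^2+\delta_k\|X_k\gamma\|_2^2$. Since $\mathbb{E}_{S_k}[\beta_ke_k]=0$, we get $-\mathbb{E}_{S_k}[\nabla f(x_k)^{\mathrm T}H_ke_k]=\alpha_k\,\mathbb{E}_{S_k}[\nabla f(x_k)^{\mathrm T}(X_k+\beta_kR_k)\gamma_k]$. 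The key estimate is $\|(X_k+\beta_kR_k)\gamma_k\|_2\le(\delta_k^{-1/2}+\beta_k)\|e_k\|_2$, obtained from the optimality of $\gamma_k$: comparing with $\gamma=0$ gives (the $e_k$-analogue of \eqref{ineq:gamma_k}) $\|e_k-R_k\gamma_k\|_2^2+\delta_k\|X_k\gamma_k\|_2^2\le\|e_k\|_2^2$, while the normal equations $R_k^{\mathrm T}R_k\gamma_k+\delta_kX_k^{\mathrm T}X_k\gamma_k=R_k^{\mathrm T}e_k$ multiplied by $\gamma_k^{\mathrm T}$ give $\|R_k\gamma_k\|_2^2+\delta_k\|X_k\gamma_k\|_2^2=(R_k\gamma_k)^{\mathrm T}e_k\le\|R_k\gamma_k\|_2\|e_k\|_2$, hence $\|R_k\gamma_k\|_2\le\|e_k\|_2$ and $\|X_k\gamma_k\|_2\le\delta_k^{-1/2}\|e_k\|_2$; the triangle inequality finishes it. Then Cauchy--Schwarz, Jensen's inequality ($\mathbb{E}_{S_k}[\|e_k\|_2]\le\sigma/\sqrt{n_k}$), and Young's inequality $ab\le\tfrac{\beta_k\mu}{2}a^2+\tfrac{1}{2\beta_k\mu}b^2$ with $a=\|\nabla f(x_k)\|_2$ and $b=\alpha_k(\delta_k^{-1/2}+\beta_k)\sigma/\sqrt{n_k}$ turn $\alpha_k(\delta_k^{-1/2}+\beta_k)\|\nabla f(x_k)\|_2\,\sigma/\sqrt{n_k}$ into $\tfrac12\beta_k\mu\|\nabla f(x_k)\|_2^2+\tfrac12\frac{\alpha_k^2(\delta_k^{-1/2}+\beta_k)^2}{\beta_k\mu}\frac{\sigma^2}{n_k}$, and adding this to the first term yields \eqref{lemma_bound:ineq2}.

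Finally, \eqref{lemma_bound:ineq3} is immediate: if $H_k$ is independent of $S_k$ then $\mathbb{E}_{S_k}[H_kr_k]=H_k\mathbb{E}_{S_k}[r_k]=-H_k\nabla f(x_k)$ by \eqref{minibatch:avg}, so $\nabla f(x_k)^{\mathrm T}\mathbb{E}_{S_k}[H_kr_k]=-\nabla f(x_k)^{\mathrm T}H_k\nabla f(x_k)\le-\beta_k\mu\|\nabla f(x_k)\|_2^2$ straight from \eqref{ineq:pos_Hk}. The only genuinely delicate step is the $H_k$--noise correlation in \eqref{lemma_bound:ineq2}; it is precisely this correlation that prevents the cross term from vanishing and forces both the loss of a factor of two in the descent coefficient and the appearance of the extra variance term on the right-hand side.
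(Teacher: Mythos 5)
Your proposal is correct, and its overall skeleton matches the paper's proof: the same splitting $r_k=-\nabla f(x_k)-e_k$, the same pathwise use of positive definiteness (guaranteed by \eqref{ineq:check_alpha_k}) for the $\nabla f(x_k)^{\mathrm T}H_k\nabla f(x_k)$ term, and the same Cauchy--Schwarz/Jensen/Young combination producing the stated constants; \eqref{lemma_bound:ineq1} and \eqref{lemma_bound:ineq3} are handled exactly as in the paper. Where you genuinely diverge is the key estimate on the noise term. The paper bounds the operator norm of $M_k=\alpha_k\left(X_k+\beta_kR_k\right)Z_k^{\dagger}R_k^{\mathrm T}$ by $\alpha_k(\delta_k^{-1/2}+\beta_k)$ (inequality \eqref{ineq:Mk_nd}), which it proves via the Loewner ordering $\delta_kX_k^{\mathrm T}X_k\preceq Z_k$, $R_k^{\mathrm T}R_k\preceq Z_k$, the representation $Z_k^{\dagger}=\lim_{t\rightarrow 0^{+}}Z_k^{1/2}\left(Z_k^2+tI\right)^{-1}Z_k^{1/2}$, and continuity of singular values. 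You instead bound the specific vector $M_ke_k$ directly from the variational characterization of $\gamma_k$: comparison with $\gamma=0$ together with the normal equations $Z_k\gamma_k=R_k^{\mathrm T}e_k$ yields $\|R_k\gamma_k\|_2\le\|e_k\|_2$ and $\|X_k\gamma_k\|_2\le\delta_k^{-1/2}\|e_k\|_2$, hence the same $(\delta_k^{-1/2}+\beta_k)\|e_k\|_2$ bound by the triangle inequality. This is more elementary and sidesteps the pseudo-inverse limit argument entirely; the only point worth making explicit is that when $Z_k$ is singular the algorithm's $\gamma_k=Z_k^{\dagger}R_k^{\mathrm T}e_k$ is still a minimizer satisfying the normal equations, since $R_k^{\mathrm T}e_k\in\mathrm{range}(R_k^{\mathrm T})\subseteq\mathrm{range}(Z_k)$, so your argument covers that case and you should write $\dagger$ rather than $-1$. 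What the paper's route buys is a bound on $\|M_k\|_2$ itself, which it reuses verbatim in the variance-reduction analysis (Lemma~\ref{lemma:sam_vr}); for the present lemma your version suffices and is cleaner.
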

 
 \begin{proof}
(i) From Lemma~\ref{lemma:Hkrk_sq}, we have
 \begin{align}
 \mathbb{E}_{S_k}[\|H_kr_k\|_2^2] 
 \leq 2\left( \beta_k^2\left( 1+2\alpha_k^2-2\alpha_k \right)+\frac{\alpha_k^2}{\delta_k} \right)\mathbb{E}_{S_k}[\|r_k\|_2^2]. \label{ineq:Hkrk}
 \end{align} 
 From Assumption~\ref{assume:noise}, we have
 \begin{align}
 \mathbb{E}_{S_k}[\|r_k\|_2^2] = \mathbb{E}_{S_k}[\|r_k-\mathbb{E}_{S_k}[r_k]\|_2^2]+\|\mathbb{E}_{S_k}[r_k]\|_2^2  = \|\nabla f(x_k)\|_2^2+\sigma^2/n_k. \label{ineq:rk_sq}
 \end{align}
With (\ref{ineq:Hkrk}), (\ref{ineq:rk_sq}), we obtain (\ref{lemma_bound:ineq1}).

(ii) Recalling that $H_0=\beta_0I$, the result  holds for $k=0$.
Define $ \epsilon_k = \nabla f_{S_k}(x_k)-\nabla f(x_k)=-r_k-\nabla f(x_k) $, then $ H_kr_k = H_k\left( -\epsilon_k -\nabla f(x_k)\right). $ Since $\alpha_k$ satisfies (\ref{ineq:check_alpha_k}), we can ensure $ \lambda_{min}\left(\frac{1}{2}\left(H_k+H_k^\mathrm{T}\right)\right) \geq \beta_k\mu. $ Thus
\begin{align*}
\nabla f(x_k)^\mathrm{T} H_k\nabla f(x_k) = \frac{1}{2}\nabla f(x_k)^\mathrm{T} \left(H_k+H_k^\mathrm{T} \right)\nabla f(x_k) \geq \beta_k\mu\|\nabla f(x_k)\|_2^2,
\end{align*}
which implies 
\begin{equation}
\mathbb{E}_{S_k}[\nabla f(x_k)^\mathrm{T} H_k\nabla f(x_k)] \geq \beta_k\mu\|\nabla f(x_k)\|_2^2. \label{ineq:part1}
\end{equation}
Let $ M_k = \alpha_k\left( X_k +\beta_kR_k \right)\left( R_k^\mathrm{T}R_k+\delta_kX_k^\mathrm{T}X_k \right)^{\dagger}R_k^\mathrm{T} $, then $ H_k = \beta_kI-M_k. $ With the assumption (\ref{minibatch:avg}), i.e. $\mathbb{E}_{S_k}[\epsilon_k]=0$, we have
\begin{align*}
 \mathbb{E}_{S_k}[\nabla f(x_k)^\mathrm{T}H_k\epsilon_k] 
&=\mathbb{E}_{S_k}[\nabla f(x_k)^\mathrm{T}\left( \beta_k\epsilon_k-M_k\epsilon_k \right)] \\
&=\beta_k\nabla f(x_k)^\mathrm{T}\mathbb{E}_{S_k}[\epsilon_k]-
\mathbb{E}_{S_k}[\nabla f(x_k)^\mathrm{T}M_k\epsilon_k] 
= -\mathbb{E}_{S_k}[\nabla f(x_k)^\mathrm{T}M_k\epsilon_k].
\end{align*}
Using the {\em Cauchy-Schwarz inequality with expectations}, we obtain
\begin{align}
 \vert \mathbb{E}_{S_k}[\nabla f(x_k)^\mathrm{T}H_k\epsilon_k]\vert 
&= \vert \mathbb{E}_{S_k}[\nabla f(x_k)^\mathrm{T}M_k\epsilon_k] \vert \leq \sqrt{\mathbb{E}_{S_k}[\|\nabla f(x_k)\|_2^2]} \sqrt{\mathbb{E}_{S_k}[\|M_k\epsilon_k\|_2^2]} \nonumber \\
& \leq \|\nabla f(x_k) \|_2 \sqrt{\mathbb{E}_{S_k}[\|M_k\epsilon_k\|_2^2]}. \label{ineq:cauchy}
\end{align} 
We now bound $ \|M_k\epsilon_k\|_2^2 $. For brevity, let $ Z_k = R_k^\mathrm{T}R_k+\delta_k X_k^\mathrm{T}X_k $, and $ N_1 = X_kZ_k^{\dagger}R_k^\mathrm{T}, N_2 = \beta_kR_kZ_k^{\dagger}R_k^\mathrm{T} $,
then
\begin{equation}
\|M_k\|_2 =\|\alpha_k\left(N_1+N_2\right) \|_2 \leq \alpha_k \left(\| N_1\|_2+\| N_2\|_2 \right). \label{ineq:Mk}
\end{equation}
Clearly, $ X_k^\mathrm{T}X_k, R_k^\mathrm{T}R_k $ and $Z_k$ are symmetric positive semidefinite. Also, we have
$ \delta_kX_k^\mathrm{T}X_k \preceq Z_k, R_k^\mathrm{T}R_k \preceq Z_k $, where the notation ``$ \preceq $" denotes the {\em Loewner partial order}, i.e., $ A\preceq B $ with $ A,B \in \mathbb{R}^{m\times m} $ means that $ B-A $ is positive semidefinite. 

First we point out that 
\begin{align}
 Z_k^{\dagger} = \lim_{t\rightarrow 0^{+}}Z_k^{\frac{1}{2}}\left( Z_k^2+tI\right)^{-1}Z_k^{\frac{1}{2}} , \label{eq:Zk_pinv}
\end{align} 
 where $t>0$, which can be verified as follows: \\
Since $Z_k$ is symmetric positive semidefinite, we have the eigenvalue decomposition: $Z_k=U\wedge U^{\mathrm{T}}$, where $ UU^{\mathrm{T}}=I, 0 \preceq\wedge = diag\lbrace\wedge_1,0\rbrace \in \mathbb{R}^{ m\times m}$, and $ \wedge_1$ is diagonal and nonsingular. Hence 
\begin{align*}
 Z_k^{\frac{1}{2}}\left( Z_k^2+tI\right)^{-1}Z_k^{\frac{1}{2}}
 =U\wedge^{\frac{1}{2}}\left( \wedge^2+tI\right)^{-1}\wedge^{\frac{1}{2}}U^\mathrm{T}
 =U 
 \begin{pmatrix}
 \wedge_1(\wedge_1^2+tI)^{-1} & 0 \\
 0 & 0 
 \end{pmatrix} 
 U^\mathrm{T}.
\end{align*}
It follows that $ \lim_{t\rightarrow 0^{+}}Z_k^{\frac{1}{2}}\left( Z_k^2+tI\right)^{-1}Z_k^{\frac{1}{2}} = U\wedge^{\dagger}U^{\mathrm{T}} $, where
$ \wedge^{\dagger}=diag\lbrace \wedge_1^{-1},0\rbrace $. From the definition of Penrose-Moore inverse, we know Equation~(\ref{eq:Zk_pinv}) holds.

Since $ \delta_kX_k^\mathrm{T}X_k \preceq Z_k, R_k^\mathrm{T}R_k \preceq Z_k $, we have
\begin{align}
 \delta_k Z_k^{\frac{1}{2}}X_k^\mathrm{T}X_k Z_k^{\frac{1}{2}} \preceq Z_k^2 \preceq Z_k^2+tI, \  \
 Z_k^{\frac{1}{2}}R_k^\mathrm{T}R_k Z_k^{\frac{1}{2}} \preceq Z_k^2 \preceq Z_k^2+tI.
\end{align}
Hence, we have  
\begin{align*}
 (Z_k^2+tI)^{-\frac{1}{2}}\delta_kZ_k^{\frac{1}{2}}X_k^\mathrm{T}X_kZ_k^{\frac{1}{2}}(Z_k^2+tI)^{-\frac{1}{2}} \preceq I,  \  \
 (Z_k^2+tI)^{-\frac{1}{2}}Z_k^{\frac{1}{2}}R_k^\mathrm{T}R_kZ_k^{\frac{1}{2}} (Z_k^2+tI)^{-\frac{1}{2}} \preceq I,
\end{align*}
which implies
\begin{align*}
 \| (Z_k^2+tI)^{-\frac{1}{2}}Z_k^{\frac{1}{2}}\left(X_k^\mathrm{T}X_k\right)Z_k^{\frac{1}{2}}(Z_k^2+tI)^{-\frac{1}{2}} \|_2 &\leq \delta_k^{-1},  \\
  \| (Z_k^2+tI)^{-\frac{1}{2}}Z_k^{\frac{1}{2}}\left(R_k^\mathrm{T}R_k\right)Z_k^{\frac{1}{2}}(Z_k^2+tI)^{-\frac{1}{2}}\|_2 &\leq 1.
\end{align*}
With Equation~(\ref{eq:Zk_pinv}), we also have
\begin{align*}
N_1 &= \lim_{t\rightarrow 0^{+}}N_1(t):=X_k Z_k^{\frac{1}{2}}\left( Z_k^2+tI\right)^{-1}Z_k^{\frac{1}{2}} R_k^{\mathrm{T}}, \\
N_2 &= \lim_{t\rightarrow 0^{+}}N_2(t):=\beta_kR_k Z_k^{\frac{1}{2}}\left( Z_k^2+tI\right)^{-1}Z_k^{\frac{1}{2}} R_k^{\mathrm{T}}. 
\end{align*}
Therefore,
\begin{align*}
 \|N_1(t)\|_2^2 
&= \lambda_{max}\left(N_1(t)N_1(t)^\mathrm{T}\right) \\
&=\lambda_{max}\left(X_k Z_k^{\frac{1}{2}}\left( Z_k^2+tI\right)^{-1}Z_k^{\frac{1}{2}} R_k^{\mathrm{T}} \cdot R_k Z_k^{\frac{1}{2}}\left( Z_k^2+tI\right)^{-1}Z_k^{\frac{1}{2}} X_k^{\mathrm{T}}  \right) \\
&=\lambda_{max}\left( Z_k^{\frac{1}{2}} X_k^{\mathrm{T}} X_k Z_k^{\frac{1}{2}}\left( Z_k^2+tI\right)^{-1}Z_k^{\frac{1}{2}} R_k^{\mathrm{T}} R_k Z_k^{\frac{1}{2}}\left( Z_k^2+tI\right)^{-1}\right) \\
&=\lambda_{max}( ( Z_k^2+tI)^{-\frac{1}{2}} Z_k^{\frac{1}{2}} X_k^{\mathrm{T}} X_k Z_k^{\frac{1}{2}} ( Z_k^2+tI)^{-\frac{1}{2}} \\
 &\quad \cdot ( Z_k^2+tI)^{-\frac{1}{2}} Z_k^{\frac{1}{2}} R_k^{\mathrm{T}} R_k Z_k^{\frac{1}{2}}( Z_k^2+tI)^{-\frac{1}{2}}
) \\
&\leq \| ( Z_k^2+tI)^{-\frac{1}{2}} Z_k^{\frac{1}{2}} X_k^{\mathrm{T}} X_k Z_k^{\frac{1}{2}} ( Z_k^2+tI)^{-\frac{1}{2}} \\
 &\quad \cdot ( Z_k^2+tI)^{-\frac{1}{2}} Z_k^{\frac{1}{2}} R_k^{\mathrm{T}} R_k Z_k^{\frac{1}{2}}( Z_k^2+tI)^{-\frac{1}{2}}
) \|_2 \\
&\leq \| ( Z_k^2+tI)^{-\frac{1}{2}} Z_k^{\frac{1}{2}} X_k^{\mathrm{T}} X_k Z_k^{\frac{1}{2}} ( Z_k^2+tI)^{-\frac{1}{2}}\|_2  \\
 &\quad \cdot \|( Z_k^2+tI)^{-\frac{1}{2}} Z_k^{\frac{1}{2}} R_k^{\mathrm{T}} R_k Z_k^{\frac{1}{2}}( Z_k^2+tI)^{-\frac{1}{2}}
) \|_2 \leq \delta_k^{-1}, \\
\|N_2(t)\|_2 
&= \beta_k\lambda_{max}\left( R_k Z_k^{\frac{1}{2}}\left( Z_k^2+tI\right)^{-1}Z_k^{\frac{1}{2}} R_k^{\mathrm{T}} \right) \\
&= \beta_k\lambda_{max}\left( Z_k^{\frac{1}{2}} R_k^{\mathrm{T}}R_k Z_k^{\frac{1}{2}}\left( Z_k^2+tI\right)^{-1} \right) \\
&= \beta_k\lambda_{max}\left( \left( Z_k^2+tI\right)^{-\frac{1}{2}} Z_k^{\frac{1}{2}} R_k^{\mathrm{T}}R_k Z_k^{\frac{1}{2}}\left( Z_k^2+tI\right)^{-\frac{1}{2}}
 \right) \leq \beta_k,
\end{align*} 
which implies $ \|N_1\|_2^2\leq \delta_k^{-1} $ and $ \| N_2\|_2 \leq \beta_k $ from the continuity of singular value (e.g. Theorem~2.6.4 in \citep{horn2012matrix}).
With (\ref{ineq:Mk}), we have
\begin{equation}
\|M_k\|_2 \leq \alpha_k(\delta_k^{-\frac{1}{2}}+\beta_k). \label{ineq:Mk_nd}
\end{equation}
Then $ \|M_k\epsilon_k \|_2\leq \alpha_k(\delta_k^{-\frac{1}{2}}+\beta_k)\|\epsilon_k\| $, 
which implies
\begin{align}
\mathbb{E}_{S_k}[ \|M_k\epsilon_k \|_2^2]
\leq \alpha_k^2(\delta_k^{-\frac{1}{2}}+\beta_k)^2\mathbb{E}_{S_k}[\|\epsilon_k\|_2^2] 
\leq \alpha_k^2(\delta_k^{-\frac{1}{2}}+\beta_k)^2\frac{\sigma^2}{n_k},
\end{align}
where the last inequality is due to (\ref{minibatch:std}). Now we can obtain the bound of $ \vert\mathbb{E}_{S_k}[\nabla f(x_k)^\mathrm{T}H_k\epsilon_k]\vert $ as follows (cf. (\ref{ineq:cauchy})):
\begin{align}
& \quad \vert \mathbb{E}_{S_k}[\nabla f(x_k)^\mathrm{T}H_k\epsilon_k]\vert \nonumber \\
& \leq \|\nabla f(x_k) \|_2 \sqrt{\mathbb{E}_{S_k}[\|M_k\epsilon_k\|_2^2]} \nonumber \\
& \leq \alpha_k(\delta_k^{-\frac{1}{2}}+\beta_k) \|\nabla f(x_k) \|_2 \sqrt{\mathbb{E}_{S_k}[\|\epsilon_k\|_2^2]} \nonumber \\
& \leq \alpha_k(\delta_k^{-\frac{1}{2}}+\beta_k)\frac{\sigma}{\sqrt{n_k}} \|\nabla f(x_k) \|_2 \nonumber \\
&=\sqrt{\beta_k\mu}\|\nabla f(x_k)\|_2\cdot \frac{\alpha_k(\delta_k^{-\frac{1}{2}}+\beta_k)}{\sqrt{\beta_k\mu}}\frac{\sigma}{\sqrt{n_k}} \nonumber \\
&\leq \frac{1}{2}\beta_k\mu\|\nabla f(x_k)\|_2^2+
\frac{1}{2}\frac{\alpha_k^2(\delta_k^{-\frac{1}{2}}+\beta_k)^2}{\beta_k\mu}\cdot \frac{\sigma^2}{n_k}. \label{ineq:part2}
\end{align}
 
 With the inequality (\ref{ineq:part1}) and (\ref{ineq:part2}), we obtain
\begin{align}
&\quad\nabla f(x_k)^\mathrm{T}\mathbb{E}_{S_k}[H_kr_k] \nonumber\\
&=-\nabla f(x_k)^\mathrm{T}\mathbb{E}_{S_k}[H_k\left(\epsilon_k+\nabla f(x_k) \right)] \nonumber \\
&=-\mathbb{E}_{S_k}[\nabla f(x_k)^\mathrm{T}H_k\nabla f(x_k)]
-\mathbb{E}_{S_k}[\nabla f(x_k)^\mathrm{T}H_k\epsilon_k] \nonumber \\
&\leq -\mathbb{E}_{S_k}[\nabla f(x_k)^\mathrm{T}H_k\nabla f(x_k)]+
\vert \mathbb{E}_{S_k}[\nabla f(x_k)^\mathrm{T}H_k\epsilon_k]\vert \nonumber \\
&\leq -\beta_k\mu\|\nabla f(x_k)\|_2^2
+\frac{1}{2}\beta_k\mu\|\nabla f(x_k)\|_2^2 +\frac{1}{2}\frac{\alpha_k^2(\delta_k^{-\frac{1}{2}}+\beta_k)^2}{\beta_k\mu}\cdot \frac{\sigma^2}{n_k} \nonumber \\
&=-\frac{1}{2}\beta_k\mu\|\nabla f(x_k)\|_2^2+\frac{1}{2}\frac{\alpha_k^2(\delta_k^{-\frac{1}{2}}+\beta_k)^2}{\beta_k\mu}\cdot \frac{\sigma^2}{n_k}.
\end{align}

If $ H_k $ is independent of $ S_k $, then
\begin{equation*}
\mathbb{E}_{S_k}[\nabla f(x_k)^\mathrm{T}H_k\epsilon_k]
=\nabla f(x_k)^\mathrm{T}H_k\mathbb{E}_{S_k}[\epsilon_k] = 0.
\end{equation*}
Thus $
\nabla f(x_k)^\mathrm{T}\mathbb{E}_{S_k}[H_kr_k] 
\leq -\beta_k\mu\|\nabla f(x_k)\|_2^2.
$
\end{proof}

By imposing more restrictions to $ \alpha_k, \beta_k, \delta_k$, we can obtain a convenient corollary:
\begin{corollary} \label{coro1}
Suppose that Assumption~\ref{assume:noise} holds for $ \{x_k \} $ generated by SAM. $ C>0 $ is a constant. If $ \beta_k > 0, \delta_k\geq C\beta_k^{-2} $, $ 0\leq\alpha_k\leq \min\{1,\beta_k^{\frac{1}{2}}\} $ and satisfies (\ref{ineq:check_alpha_k}) , then
\begin{subequations}
\begin{align}
\mathbb{E}_{S_k}[\| H_kr_k \|_2^2] &\leq 2\beta_k^2\left(1+C^{-1} \right) \cdot\left( \|\nabla f(x_k) \|_2^2 +\frac{\sigma^2}{n_k} \right), \label{coro:ineq1}  \\
 \nabla f(x_k)^\mathrm{T}\mathbb{E}_{S_k}[H_kr_k] &\leq -\frac{1}{2}\beta_k\mu\|\nabla f(x_k)\|_2^2 
 +\beta_k^2 \cdot\mu^{-1}\left( 1+C^{-1}\right)\frac{\sigma^2}{n_k}. \label{coro:ineq2}
\end{align}
\end{subequations}
\end{corollary}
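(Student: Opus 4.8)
The plan is to obtain both inequalities as a direct specialization of Lemma~\ref{lemma:bound}: substitute the extra restrictions $\delta_k\ge C\beta_k^{-2}$ and $0\le\alpha_k\le\min\{1,\beta_k^{1/2}\}$ into the coefficients appearing in (\ref{lemma_bound:ineq1}) and (\ref{lemma_bound:ineq2}). Note first that the hypotheses of Corollary~\ref{coro1} imply $\beta_k>0$, $\delta_k>0$, $\alpha_k\ge0$ and that $\alpha_k$ satisfies (\ref{ineq:check_alpha_k}), so Lemma~\ref{lemma:bound} applies verbatim; the whole argument is then a matter of bounding two scalar prefactors. The one point requiring a little care is that the assumption on $\alpha_k$ gives \emph{two} usable bounds, $\alpha_k^2\le1$ and $\alpha_k^2\le\beta_k$, and the correct one must be used in each place.

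For (\ref{coro:ineq1}) I would bound the prefactor $\beta_k^2(1+2\alpha_k^2-2\alpha_k)+\alpha_k^2\delta_k^{-1}$ from (\ref{lemma_bound:ineq1}) by $\beta_k^2(1+C^{-1})$. For the first summand, since $\alpha_k\in[0,1]$ one has $1+2\alpha_k^2-2\alpha_k=2(\alpha_k-\tfrac12)^2+\tfrac12\in[\tfrac12,1]$, so $\beta_k^2(1+2\alpha_k^2-2\alpha_k)\le\beta_k^2$. For the second summand I would use $\alpha_k^2\le1$ together with $\delta_k^{-1}\le C^{-1}\beta_k^2$ (immediate from $\delta_k\ge C\beta_k^{-2}$), giving $\alpha_k^2\delta_k^{-1}\le C^{-1}\beta_k^2$. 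Summing yields the prefactor $\le\beta_k^2(1+C^{-1})$, hence (\ref{coro:ineq1}).

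For (\ref{coro:ineq2}) the $-\tfrac12\beta_k\mu\|\nabla f(x_k)\|_2^2$ term is already in the desired form, so only the noise coefficient $\tfrac12\,\alpha_k^2(\delta_k^{-1/2}+\beta_k)^2/(\beta_k\mu)$ in (\ref{lemma_bound:ineq2}) needs work. Applying $(a+b)^2\le2(a^2+b^2)$ and again $\delta_k^{-1}\le C^{-1}\beta_k^2$ gives $(\delta_k^{-1/2}+\beta_k)^2\le2\beta_k^2(1+C^{-1})$; then, using the \emph{other} bound $\alpha_k^2\le\beta_k$, the coefficient is at most $\tfrac12\cdot\beta_k\cdot2\beta_k^2(1+C^{-1})/(\beta_k\mu)=\beta_k^2\mu^{-1}(1+C^{-1})$, which is precisely the term in (\ref{coro:ineq2}).

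I do not expect any real obstacle here—the corollary is essentially a bookkeeping consequence of Lemma~\ref{lemma:bound}. The only thing to watch is the choice of bound on $\alpha_k^2$: using $\alpha_k^2\le1$ is needed in (\ref{coro:ineq1}) (using $\alpha_k^2\le\beta_k$ there would leave a stray factor $\beta_k$ that is not assumed $\le1$), whereas $\alpha_k^2\le\beta_k$ is needed in (\ref{coro:ineq2}) (using $\alpha_k^2\le1$ there would leave a factor $\beta_k^{-1}$ instead of the required $\beta_k^2$). Keeping these two choices straight is the entire content of the proof.
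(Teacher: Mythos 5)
Your proposal is correct and follows essentially the same route as the paper: both results are obtained by specializing Lemma~\ref{lemma:bound}, using $\alpha_k^2\le 1$ together with $\delta_k^{-1}\le C^{-1}\beta_k^2$ for (\ref{coro:ineq1}), and $\alpha_k^2\le\beta_k$ with $\delta_k^{-1/2}\le C^{-1/2}\beta_k$ for (\ref{coro:ineq2}). The only cosmetic difference is that you apply $(a+b)^2\le 2(a^2+b^2)$ directly to $\delta_k^{-1/2}+\beta_k$, whereas the paper substitutes first and then uses $(1+C^{-1/2})^2\le 2(1+C^{-1})$ — the same computation.
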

\begin{proof}
 The first result (\ref{coro:ineq1}) is easy to obtain by considering (\ref{lemma_bound:ineq1}) and noticing that $ 1+2\alpha_k^2-2\alpha_k \leq 1 $ when $ \alpha_k\in [0,1] $ and $ \delta_k^{-1} \leq C^{-1}\beta_k^2. $
 Since $ \alpha_k\leq \beta_k^{\frac{1}{2}}, \delta_k\geq C\beta_k^{-2} $ and $ (1+C^{-\frac{1}{2}})^2 \leq 2(1+C^{-1}) $ we have
 \begin{align*}
 \frac{1}{2}\frac{\alpha_k^2(\delta_k^{-\frac{1}{2}}+\beta_k)^2}{\beta_k\mu}\cdot \frac{\sigma^2}{n_k} 
& \leq \frac{1}{2}\frac{\beta_k( C^{-\frac{1}{2}}\beta_k+\beta_k )^2 }{\beta_k\mu}\cdot \frac{\sigma^2}{n_k} \\
& =\frac{1}{2}\mu^{-1}(C^{-\frac{1}{2}}+1)^2\beta_k^2\cdot \frac{\sigma^2}{n_k} \\
& \leq \beta_k^2\mu^{-1}(1+C^{-1})\frac{\sigma^2}{n_k}. 
 \end{align*}
 Substituting it into (\ref{lemma_bound:ineq2}), we obtain (\ref{coro:ineq2}).
\end{proof}

 Using Corollary~\ref{coro1} we obtain the descent property of SAM:
 \begin{lemma} \label{lemma:descent}
Suppose that Assumptions~\ref{assume:Lips} and \ref{assume:noise} hold for $ \lbrace x_k \rbrace $ generated by SAM. $ C>0$ is a constant. If  $ 0<\beta_k\leq \frac{\mu}{4L(1+C^{-1})}, \delta_k\geq C\beta_k^{-2}, 0\leq\alpha_k\leq\min\{ 1, \beta_k^{\frac{1}{2}}\} $ and satisfies (\ref{ineq:check_alpha_k}),  
then
\begin{align}
\mathbb{E}_{S_k}[f(x_{k+1})] &\leq f(x_k) -\frac{1}{4}\beta_k\mu\|\nabla f(x_k) \|_2^2  + \beta_k^2\left( (L+\mu^{-1})(1+C^{-1}) \right)\frac{\sigma^2}{n_k}. \label{lemma_descent:ineq1}
\end{align}
\end{lemma}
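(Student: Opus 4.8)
The plan is to combine the standard descent lemma for $L$-smooth functions with the two bounds from Corollary~\ref{coro1}, and then exploit the smallness of $\beta_k$ to absorb the unwanted gradient-norm term. First I would recall that the SAM update is $x_{k+1} = x_k + H_k r_k$, so $x_{k+1} - x_k = H_k r_k$. By Assumption~\ref{assume:Lips}, $\nabla f$ is $L$-Lipschitz, hence
\begin{equation}
f(x_{k+1}) \leq f(x_k) + \nabla f(x_k)^{\mathrm{T}}(x_{k+1}-x_k) + \frac{L}{2}\|x_{k+1}-x_k\|_2^2 = f(x_k) + \nabla f(x_k)^{\mathrm{T}} H_k r_k + \frac{L}{2}\|H_k r_k\|_2^2.
\end{equation}
Taking the conditional expectation $\mathbb{E}_{S_k}[\cdot]$ given $x_k$ and using linearity gives
\begin{equation}
\mathbb{E}_{S_k}[f(x_{k+1})] \leq f(x_k) + \nabla f(x_k)^{\mathrm{T}} \mathbb{E}_{S_k}[H_k r_k] + \frac{L}{2}\mathbb{E}_{S_k}[\|H_k r_k\|_2^2].
\end{equation}

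Next I would substitute the two estimates of Corollary~\ref{coro1}, which apply since the hypotheses here ($\beta_k>0$, $\delta_k \geq C\beta_k^{-2}$, $0\le\alpha_k\le\min\{1,\beta_k^{1/2}\}$ satisfying \eqref{ineq:check_alpha_k}) are exactly those of the corollary. This yields
\begin{equation}
\mathbb{E}_{S_k}[f(x_{k+1})] \leq f(x_k) - \tfrac{1}{2}\beta_k\mu\|\nabla f(x_k)\|_2^2 + \beta_k^2\mu^{-1}(1+C^{-1})\tfrac{\sigma^2}{n_k} + L\beta_k^2(1+C^{-1})\|\nabla f(x_k)\|_2^2 + L\beta_k^2(1+C^{-1})\tfrac{\sigma^2}{n_k}.
\end{equation}
The noise terms combine cleanly into $\beta_k^2(L+\mu^{-1})(1+C^{-1})\tfrac{\sigma^2}{n_k}$, matching the claimed remainder.

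The only remaining point — and the one place the step-size restriction is used — is to control the positive term $L\beta_k^2(1+C^{-1})\|\nabla f(x_k)\|_2^2$. Since $0<\beta_k \le \frac{\mu}{4L(1+C^{-1})}$, we have $L\beta_k(1+C^{-1}) \le \mu/4$, so $L\beta_k^2(1+C^{-1})\|\nabla f(x_k)\|_2^2 \le \tfrac{1}{4}\beta_k\mu\|\nabla f(x_k)\|_2^2$, and therefore $-\tfrac{1}{2}\beta_k\mu\|\nabla f(x_k)\|_2^2 + \tfrac{1}{4}\beta_k\mu\|\nabla f(x_k)\|_2^2 = -\tfrac{1}{4}\beta_k\mu\|\nabla f(x_k)\|_2^2$. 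Plugging this back gives precisely \eqref{lemma_descent:ineq1}. There is no genuine obstacle here beyond bookkeeping: the argument is the classical ``quadratic bound plus biased-but-controlled search direction'' descent estimate, and all the technical work (positive definiteness of $H_k$, the operator-norm bound $\|M_k\|_2\le\alpha_k(\delta_k^{-1/2}+\beta_k)$, and the variance splitting) has already been carried out in Lemma~\ref{lemma:bound} and Corollary~\ref{coro1}; the mild cleanup is simply verifying $(1+C^{-1/2})^2\le 2(1+C^{-1})$ and that the condition on $\beta_k$ makes the curvature term subdominant.
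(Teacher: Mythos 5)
Your proposal is correct and follows essentially the same route as the paper's proof: apply the $L$-smoothness descent inequality, take $\mathbb{E}_{S_k}$, substitute the two bounds of Corollary~\ref{coro1}, and use $\beta_k \le \frac{\mu}{4L(1+C^{-1})}$ to absorb the $L\beta_k^2(1+C^{-1})\|\nabla f(x_k)\|_2^2$ term into $-\frac{1}{2}\beta_k\mu\|\nabla f(x_k)\|_2^2$, leaving $-\frac{1}{4}\beta_k\mu\|\nabla f(x_k)\|_2^2$. No gaps; the bookkeeping of the noise terms also matches the paper exactly.
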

 
 \begin{proof}
According to Assumption~\ref{assume:Lips}, we have
\begin{align}
f(x_{k+1}) &\leq f(x_k)+\nabla f(x_k)^\mathrm{T}(x_{k+1}-x_k) +\frac{L}{2}\|x_{k+1}-x_k\|_2^2 \nonumber \\
&=f(x_k)+\nabla f(x_k)^\mathrm{T}H_kr_k+\frac{L}{2}\| H_kr_k\|_2^2.\label{ineq:Lip}
\end{align}
Taking expectation with respect to the mini-batch $ S_k $ on both sides of (\ref{ineq:Lip}) and using Corollary~\ref{coro1} we obtain
\begin{align}
& \quad \mathbb{E}_{S_k}[f(x_{k+1})]  \nonumber \\
&\leq f(x_k)+\nabla f(x_k)^\mathrm{T}\mathbb{E}_{S_k}[H_kr_k]
+\frac{L}{2}\mathbb{E}_{S_k}\|H_kr_k\|_2^2 \nonumber \\
&\leq f(x_k)-\frac{1}{2}\beta_k\mu\|\nabla f(x_k)\|_2^2 +\beta_k^2\mu^{-1}(1+C^{-1})\frac{\sigma^2}{n_k} +L\beta_k^2(1+C^{-1})\left(\|\nabla f(x_k) \|_2^2+\frac{\sigma^2}{n_k} \right) \nonumber \\
& = f(x_k)-\beta_k\left( \frac{1}{2}\mu-\beta_k L(1+C^{-1}) \right)\|\nabla f(x_k) \|_2^2 +\beta_k^2(L+\mu^{-1})(1+C^{-1})\frac{\sigma^2}{n_k}.
\label{ineq:ExpectLip}
\end{align}
Then (\ref{ineq:ExpectLip}) combined with the assumption $ \beta_k\leq \frac{\mu}{4L(1+C^{-1})} $ implies (\ref{lemma_descent:ineq1}).
\end{proof}

 Following the proofs in \cite{wang2017stochastic}, we  introduce the definition of a {\em supermartingale}.
\begin{definition}\label{def1}
 Let $ \{ \mathcal{F}_k \}$ be an increasing sequence of $ \sigma $-algebras. If $ \{X_k \} $ is a stochastic process satisfying (i) $\mathbb{E}[\vert X_k \vert]<\infty $, (ii) $ X_k\in\mathcal{F}_k $ for all $ k $, and (iii) $ \mathbb{E}[X_{k+1}\vert \mathcal{F}_k]\leq X_k $ for all $ k $, then $ \{ X_k\} $ is called a supermartingale.
\end{definition}
\begin{proposition}[Supermartingale convergence theorem, see, e.g., Theorem 4.2.12 in  \citep{durrett2019probability}] \label{prop1}
 If $ \{ X_k \} $ is a nonnegative supermartingale, then $ \lim_{k\rightarrow\infty}X_k\rightarrow X $ almost surely and $ \mathbb{E}[X] \leq \mathbb{E}[X_0] $.
\end{proposition}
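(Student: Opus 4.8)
The plan is to prove the nonnegative case of the martingale convergence theorem by the classical route: Doob's upcrossing inequality to secure almost-sure convergence, followed by Fatou's lemma for the expectation bound. Throughout, $\{X_k\}$ is the nonnegative supermartingale of Definition~\ref{def1}, adapted to the filtration $\{\mathcal{F}_k\}$.

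First I would control the oscillations of the sample paths. Fix rationals $0\le a<b$ and let $U_n[a,b]$ denote the number of upcrossings of the interval $[a,b]$ completed by the finite sequence $X_0,X_1,\ldots,X_n$. The key tool is Doob's upcrossing inequality,
\[
(b-a)\,\mathbb{E}[U_n[a,b]] \le \mathbb{E}[(X_n-a)^-].
\]
To establish it I would introduce the predictable ``betting'' process $H_m\in\{0,1\}$ that switches on the first time the path drops to or below $a$ and switches off the next time it rises to or above $b$; each completed on--off cycle is an upcrossing contributing at least $(b-a)$ to the gambling gain $(H\cdot X)_n:=\sum_{m=1}^{n}H_m(X_m-X_{m-1})$. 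Since $H$ is nonnegative, bounded, and predictable (i.e. $H_m\in\mathcal{F}_{m-1}$), the conditional computation $\mathbb{E}[(H\cdot X)_n-(H\cdot X)_{n-1}\mid\mathcal{F}_{n-1}]=H_n\,\mathbb{E}[X_n-X_{n-1}\mid\mathcal{F}_{n-1}]\le 0$ shows $(H\cdot X)$ is again a supermartingale, so $\mathbb{E}[(H\cdot X)_n]\le 0$; combining this with the pathwise lower bound $(H\cdot X)_n\ge (b-a)U_n[a,b]-(X_n-a)^-$ yields the inequality.

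Next I would pass to the limit. Using nonnegativity, $(X_n-a)^-=(a-X_n)^+\le a$, so the right-hand side is bounded uniformly in $n$; hence by monotone convergence $\mathbb{E}[U_\infty[a,b]]\le a/(b-a)<\infty$, and therefore $U_\infty[a,b]<\infty$ almost surely. Taking a countable union over all rational pairs $a<b$, almost surely no interval with rational endpoints is upcrossed infinitely often. On this full-measure event one must have $\liminf_k X_k=\limsup_k X_k$, for otherwise a rational pair inserted strictly between the two limits would force infinitely many upcrossings; consequently $X:=\lim_{k\to\infty}X_k$ exists in $[0,\infty]$ almost surely.

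Finally I would bound the expectation and deduce finiteness. Taking expectations in property~(iii) of Definition~\ref{def1} and iterating gives $\mathbb{E}[X_k]\le\mathbb{E}[X_0]$ for every $k$. Since $X=\liminf_k X_k$ with each $X_k\ge 0$, Fatou's lemma yields
\[
\mathbb{E}[X]\le \liminf_{k\to\infty}\mathbb{E}[X_k]\le \mathbb{E}[X_0]<\infty,
\]
which simultaneously delivers the asserted bound $\mathbb{E}[X]\le\mathbb{E}[X_0]$ and forces $X<\infty$ almost surely. I expect the main obstacle to be the upcrossing inequality itself---specifically verifying that the betting process is genuinely predictable and that the transform of a supermartingale by a bounded nonnegative predictable process stays a supermartingale---after which the convergence and the expectation bound follow routinely.
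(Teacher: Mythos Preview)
Your proof is correct and follows the classical route---Doob's upcrossing inequality to obtain almost sure convergence, then Fatou's lemma for the expectation bound---which is essentially the argument in Durrett's text. Note, however, that the paper does not supply its own proof of this proposition: it is stated as a citation to \cite{durrett2019probability} (Theorem~4.2.12) and used as a black box in the proof of Theorem~\ref{them:nonconvexStochastic}, so there is nothing in the paper to compare your argument against beyond the reference itself.
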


 Now, we prove our theorems.

\begin{proof}[\textbf{Proof of Theorem~\ref{them:nonconvexStochastic}}]
Define $ \zeta_k := \frac{\beta_k\mu}{4}\|\nabla f(x_k)\|_2^2 $ and $ \tilde{L} := (L+\mu^{-1})(1+C^{-1}), $ $\gamma_k := f(x_k)+\tilde{L}\frac{\sigma^2}{n}\sum_{i=k}^{\infty}\beta_i^2 $. Let $ \mathcal{F}_k $ be the $ \sigma$-algebra measuring $ \zeta_k, \gamma_k, $ and $ x_k $. From (\ref{lemma_descent:ineq1}) we know that for any $ k $,
\begin{align}
\mathbb{E}[\gamma_{k+1}\vert \mathcal{F}_k]
&= \mathbb{E}[f(x_{k+1})\vert \mathcal{F}_k]+\tilde{L}\frac{\sigma^2}{n}\sum_{i=k+1}^{\infty}\beta_i^2 \nonumber \\
&\leq f(x_k)-\frac{1}{4}\beta_k\mu\|\nabla f(x_k)\|_2^2+\tilde{L}\frac{\sigma^2}{n}\sum_{i=k}^{\infty}\beta_i^2 = \gamma_k-\zeta_k, \label{proof1:gamma}
\end{align}
which implies that $ \mathbb{E}[\gamma_{k+1}-f^{low}\vert \mathcal{F}_k]\leq \gamma_k-f^{low}-\zeta_k $. Since $ \zeta_k\geq 0 $, we have $ 0\leq\mathbb{E}[\gamma_k-f^{low}]\leq \gamma_0-f^{low}<+\infty$. As the diminishing condition (\ref{cond:diminish}) holds, we obtain (\ref{them1:ineq2}). According to Definition~\ref{def1}, $ \{\gamma_k-f^{low} \} $ is a supermartingale. Therefore, Proposition~\ref{prop1} indicates that there exists a $ \gamma $ such that $ \lim_{k\rightarrow\infty}\gamma_k = \gamma $ with probability 1, and $ \mathbb{E}[\gamma]\leq \mathbb{E}[\gamma_0] $. Note that from (\ref{proof1:gamma}) we have $ \mathbb{E}[\zeta_k]\leq \mathbb{E}[\gamma_k]-\mathbb{E}[\gamma_{k+1}] $. Thus,
\begin{align*}
\mathbb{E}\left[ \sum_{k=0}^{\infty}\zeta_k \right]\leq \sum_{k=0}^{\infty}(\mathbb{E}[\gamma_k]-\mathbb{E}[\gamma_{k+1}]) < +\infty,
\end{align*}
which further yields that 
\begin{align}
\sum_{k=0}^{\infty}\zeta_k=\frac{\mu}{4}\sum_{k=0}^{\infty}\beta_k\| \nabla f(x_k)\|_2^2< +\infty \ \text{with probability} \ 1. \label{proof1:ineq2}
\end{align}
Since $ \sum_{k=0}^{\infty}\beta_k = +\infty $, it follows that (\ref{them1:ineq1}) holds.
\end{proof}

 \begin{proof}[\textbf{Proof of Theorem~\ref{them:bounded_noisy_gradient}}]
For any give $ \epsilon>0$, according to (\ref{them1:ineq1}), there exist infinitely many iterates $ x_k $ such that $ \| \nabla f(x_k)\|\leq \epsilon$. 
Then if (\ref{them2:ineq1}) does not hold, there must exist two infinite sequences of indices $ \{ s_i\}, $ $\{ t_i\} $ with $ t_i >s_i $, 
such that for $ i=0,1,\ldots $, $  k=s_i+1,\ldots,t_i-1, $
\begin{equation}
\|\nabla f(x_{s_i})\|_2\geq 2\epsilon, \|\nabla f(x_{t_i}) \|_2<\epsilon, \|\nabla f(x_k)\|_2 \geq \epsilon. \label{proof2:ineq3}
\end{equation}
Then from (\ref{proof1:ineq2}) it follows that
\begin{align*}
+\infty>\sum_{k=0}^{\infty}\beta_k\|\nabla f(x_k)\|_2^2 \geq \sum_{i=0}^{+\infty}\sum_{k=s_i}^{t_i-1}\beta_k\|\nabla f(x_k)\|_2^2 
\geq \epsilon^2\sum_{i=0}^{+\infty}\sum_{k=s_i}^{t_i-1}\beta_k \ \text{with probability} \ 1,
\end{align*}
which implies that
\begin{equation}
\sum_{k=s_i}^{t_i-1}\beta_k\rightarrow 0 \ \text{with probability} \ 1, \text{as} \ i\rightarrow +\infty. \label{proof2:ineq2}
\end{equation}
According to (\ref{ineq:rk_sq}) and (\ref{ineq:Hkrk_sq}), we have
\begin{align}
& \quad \mathbb{E}[\|x_{k+1}-x_k\|_2\vert x_k] \nonumber \\
&= \mathbb{E}[\|H_kr_k \|_2\vert x_k] \nonumber \\
&\leq \sqrt{2\left( \beta_k^2\left( 1+2\alpha_k^2-2\alpha_k \right)+\alpha_k^2\delta_k^{-1} \right)} \mathbb{E}[\|r_k\|_2\vert x_k] \nonumber \\
&\leq \beta_k\sqrt{2(1+C^{-1})}\mathbb{E}[\|r_k\|_2\vert x_k] \nonumber \\
&\leq \beta_k\sqrt{2(1+C^{-1})}(\mathbb{E}[\|r_k\|_2^2\vert x_k])^{\frac{1}{2}} \nonumber \\
&\leq \beta_k\sqrt{2(1+C^{-1})}(M_g/n)^{\frac{1}{2}}, \label{proof2:ineq1}
\end{align}
where the last inequalities are due to {\em Cauchy-Schwarz inequality} and (\ref{cond:bounded}). Then it follows from (\ref{proof2:ineq1}) that 
\begin{align*}
\mathbb{E}[\| x_{t_i}-x_{s_i}\|_2]\leq \sqrt{2(1+C^{-1})}(M_g/n)^{\frac{1}{2}}\sum_{k=s_i}^{t_i-1}\beta_k,
\end{align*}
which together with (\ref{proof2:ineq2}) implies that $ \|x_{t_i}-x_{s_i}\|_2\rightarrow 0$ with probability 1, as $ i\rightarrow +\infty $. Hence, from the Lipschitz continuity of $ \nabla f $, it follows that $ \|\nabla f(x_{t_i})-\nabla f(x_{s_i}) \|_2\rightarrow 0 $ with probability 1 as $ i\rightarrow +\infty $. However, this contradicts (\ref{proof2:ineq3}). Therefore, the assumption that (\ref{them2:ineq1}) does not hold is not true. 
\end{proof}

 \begin{proof}[\textbf{Proof of Theorem~\ref{them:complexity}}]
Define $ \tilde{L}:=(L+\mu^{-1})(1+C^{-1}) $. 
Taking expectation on both sides of (\ref{lemma_descent:ineq1}) and summing over $ k=0,1,\ldots,N-1 $ yields
\begin{align*}
&\quad \frac{1}{4}\mu\sum_{k=0}^{N-1}\mathbb{E}[\|\nabla f(x_k) \|_2^2] \\
& \leq\sum_{k=0}^{N-1}\frac{1}{\beta_k}(\mathbb{E}[f(x_k)]-\mathbb{E}[f(x_{k+1})])+\tilde{L}\frac{\sigma^2}{n}\sum_{k=0}^{N-1}\beta_k \\
&=\frac{1}{\beta_0}f(x_0)+\sum_{k=1}^{N-1}\left(\frac{1}{\beta_k}-\frac{1}{\beta_{k-1}} \right)\mathbb{E}[f(x_k)] -\frac{1}{\beta_{N-1}}\mathbb{E}[f(x_N)]+ \tilde{L}\frac{\sigma^2}{n}\sum_{k=0}^{N-1}\beta_k  \\
&\leq \frac{M_f}{\beta_0}+M_f\sum_{k=1}^{N-1}\left(\frac{1}{\beta_k}-\frac{1}{\beta_{k-1}} \right) -\frac{f^{low}}{\beta_{N-1}}+\tilde{L}\frac{\sigma^2}{n}\sum_{k=0}^{N-1}\beta_k \\
&=\frac{M_f-f^{low}}{\beta_{N-1}}+\tilde{L}\frac{\sigma^2}{n}\sum_{k=0}^{N-1}\beta_k \\
&\leq \frac{4L(1+C^{-1})(M_f-f^{low})}{\mu}N^r +\frac{(\mu+L^{-1})\sigma^2}{4n(1-r)}(N^{1-r}-1),
\end{align*}
which results in (\ref{them3:ineq1}), where the second inequality is due to (\ref{them1:ineq2}) and the last inequality is due to the choice of $ \beta_k$. Then for a give $ \epsilon>0 $, to guarantee that $ \frac{1}{N}\sum_{k=0}^{N-1}\mathbb{E}\|\nabla f(x_k)\|_2^2 < \epsilon $, it suffices to require that
\begin{align*}
&\frac{16L(1+C^{-1})(M_f-f^{low})}{\mu^2}N^{r-1} +\frac{(1+L^{-1}\mu^{-1})\sigma^2}{(1-r)n}(N^{-r}-N^{-1})<\epsilon.
\end{align*}
Since $ r\in(0.5,1) $, it follows that the number of iterations $ N$ needed is at most $O(\epsilon^{-\frac{1}{1-r}})$.
\end{proof}

 \begin{proof}[\textbf{Proof of Theorem~\ref{them:random_output}}]
 According to (\ref{ineq:ExpectLip}) in Lemma~\ref{lemma:descent}, we have
 \begin{align}
 \sum_{k=0}^{N-1}\beta_k\left(\frac{1}{2}\mu-\beta_k L(1+C^{-1}) \right)\mathbb{E}\|\nabla f(x_k) \|_2^2 \nonumber \\
 \leq f(x_0)-f^{low}+\sum_{k=0}^{N-1}\beta_k^2(L+\mu^{-1})(1+C^{-1})\frac{\sigma^2}{n_k},
 \end{align}
 where the expectation is taken with respect to $\lbrace S_j\rbrace_{j=0}^{N-1}$.
 Define 
 \begin{align}
 P_R(k)\overset{\text{def}}{=} Prob\lbrace R=k \rbrace 
 = \frac{\beta_k\left( \frac{1}{2}\mu-\beta_kL(1+C^{-1})\right)}{\sum_{j=0}^{N-1}\beta_j\left( \frac{1}{2}\mu-\beta_jL(1+C^{-1})\right) },
 \ k=0,\dots,N-1, \label{them_random:P_R}
\end{align} 
 then
 \begin{align}
 \mathbb{E}\left[ \|\nabla f(x_R) \|_2^2\right]
& = \frac{\sum_{k=0}^{N-1}\beta_k\left( \frac{1}{2}\mu-\beta_kL(1+C^{-1})\right)\mathbb{E}\left[ \|\nabla f(x_k) \|_2^2\right]}{\sum_{j=0}^{N-1}\beta_j\left( \frac{1}{2}\mu-\beta_jL(1+C^{-1})\right) } \nonumber \\
 &\leq \frac{D_f+\sigma^2(L+\mu^{-1})(1+C^{-1})\sum_{j=0}^{N-1}{\beta_k^2}/{n_k}}{\sum_{j=0}^{N-1}\beta_j\left( \frac{1}{2}\mu-\beta_jL(1+C^{-1})\right)}.
 \end{align}
 If we choose $ \beta_k =\beta:=\frac{\mu}{4L(1+C^{-1})}$ and $ n_k=n $, 
 then the definition of $P_R$ simplifies to $ P_R(k) = 1/N $ and
 We have
 \begin{align}
 \mathbb{E}\left[ \|\nabla f(x_R) \|_2^2\right] 
 &\leq \frac{D_f+\sigma^2(L+\mu^{-1})(1+C^{-1})N\frac{\beta^2}{n}}{\frac{1}{4}\mu N\beta} \nonumber \\
 &=\frac{4D_f}{\mu N\beta}+\frac{4(L+\mu^{-1})(1+C^{-1})\sigma^2\frac{\beta}{n}}{\mu} \nonumber \\
 &=\frac{16D_fL(1+C^{-1})}{N\mu^2}+\frac{(L+\mu^{-1})\sigma^2}{nL}. \label{proof_random:ineq3}
 \end{align}
 Let $\bar{N}$ be the total number of $\mathcal{SFO}$-calls needed to calculate stochastic gradients in SAM. Then the number of iterations of SAM is at most 
 $ N=\lceil \bar{N}/n \rceil $. Obviously, $ N\geq \bar{N}/(2n) $.  
 
 For a given accuracy tolerance $\epsilon>0$, we assume that
 \begin{align}
 \bar{N}\geq \left\lbrace  \frac{C_1^2}{\epsilon^2}+\frac{4C_2}{\epsilon}, \frac{\sigma^2}{L^2\tilde{D}} \right\rbrace, \label{proof_random:ineq4}
 \end{align}
 where
 \begin{equation}
 C_1 = \frac{32D_f(1+C^{-1})\sigma}{\mu^2\sqrt{\tilde{D}}}+(L+\mu^{-1})\sigma\sqrt{\tilde{D}}, \  \
 C_2 = \frac{32D_fL(1+C^{-1})}{\mu^2}, \label{proof_random:eq1}
 \end{equation}
 where $ \tilde{D} $ is a problem-independent positive constant. Moreover, we assume that the batch size satisfies
 \begin{equation}
 n_k=n:=\left\lceil \min\left\lbrace \bar{N},\max \left\lbrace 1,\frac{\sigma}{L}\sqrt{\frac{\bar{N}}{\tilde{D}}}\right\rbrace \right\rbrace \right\rceil. 
 \label{proof_random:eq2}
 \end{equation}
 The we can prove $  \mathbb{E}\left[ \|\nabla f(x_R) \|_2^2\right] \leq \epsilon $ as follows. \\ 
 From (\ref{proof_random:ineq3}) we have that 
 \begin{align}
 \mathbb{E}\left[ \|\nabla f(x_R) \|_2^2\right] 
 &\leq \frac{32D_f L(1+C^{-1})n}{\mu^2\bar{N}}+\frac{L+\mu^{-1}}{L}\frac{\sigma^2}{n} \nonumber \\
 &\leq \frac{32D_f L(1+C^{-1})}{\mu^2\bar{N}}\left( 1+ \frac{\sigma}{L}\sqrt{\frac{\bar{N}}{\tilde{D}}} \right) 
 + \frac{L+\mu^{-1}}{L}\cdot\max\left\lbrace \frac{\sigma^2}{\bar{N}}, \frac{\sigma L\sqrt{\tilde{D}}}{\sqrt{\bar{N}}} \right\rbrace. \label{proof_random:ineq5}
 \end{align}
 Equation (\ref{proof_random:ineq4}) implies that
 \begin{equation}
 \frac{\sigma^2}{\bar{N}}=\frac{\sigma}{\sqrt{\bar{N}}}\cdot \frac{\sigma}{\sqrt{\bar{N}}}
 \leq L\sqrt{\tilde{D}}\cdot\frac{\sigma}{\sqrt{\bar{N}}} = \frac{\sigma L\sqrt{\tilde{D}}}{\sqrt{\bar{N}}}.
 \end{equation}
 Then from (\ref{proof_random:ineq5}) and (\ref{proof_random:eq1}), we have
 \begin{align}
 \mathbb{E}\left[ \|\nabla f(x_R) \|_2^2\right] 
 \leq \frac{32D_f L(1+C^{-1})}{\mu^2\bar{N}}\left( 1+ \frac{\sigma}{L}\sqrt{\frac{\bar{N}}{\tilde{D}}} \right) 
 + \frac{L+\mu^{-1}}{L}\frac{\sigma L\sqrt{\tilde{D}}}{\sqrt{\bar{N}}}
 =\frac{C_1}{\sqrt{\bar{N}}}+\frac{C_2}{\bar{N}}. \label{proof_random:ineq6}
 \end{align}
 To ensure $ \mathbb{E}\left[ \|\nabla f(x_R) \|_2^2\right] \leq \epsilon $, it is sufficient to let the upper bound $ \frac{C_1}{\sqrt{\bar{N}}}+\frac{C_2}{\bar{N}}\leq \epsilon $, which implies
 \begin{equation*}
 \sqrt{\bar{N}}\geq \frac{\sqrt{C_1^2+4C_2\epsilon}+C_1}{2\epsilon}.
 \end{equation*}
 This is guaranteed by Condition (\ref{proof_random:ineq4}) since
 \begin{equation*}
 \sqrt{\bar{N}}\geq \frac{\sqrt{C_1^2+4C_2\epsilon}}{\epsilon}\geq \frac{\sqrt{C_1^2+4C_2\epsilon}+C_1}{2\epsilon}.
 \end{equation*}
\end{proof}
 
 To prove Theorem~\ref{them:sam_vr}, we first prove the following lemma.
 \begin{lemma}\label{lemma:sam_vr}
 Suppose that Assumptions~\ref{assume:Lips} and \ref{assume:noise} hold. For any $ \eta_t>0 $, set 
 \begin{align*}
 c_t^k &= c_{t+1}^k\left( 1+2\beta_t^k\eta_t+4(\beta_t^k)^2(1+C^{-1})\frac{L^2}{n}
 +2(\beta_t^k)^2\eta_t^{-1}(1+C^{-1})\frac{L^2}{n} \right) \nonumber \\
&\quad + (\beta_t^k)^2(L+\frac{\mu^{-1}}{2})(1+C^{-1})\frac{2L^2}{n}. 
 \end{align*}
 Then
 \begin{align}
 \Psi_t^k\mathbb{E}\left[ \|\nabla f(x_t^{k}) \|_2^2\right] 
 \leq \Phi_t^k-\Phi_{t+1}^k, \label{lemma_vr:ineq1}
 \end{align}
 where $\Phi_t^k = \mathbb{E}\left[ f(x_t^k)+c_t^k\| x_t^k-\tilde{x}_k\|_2^2\right]$ and 
 $ \Psi_t^k = \frac{1}{2}\beta_t^k\mu-2c_{t+1}^{k}\beta_t^k\eta_t^{-1}(1+C^{-1})-2L(\beta_t^k)^2 (1+C^{-1})-4c_{t+1}^k(\beta_t^k)^2(1+C^{-1}) $. 
 
 (\textbf{Note}: Here and in the following proof of Theorem~\ref{them:sam_vr}, the definition of notation $ c_t^k $ has no relation with $c_1, c_2 $ in \eqref{deltak} of AdaSAM.)
\end{lemma}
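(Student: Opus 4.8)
The plan is to mirror the single-loop analysis (Lemmas~\ref{lemma:Hkrk_sq}--\ref{lemma:descent} and Corollary~\ref{coro1}), the only structural change being that the variance of the SVRG estimator $g_t^k$ is controlled by $\|x_t^k-\tilde x_k\|_2^2$ rather than by the fixed $\sigma^2$, and then to absorb that state-dependent term by introducing the Lyapunov potential $\Phi_t^k$ that couples $f(x_t^k)$ with $\|x_t^k-\tilde x_k\|_2^2$. First I would record the two properties of the variance-reduced gradient that replace Assumption~\ref{assume:noise} in the inner loop: writing $r_t^k=-g_t^k$ and $\epsilon_t^k=g_t^k-\nabla f(x_t^k)$, one has $\mathbb{E}_{\mathcal K}[\epsilon_t^k]=0$ and the standard nonconvex SVRG variance bound $\mathbb{E}_{\mathcal K}[\|\epsilon_t^k\|_2^2]\le \frac{2L^2}{n}\|x_t^k-\tilde x_k\|_2^2$. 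Since the update is $x_{t+1}^k=x_t^k+H_t^k r_t^k$ with $\alpha_t^k\le\min\{1,(\beta_t^k)^{1/2}\}$, $\delta_t^k\ge C(\beta_t^k)^{-2}$ and $\alpha_t^k$ satisfying (\ref{ineq:check_alpha_k}), re-running the arguments of Lemma~\ref{lemma:Hkrk_sq}, Lemma~\ref{lemma:bound} and Corollary~\ref{coro1} verbatim with $\frac{\sigma^2}{n_k}$ replaced by $\frac{2L^2}{n}\|x_t^k-\tilde x_k\|_2^2$ yields $\mathbb{E}_{\mathcal K}[\|H_t^k r_t^k\|_2^2]\le 2(\beta_t^k)^2(1+C^{-1})(\|\nabla f(x_t^k)\|_2^2+\frac{2L^2}{n}\|x_t^k-\tilde x_k\|_2^2)$, the estimate $\nabla f(x_t^k)^{\mathrm T}\mathbb{E}_{\mathcal K}[H_t^k r_t^k]\le -\frac12\beta_t^k\mu\|\nabla f(x_t^k)\|_2^2+(\beta_t^k)^2(1+C^{-1})\frac{\mu^{-1}}{2}\cdot\frac{2L^2}{n}\|x_t^k-\tilde x_k\|_2^2$, and the pointwise bound $\|H_t^k v\|_2^2\le 2(\beta_t^k)^2(1+C^{-1})\|v\|_2^2$ for every $v$.

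Next I would plug these into the $L$-smoothness inequality $f(x_{t+1}^k)\le f(x_t^k)+\nabla f(x_t^k)^{\mathrm T}H_t^k r_t^k+\frac L2\|H_t^k r_t^k\|_2^2$ and take $\mathbb{E}_{\mathcal K}$, obtaining a descent estimate $\mathbb{E}_{\mathcal K}[f(x_{t+1}^k)]\le f(x_t^k)-(\frac12\beta_t^k\mu-L(\beta_t^k)^2(1+C^{-1}))\|\nabla f(x_t^k)\|_2^2+(\beta_t^k)^2(L+\frac{\mu^{-1}}{2})(1+C^{-1})\frac{2L^2}{n}\|x_t^k-\tilde x_k\|_2^2$, whose last term is exactly the inhomogeneous part of the recursion defining $c_t^k$. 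Then I would control the growth of the distance to the snapshot: expanding $\|x_{t+1}^k-\tilde x_k\|_2^2=\|x_{t+1}^k-x_t^k\|_2^2+2(x_{t+1}^k-x_t^k)^{\mathrm T}(x_t^k-\tilde x_k)+\|x_t^k-\tilde x_k\|_2^2$ and taking $\mathbb{E}_{\mathcal K}$, the first term is handled by the $\mathbb{E}_{\mathcal K}[\|H_t^k r_t^k\|_2^2]$ bound (contributing $4(\beta_t^k)^2(1+C^{-1})\frac{L^2}{n}$ to the coefficient of $\|x_t^k-\tilde x_k\|_2^2$ and $4(\beta_t^k)^2(1+C^{-1})$ to that of $\|\nabla f(x_t^k)\|_2^2$), while for the cross term I split $x_{t+1}^k-x_t^k=-H_t^k\nabla f(x_t^k)-H_t^k\epsilon_t^k$: the deterministic piece is bounded by Cauchy--Schwarz together with $\|H_t^k\nabla f(x_t^k)\|_2\le\sqrt{2(1+C^{-1})}\,\beta_t^k\|\nabla f(x_t^k)\|_2$ and Young's inequality with the free parameter $\eta_t$, producing the $2\beta_t^k\eta_t$ and $2\beta_t^k\eta_t^{-1}(1+C^{-1})$ contributions.

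The delicate point is the noise piece $H_t^k\epsilon_t^k$: because the last column of $R_t^k$ involves the current mini-batch $\mathcal K$, the matrix $H_t^k$ is \emph{not} independent of $\mathcal K$, so I cannot use $\mathbb{E}_{\mathcal K}[\epsilon_t^k]=0$ here; instead I bound $\|H_t^k\epsilon_t^k\|_2\le\sqrt{2(1+C^{-1})}\,\beta_t^k\|\epsilon_t^k\|_2$ pointwise, apply Cauchy--Schwarz in expectation with the variance bound, and use Young's inequality (again with $\eta_t$) to split it between $\|x_t^k-\tilde x_k\|_2^2$ and $\|\nabla f(x_t^k)\|_2^2$, which supplies the $2(\beta_t^k)^2\eta_t^{-1}(1+C^{-1})\frac{L^2}{n}$ term. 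Collecting all contributions gives $\mathbb{E}_{\mathcal K}[\|x_{t+1}^k-\tilde x_k\|_2^2]\le(1+2\beta_t^k\eta_t+4(\beta_t^k)^2(1+C^{-1})\frac{L^2}{n}+2(\beta_t^k)^2\eta_t^{-1}(1+C^{-1})\frac{L^2}{n})\|x_t^k-\tilde x_k\|_2^2+(2\beta_t^k\eta_t^{-1}(1+C^{-1})+4(\beta_t^k)^2(1+C^{-1}))\|\nabla f(x_t^k)\|_2^2$.

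Finally I would combine the two estimates through $\Phi_t^k=\mathbb{E}[f(x_t^k)+c_t^k\|x_t^k-\tilde x_k\|_2^2]$: taking full expectations, multiplying the distance-growth inequality by $c_{t+1}^k$ and adding it to the descent inequality, the coefficient of $\mathbb{E}[\|x_t^k-\tilde x_k\|_2^2]$ collapses by construction to exactly $c_t^k$ (this is precisely why that recursion was chosen), and the coefficient of $\mathbb{E}[\|\nabla f(x_t^k)\|_2^2]$ becomes exactly $-\Psi_t^k$; hence $\Phi_{t+1}^k\le\Phi_t^k-\Psi_t^k\mathbb{E}[\|\nabla f(x_t^k)\|_2^2]$, which is (\ref{lemma_vr:ineq1}). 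I expect the main obstacle to be the noise cross term $c_{t+1}^k(H_t^k\epsilon_t^k)^{\mathrm T}(x_t^k-\tilde x_k)$ under the dependence of $H_t^k$ on the current mini-batch, together with the careful bookkeeping that routes each generated term into either the $c_t^k$ recursion or $\Psi_t^k$ so that both match the stated formulas; the remaining work is a mechanical re-run of the single-loop lemmas with $\sigma^2/n$ replaced by $\frac{2L^2}{n}\|x_t^k-\tilde x_k\|_2^2$.
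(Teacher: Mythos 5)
Your overall architecture is exactly the paper's: couple the smoothness descent estimate with a growth estimate for $\mathbb{E}\|x_t^k-\tilde x_k\|_2^2$, multiply the latter by $c_{t+1}^k$, and let the recursion defining $c_t^k$ absorb the state-dependent variance $\mathbb{E}\|\epsilon_t^k\|_2^2\lesssim \frac{L^2}{n}\|x_t^k-\tilde x_k\|_2^2$. However, there is a genuine gap at the step you yourself flag as delicate, the noise cross term $-\mathbb{E}\bigl[(x_t^k-\tilde x_k)^{\mathrm T}H_t^k\epsilon_t^k\bigr]$. Bounding $\|H_t^k\epsilon_t^k\|_2\le\sqrt{2(1+C^{-1})}\,\beta_t^k\|\epsilon_t^k\|_2$ pointwise and then applying Cauchy--Schwarz and Young with parameter $\beta_t^k\eta_t$ yields a contribution of order $\beta_t^k\eta_t^{-1}(1+C^{-1})\mathbb{E}\|\epsilon_t^k\|_2^2$, i.e.\ only \emph{one} power of $\beta_t^k$, not the $(\beta_t^k)^2\eta_t^{-1}(1+C^{-1})\mathbb{E}\|\epsilon_t^k\|_2^2$ you claim it supplies. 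Consequently the coefficient $2(\beta_t^k)^2\eta_t^{-1}(1+C^{-1})\frac{L^2}{n}$ in the stated recursion for $c_t^k$ is not recovered; your route proves a recursion of the same shape but with a strictly worse constant (worse by a factor $1/\beta_t^k$, which is large in the regime of Theorem~\ref{them:sam_vr}), so the lemma as stated, with its exact $c_t^k$ and $\Psi_t^k$, does not follow.

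The fix is the same device used in Lemma~\ref{lemma:bound}, and your stated reason for abandoning it is not quite right: although $H_t^k$ indeed depends on the current mini-batch $\mathcal K$, the decomposition $H_t^k=\beta_t^k I-M_t^k$ isolates a deterministic part, so $\mathbb{E}_{\mathcal K}\bigl[(x_t^k-\tilde x_k)^{\mathrm T}\beta_t^k\epsilon_t^k\bigr]=0$ exactly, and only the small remainder $M_t^k\epsilon_t^k$ must be bounded; by \eqref{ineq:Mk_nd} and $\alpha_t^k\le(\beta_t^k)^{1/2}$, $\delta_t^k\ge C(\beta_t^k)^{-2}$ one gets $\|M_t^k\|_2\le\alpha_t^k\bigl((\delta_t^k)^{-1/2}+\beta_t^k\bigr)=O\bigl((\beta_t^k)^{3/2}\bigr)$, and it is precisely this extra half power (squared by Young) that produces the $(\beta_t^k)^2\eta_t^{-1}$ coefficient. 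Note that the inner-product estimate you quote for $\nabla f(x_t^k)^{\mathrm T}\mathbb{E}[H_t^k r_t^k]$ already relies on this same splitting, so refusing it for the cross term is inconsistent. Two smaller bookkeeping issues: the variance bound used in the paper (and needed for the stated constants) is $\mathbb{E}\|\epsilon_t^k\|_2^2\le\frac{L^2}{n}\|x_t^k-\tilde x_k\|_2^2$, whereas you state $\frac{2L^2}{n}$ but then use $\frac{L^2}{n}$ in several of your displayed coefficients --- if you really carried $\frac{2L^2}{n}$ throughout, the $c_t^k$ recursion would again not match; and your sharper bound $\mathbb{E}\|H_t^kr_t^k\|_2^2\le2(\beta_t^k)^2(1+C^{-1})(\|\nabla f(x_t^k)\|_2^2+\mathbb{E}\|\epsilon_t^k\|_2^2)$ versus the paper's cruder $2\|\nabla f\|_2^2+2\mathbb{E}\|\epsilon\|_2^2$ is harmless, since it only makes the gradient coefficient larger than $\Psi_t^k$, which still implies \eqref{lemma_vr:ineq1}.
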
 

\begin{proof}
 Define $ \epsilon_t^k = -r_t^k-\nabla f(x_t^k), M_t^k = \beta_t^kI-H_t^k $. Then
 \begin{align*}
 \mathbb{E}\left[ \nabla f(x_t^k)^{\mathrm{T}}H_t^k\epsilon_t^k\right]
 = \mathbb{E}\left[ \nabla f(x_t^k)^{\mathrm{T}}(\beta_t^k\epsilon_t^k-M_t^k\epsilon_t^k)\right]
 = -\mathbb{E}\left[ f(x_t^k)^{\mathrm{T}}M_t^k\epsilon_t^k \right].
 \end{align*}
 According to Lemma~\ref{lemma:Hkrk_sq}, for any $ v_k\in \mathbb{R}^d $,
 \begin{align}
 \|H_t^kv_k\|_2^2 \leq 2\left( \beta_k^2\left(1+2\alpha_k^2-2\alpha_k \right)+\alpha_k^2\delta_k^{-1} \right)\| v_k\|_2^2
 \leq 2(\beta_t^k)^2(1+C^{-1})\| v_k\|_2^2.
 \end{align}
 We also have
 \begin{align}
 &\quad \nabla f(x_t^k)^{\mathrm{T}}\mathbb{E}\left[ H_t^kr_t^k \right] \nonumber \\
 &=-\nabla f(x_t^k)^{\mathrm{T}}H_t^k\nabla f(x_t^k)-\mathbb{E}\left[ \nabla f(x_t^k)^{\mathrm{T}}H_t^k\epsilon_t^k \right]  \nonumber \\
 &\leq -\beta_t^k\mu\|\nabla f(x_t^k)\|_2^2+ \vert \mathbb{E}\left[ \nabla f(x_t^k)^{\mathrm{T}}M_t^k\epsilon_t^k \right] \vert \nonumber \\
 &\leq -\beta_t^k\mu\|\nabla f(x_t^k)\|_2^2 + \| \nabla f(x_t^k)\|_2 \sqrt{\mathbb{E}\left[ \|M_t^k\epsilon_t^k\|_2^2\right]} \nonumber \\
 &\leq -\beta_t^k\mu\|\nabla f(x_t^k)\|_2^2+  \alpha_t^k((\delta_t^k)^{-\frac{1}{2}}+\beta_t^k)\| \nabla f(x_t^k)\|_2  \sqrt{\mathbb{E}\left[ \| \epsilon_t^k\|_2^2\right]}  \quad (\text{cf. (\ref{ineq:Mk_nd})}) \nonumber \\
 & = -\beta_t^k\mu\|\nabla f(x_t^k)\|_2^2+ \sqrt{\beta_t^k\mu}\| \nabla f(x_t^k)\|_2\cdot \frac{\alpha_t^k((\delta_t^k)^{-\frac{1}{2}}+\beta_t^k)}{\sqrt{\beta_t^k\mu}}
 \sqrt{\mathbb{E}\left[ \| \epsilon_t^k\|_2^2\right]} \nonumber \\
 &\leq -\frac{1}{2}\beta_t^k\mu\| \nabla f(x_t^k)\|_2^2
 + \frac{1}{2}\frac{(\alpha_t^k)^2((\delta_t^k)^{-\frac{1}{2}}+\beta_t^k)^2}{\beta_t^k\mu}\mathbb{E}\left[ \| \epsilon_t^k\|_2^2\right] \nonumber \\
 &\leq -\frac{1}{2}\beta_t^k\mu\| \nabla f(x_t^k)\|_2^2+
 (\beta_t^k)^2\mu^{-1}(1+C^{-1})\mathbb{E}\| \epsilon_t^k\|_2^2.
 \end{align}
 Hence, with Assumption~\ref{assume:Lips} we obtain
 \begin{align}
 &\quad \mathbb{E}\left[ f(x_{t+1}^{k}) \right] \nonumber \\
 &\leq \mathbb{E}\left[ f(x_t^k)+\nabla f(x_t^k)^{\mathrm{T}}(x_{t+1}^k-x_t^k) + \frac{L}{2}\|x_{t+1}^k-x_t^k\|_2^2 \right] \nonumber \\
 &= \mathbb{E}\left[ f(x_t^k)+\nabla f(x_t^k)^{\mathrm{T}}H_t^kr_t^k + \frac{L}{2}\|H_t^kr_t^k\|_2^2 \right] \nonumber \\
 &\leq \mathbb{E}\left[ f(x_t^k) -\frac{1}{2}\beta_t^k\mu\| \nabla f(x_t^k)\|_2^2+
 (\beta_t^k)^2\mu^{-1}(1+C^{-1})\mathbb{E}\| \epsilon_t^k\|_2^2 
  +L(\beta_t^k)^2(1+C^{-1})\| r_t^k\|_2^2
  \right]. \label{proof_lemma_vr:ineq1}
 \end{align}
 Next, we give a bound of $ (x_t^k-\tilde{x}_k)^{\mathrm{T}}H_t^kr_t^k $. Since
 \begin{align*}
  -(x_t^k-\tilde{x}_k)^{\mathrm{T}}H_t^k\nabla f(x_t^k) \nonumber 
 &=-\beta_t^k\cdot (\beta_t^k)^{-1}(x_t^k-\tilde{x}_k)^{\mathrm{T}}H_t^k\nabla f(x_t^k) \nonumber \\
 &\leq \beta_t^k \| x_t^k-\tilde{x}_k\|_2 \| (\beta_t^k)^{-1}H_t^k\nabla f(x_t^k)\|_2 \nonumber \\
 &= \beta_t^k \eta_t^{1/2}\| x_t^k-\tilde{x}_k\|_2 \cdot \eta_t^{-1/2}\| (\beta_t^k)^{-1}H_t^k\nabla f(x_t^k)\|_2 \nonumber \\
 &\leq \frac{1}{2}\beta_t^k \left( \eta_t\| x_t^k-\tilde{x}_k\|_2^2
 +\eta_t^{-1}(\beta_t^k)^{-2}\| H_t^k\nabla f(x_t^k)\|_2^2 \right) \nonumber \\
 &\leq \frac{1}{2}\beta_t^k \left( \eta_t\| x_t^k-\tilde{x}_k\|_2^2
 +2\eta_t^{-1}(1+C^{-1})\| \nabla f(x_t^k)\|_2^2 \right),
 \end{align*}
 and 
 \begin{align*}
  -\mathbb{E}[(x_t^k-\tilde{x}_k)^{\mathrm{T}}H_t^k\epsilon_t^k] 
 &=-\mathbb{E}[(x_t^k-\tilde{x}_k)^{\mathrm{T}}(\beta_t^kI-M_t^k)\epsilon_t^k] \\
 &=\mathbb{E}[(x_t^k-\tilde{x}_k)^{\mathrm{T}}M_t^k\epsilon_t^k] \\
 &\leq \| x_t^k-\tilde{x}_k\|_2\cdot \sqrt{\mathbb{E}[\|M_t^k\epsilon_t^k\|_2^2]} \\
 &=\sqrt{\beta_t^k\eta_t}\| x_t^k-\tilde{x}_k\|_2 \cdot \frac{\alpha_t^k((\delta_t^k)^{-1/2}+\beta_t^k)}{\sqrt{\beta_t^k\eta_t}}\sqrt{\mathbb{E}\|\epsilon\|_2^2} \\
 &\leq \frac{1}{2}\beta_t^k\eta_t\| x_t^k-\tilde{x}_k\|_2^2
 +\frac{1}{2}\frac{(\alpha_t^k)^2((\delta_t^k)^{-1/2}+\beta_t^k)^2}{\beta_t^k\eta_t}\mathbb{E}\|\epsilon\|_2^2 \\
 &\leq \frac{1}{2}\beta_t^k\eta_t\| x_t^k-\tilde{x}_k\|_2^2
 +\eta_t^{-1}(\beta_t^k)^2(1+C^{-1})\mathbb{E}\|\epsilon\|_2^2 ,
 \end{align*}
 we obtain
 \begin{align*}
 &\quad \mathbb{E}[(x_t^k-\tilde{x}_k)^{\mathrm{T}}H_t^kr_t^k] \nonumber \\
 &= \mathbb{E}[-(x_t^k-\tilde{x}_k)^{\mathrm{T}}H_t^k\nabla f(x_t^k)
 -(x_t^k-\tilde{x}_k)^{\mathrm{T}}H_t^k\epsilon_t^k] \nonumber \\
 &\leq \mathbb{E}[\beta_t^k\eta_t^{-1}(1+C^{-1})\|\nabla f(x_t^k)\|_2^2+
 \beta_t^k\eta_t\| x_t^k-\tilde{x}_k\|_2^2 
 +\eta_t^{-1}(\beta_t^k)^2(1+C^{-1})\mathbb{E}\|\epsilon\|_2^2].
 \end{align*}
 Hence, we have
 \begin{align}
 &\quad \mathbb{E}[\| x_{t+1}^k-\tilde{x}_k\|_2^2] \nonumber \\
  &= \mathbb{E}[ \|x_{t+1}^{k}-x_t^k\|_2^2+\|x_t^k-\tilde{x}_k\|_2^2
  +2(x_t^k-\tilde{x}_k)^{\mathrm{T}}(x_{t+1}^k-x_t^k) ] \nonumber \\
 &= \mathbb{E}[ \|H_t^kr_t^k\|_2^2]+ \|x_t^k-\tilde{x}_k\|_2^2 
 + 2(x_t^k-\tilde{x}_k)^{\mathrm{T}}H_t^kr_t^k] \nonumber \\
 &\leq \mathbb{E}[ 2(\beta_t^k)^2(1+C^{-1})\|r_t^k\|_2^2+ \|x_t^k-\tilde{x}_k\|_2^2 \nonumber \\
 & \quad + 2\beta_t^k\eta_t^{-1}(1+C^{-1})\|\nabla f(x_t^k)\|_2^2+
 2\beta_t^k\eta_t\| x_t^k-\tilde{x}_k\|_2^2 
 +2\eta_t^{-1}(\beta_t^k)^2(1+C^{-1})\mathbb{E}\|\epsilon\|_2^2]. \label{proof_lemma_vr:ineq2}
 \end{align}
 Also, the following inequalities hold:
 \begin{align}
 \mathbb{E}\left[\| \epsilon_t^k\|_2^2\right]
 &=\mathbb{E}\left[ \|\nabla f_{\mathcal{K}}(x_t^k)-\nabla f_{\mathcal{K}}(\tilde{x}_k)
 +\nabla f(\tilde{x}_k) - \nabla f(x_t^k) \|_2^2 \right] \nonumber \\
 &=\mathbb{E}\left[ \| \nabla f_{\mathcal{K}}(x_t^k)-\nabla f_{\mathcal{K}}(\tilde{x}_k)
 -\mathbb{E}\left[ \nabla f_{\mathcal{K}}(x_t^k)-\nabla f_{\mathcal{K}}(\tilde{x}_k) \right] \|_2^2 \right] \nonumber \\
 &\leq \mathbb{E}\left[ \| \nabla f_{\mathcal{K}}(x_t^k)-\nabla f_{\mathcal{K}}(\tilde{x}_k)\|_2^2 \right] \nonumber \\
 &=\frac{1}{n}\mathbb{E}\left[ \|\nabla f_i(x_t^k)-\nabla f_i(\tilde{x}_k)\|_2^2
 \right] \leq \frac{L^2}{n}\mathbb{E}\left[ \|x_t^k-\tilde{x}_k \|_2^2\right], \label{proof_lemma_vr:ineq3}
 \end{align}
 \begin{align}
 \mathbb{E}\left[ \| r_t^k\|_2^2\right]
 &= \mathbb{E}\left[ \| \epsilon_t^k+\nabla f(x_t^k)\|_2^2\right] \nonumber \\
 &\leq \mathbb{E}\left[ 2\|\epsilon_t^k\|_2^2+2\|\nabla f(x_t^k)\|_2^2 \right] \nonumber \\
 &\leq 2\mathbb{E}\left[ \| \nabla f(x_t^k)\|_2^2\right] + \frac{2L^2}{n}\mathbb{E}\left[ \|x_t^k-\tilde{x}_k \|_2^2\right]. \label{proof_lemma_vr:ineq4}
 \end{align}
 Combining (\ref{proof_lemma_vr:ineq1}), (\ref{proof_lemma_vr:ineq2}), (\ref{proof_lemma_vr:ineq3}) and (\ref{proof_lemma_vr:ineq4}) yields  that
 \begin{align*}
 \Phi_{t+1}^k &= \mathbb{E}[ f(x_{t+1}^k)+ c_{t+1}^k\|x_{t+1}^{k}-\tilde{x}_k\|_2^2 ] \nonumber \\
 &\leq \mathbb{E}[ 
f(x_t^k) -\frac{1}{2}\beta_t^k\mu\| \nabla f(x_t^k)\|_2^2+
 (\beta_t^k)^2\mu^{-1}(1+C^{-1})\mathbb{E}\| \epsilon_t^k\|_2^2 
  +L(\beta_t^k)^2(1+C^{-1})\| r_t^k\|_2^2  \nonumber \\
 &\quad + c_{t+1}^k2(\beta_t^k)^2(1+C^{-1})\|r_t^k\|_2^2+ c_{t+1}^k\|x_t^k-\tilde{x}_k\|_2^2  + c_{t+1}^k2\beta_t^k\eta_t^{-1}(1+C^{-1})\|\nabla f(x_t^k)\|_2^2 \nonumber \\
 &\quad +c_{t+1}^k2\beta_t^k\eta_t\| x_t^k-\tilde{x}_k\|_2^2 
 +c_{t+1}^k2\eta_t^{-1}(\beta_t^k)^2(1+C^{-1})\mathbb{E}\|\epsilon_t^k\|_2^2
 ]  \nonumber \\
 &= \mathbb{E}[ f(x_t^k)-(\frac{1}{2}\beta_t^k\mu-c_{t+1}^k2\beta_t^k\eta_t^{-1}(1+C^{-1}))\|\nabla f(x_t^k)\|_2^2  \nonumber \\
 &\quad +(L(\beta_t^k)^2(1+C^{-1})+c_{t+1}^k2(\beta_t^k)^2(1+C^{-1}))\|r_t^k\|_2^2  \nonumber \\
 &\quad +(c_{t+1}^k+c_{t+1}^k2\beta_t^k\eta_t)\| x_t^k-\tilde{x}_k\|_2^2 \nonumber \\
 &\quad +((\beta_t^k)^2\mu^{-1}(1+C^{-1})+c_{t+1}^k2(\beta_t^k)^2\eta_t^{-1}(1+C^{-1}))\|\epsilon_t^k\|_2^2 ] \nonumber \\
 &\leq \mathbb{E}[f(x_t^k)- (\frac{1}{2}\beta_t^k\mu-c_{t+1}^k2\beta_t^k\eta_t^{-1}(1+C^{-1}))\|\nabla f(x_t^k)\|_2^2 \nonumber \\
 &\quad +(L(\beta_t^k)^2(1+C^{-1})+c_{t+1}^k2(\beta_t^k)^2(1+C^{-1}))(2 \| \nabla f(x_t^k)\|_2^2 + \frac{2L^2}{n} \|x_t^k-\tilde{x}_k \|_2^2 ) \nonumber \\
 &\quad + (c_{t+1}^k+c_{t+1}^k2\beta_t^k\eta_t)\| x_t^k-\tilde{x}_k\|_2^2  \nonumber \\
 &\quad +((\beta_t^k)^2\mu^{-1}(1+C^{-1})+c_{t+1}^k2(\beta_t^k)^2\eta_t^{-1}(1+C^{-1}))\frac{L^2}{n}\| x_t^k-\tilde{x}_k\|_2^2 ] \nonumber \\
 &= \mathbb{E}[f(x_t^k) + (c_{t+1}^k(1+2\beta_t^k\eta_t+4(\beta_t^k)^2(1+C^{-1})\frac{L^2}{n}
 +2(\beta_t^k)^2\eta_t^{-1}(1+C^{-1})\frac{L^2}{n}) \nonumber \\
 &\quad+(\beta_t^k)^2(1+C^{-1})\frac{2L^3}{n}+(\beta_t^k)^2\mu^{-1}(1+C^{-1})\frac{L^2}{n})\|x_t^k-\tilde{x_k}\|_2^2     \nonumber \\
 &\quad -(\frac{1}{2}\beta_t^k\mu-2c_{t+1}^k\beta_t^k\eta_t^{-1}(1+C^{-1})
 -2L(\beta_t^k)^2(1+C^{-1})		 \nonumber \\
 &\quad -4c_{t+1}^k(\beta_t^k)^2(1+C^{-1}))\|\nabla f(x_t^k)\|_2^2] 
  =\Phi_t^{k}-\Psi_t^{k}\mathbb{E}[ \| \nabla f(x_t^k)\|_2^2], 
 \end{align*}
 which further implies (\ref{lemma_vr:ineq1}). 
\end{proof}

\begin{proof}[\textbf{Proof of Theorem~\ref{them:sam_vr}}]
 Let $
  \eta_t = \eta := \frac{L(1+C^{-1})^{1/2}}{T^{1/3}} . $
 Denote $
  \theta = 2\beta\eta+4\beta^2(1+C^{-1})L^2/n+2\beta^{2}\eta^{-1}(1+C^{-1})L^2/n .$
   It then follows that 
 \begin{align*}
 \theta &= \frac{2\mu_0n}{T}+\frac{4\mu_0^2n}{T^{4/3}}+ \frac{2\mu_0^2n}{TL(1+C^{-1})^{1/2}} \\
 &\leq \frac{\mu_0n}{T}\left(2+4\mu_0+\frac{2\mu_0}{L(1+C^{-1})^{1/2}}\right) 
 \leq \frac{\mu_0n}{T}\left( 6+\frac{2}{L(1+C^{-1})^{1/2}}\right) = \frac{\mu_0n}{T}d_0,
 \end{align*}
 
 which implies $ (1+\theta)^q \leq e$. 
 Let $ c_q^k=c_q=0 $, then for any $ k\geq 0 $, we have 
 \begin{align*}
 c_{t}^k\leq c_0^k &= \beta^2(L+\frac{\mu^{-1}}{2})(1+C^{-1})\frac{2L^2}{n}\cdot\frac{(1+\theta)^q-1}{\theta}        \nonumber \\
 &=\frac{2\mu_0^2n(L+\frac{\mu^{-1}}{2})((1+\theta)^q-1)}{T^{\frac{4}{3}}\theta}  
 =\frac{2\mu_0^2(L+\frac{\mu^{-1}}{2})((1+\theta)^q-1)}{2\mu_0T^{\frac{1}{3}}+4\mu_0^2+\frac{2\mu_0^2}{L(1+C^{-1})^{\frac{1}{2}}}T^{\frac{1}{3}}}    \nonumber \\
 &\leq \frac{2\mu_0^2(L+\frac{\mu^{-1}}{2})((1+\theta)^q-1)}{2\mu_0T^{\frac{1}{3}}}
 \leq \frac{\mu_0(L+\frac{\mu^{-1}}{2})(e-1)}{T^{\frac{1}{3}}}. 
 \end{align*}
 Therefore, it follows that
 \begin{align*}
 \Psi_t^{k} &= \frac{1}{2}\beta\mu-2c_{t+1}^{k}\beta\eta^{-1}(1+C^{-1})-2L\beta^2(1+C^{-1})-4c_{t+1}^{k}\beta^2(1+C^{-1})  \\
 &=\frac{1}{2}\frac{\mu_0n\mu}{L(1+C^{-1})^{\frac{1}{2}}T^{\frac{2}{3}}}
 -2c_{t+1}^k\frac{\mu_0n}{L^2T^\frac{1}{3}}-\frac{2\mu_0^2n^2}{LT^{\frac{4}{3}}}- 4c_{t+1}^k\frac{\mu_0^2n^2}{L^2T^\frac{4}{3}}   \\
 &\geq \frac{1}{2}\frac{\mu_0n\mu}{L(1+C^{-1})^{\frac{1}{2}}T^{\frac{2}{3}}}
 -\frac{2\mu_0^2(L+\frac{\mu^{-1}}{2})(e-1)n}{L^2T^\frac{2}{3}} -\frac{2\mu_0^2n^2}{LT^\frac{4}{3}} - \frac{4\mu_0^3(L+\frac{\mu^{-1}}{2})(e-1)n^2}{L^2T^\frac{5}{3}}  \\
 &\geq \frac{n}{LT^\frac{2}{3}}\left( \frac{\mu_0\mu}{2(1+C^{-1})^\frac{1}{2}}
 -\frac{2\mu_0^2(L+\frac{\mu^{-1}}{2})(e-1)}{L} -2\mu_0^2n
 -\frac{4\mu_0^3(L+\frac{\mu^{-1}}{2})(e-1)n}{L} \right) \\
 &\geq \frac{n\nu}{LT^\frac{2}{3}}.
 \end{align*}
 As a result, we have
 \begin{align*}
 \sum_{t=0}^{q-1}\mathbb{E}\left[ \| \nabla f(x_t^k)\|_2^2\right]
 \leq \frac{\Phi_0^k-\Phi_q^k}{\min_t\Psi_t^{k}}
 = \frac{\mathbb{E}\left[ f(\tilde{x}_k)-f(\tilde{x}_{k+1})\right]}{\min_t\Psi_t^{k}},
 \end{align*}
 which yields that 
 \begin{align*}
 \mathbb{E}\left[ \| \nabla f(x)\|_2^2\right]
 =\frac{1}{qN}\sum_{k=0}^{N-1}\sum_{t=0}^{q-1}\mathbb{E}\left[ \|\nabla f(x_t^k)\|_2^2 \right]\leq \frac{f(x_0)-f(x_*)}{qN\min_t\Psi_t^{k}} 
 \leq \frac{T^{2/3}L(f(x_0)-f(x_*))}{qNn\nu}.
 \end{align*}
 To achieve $ \mathbb{E}\left[ \| \nabla f(x)\|_2^2\right] \leq \epsilon $, the outer iteration number $ N $ of Algorithm~\ref{alg:sam-vr} should be in the order of $ O\left( \frac{T^{2/3}}{qn\epsilon}\right) = O\left( \frac{T^{-1/3}}{\epsilon}\right) $, which is due to the fact that $ qn = O(T) $. As as result, the total number of $\mathcal{SFO}$-calls is $ (T+2qn)N $, which is $ O(T^{2/3}/\epsilon) $.
\end{proof}

\subsection{Relationship with GMRES} 
 Although the previous worst-case analysis shows that Anderson mixing has similar convergence rate as SGD,  Anderson mixing usually performs much better in practice. To explain this phenomenon, we briefly discuss the relationship of Anderson mixing with GMRES    \citep{saad1986gmres} for deterministic quadratic optimization since a twice continuously differentiable objective function can be approximated by a quadratic model in a local region, thus leading to a quadratic optimization. An optimization method that performs well in quadratic optimization is likely to have good convergence property as well in general nonlinear optimization.
 
 We consider the following strongly convex quadratic objective function:
 \begin{equation}
 f(x) = \frac{1}{2}x^\mathrm{T} A x - b^\mathrm{T} x. \label{eq:convexquad}
\end{equation}
where $ A\in\mathbb{R}^{d\times d} $ is symmetric positive definite, $ b\in\mathbb{R}^d $.  Solving (\ref{eq:convexquad}) is equivalent to solving linear system
\begin{equation}
 Ax = b. \label{eq:linear}
\end{equation}
In this case, $ \nabla f(x) = Ax-b, \nabla^2 f(x) = A $, $ r_k = b-Ax_k $ is the residual.
Hence the quadratic approximation in AM is always exact, i.e. $ R_k = -AX_k=-\nabla^2 f(x_k)X_k $. 

 When neither regularization nor damping  is used, i.e. $ \delta_k = 0 $ and $ \alpha_k = 1 $, SAM is identical to AM to accelerate fixed-point iteration $ g(x)= (I-A)x+b $.   It has been proved in \citep{walker2011anderson}  that in exact arithmetic, AA is essentially equivalent to GMRES when starting from the same initial point and  no stagnation occurs.  We restate  the main result here.
 
 Let $ x_k^{\text{G}}, r_k^{\text{G}}\overset{\text{def}}{=}b-Ax_k^{\text{G}} $ denote the $k$-th GMRES iterate and residual, respectively, and $ \mathcal{K}_k(A,v)\overset{\text{def}}{=}span\{ v, Av,\ldots,A^{k-1}v \} $ denotes the $k$-th Krylov subspace generated by $  A$ and $v$. Define  $ e^j\overset{\text{def}}{=}(1,1,\ldots,1)^\mathrm{T} \in \mathbb{R}^j $ for $ j\geq 1 $. For brevity, let $ span(X) $ denote the linear space spanned by the columns of $ X $. Besides, $ \{x_k \}$ are the iterates generated by AM (SAM), and $\{\bar{x}_k\}$ are the intermediate iterates generated by (\ref{xbar}).
we have
 
 \begin{proposition} \label{prop:linear1} 
 To minimize \eqref{eq:convexquad},
suppose that for SAM,  $ \delta_k=0, \alpha_k=\beta_k=1, m=k\geq 1 $.  If $ x_0 = x_0^{\text{\rm G}} $ and $ rank(R_k) = m $, then $ \bar{x}_k = x_k^{\text{\rm G}} $. 
\end{proposition}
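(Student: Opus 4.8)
The plan is to prove, by induction on $k$, the Krylov‑subspace identity $\mathrm{span}(X_k)=\mathcal{K}_k(A,r_0)$, and then read off $\bar x_k = x_k^{\mathrm{G}}$ from the residual‑minimization characterization of GMRES. The enabling observation is that for the quadratic \eqref{eq:convexquad} the local model is \emph{exact}, so $R_k=-AX_k$, and hence for every $\Gamma\in\mathbb{R}^m$,
$$r_k-R_k\Gamma = (b-Ax_k)+AX_k\Gamma = b-A(x_k-X_k\Gamma).$$
Since $\delta_k=0$ and $\alpha_k=1$, the projection step \eqref{xbar_new} with $\Gamma_k$ from \eqref{lsq} produces $\bar x_k=x_k-X_k\Gamma_k$, which therefore minimizes $\|b-A\bar x\|_2$ over the affine set $x_k+\mathrm{span}(X_k)$; full column rank of $R_k$ (equivalently of $X_k$, as $A$ is nonsingular) makes this minimizer unique. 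Because $x_k-x_0=\sum_{j=0}^{k-1}\Delta x_j = X_k e^{k}\in\mathrm{span}(X_k)$, this affine set equals $x_0+\mathrm{span}(X_k)$. Thus, once we know $\mathrm{span}(X_k)=\mathcal{K}_k(A,r_0)$, the vector $\bar x_k$ is the minimizer of the residual norm over $x_0+\mathcal{K}_k(A,r_0)$, i.e. $\bar x_k=x_k^{\mathrm{G}}$ (uniqueness of the GMRES iterate also following from nonsingularity of $A$ together with $\dim\mathcal{K}_k(A,r_0)=k$, which the rank hypothesis supplies).

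For the induction, the base case $k=1$ is immediate: the $k=0$ branch of Algorithm~\ref{alg:sam} with $\beta_0=1$ gives $x_1=x_0+r_0$, so $\mathrm{span}(X_1)=\mathrm{span}\{r_0\}=\mathcal{K}_1(A,r_0)$. Assume $\mathrm{span}(X_k)=\mathcal{K}_k(A,r_0)$. By the previous paragraph $\bar x_k=x_k^{\mathrm{G}}$, and the extragradient is $\bar r_k=r_k-R_k\Gamma_k=b-A\bar x_k=r_k^{\mathrm{G}}$, so the mixing step \eqref{xnext} with $\beta_k=1$ yields $x_{k+1}=x_k^{\mathrm{G}}+r_k^{\mathrm{G}}$. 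Now $x_k=x_0+X_k e^{k}\in x_0+\mathcal{K}_k(A,r_0)$ and $x_k^{\mathrm{G}}\in x_0+\mathcal{K}_k(A,r_0)$, while $r_k^{\mathrm{G}}=r_0-A(x_k^{\mathrm{G}}-x_0)\in\mathcal{K}_{k+1}(A,r_0)$ because $A\mathcal{K}_k(A,r_0)\subseteq\mathcal{K}_{k+1}(A,r_0)$. Hence $\Delta x_k=x_{k+1}-x_k=(x_k^{\mathrm{G}}-x_k)+r_k^{\mathrm{G}}\in\mathcal{K}_{k+1}(A,r_0)$, giving $\mathrm{span}(X_{k+1})\subseteq\mathcal{K}_{k+1}(A,r_0)$. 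Finally, $R_{k+1}=-AX_{k+1}$ with $A$ nonsingular and $\mathrm{rank}(R_{k+1})=m=k+1$ forces $\dim\mathrm{span}(X_{k+1})=k+1$; this both forces $\dim\mathcal{K}_{k+1}(A,r_0)=k+1$ and promotes the inclusion to equality, closing the induction. (Note the hypothesis is available at every intermediate step: if $R_k$ has full column rank $k$, then any column submatrix $R_j$ has rank $j$.)

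The main obstacle is the bookkeeping inside the inductive step — tracking which of $x_k$, $\bar x_k=x_k^{\mathrm{G}}$, $\bar r_k=r_k^{\mathrm{G}}$, and $x_{k+1}$ lives in which Krylov subspace, and in particular verifying that the newly appended difference vector $\Delta x_k$ does not escape $\mathcal{K}_{k+1}(A,r_0)$; the rank assumption is exactly what both excludes GMRES stagnation and provides the dimension count converting the subspace inclusion into equality, so I would flag that dependence explicitly. The remaining ingredients — the exactness $R_k=-AX_k$, the rewriting $r_k-R_k\Gamma=b-A(x_k-X_k\Gamma)$, and uniqueness of the least‑squares solution under full column rank — are routine and need only brief mention.
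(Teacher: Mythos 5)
Your proof is correct and follows essentially the same route as the paper's: an induction establishing $\mathrm{span}(X_k)=\mathcal{K}_k(A,r_0)$, combined with recognizing the regularization-free projection step as the GMRES residual minimization over $x_0+\mathcal{K}_k$, with full column rank supplying uniqueness. The only cosmetic difference is that you invoke the GMRES identification at every intermediate step of the induction (covering the intermediate rank requirement by the submatrix remark), whereas the paper proves the Krylov containment directly by expanding $\Delta x_{k-1}$ from the update formula and performs the identification once at the end via the substitution $\tilde{\Gamma}=e^k-\Gamma$.
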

\begin{proof}
 Since $ R_k = -AX_k $ and $ A $ is nonsingular, we have $ rank(X_k)=m $. We first show $ span(X_k)=\mathcal{K}_k(A,r_0^{\text{G}}) $ by induction. We abbreviate $\mathcal{K}_k(A,r_0^{\text{G}}) $ as $ \mathcal{K}_k $ in this proof.
 
 First, $\Delta x_0 = r_0 = r_0^{\text{\rm G}} $. If $ k=1$, then the proof is complete. Then,
suppose that $ k>1 $ and, as an inductive hypothesis, that $ span(X_{k-1})=\mathcal{K}_{k-1} $. 
With (\ref{eq:sam}) and noting that  $\alpha_k = \beta_k = 1$, we have 
\begin{align}
\Delta x_{k-1}&=x_k-x_{k-1} \nonumber \\
&=r_{k-1}-(X_{k-1}+R_{k-1})\Gamma_{k-1} \nonumber \\
&=b-Ax_{k-1}-(X_{k-1}-AX_{k-1})\Gamma_{k-1} \nonumber \\
&=b-A(x_0+\Delta x_0+\cdots+\Delta x_{k-2}) -(X_{k-1}-AX_{k-1})\Gamma_{k-1}  \nonumber \\
&=r_0-AX_{k-1}e^{k-1}-(X_{k-1}-AX_{k-1})\Gamma_{k-1}. \label{eq:delta_x_k_1}
\end{align}
 Since $r_0\in\mathcal{K}_{k-1}$, and by the inductive hypothesis $ span(X_{k-1})\subseteq \mathcal{K}_{k-1}$ which also implies $ span(AX_{k-1})\subseteq \mathcal{K}_k$, we know $ \Delta x_{k-1}\in \mathcal{K}_k $, which implies $ span(X_k) \subseteq \mathcal{K}_k $. Since we assume $ rank(X_k) = m = k$ which implies $ \dim (span(X_k))=\dim (\mathcal{K}_k) $, we have $ span(X_k) = \mathcal{K}_k $, thus completing the induction. 
 
 Recalling that to determine $\Gamma_k$, we solve the least-squares problem (\ref{lsq})
and $ R_k = -AX_k $, we have
\begin{equation}
\Gamma_k = \mathop{\arg\min}_{\Gamma \in \mathbb{R}^m}\| r_k + AX_k\Gamma \|_2. \label{eq:lsq_aa}
\end{equation}
Since $ rank(AX_k)=rank(X_k)=m $, (\ref{eq:lsq_aa}) has a unique solution.
Also, since $ r_k = b-Ax_k=b-A(x_0+X_ke^k)=r_0-AX_ke^k $, we have
$ r_k + AX_k\Gamma = r_0-AX_ke^k + AX_k\Gamma = r_0 -AX_k\tilde{\Gamma}$, where $ \tilde{\Gamma}=e^k-\Gamma $. So
$\Gamma_k$ solves (\ref{eq:lsq_aa}) if and only if $ \tilde{\Gamma}_k=e^k-\Gamma_k $  solves 
\begin{align}
\min\limits_{\tilde{\Gamma} \in \mathbb{R}^m}\| r_0-AX_k\tilde{\Gamma} \|_2, \label{eq:gmres}
\end{align}
which is the GMRES minimization problem.
 Since the solution of (\ref{eq:gmres}) is also unique, we have
\begin{align*}
\bar{x}_k&=x_k-X_k\Gamma_k=x_k-X_k(e^k-\tilde{\Gamma}_k) =x_0+X_k\tilde{\Gamma}_k = x_k^{\text{\rm G}}.
\end{align*}
\end{proof}

 If $ R_k $ is rank deficient, then a stagnation occur in AM (SAM). 
 \begin{proposition}
 To minimize \eqref{eq:convexquad}, 
suppose that for SAM,  $ \delta_k=0, \alpha_k=\beta_k=1, m=k \geq 1 $.  If $ rank(R_k) = k $  holds for $ 1\leq k < s  $ while failing to hold for $ k=s $, where $ s>1 $, then $\bar{x}_s = \bar{x}_{s-1}$.
\end{proposition}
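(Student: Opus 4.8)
The plan is to deduce the claim from the GMRES identification of Proposition~\ref{prop:linear1} together with the observation that a rank drop in $R_s$ is a ``lucky breakdown'' of the underlying Krylov process, so that the $s$-th step produces no genuinely new search direction. Throughout I abbreviate $\mathcal{K}_j := \mathcal{K}_j(A, r_0^{\text{G}})$, and I use repeatedly that $R_k = -AX_k$ with $A$ nonsingular, so $span(R_k)$ is determined by $span(X_k)$ and $rank(X_k) = rank(R_k)$. First I would record what the hypotheses give below index $s$: since $rank(R_k) = k$ for $1 \le k \le s-1$, the induction in the proof of Proposition~\ref{prop:linear1} applies verbatim and yields $span(X_k) = \mathcal{K}_k$ and $\bar{x}_k = x_k^{\text{G}}$ for all such $k$. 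In particular $span(X_{s-1}) = \mathcal{K}_{s-1}$, and $\bar{x}_{s-1}$ is the GMRES iterate $x_{s-1}^{\text{G}}$, i.e.\ the minimizer of $\|b - Ax\|_2$ over the affine set $x_0 + \mathcal{K}_{s-1}$; this minimizer is unique because $A$ is nonsingular.

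Next I would pin down $span(X_s)$. Since $X_{s-1}$ is a submatrix of $X_s$ with $rank(X_{s-1}) = s-1$, we have $rank(X_s) \ge s-1$, while $rank(X_s) = rank(R_s) < s$ by hypothesis; hence $rank(X_s) = s-1$, and comparing dimensions with $span(X_{s-1}) \subseteq span(X_s)$ gives $span(X_s) = span(X_{s-1}) = \mathcal{K}_{s-1}$. Equivalently, $\Delta x_{s-1} \in \mathcal{K}_{s-1}$: the breakdown at step $s$ means the window spans no new Krylov direction.

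The crux is to describe $\bar{x}_s$ correctly when $R_s$ is rank deficient. With $\delta_s = 0$ and $\alpha_s = \beta_s = 1$ we have $\bar{x}_s = x_s - X_s \Gamma_s$ with $\Gamma_s = (R_s^{\mathrm{T}} R_s)^{\dagger} R_s^{\mathrm{T}} r_s$. Although $\Gamma_s$ need not be the unique least-squares solution, the product $R_s \Gamma_s = R_s (R_s^{\mathrm{T}} R_s)^{\dagger} R_s^{\mathrm{T}} r_s$ is the orthogonal projection of $r_s$ onto $span(R_s) = span(AX_s)$; consequently $AX_s \Gamma_s$ coincides with $AX_s \Gamma^{\star}$ for every minimizer $\Gamma^{\star}$ of $\Gamma \mapsto \|r_s + AX_s \Gamma\|_2$, and since $A$ is nonsingular this forces $X_s \Gamma_s = X_s \Gamma^{\star}$. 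Because $\|r_s - R_s\Gamma\|_2 = \|b - A(x_s - X_s\Gamma)\|_2$ and $x_s - X_s\Gamma$ sweeps out $x_s + span(X_s)$, it follows that $\bar{x}_s$ is the (unique, again by nonsingularity of $A$) minimizer of $\|b - Ax\|_2$ over $x_s + span(X_s) = x_s + \mathcal{K}_{s-1}$.

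Finally I would match the two affine subspaces and conclude. From $x_s - x_0 = \Delta x_0 + \cdots + \Delta x_{s-1} = X_s e^{s} \in span(X_s) = \mathcal{K}_{s-1}$ we get $x_s + \mathcal{K}_{s-1} = x_0 + \mathcal{K}_{s-1}$, and likewise $x_{s-1} - x_0 = X_{s-1} e^{s-1} \in \mathcal{K}_{s-1}$. Hence $\bar{x}_s$ and $\bar{x}_{s-1}$ are both the unique minimizer of $\|b - Ax\|_2$ over the common affine set $x_0 + \mathcal{K}_{s-1}$, so $\bar{x}_s = \bar{x}_{s-1}$. I expect the third paragraph to be the main obstacle: Proposition~\ref{prop:linear1} explicitly used the full-rank assumption to get a unique least-squares solution, so the new work is to show that once $R_s$ drops rank the Moore--Penrose update $X_s\Gamma_s$ still realizes the exact GMRES-type minimization over $\mathcal{K}_{s-1}$ --- which hinges on $span(R_s) = span(AX_s)$, on the identity $span(X_s) = \mathcal{K}_{s-1}$ from the second paragraph, and on cancelling $A$ from $AX_s\Gamma_s = AX_s\Gamma^{\star}$.
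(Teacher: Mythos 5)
Your proof is correct, and it rests on the same mechanism as the paper's proof---the rank drop at step $s$ forces $span(X_s)=span(X_{s-1})$, so the minimal-residual point over the search space cannot move---but the execution differs in two respects. The paper works in coefficient space: it writes $\Delta x_{s-1}=X_{s-1}\gamma_s$, partitions $\tilde{\Gamma}=(\eta_1,\eta_2)^{\mathrm{T}}$ so that the degenerate problem $\min_{\tilde\Gamma}\|r_0-AX_s\tilde\Gamma\|_2$ collapses onto the full-rank problem at step $s-1$, and then uses uniqueness of that reduced problem (full column rank of $AX_{s-1}$) to force $X_s\tilde\Gamma_s=X_{s-1}\tilde\Gamma_{s-1}$. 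You instead argue in $x$-space: both $\bar x_{s-1}$ and $\bar x_s$ are characterized as the unique minimizer of $\|b-Ax\|_2$ over the common affine set $x_0+\mathcal{K}_{s-1}$, with uniqueness supplied by nonsingularity of $A$ (strict convexity of the residual norm) rather than by full rank of a reduced coefficient matrix. Your third paragraph also makes explicit a point the paper's proof leaves implicit: when $R_s$ is rank deficient the least-squares problem has many solutions, so one must check that the Moore--Penrose formula $\Gamma_s=(R_s^{\mathrm{T}}R_s)^{\dagger}R_s^{\mathrm{T}}r_s$ actually produces one of them and that $X_s\Gamma_s$ does not depend on which minimizer is taken; your projector identity $R_s(R_s^{\mathrm{T}}R_s)^{\dagger}R_s^{\mathrm{T}}r_s=P_{span(R_s)}r_s$ combined with cancelling the nonsingular $A$ settles this cleanly, which is a small but genuine improvement in rigor. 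The detour through the Krylov subspaces and Proposition~\ref{prop:linear1} is harmless but not needed---working directly with $span(X_{s-1})$ and the identity $x_s+span(X_s)=x_{s-1}+span(X_{s-1})$ would make the argument self-contained---yet the proof as written is complete.
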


\begin{proof}
The rank deficiency of $ X_s $ implies $ \Delta x_{s-1} \in span(X_{s-1}) $. Therefore, there exists $\gamma_s\in\mathbb{R}^{s-1}$ such that $ \Delta x_{s-1} = X_{s-1}\gamma_s $. Partitioning $ \tilde{\Gamma} \in \mathbb{R}^s $ in (\ref{eq:gmres}) as  $ \tilde{\Gamma} = (\eta_1,\eta_2)^{\mathrm{T}} \in \mathbb{R}^{s} $, where $ \eta_1\in\mathbb{R}^{s-1}, \eta_2\in\mathbb{R} $, we have
\begin{align*}
X_s\tilde{\Gamma} &= \begin{pmatrix} X_{s-1} & X_{s-1}\gamma_s \end{pmatrix} \begin{pmatrix} \eta_1 \\ \eta_2 \end{pmatrix} \\
&= X_{s-1}(\eta_1+\gamma_s\eta_2), 
\end{align*}
which implies
\begin{align*}
r_0-AX_s\tilde{\Gamma} =r_0-AX_{s-1}(\eta_1+\gamma_s\eta_2).
\end{align*}
Hence 
\begin{subequations}
\begin{align}
&\quad \min\limits_{\tilde{\Gamma} \in \mathbb{R}^s}\| r_0-AX_s\tilde{\Gamma} \|_2    \label{eq:lsq_aa_s} \\
&=\min\limits_{\eta_1\in\mathbb{R}^{s-1}, \eta_2 \in \mathbb{R}}\| r_0-AX_{s-1}(\eta_1+\gamma_s\eta_2) \|_2. \label{eq:lsq_aa_s_1}
\end{align}
\end{subequations}
Since $ AX_{s-1} $ has full rank, $ \tilde{\Gamma}_{s-1}=\eta_1+\gamma_s\eta_2 $ is the unique solution to minimize (\ref{eq:lsq_aa_s_1}) and  $ \tilde{\Gamma}_s = (\eta_1, \eta_2)^\mathrm{T} $ minimizes (\ref{eq:lsq_aa_s}) while being not unique. Therefore, from the equivalence of (\ref{eq:lsq_aa}) and (\ref{eq:gmres}), we conclude that
\begin{align*}
\bar{x}_s &= x_0+X_s\tilde{\Gamma}_s 
=x_0+X_{s-1}(\eta_1+\gamma_s\eta_2) \\
&=x_0+X_{s-1}\tilde{\Gamma}_{s-1}=\bar{x}_{s-1}.
\end{align*}
\end{proof}

When the stagnation happens, further iterations of AM cannot make any improvement. This is a potential numerical weakness of AM relative to GMRES, which does not break down upon stagnation before the solution has been found. At this point, switching to applying several fixed-point iteration $ x_{k+1} = g(x_k) $ may help jump out of the stagnation \citep{pratapa2016anderson}. 

 In Section~\ref{subsec:enhance}, we introduce the preconditioned mixing strategy. This form of preconditioning for AM is new as far as we know. We reveal its relationship with right preconditioned GMRES \citep{saad2003iterative} here. 
 Let $ x_k^{\text{RG}}, r_{k}^{\text{RG}} $  denote the $ k$-th right preconditioned GMRES iterate and residual. According to Proposition~9.1 in \citep{saad2003iterative}, with a fixed preconditioner $ M $, then $ x_k^{\text{RG}}$ in the right preconditioned GMRES minimize residual in the affine subspace
$
 x_0 +\mathcal{K}_k\lbrace  M^{-1}A, M^{-1}r_0^{\text{RG}} \rbrace.
 $
 
 \begin{proposition} \label{prop:linearprecond} 
 To minimize \eqref{eq:convexquad},
suppose that for preconditioned SAM (cf. \eqref{psam}),  $ \delta_k=0, \alpha_k=\beta_k=1, m=k\geq 1 $ and $ M_k = M $ where M is nonsingular.  If $ x_0 = x_0^{\text{\rm RG}} $ and $ rank(R_k) = m $, then $ \bar{x}_k = x_k^{\text{\rm RG}} $. 
\end{proposition}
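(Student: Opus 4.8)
The plan is to imitate the proof of Proposition~\ref{prop:linear1} almost verbatim, replacing the Krylov space $\mathcal{K}_k(A,r_0^{\mathrm G})$ that governs $\mathrm{span}(X_k)$ by the \emph{right-preconditioned} Krylov space $\mathcal{K}_k(M^{-1}A,\,M^{-1}r_0^{\mathrm{RG}})$, while keeping the least-squares / residual bookkeeping in terms of $A$ itself (right-preconditioned GMRES still minimizes the true residual $\|b-Ax\|_2$, only over a preconditioned correction space). Throughout, $R_k=-AX_k$ holds exactly since $f$ is quadratic, and $rank(R_k)=m$ together with nonsingularity of $A$ forces $rank(X_k)=m$, so $X_k$ and $AX_k$ both have full column rank $m=k$.

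First I would establish $\mathrm{span}(X_k)=\mathcal{K}_k(M^{-1}A,\,M^{-1}r_0^{\mathrm{RG}})$ by induction on $k$. For the base case, with empty history the preconditioned mixing step gives $\Delta x_0 = M^{-1}\bar r_0 = M^{-1}r_0 = M^{-1}r_0^{\mathrm{RG}}$ (using $x_0=x_0^{\mathrm{RG}}$), so $\mathrm{span}(X_1)=\mathcal{K}_1$. Assuming $\mathrm{span}(X_{k-1})=\mathcal{K}_{k-1}$, I substitute $\delta_k=0$, $\alpha_k=\beta_k=1$, $M_k=M$ into \eqref{psam}, and use $R_{k-1}=-AX_{k-1}$ together with $r_{k-1}=r_0-AX_{k-1}e^{k-1}$ to rewrite
\[
\Delta x_{k-1}=M^{-1}r_0 - M^{-1}AX_{k-1}e^{k-1} - X_{k-1}\Gamma_{k-1}+M^{-1}AX_{k-1}\Gamma_{k-1}.
\]
Each term lies in $\mathcal{K}_k$: $M^{-1}r_0\in\mathcal{K}_1$, the columns of $X_{k-1}$ lie in $\mathcal{K}_{k-1}$ by hypothesis, and the columns of $(M^{-1}A)X_{k-1}$ lie in $M^{-1}A\,\mathcal{K}_{k-1}\subseteq\mathcal{K}_k$. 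Hence $\mathrm{span}(X_k)\subseteq\mathcal{K}_k$, and since $rank(X_k)=k=\dim\mathcal{K}_k$, equality follows, completing the induction.

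Second, I would tie $\Gamma_k$ to the GMRES minimization exactly as in Proposition~\ref{prop:linear1}. Since $\delta_k=0$, $\Gamma_k$ solves \eqref{lsq}; with $R_k=-AX_k$ and $r_k=r_0-AX_ke^k$, the substitution $\tilde\Gamma=e^k-\Gamma$ turns this into $\min_{\tilde\Gamma\in\mathbb{R}^m}\|r_0-AX_k\tilde\Gamma\|_2$, which has a unique solution $\tilde\Gamma_k$ because $AX_k$ has full column rank. As $X_k$ is a bijection from $\mathbb{R}^m$ onto $\mathrm{span}(X_k)=\mathcal{K}_k(M^{-1}A,\,M^{-1}r_0^{\mathrm{RG}})$, the vector $X_k\tilde\Gamma_k$ is precisely the element $w$ of that Krylov space minimizing $\|r_0-Aw\|_2$, i.e.\ the right-preconditioned GMRES correction (invoking Proposition~9.1 of \citep{saad2003iterative}). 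Therefore $\bar x_k = x_k - X_k\Gamma_k = x_0 + X_k(e^k-\Gamma_k)=x_0+X_k\tilde\Gamma_k = x_k^{\mathrm{RG}}$, the last equality also using $x_0=x_0^{\mathrm{RG}}$ and uniqueness of the GMRES iterate.

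The main obstacle is conceptual rather than computational: one must keep straight that right preconditioning moves the \emph{search space} for $X_k$ to a Krylov space of $M^{-1}A$ seeded by $M^{-1}r_0$, yet the objective being minimized---both in SAM's \eqref{lsq} after the $\tilde\Gamma$ substitution and in right-preconditioned GMRES---remains the \emph{unpreconditioned} residual $\|r_0-Aw\|_2$ over that space. Making the base case $\Delta x_0=M^{-1}r_0$ and the identity $r_k=r_0-AX_ke^k$ interlock with this asymmetry is the only place that needs care beyond the deterministic-AM argument; the rank/dimension count and the uniqueness steps carry over unchanged.
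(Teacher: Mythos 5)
Your proposal is correct and follows essentially the same route as the paper's proof: an induction establishing $\mathrm{span}(X_k)=\mathcal{K}_k(M^{-1}A,\,M^{-1}r_0^{\mathrm{RG}})$ from the preconditioned update \eqref{psam}, followed by the substitution $\tilde{\Gamma}=e^k-\Gamma$ turning \eqref{lsq} into the right-preconditioned GMRES least-squares problem, with uniqueness from the full column rank of $AX_k$. The only cosmetic difference is that you cite Proposition~9.1 of \citep{saad2003iterative} explicitly inside the argument, which the paper invokes just before stating the proposition.
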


\begin{proof}
 Since $ R_k = -AX_k $ and $ A $ is nonsingular, we have $ rank(X_k)=m $. We first show that $ span(X_k)=\mathcal{K}_k(M^{-1}A,M^{-1}r_0^{\text{RG}}) $ by induction. We abbreviate $\mathcal{K}_k(M^{-1}A,M^{-1}r_0^{\text{RG}}) $ as $ \mathcal{K}_k $ in this proof. 
 
 First, $\Delta x_0 = M^{-1}r_0 = M^{-1}r_0^{\text{\rm RG}} $. If $ k=1$, then the proof is complete. Then,
suppose that $ k>1 $ and, as an inductive hypothesis, that $ span(X_{k-1})=\mathcal{K}_{k-1} $. 
With (\ref{psam}) and noting that  $\alpha_k = \beta_k = 1$, we have 
\begin{align}
\Delta x_{k-1}&=x_k-x_{k-1} \nonumber \\
&=M^{-1}r_{k-1}-(X_{k-1}+M^{-1}R_{k-1})\Gamma_{k-1} \nonumber \\
&=M^{-1}(b-Ax_{k-1})-(X_{k-1}-M^{-1}AX_{k-1})\Gamma_{k-1} \nonumber \\
&=M^{-1}b-M^{-1}A(x_0+\Delta x_0+\cdots+\Delta x_{k-2}) -(X_{k-1}-M^{-1}AX_{k-1})\Gamma_{k-1}  \nonumber \\
&=M^{-1}r_0-M^{-1}AX_{k-1}e^{k-1}-(X_{k-1}-M^{-1}AX_{k-1})\Gamma_{k-1}, \label{eq:p_delta_x_k_1}
\end{align}
 where $ \Gamma_{k-1}  $ minimizes \eqref{lsq} since $ \delta_{k-1} = 0 $. 
 
 Since $ M^{-1}r_0\in\mathcal{K}_{k-1}$, and by the inductive hypothesis $ span(X_{k-1})\subseteq \mathcal{K}_{k-1}$ which also implies $ span(M^{-1}AX_{k-1})\subseteq \mathcal{K}_k$, we know $ \Delta x_{k-1}\in \mathcal{K}_k $, which implies $ span(X_k) \subseteq \mathcal{K}_k $. Since we assume $ rank(X_k) = m = k$ which implies $ \dim (span(X_k))=\dim (\mathcal{K}_k) $, we have $ span(X_k) = \mathcal{K}_k $, thus completing the induction. 
 
 Recalling that to determine $\Gamma_k$, we solve the least-squares problem (\ref{lsq})
and $ R_k = -AX_k $, we have
\begin{equation}
\Gamma_k = \mathop{\arg\min}_{\Gamma \in \mathbb{R}^m}\| r_k + AX_k\Gamma \|_2. \label{eq:p_lsq_aa}
\end{equation}
Since $ rank(AX_k)=rank(X_k)=m $, (\ref{eq:p_lsq_aa}) has a unique solution.
Also, since $ r_k = b-Ax_k=b-A(x_0+X_ke^k)=r_0-AX_ke^k $, we have
$ r_k + AX_k\Gamma = r_0-AX_ke^k + AX_k\Gamma = r_0 -AX_k\tilde{\Gamma}$, where $ \tilde{\Gamma}=e^k-\Gamma $. So
$\Gamma_k$ solves (\ref{eq:p_lsq_aa}) if and only if $ \tilde{\Gamma}_k=e^k-\Gamma_k $  solves 
\begin{align}
\min\limits_{\tilde{\Gamma} \in \mathbb{R}^m}\| r_0-AX_k\tilde{\Gamma} \|_2, \label{eq:p_gmres}
\end{align}
which is the right preconditioned GMRES minimization problem.
 Since the solution of (\ref{eq:p_gmres}) is also unique, we have
\begin{align*}
\bar{x}_k&=x_k-X_k\Gamma_k=x_k-X_k(e^k-\tilde{\Gamma}_k) =x_0+X_k\tilde{\Gamma}_k = x_k^{\text{\rm RG}}.
\end{align*}
\end{proof}
 For preconditioned SAM, the preconditioner $ M_k $ can vary from step to step, while the minimal residual property still holds, i.e. 
$
  \bar{x}_k  = \mathop{\arg\min}_{ x\in x_k +span\lbrace X_k \rbrace} \| b-Ax\|_2   .
 $
 \begin{remark}
 When the objective function is only approximately convex quadratic in a local region around the minimum, the relation between SAM and GMRES  can only approximately hold. Nonetheless, SAM can often show superior behaviour in practice.
 \end{remark}

\section{Pseudocode for AdaSAM/pAdaSAM} 
 \begin{algorithm}[tb] 
\caption{AdaSAM, namely our proposed SAM with $ \delta_k $ chosen as \eqref{deltak}. $ optim(x_k,g_k) $ is an optimizer which updates $ x_k $ given stochastic gradient $ g_k $.   }
\label{alg:adasam}
\textbf{Input}: $ x_0\in\mathbb{R}^d, m=10, \alpha_k=1, \beta_k=1, c_1 = 10^{-2},  p=1, \gamma=0.9, \epsilon=10^{-8}, max\_iter > 0 $, optimizer $ optim $.\\
\textbf{Output}: $ x\in\mathbb{R}^d $
\begin{algorithmic}[1] 
\STATE {$ \Delta\hat{x}_0 =  0, \Delta\hat{r}_0 = 0$}
\FOR{$k = 0,1,\cdots, max\_iter$ }
\STATE $ r_k = -\nabla f_{S_k}\left(x_k\right) $ 
\IF {$k>0$}
\STATE $ m_k = \min\{m,k\} $
\STATE $ \Delta\hat{x}_k =  \gamma\cdot\Delta\hat{x}_{k-1}+(1-\gamma)\cdot\Delta x_{k-1} $ 
\STATE $ \Delta\hat{r}_k = \gamma\cdot\Delta\hat{r}_{k-1}+(1-\gamma)\cdot\Delta r_{k-1} $
\STATE $ \hat{X}_k = [
\Delta \hat{x}_{k-m_k} , \Delta \hat{x}_{k-m_k+1} , \cdots , \Delta \hat{x}_{k-1} ] $
\STATE $ \hat{R}_k = [
\Delta \hat{r}_{k-m_k} , \Delta \hat{r}_{k-m_k+1} , \cdots , \Delta \hat{r}_{k-1} ] $
\ENDIF
\IF { $ k>0 $ and $ k $ mod $p = 0 $ }
\STATE $ \delta_k = c_1 \| r_k\|_2^2 / \left(\|\Delta\hat{x}_k\|_2^2 +\epsilon\right) $
\STATE $ \Delta x_k = \beta_k r_k - \alpha_k\left(\hat{X}_k + \beta_k \hat{R}_k \right)\left( \hat{R}_k^{\mathrm{T}}\hat{R}_k +\delta_k \hat{X}_k^{\mathrm{T}}\hat{X}_k\right)^{\dagger} \hat{R}_k^{\mathrm{T}}r_k  $ 
\IF {$ (\Delta x_k)^{\mathrm{T}}r_k > 0 $}
\STATE $ x_{k+1} = x_k+\Delta x_k $ 
\ELSE
\STATE $ x_{k+1} = optim(x_k,-r_k) $
\ENDIF
\ELSE
\STATE $ x_{k+1} =  optim(x_k,-r_k) $
\ENDIF
\STATE Apply learning rate schedule of $ \alpha_k, \beta_k $
\ENDFOR
\STATE \textbf{return} $ x_k $
\end{algorithmic}
\end{algorithm}

 Algorithm~\ref{alg:adasam} gives the pseudocode for AdaSAM. Based on the  prototype Algorithm~\ref{alg:sam}, we incorporate sanity check of the positive definiteness, alternating iteration and moving average in our implementation of AdaSAM:
 \begin{enumerate}
 \item \textbf{Sanity check of the positive definiteness}. 
 Except for calculating the largest eigenvalue to check Condition~(\ref{ineq:check_alpha_k}),  a rule of thumb is checking a necessary condition $ r_k^TH_kr_k > 0  $, i.e. the searching direction $\Delta x_k = H_kr_k $ is a descent direction with respect to the stochastic gradient $ \nabla f_{S_k}\left(x_k\right) $. If this condition is violated, we switch to updating $ x_k $ via $optim$. Although such rule of thumb is not theoretically justified, it causes no difficulty of convergence in our practice. (Line~14-17 in Algorithm~\ref{alg:adasam}.)
 \item \textbf{Alternating iteration}.
 To amortize the computational cost of SAA, it is  reasonable to apply a form of  alternating iteration like \citep{pratapa2016anderson}.  In each cycle, we iterate with $ optim $ for $ (p-1) $ steps and update $ X_k, R_k $ simultaneously, then apply SAA in the $ p $-th step, the result of which is the starting point of the next cycle. (Line~11,19 in Algorithm~\ref{alg:adasam}.)
 \item \textbf{Moving average}. 
 In mini-batch training, moving average can be used to incorporate information from the past and reduce the variability. Specifically, we maintain the moving averages of $ X_k, R_k $ by $ \hat{X}_k, \hat{R}_k $ respectively. 
 Here $ \hat{X}_k = [
\Delta \hat{x}_{k-m} , \Delta \hat{x}_{k-m+1} , \cdots , \Delta \hat{x}_{k-1} ] $, $ \hat{R}_k = [
\Delta \hat{r}_{k-m} , \Delta \hat{r}_{k-m+1} , \cdots , \Delta \hat{r}_{k-1} ] $, where $ \Delta\hat{x}_k =  \gamma\cdot\Delta\hat{x}_{k-1}+(1-\gamma)\cdot\Delta x_{k-1} $, 
 $ \Delta\hat{r}_k = \gamma\cdot\Delta\hat{r}_{k-1}+(1-\gamma)\cdot\Delta r_{k-1} $, $ \Delta\hat{x}_0 =  0, \Delta\hat{r}_0 = 0$ and $ \gamma\in[0,1) $. For deterministic quadratic optimization, $ \hat{R}_k = -\nabla^2 f(x_k)\hat{X}_k $ still holds. (Line~6-9 in Algorithm~\ref{alg:adasam}.)
 \end{enumerate}
 
 We point out that these three techniques are not  required in  our theoretical analysis in Section~\ref{sec:theory}. Nonetheless, they may have positive effects in practice. 
  
 \begin{algorithm}[tb] 
\caption{pAdaSAM, namely the preconditioned AdaSAM. $ optim(x_k,g_k) $ is an optimizer which updates $ x_k $ given stochastic gradient $ g_k $.   }
\label{alg:padasam}
\textbf{Input}: $ x_0\in\mathbb{R}^d, m=10, \alpha_k=1, \beta_k=1, c_1 = 10^{-2},  p=1, \gamma=0.9, \epsilon=10^{-8}, max\_iter > 0 $, optimizer $ optim $.\\
\textbf{Output}: $ x\in\mathbb{R}^d $
\begin{algorithmic}[1] 
\STATE {$ \Delta\hat{x}_0 =  0, \Delta\hat{r}_0 = 0$}
\FOR{$k = 0,1,\cdots, max\_iter$ }
\STATE $ r_k = -\nabla f_{S_k}\left(x_k\right) $ 
\IF {$k>0$}
\STATE $ m_k = \min\{m,k\} $
\STATE $ \Delta\hat{x}_k =  \gamma\cdot\Delta\hat{x}_{k-1}+(1-\gamma)\cdot\Delta x_{k-1} $ 
\STATE $ \Delta\hat{r}_k = \gamma\cdot\Delta\hat{r}_{k-1}+(1-\gamma)\cdot\Delta r_{k-1} $
\STATE $ \hat{X}_k = [
\Delta \hat{x}_{k-m_k} , \Delta \hat{x}_{k-m_k+1} , \cdots , \Delta \hat{x}_{k-1} ] $
\STATE $ \hat{R}_k = [
\Delta \hat{r}_{k-m_k} , \Delta \hat{r}_{k-m_k+1} , \cdots , \Delta \hat{r}_{k-1} ] $
\ENDIF
\IF { $ k>0 $ and $ k $ mod $p = 0 $ }
\STATE $ \delta_k = c_1 \| r_k\|_2^2 / \left(\|\Delta\hat{x}_k\|_2^2 +\epsilon\right) $
\STATE $ \Gamma_k =  \left( \hat{R}_k^{\mathrm{T}}\hat{R}_k +\delta_k \hat{X}_k^{\mathrm{T}}\hat{X}_k\right)^{\dagger} \hat{R}_k^{\mathrm{T}}r_k $
\STATE $ \bar{x}_k = x_k - \alpha_k \hat{X}_k \Gamma_k $
\STATE $ \bar{r}_k = r_k -\alpha_k\hat{R}_k\Gamma_k $
\STATE $ \Delta x_k = optim(\bar{x}_k,-\bar{r}_k) - x_k  $ 
\IF {$ (\Delta x_k)^{\mathrm{T}}r_k > 0 $}
\STATE $ x_{k+1} = x_k+\Delta x_k $ 
\ELSE
\STATE $ x_{k+1} = optim(x_k,-r_k) $
\ENDIF
\ELSE
\STATE $ x_{k+1} =  optim(x_k,-r_k) $
\ENDIF
\STATE Apply learning rate schedule of $ \alpha_k, \beta_k $
\ENDFOR
\STATE \textbf{return} $ x_k $
\end{algorithmic}
\end{algorithm}  

 Our implementation of the RAM method, i.e. using Tikhonov regularization (cf. \eqref{rlsq}), differs from AdaSAM by replacing Line~13 in Algorithm~\ref{alg:adasam} with 
 $
 \Delta x_k = \beta_k r_k - \alpha_k\left(\hat{X}_k + \beta_k \hat{R}_k \right)\left( \hat{R}_k^{\mathrm{T}}\hat{R}_k +\delta I\right)^{\dagger} \hat{R}_k^{\mathrm{T}}r_k.
 $
 In other words, we also incorporate the damped projection into  the Tikhonov regularized AM. Therefore, the comparison between AdaSAM and RAM can show the effect of adaptive regularization with $ \delta_k$  chosen as \eqref{deltak}.
 
 We give the pseudocode of pAdaSAM in Algorithm~\ref{alg:padasam}, which is the preconditioned version of AdaSAM. (See Section~\ref{subsec:enhance}.) The effect of the preconditioner $ optim $ is reflected in Line~16 in Algorithm~\ref{alg:padasam}. 
 
\begin{remark}
  In our implementations, we introduce several extra hyper-parameters to the prototype Algorithm~\ref{alg:sam}. However, we will show in the additional experiments that the only hyper-parameter needed to be tuned is still the regularization parameter $ c_1 $, while setting the other hyper-parameters as default is proper. We also omit the second term $ c_2\beta_k^{-2} $ in \eqref{deltak}, which we will justify in Section~\ref{subsec:adapreg}.
\end{remark}
  
\section{Experimental details} \label{sec:exp_details}
  Our codes were written in PyTorch1.4.0\footnote{ Information about this framework is referred to\href{https://pytorch.org}{https://pytorch.org}.}  and one GeForce RTX 2080 Ti GPU is used for each test. Our methods are AdaSAM and its preconditioned variant pAdaSAM.  Before describing the experimental details, we explain the hyperparameter setting of AdaSAM/pAdaSAM/RAM.

\subsection{Hyperparameter setting of AdaSAM/pAdaSAM/RAM} \label{subsec:para}
 Since these methods are all the variants of AM with minor differences, their hyperparameter setting are similar. 
 The only hyperparameter that needs to be carefully tuned is the regularization parameter, i.e. $ c_1 $ for AdaSAM/pAdaSAM, and $ \delta $ for RAM. We explain reasons of the default setting of other hyperparamters:
 \begin{itemize}
 \item $ \alpha_0=1 $. Setting $ \alpha_0 = 1 $ corresponds to using no damping, which means that the minimal residual procedure is exact for AM in deterministic quadratic optimization. This setting  follows the same philosophy of setting initial learning rate as 1 in Newton method. 
 \item $ \beta_0=1 $. Setting the mixing parameter $ \beta_0=1 $ is a standard setting in AM. (See e.g. \citep{walker2011anderson}.) Tuning $ \beta_k $ may be of help \citep{potra2013characterization,fang2009two}, but we abandon this possibility to reduce the work of hyperparameter tuning.
 \item $ m=10 $. Since the extra space is $ 2md$ and the extra computational cost is $ O(m^2d)+O(m^3) $, using small $ m $ is preferred. $ m=10 $ or $ 20 $ is also the default setting in restarted GMRES \citep{saad2003iterative}. Moreover, large $ m $, say, $ m=100 $, can cause the solution of $ \Gamma_k $ less stable as we solve the normal equation directly.
 \item $ p=1 $. By default, no alternating iteration is used. When the extra computational cost (e.g. Line~13 in Algorithm~\ref{alg:adasam}) dominates the computation, we consider this option to alleviate the  cost.
 \item $ \epsilon = 10^{-8} $.  $ \epsilon$  only serves as the safe-guard to prevent the denominator in \eqref{deltak} from being zero. It does not have meaning like Tikhonov regularization $ \delta $ in RAM. Only when $ \|\Delta \hat{x}_k \|_2^2 \approx \epsilon $, the effect of $ \epsilon$ becomes obvious, but $ x_k$ is supposed to converge at this point.
 \item $ \gamma = 0.9$. This is a default setting for moving average \citep{tieleman2012lecture,kingma2014adam}. 
 \end{itemize}

\subsection{Experiments on MNIST}
 Since SAM is expected to behave like the minimal residual method in deterministic quadratic optimization, this group of experiments focused on large mini-batch training where the variance of noise is relatively small, thus the curvature of the objective function rather than noise dominates the optimization. Moreover, using constant learning rate is proper in this situation.
 
 The baselines are SGDM, Adam and SdLBFGS.
 For SGDM and Adam, we used the built-in PyTorch implementations. For SdLBFGS, 
 in addition to the initial proposal \citep{wang2017stochastic},  the Hessian is always initialized with the identity matrix and the calculated descent direction is normalized because such modifications were found to be more effective for SdLBFGS in our experiments. 
  
 We tuned the learning rates of the baseline optimizers by log-scale grid-searches from $10^{-3}$ to $100$. The learning rates of SGDM, Adam and SdLBFGS were 0.1, 0.001, 1 and 0.1, respectively. 
 The historical length for SdLBFGS, RAM and AdaSAM was set as 20. $ \delta = 10^{-6}$ for RAM and $ c_1=10^{-4} $ for AdaSAM. The $ optim $ in Algorithm~\ref{alg:adasam} is $ optim(x_k,g_k)=x_k-0.1*g_k $. 
 
 For the preconditioned AdaSAM, i.e. Adagrad-AdaSAM and RMSprop-AdaSAM, the learning rates of Adagrad and RMSprop were 0.01, 0.001, respectively.
 
 For all the tests, the model was trained for 100 epochs.
 
 In the main paper, we only report training loss. Here, we  report both the training loss and the squared norm of the gradient (SNG) in Figure~\ref{fig:appendix_mnist_2}. By comparing the training loss and SNG, it can be observed that a smaller SNG typically indicates a smaller training loss, which confirms the way to minimize $ \mathbb{E}\| \nabla f(x)\|_2^2 $. Since AM is closely related to the minimal residual method, where the term ``residual"  is actually the gradient in optimization, AM is expected to achieve small SNG when the quadratic approximation of the objective function is accurate enough. From the experiments, we find the behaviour of AdaSAM is rather stable even in mini-batch training.
 
 \begin{figure}[ht]
\centering 
\subfigure[Batchsize=6K]{
\includegraphics[width=0.23\textwidth]{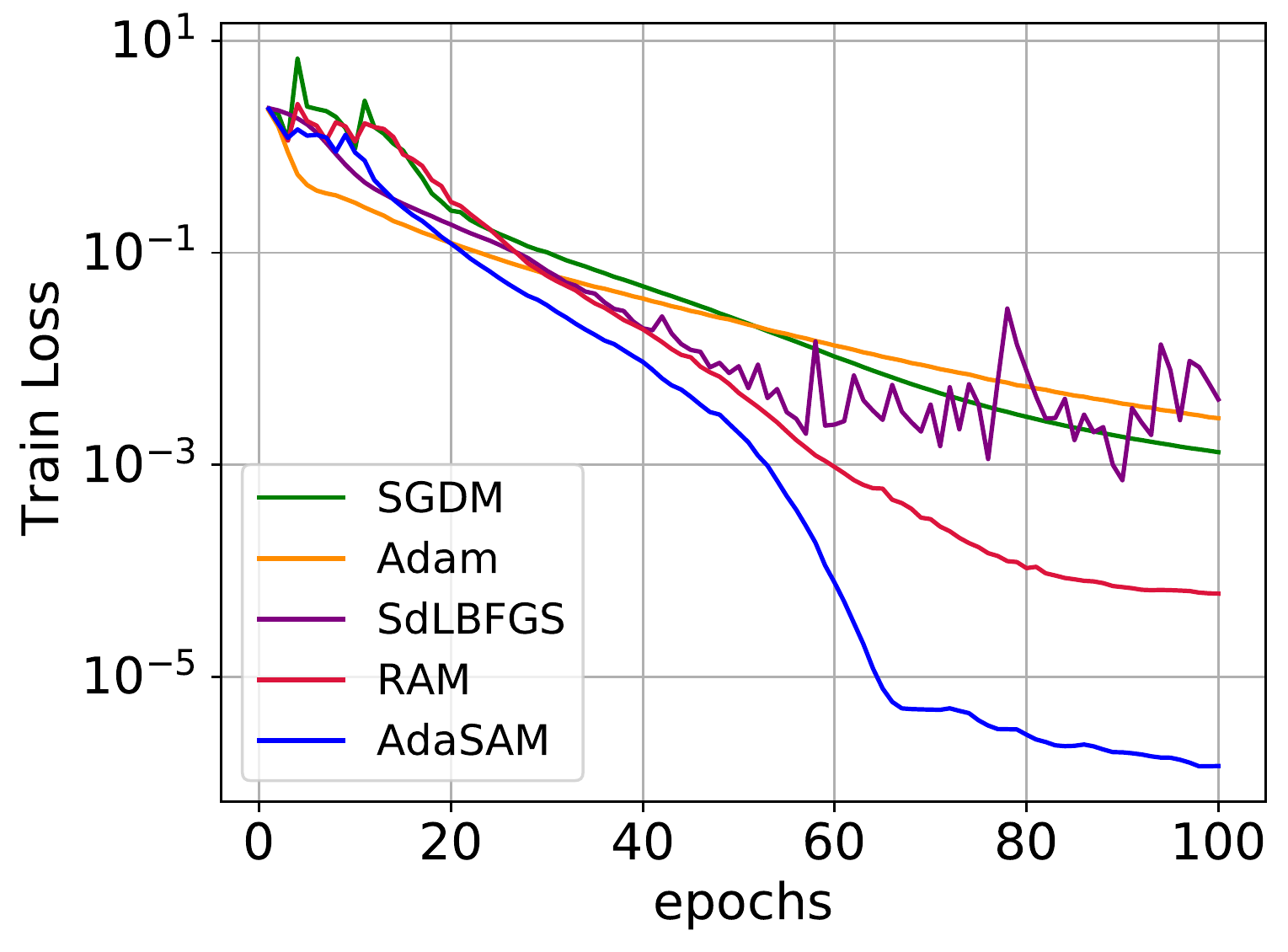}
}
\subfigure[Batchsize=3K]{
\includegraphics[width=0.23\textwidth]{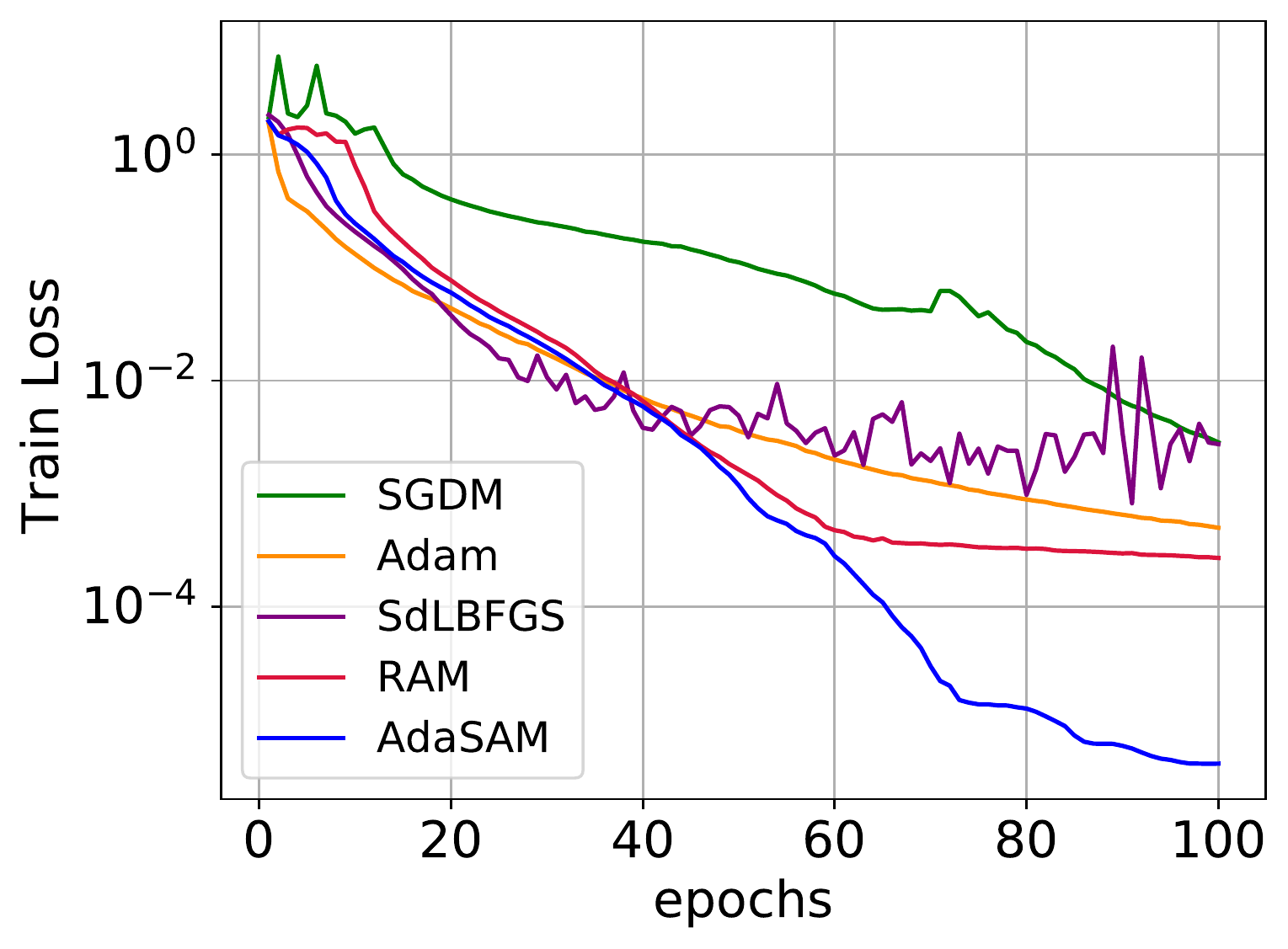}
}
\subfigure[Variance reduction]{
\includegraphics[width=0.23\textwidth]{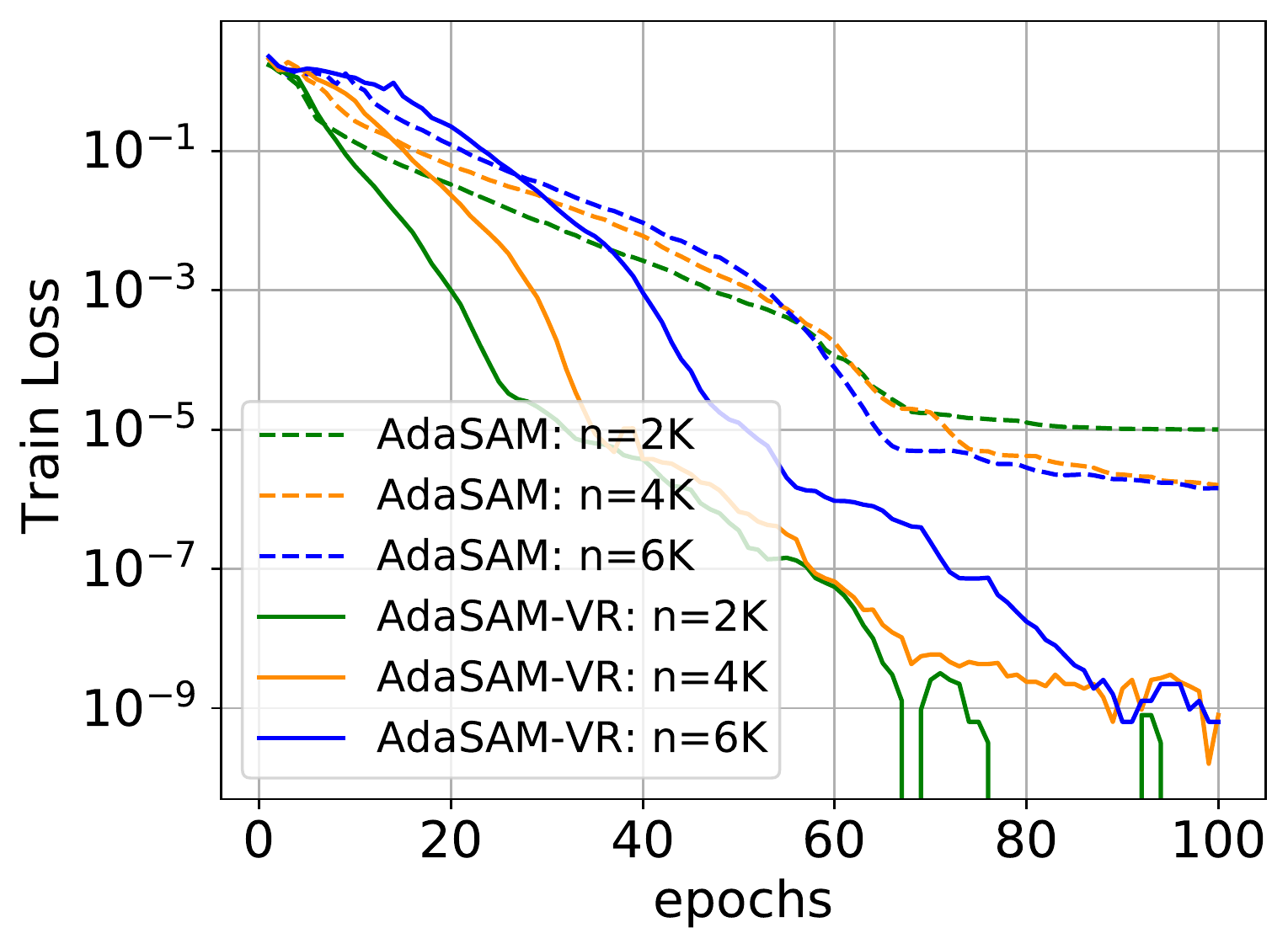}
}
\subfigure[Preconditioning]{
\includegraphics[width=0.23\textwidth]{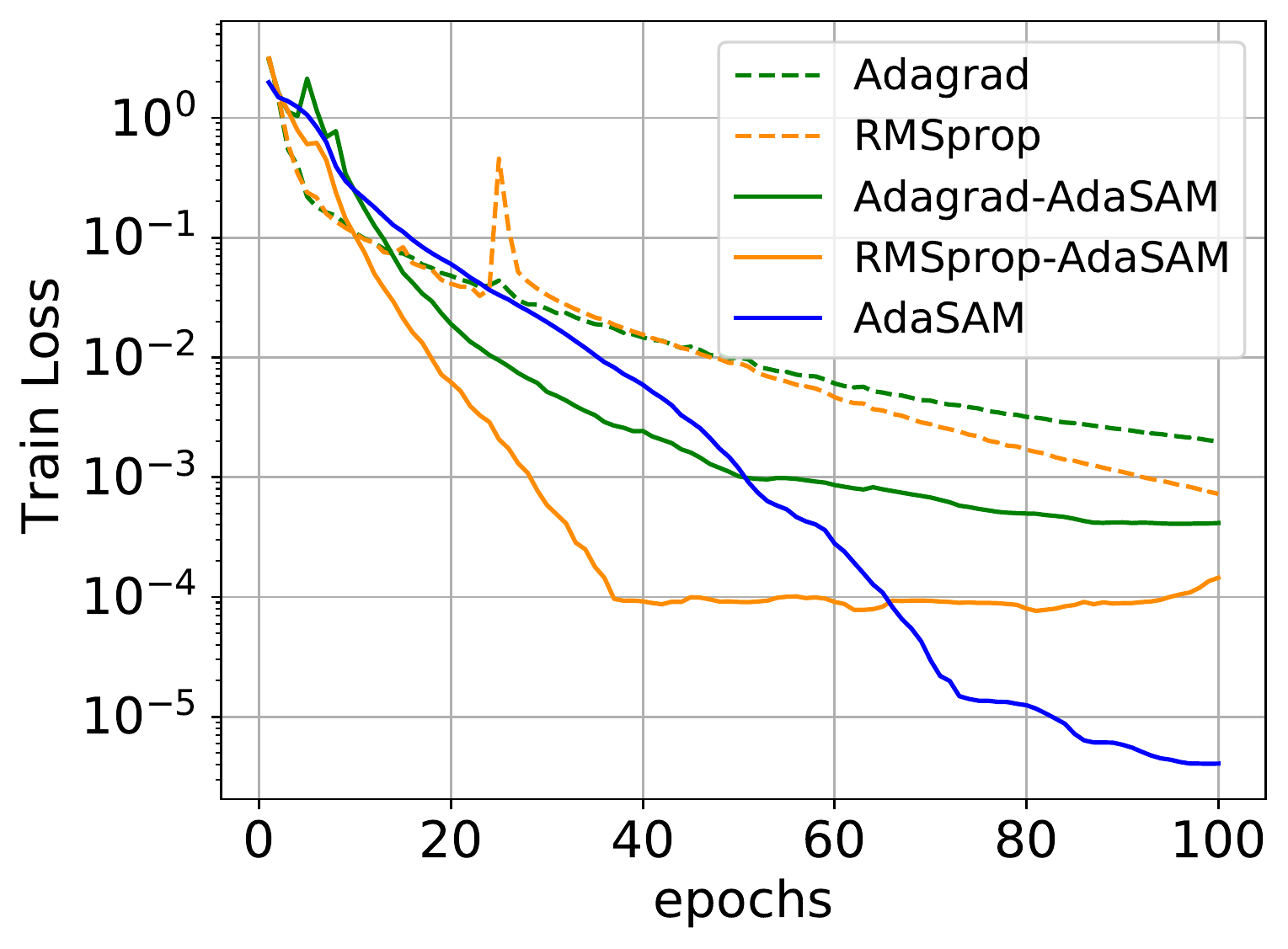}
}
\subfigure[Batchsize=6K]{
\includegraphics[width=0.23\textwidth]{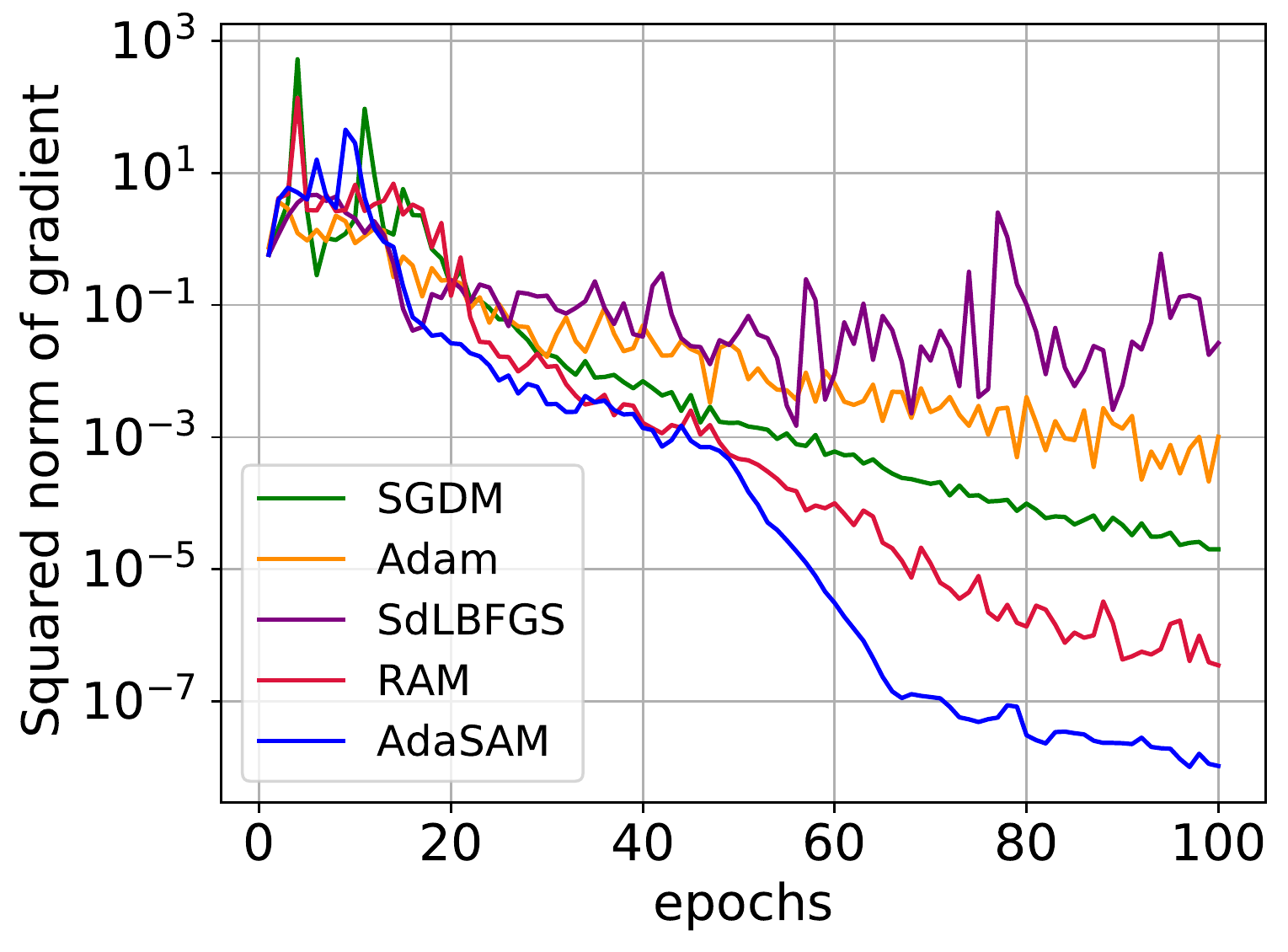}
}
\subfigure[Batchsize=3K]{
\includegraphics[width=0.23\textwidth]{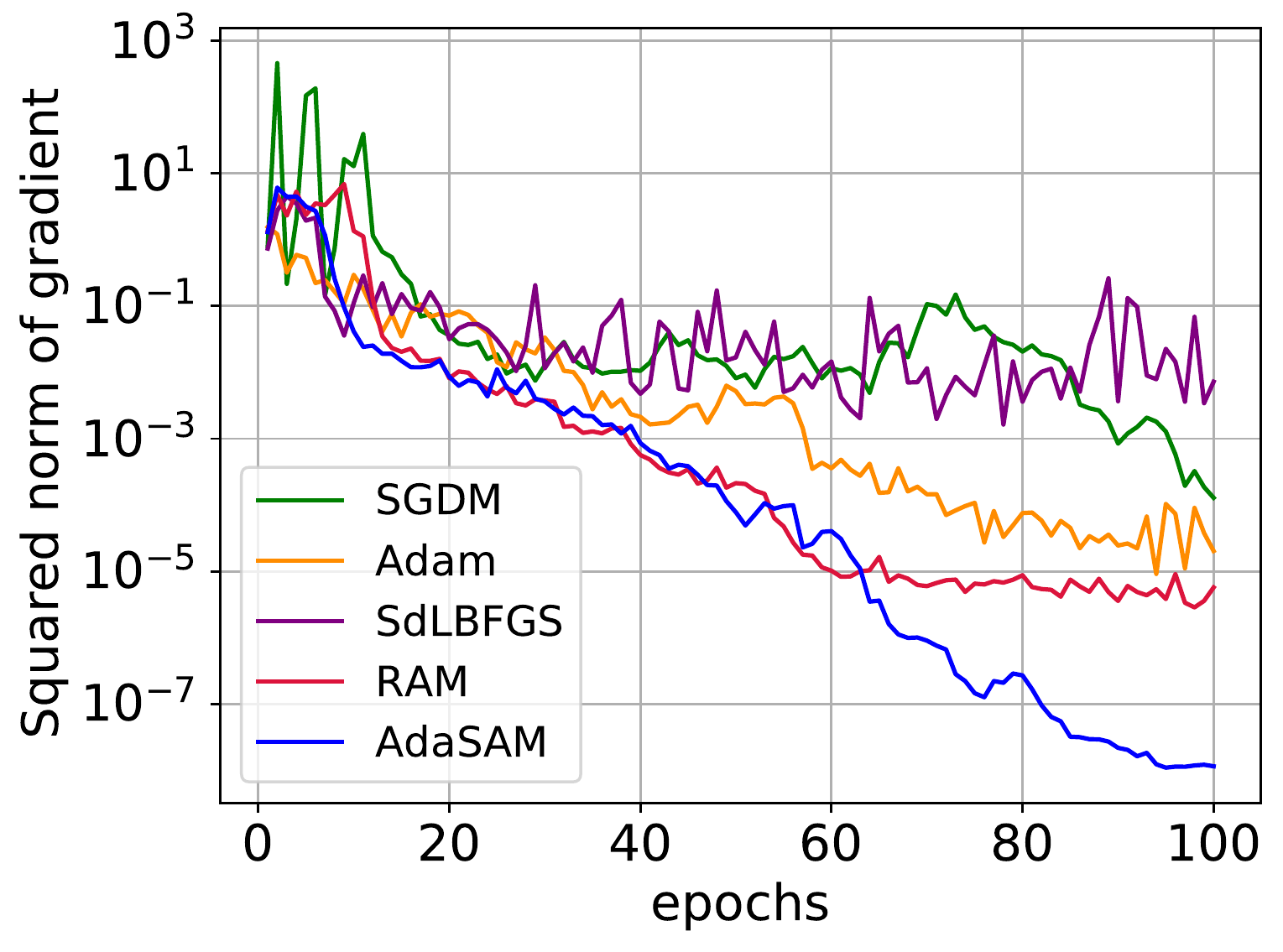}
}
\subfigure[Variance reduction]{
\includegraphics[width=0.23\textwidth]{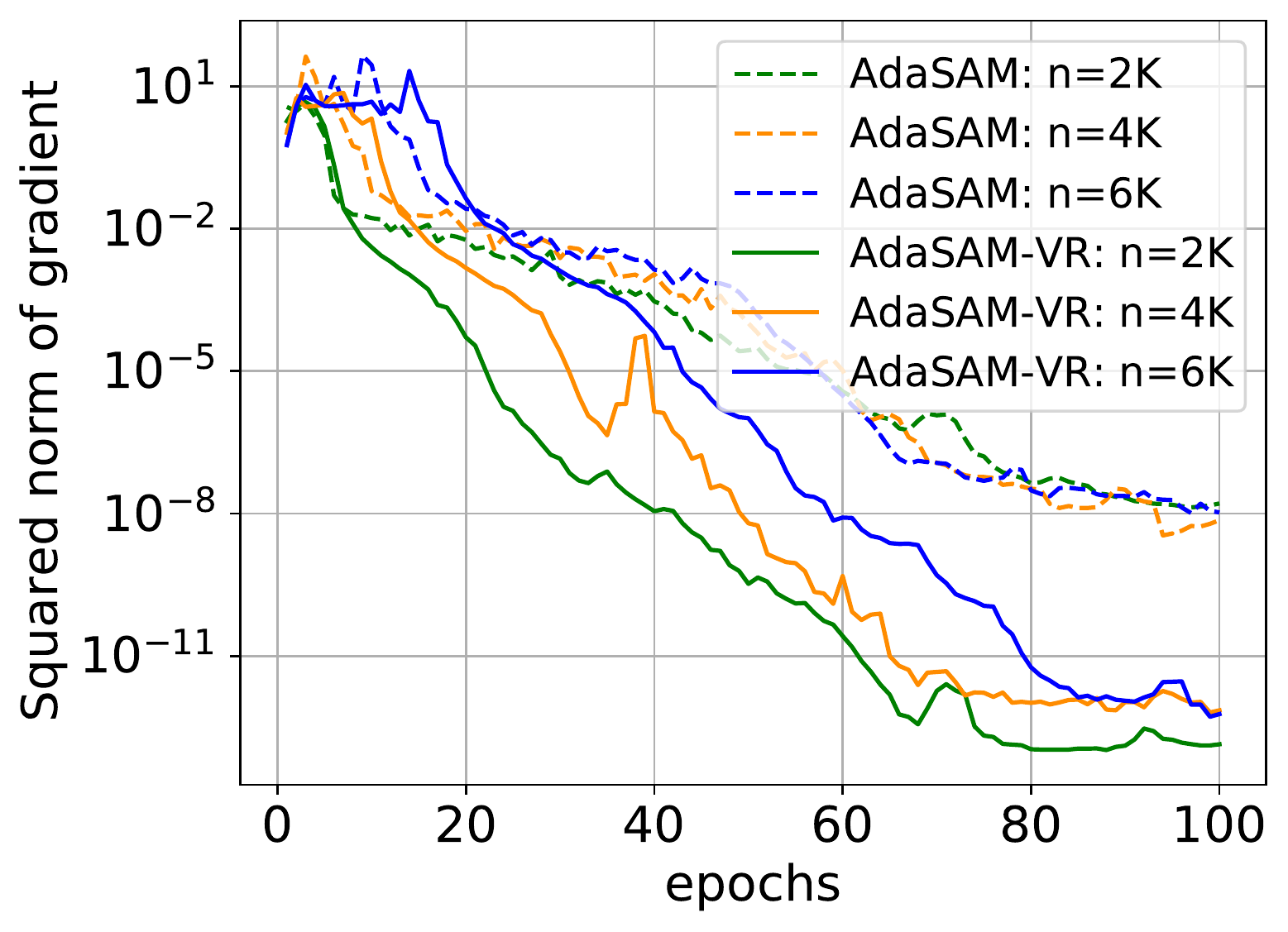}
}
\subfigure[Preconditioning]{
\includegraphics[width=0.23\textwidth]{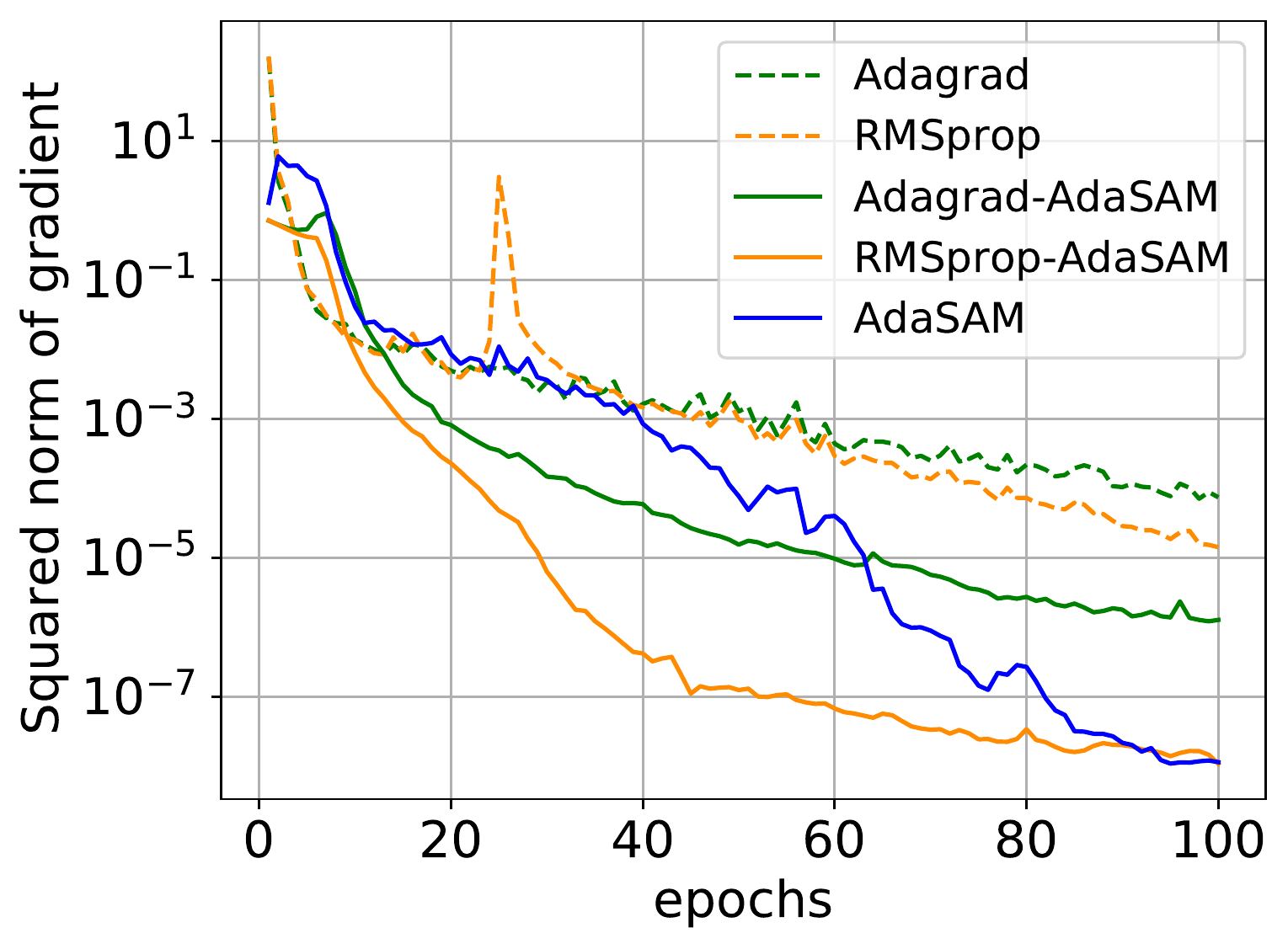}
}
\caption{Experiments on MNIST. Train Loss: (a)  Batchsize = 6K; (b) Batchsize = 3K; (c)  AdaSAM with variance reduction; (d) Preconditioned AdaSAM with batchsize = 3K.
Square norm of the gradient: (a)  Batchsize = 6K; (b) Batchsize = 3K; (c)  AdaSAM with variance reduction; (d) Preconditioned AdaSAM with batchsize = 3K.   }
\label{fig:appendix_mnist_2}
\end{figure}
 
\subsection{Experiments on CIFAR}
 For this group of experiments, we followed the same setting of training ResNet in \citep{he2016deep}. The batchsize is set to be 128 as commonly suggested. 
 (1) When training  for 160 epochs,  the learning rate was decayed at the 80th and 120th epoch; (2) When training  for 120 epochs, the learning rate was decayed at the 60th and 90th epoch; (3) When training  for 80 epochs, the learning rate was decayed at the 40th and 60th epoch. For AdaSAM/RAM, the learning rate decay means decaying $ \alpha_k, \beta_k $ simultaneously.  
 The results in Table~\ref{table:cifar} come from repeated tests with 3 random seeds.
 
 The baseline optimizers were SGDM, Adam, AdaBelief \citep{zhuang2020adabelief}, Lookahead \citep{zhang2019lookahead}, RNA \citep{scieur2018nonlinear}. AdaBelief is a recently proposed adaptive learning rate method  to improve Adam. Lookahead is a $k$-step method, which can be seen as a simple sequence interpolation method. In each cycle, Lookahead  iterates with an inner-optimizer $ optim $ for $k$ steps and then interpolates the first and the last iterates to give the starting point of the next cycle. RNA is also an extrapolation method but based on the minimal polynomial extrapolation approach \citep{brezinski2018shanks}. 
 
 We also explore the scheme of alternating iteration here, i.e. p>1 in Algorithm~\ref{alg:adasam}. SGD alternated with AdaSAM and Adam alternated with AdaSAM are denoted as AdaSAM-SGD and AdaSAM-Adam respectively.

 We tuned the hyperparameters through experiments on CIFAR-10/ResNet20. For AdaSAM/RAM, we only tuned the regularization parameter as explained in Section~\ref{subsec:para}. For each optimizer, the hyperparameter setting that has the best final test accuracy on CIFAR-10/ResNet20 was kept unchanged and used for the other tests. 
 We list the hyperparameters of all the tested optimizers here. (Learning rate is abbreviated as lr.) 
 \begin{itemize}
 \item \textbf{SGDM}: lr = 0.1, momentum = 0.9, weight-decay = $ 5\times 10^{-4} $, lr-decay = 0.1.
 \item \textbf{Adam}: lr = 0.001, $ (\beta_1,\beta_2) = (0.9,0.999) $, weight-decay = $ 5\times 10^{-4} $, lr-decay = 0.1.
 \item \textbf{AdaBelief}: lr = 0.001, $ (\beta_1,\beta_2) = (0.9,0.999) $, eps = $ 1\times 10^{-8} $, weight-decay = $ 5\times 10^{-4} $, lr-decay = 0.1. 
 \item \textbf{Lookahead}: $ optim$: SGDM (lr = 0.1, momentum = 0.9, weight-decay = $ 1\times 10^{-3}$), $ \alpha = 0.8 $, steps = 10, lr-decay = 0.1.
 \item \textbf{RNA}: lr = 0.1, momentum = 0.9, $\lambda = 0.1$, hist-length = 10, weight-decay = $5\times 10^{-4}$, lr-decay = 0.1.
 \item \textbf{RAM}: $optim$: SGDM (lr = 0.1, momentum = 0, weight-decay = $1.5\times 10^{-3}$), $ \alpha_k=1.0, \beta_k=1.0 $, $ \delta=0.1 $, $ p=1, m=10 $, weight-decay = $1.5\times 10^{-3}$, lr-decay = 0.06.
 \item \textbf{AdaSAM}: $ optim $: SGDM (lr = 0.1, momentum = 0, weight-decay = $1.5\times 10^{-3}$), $ \alpha_k=1.0, \beta_k=1.0 $, $ c_1=0.01 $, $ p=1, m=10 $, weight-decay = $1.5\times 10^{-3}$, lr-decay = 0.06.
 \item \textbf{AdaSAM-SGD}: $ optim $: SGDM (lr = 0.1, momentum = 0, weight-decay = $1.5\times 10^{-3}$), $ \alpha_k=1.0, \beta_k=1.0 $, $ c_1=0.01 $, $ p=5, m=10 $, weight-decay = $1.5\times 10^{-3}$, lr-decay = 0.06.
 \item \textbf{AdaSAM-Adam}: $ optim $: Adam (lr = 0.001, weight-decay = $1\times 10^{-3}$), $ \alpha_k=1.0, \beta_k=1.0 $, $ c_1=0.01 $, $ p=5, m=10 $, weight-decay = $1\times 10^{-3}$, lr-decay = 0.06.
 \item \textbf{Lookahead-Adam}: $ optim $: Adam (lr = 0.001, weight-decay = $ 1\times 10^{-3}$), $ \alpha = 0.8 $, steps = 10, lr-decay = 0.1.
 \end{itemize}
 
 \begin{figure}[ht]
\centering 
\subfigure[CIFAR-10/ResNet18]{
\includegraphics[width=0.45\textwidth]{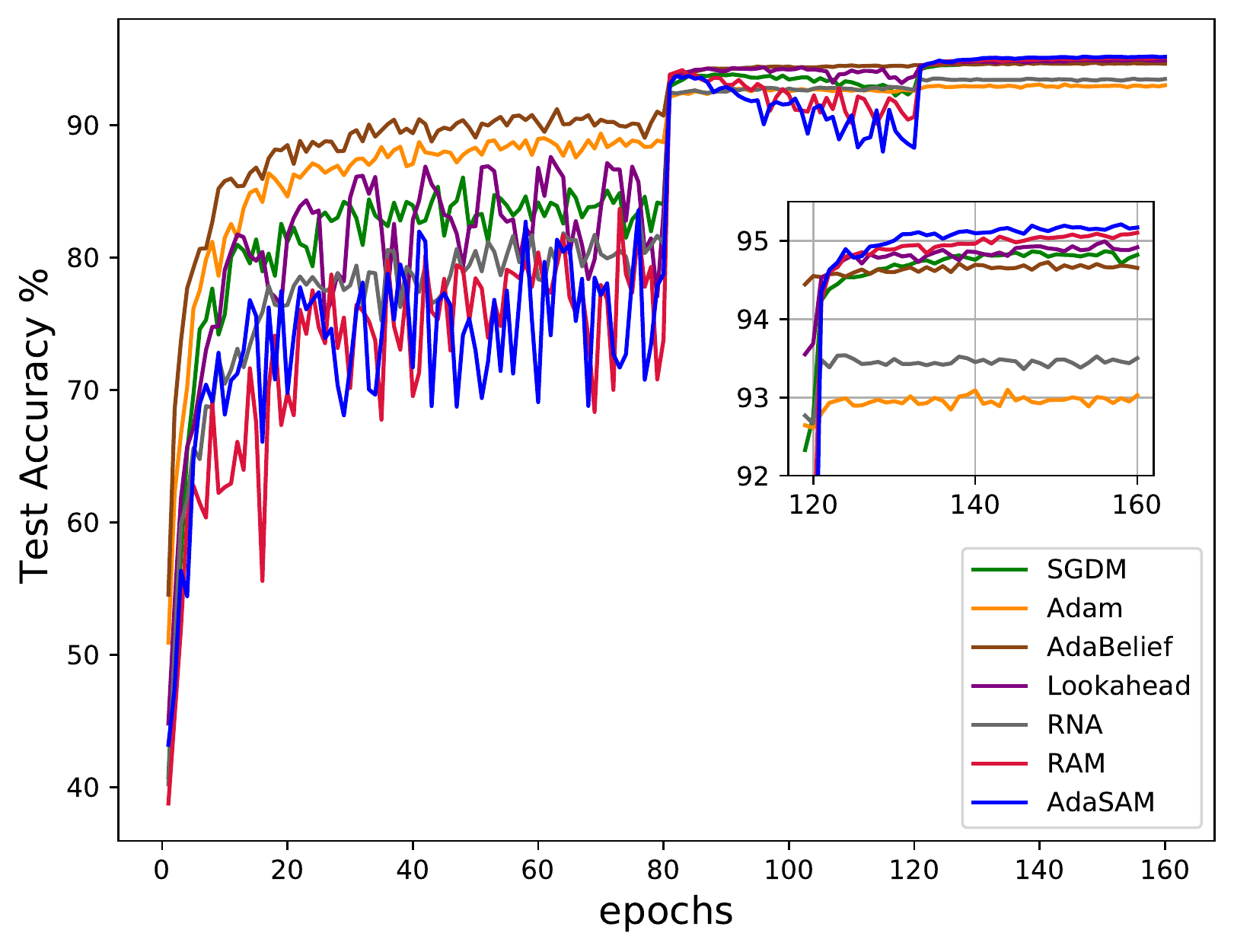}
}
\subfigure[CIFAR-10/WideResNet16-4]{
\includegraphics[width=0.45\textwidth]{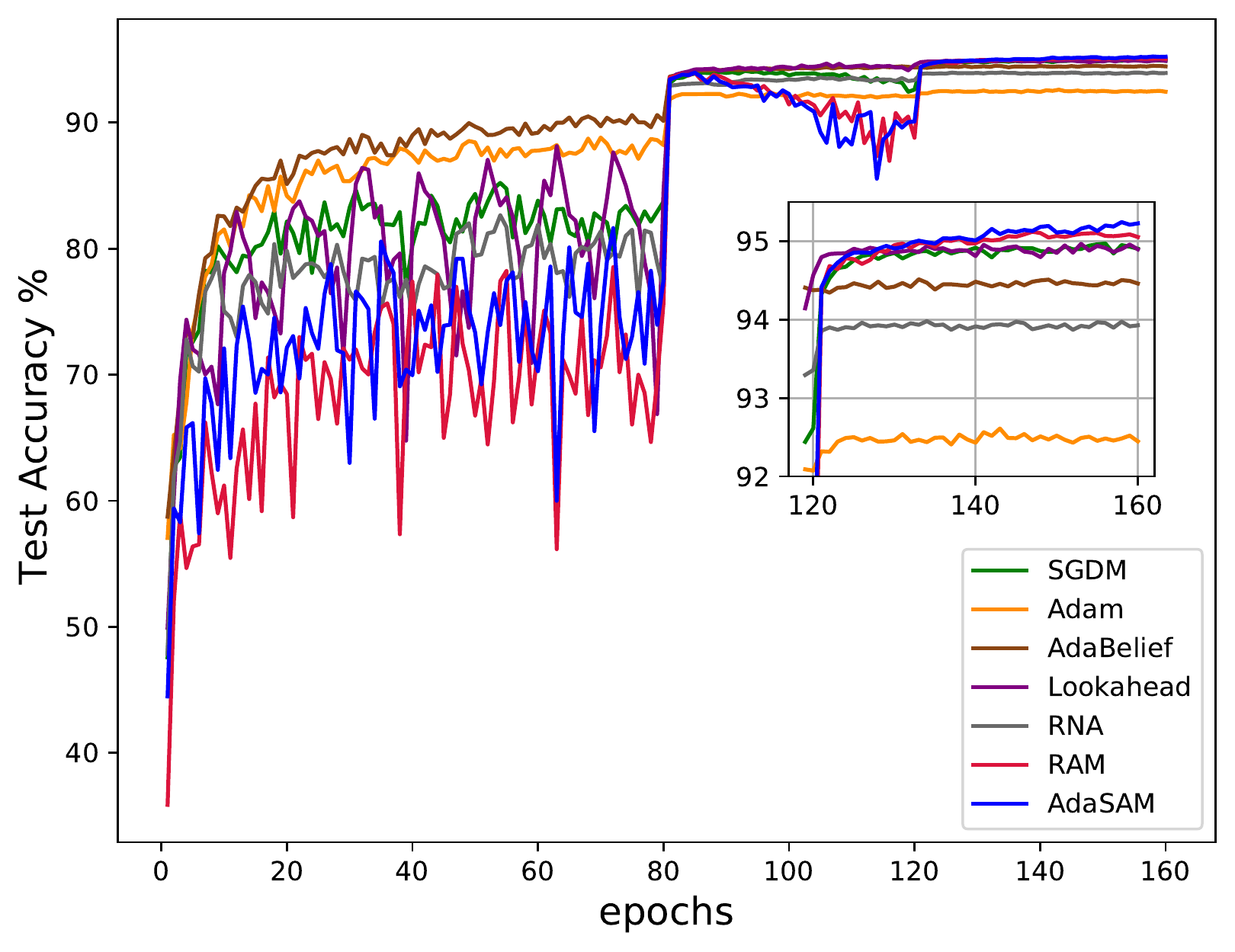}
}
\subfigure[CIFAR-100/ResNeXt50]{
\includegraphics[width=0.45\textwidth]{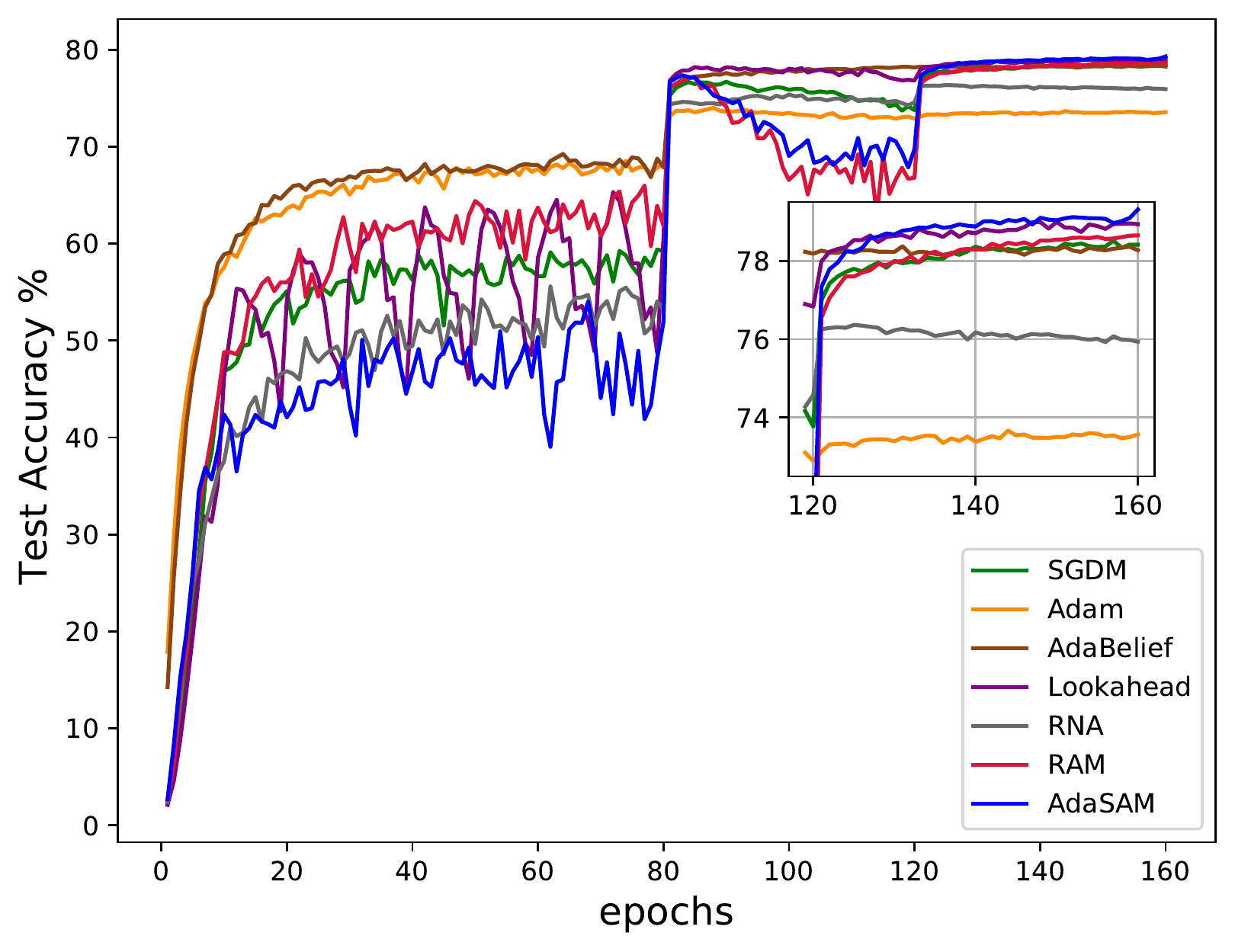}
}
\subfigure[CIFAR-100/DenseNet121]{
\includegraphics[width=0.45\textwidth]{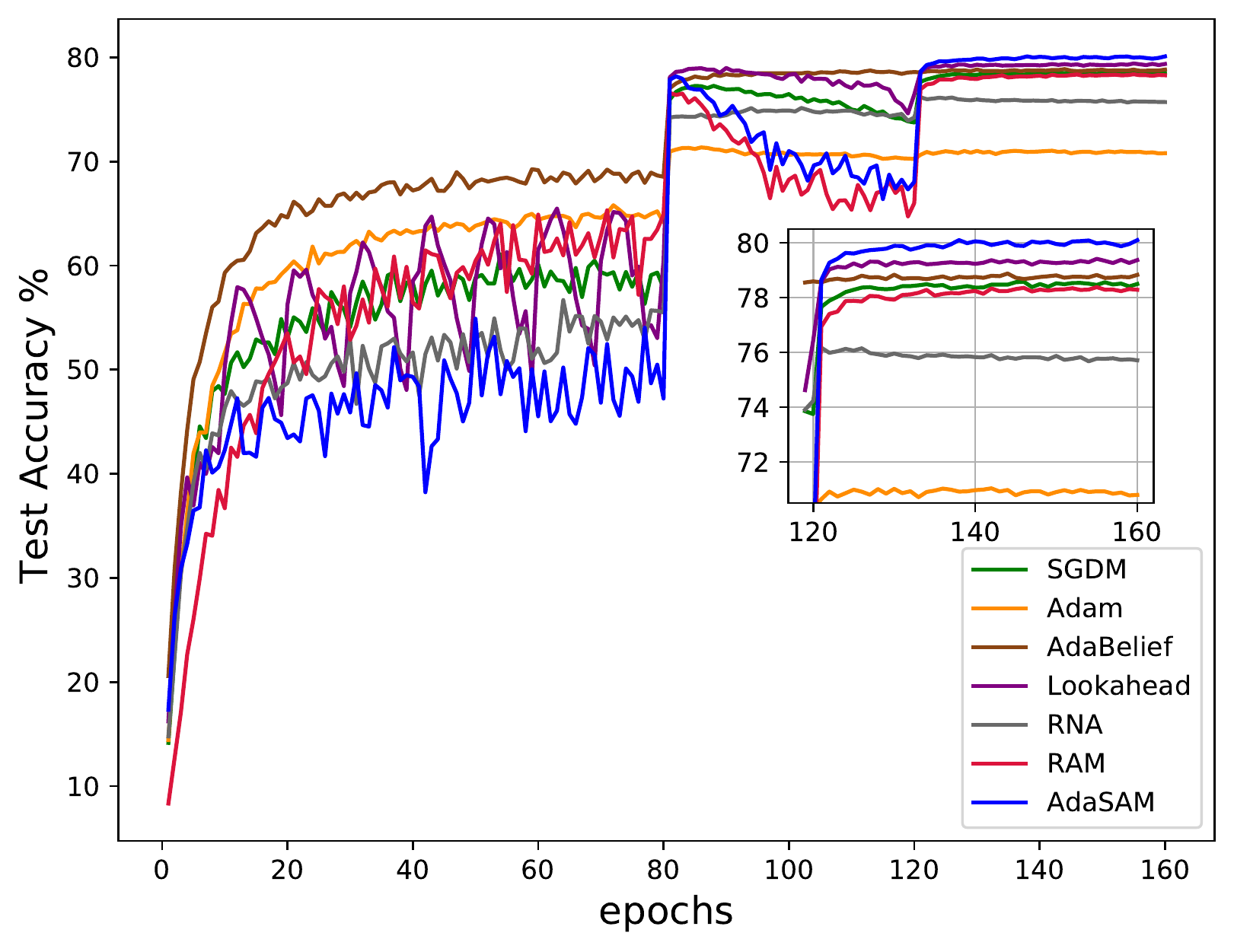}
}
\caption{Test accuracy of ResNet18/WideResNet16-4 on CIFAR-10 and ResNeXt50/DenseNet121 on CIFAR-100.}
\label{fig:appendix_CIFAR_1}
\end{figure} 
 
 Figure~\ref{fig:appendix_CIFAR_1} shows the test accuracy of different optimizers for training ResNet18/WideResNet16-4 on CIFAR-10 and ResNeXt50/DenseNet121 on CIFAR-100. The full results of final test accuracy are listed in Table~\ref{table:cifar} in the main paper. From Figure~\ref{fig:appendix_CIFAR_1}, we find that the convergence behaviour of AdaSAM  is rather erratic during the first 120 epochs. However, it always climbs up and stabilizes to the highest accuracy in the final 40 epochs. This phenomenon is due to the fact that AdaSAM uses a large weight-decay ($ 1.5\times 10^{-3} $ vs. $ 5 \times 10^{-4} $ of SGDM) and large mixing parameter ($  \beta_k=1$ ). We verify this claim by doing tests on CIFAR-10/ResNet20. In Figure~\ref{fig:appendix_CIFAR_wd}, we fixed other hyperparameters and tested the effect of different weight-decays of AdaSAM. It is clear that a smaller weight-decay can lead to faster convergence on the training dataset, but often cause poorer generalization on the test dataset. In Figure~\ref{fig:appendix_CIFAR_beta}, we only changed $ \beta_0 $ while fixing other hyperparameters (weight-decay=$ 1.5\times 10^{-3} $). We can see a smaller $ \beta_0 $ can lead to faster and more stable convergence at the beginning, but the final test accuracy is suboptimal. This phenomenon coincides with the results in \citep{li2019towards}.
 
 Since AdaSAM requires additional matrix computation in each iteration, it consumes more time if training for the same number of epochs as SGDM. Nonetheless, AdaSAM can achieve comparable test accuracy if decaying the learning rate earlier and stopping the training earlier. As indicated in Table~\ref{table:cifar}, SGDM and Lookahead can serve as the strong baselines, so we conducted tests of  comparisons between AdaSAM with SGDM/Lookahead to see the effectiveness of AdaSAM when training with less number of epochs. Results  in Figure~\ref{fig:appendix_CIFAR_stop} show that the final test accuracy of AdaSAM  for training 80 or 120 epochs can match or even surpass the test accuracy of SGDM  for training 160 epochs. Therefore, the generalization benefit from AdaSAM pays for its additional cost.  
 \begin{figure}[ht]
\centering 
\subfigure[Train Loss]{
\includegraphics[width=0.4\textwidth]{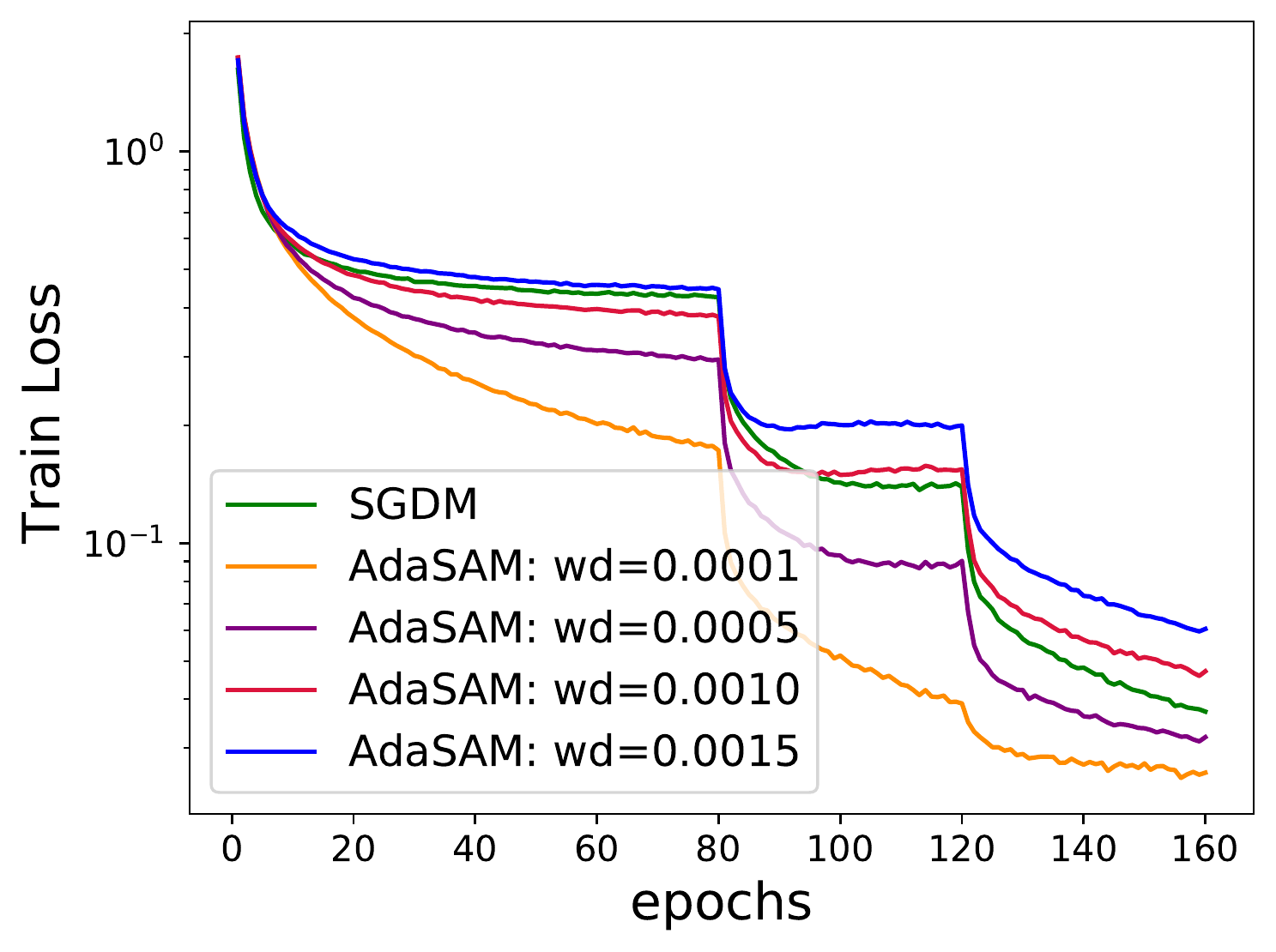}
}
\subfigure[Train Accuracy]{
\includegraphics[width=0.4\textwidth]{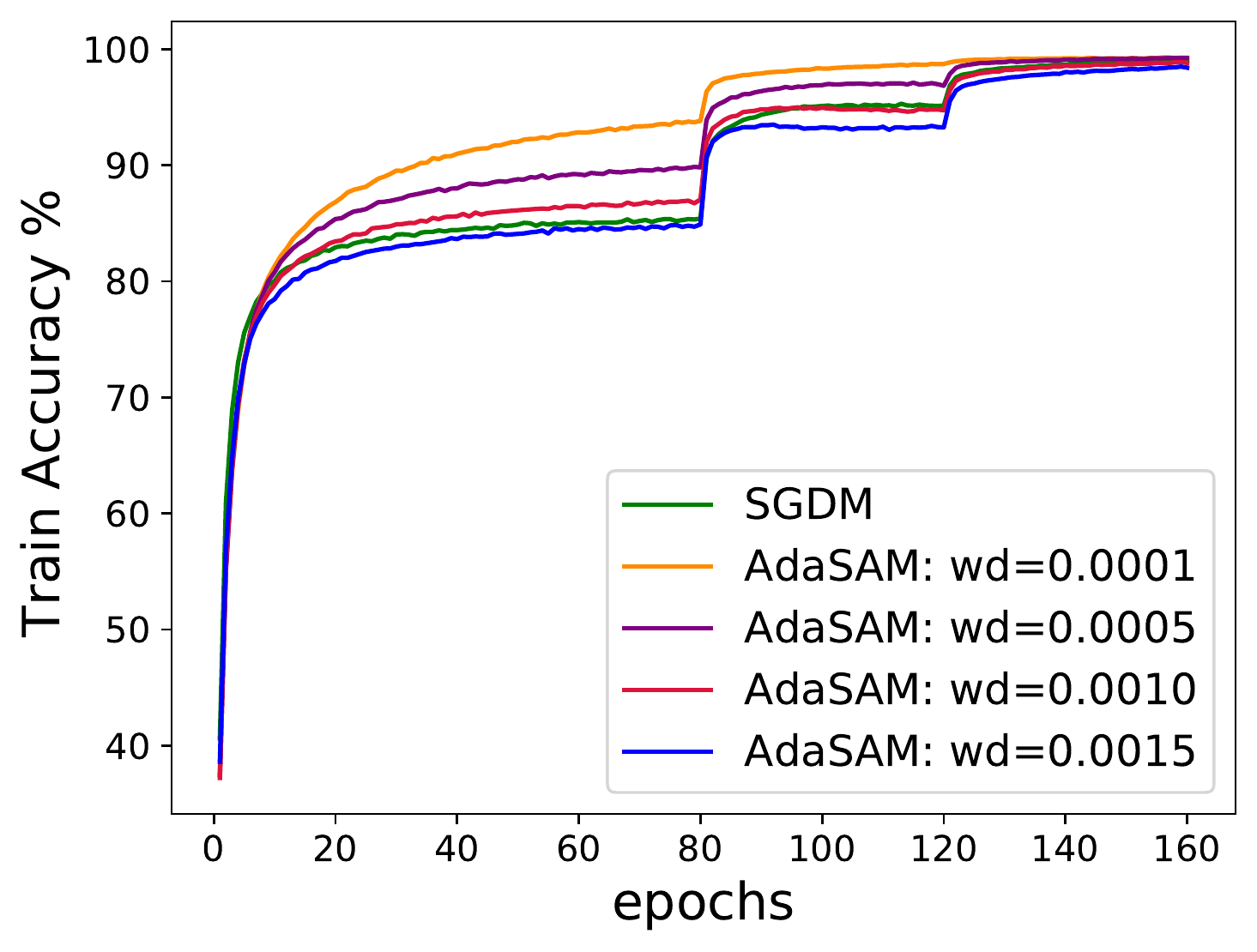}
}
\subfigure[Test Loss]{
\includegraphics[width=0.4\textwidth]{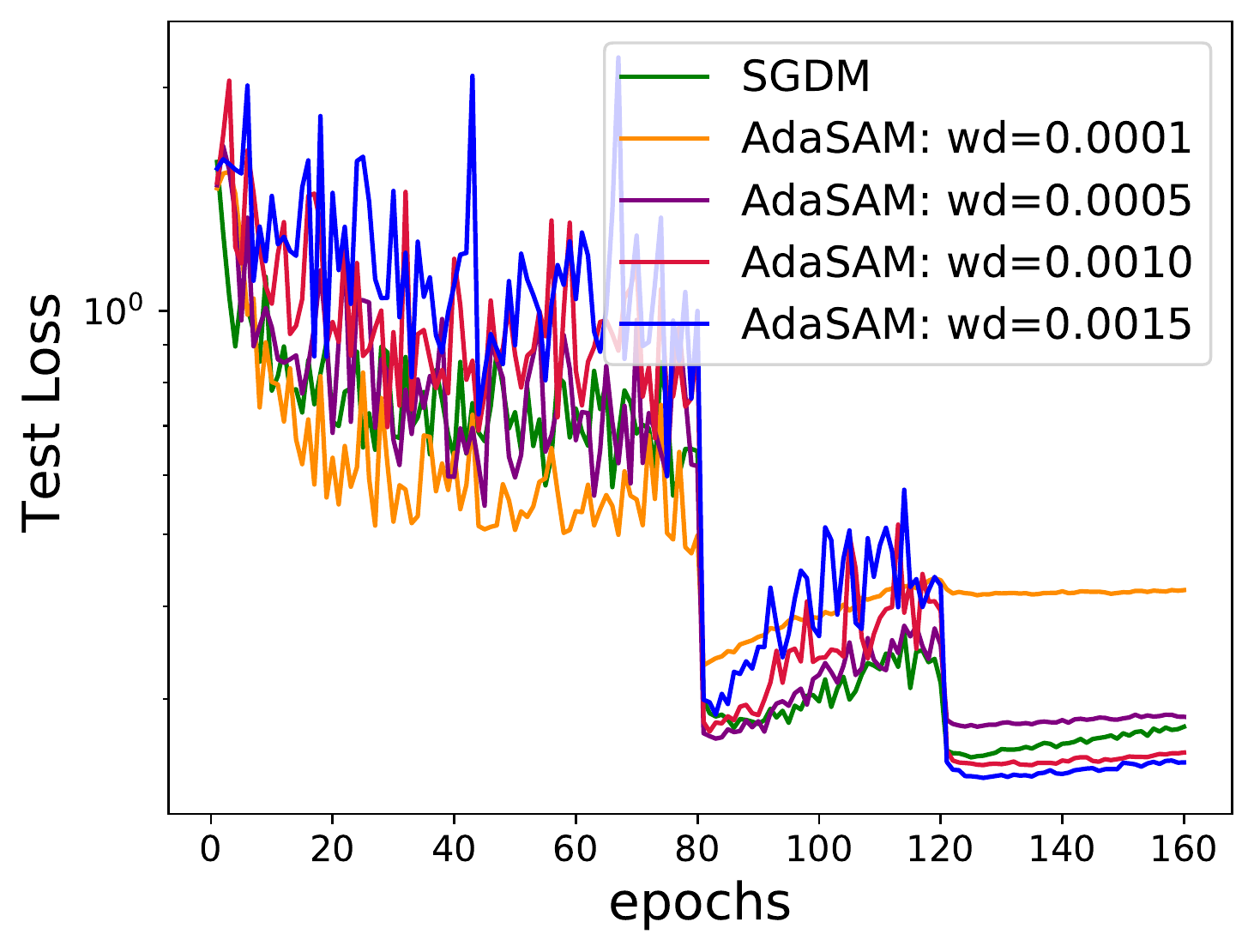}
}
\subfigure[Test Accuracy]{
\includegraphics[width=0.4\textwidth]{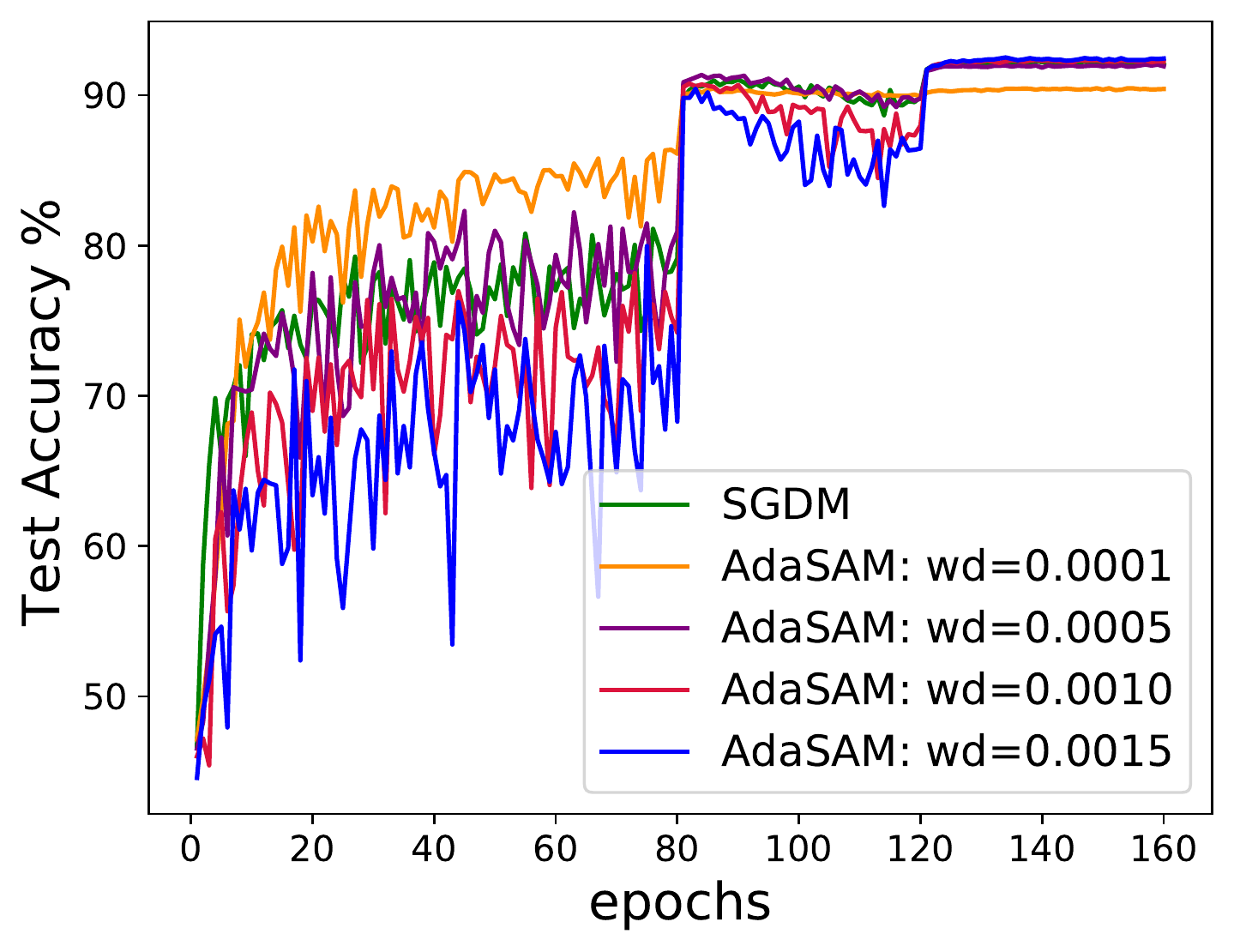}
}
\caption{Experiments on CIFAR-10/ResNet20 with different weight-decay (abbreviated as wd in the legends). }
\label{fig:appendix_CIFAR_wd}
\end{figure} 

\begin{figure}[H]
\centering 
\subfigure[Train Loss]{
\includegraphics[width=0.4\textwidth]{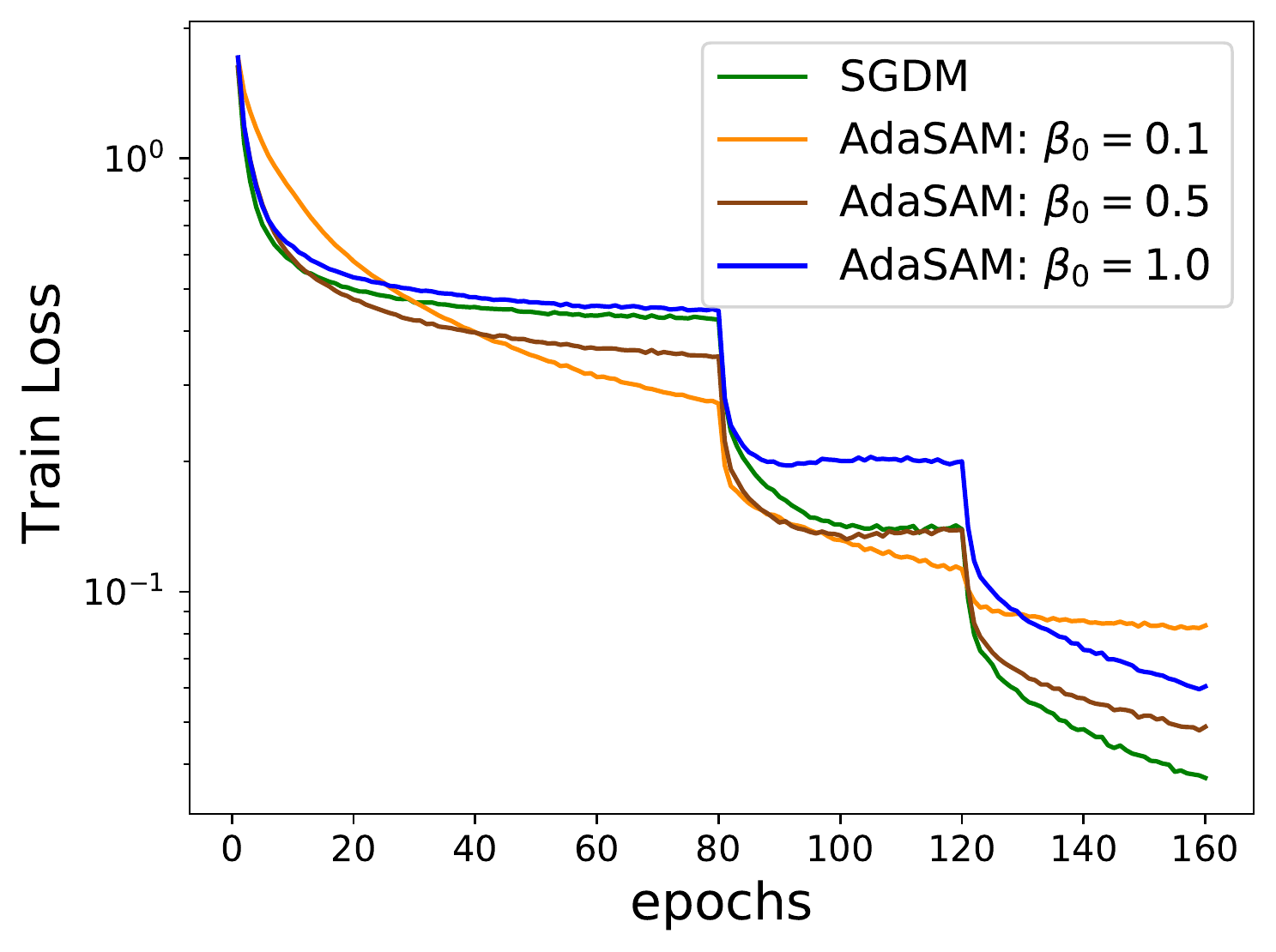}
}
\subfigure[Train Accuracy]{
\includegraphics[width=0.4\textwidth]{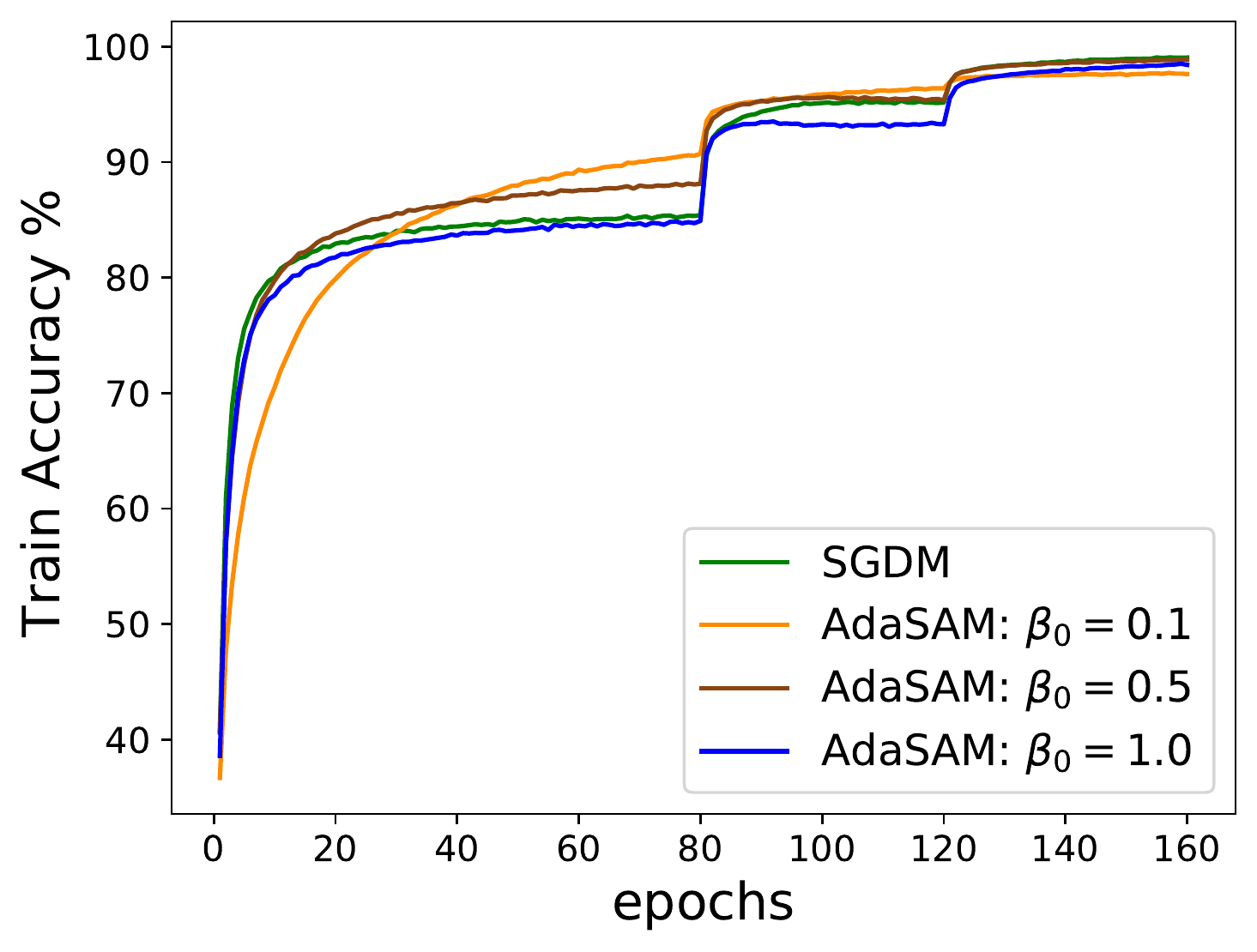}
}
\subfigure[Test Loss]{
\includegraphics[width=0.4\textwidth]{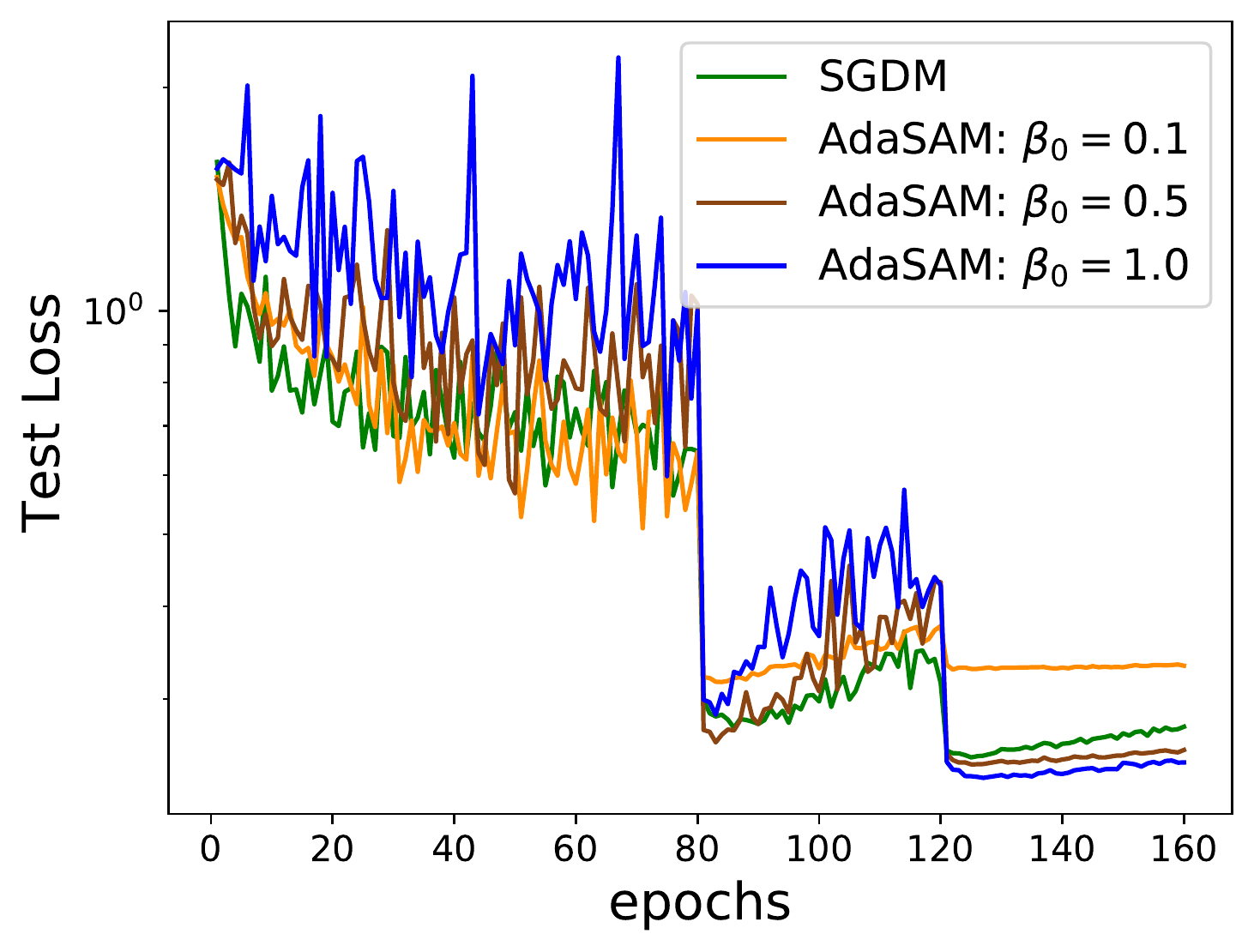}
}
\subfigure[Test Accuracy]{
\includegraphics[width=0.4\textwidth]{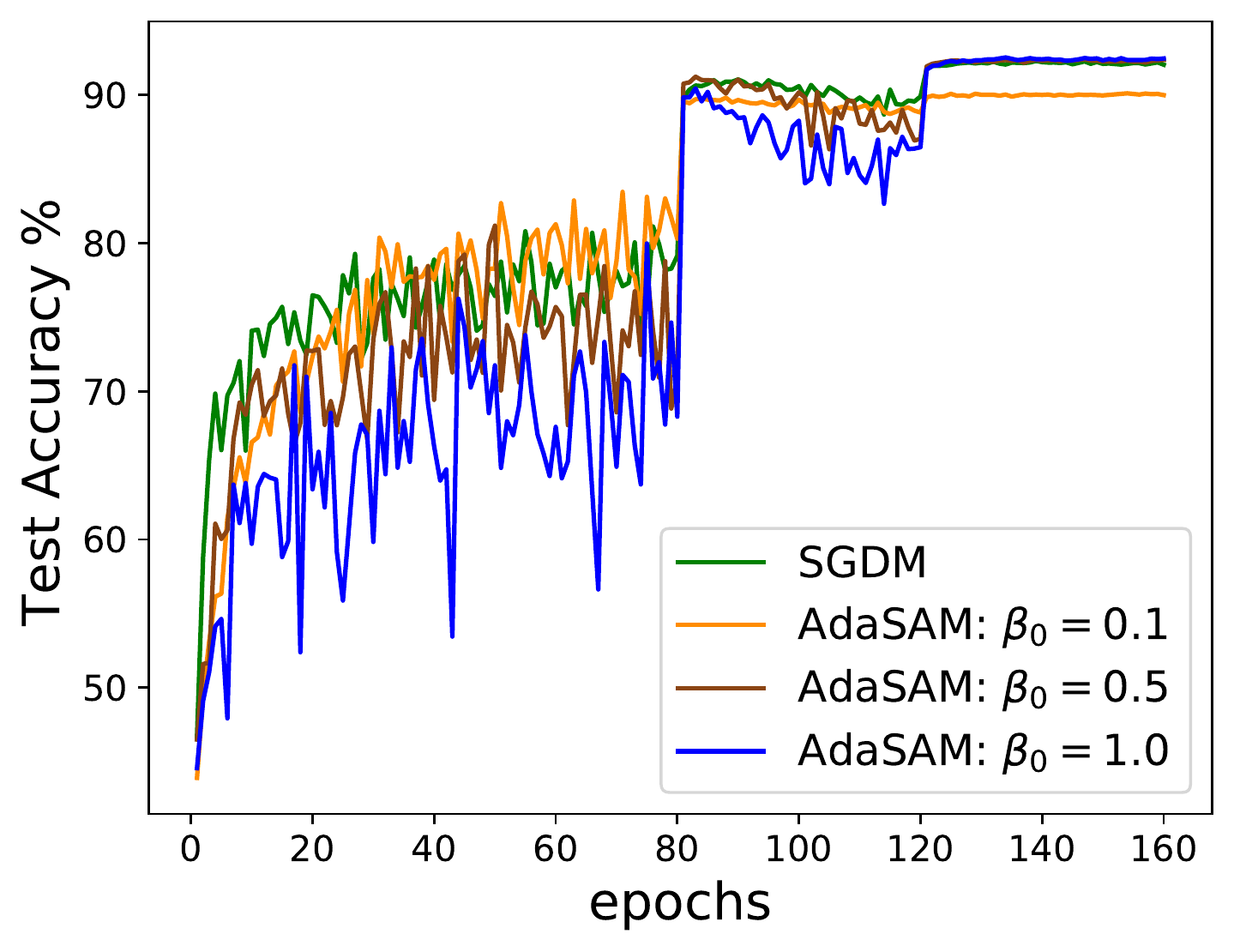}
}
\caption{Experiments on CIFAR-10/ResNet20 with different $ \beta_0 $.}
\label{fig:appendix_CIFAR_beta}
\end{figure} 
 
\begin{figure}[H]
\centering 
\subfigure[Test loss on CIFAR10/ResNet18]{
\includegraphics[width=0.42\textwidth]{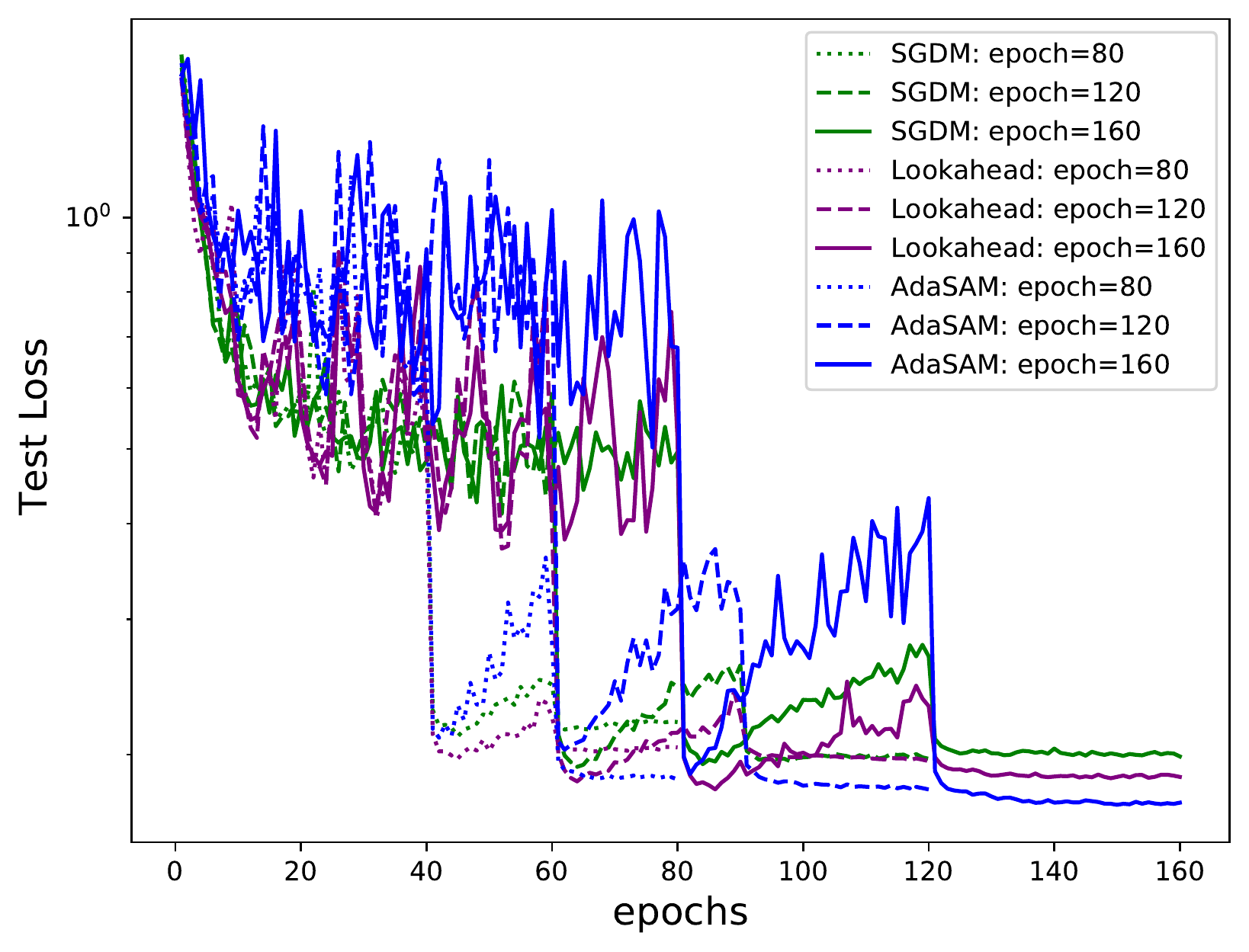}
}
\subfigure[Test accuracy on CIFAR10/ResNet18]{
\includegraphics[width=0.42\textwidth]{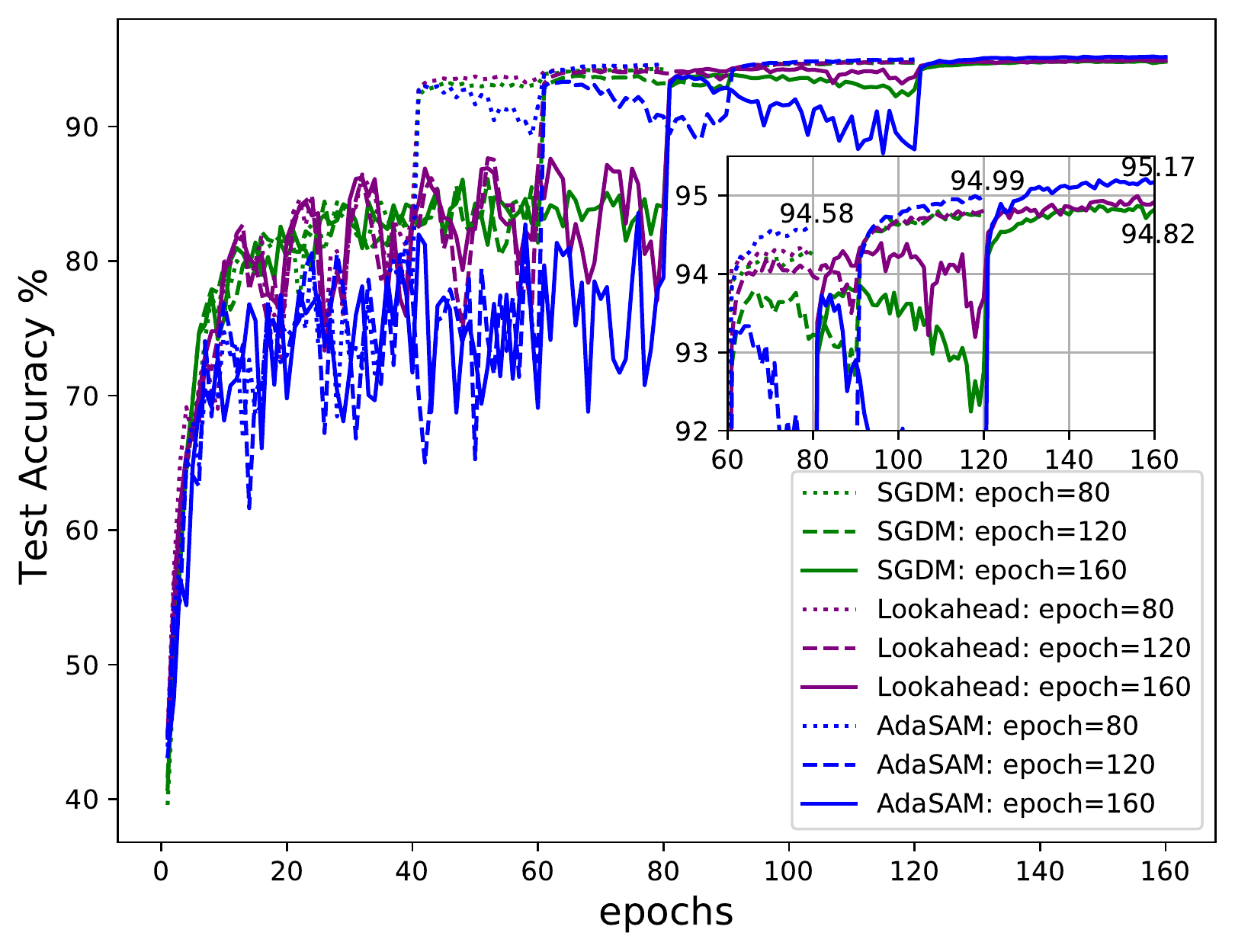}
}
\subfigure[Test loss on CIFAR10/WideResNet16-4]{
\includegraphics[width=0.42\textwidth]{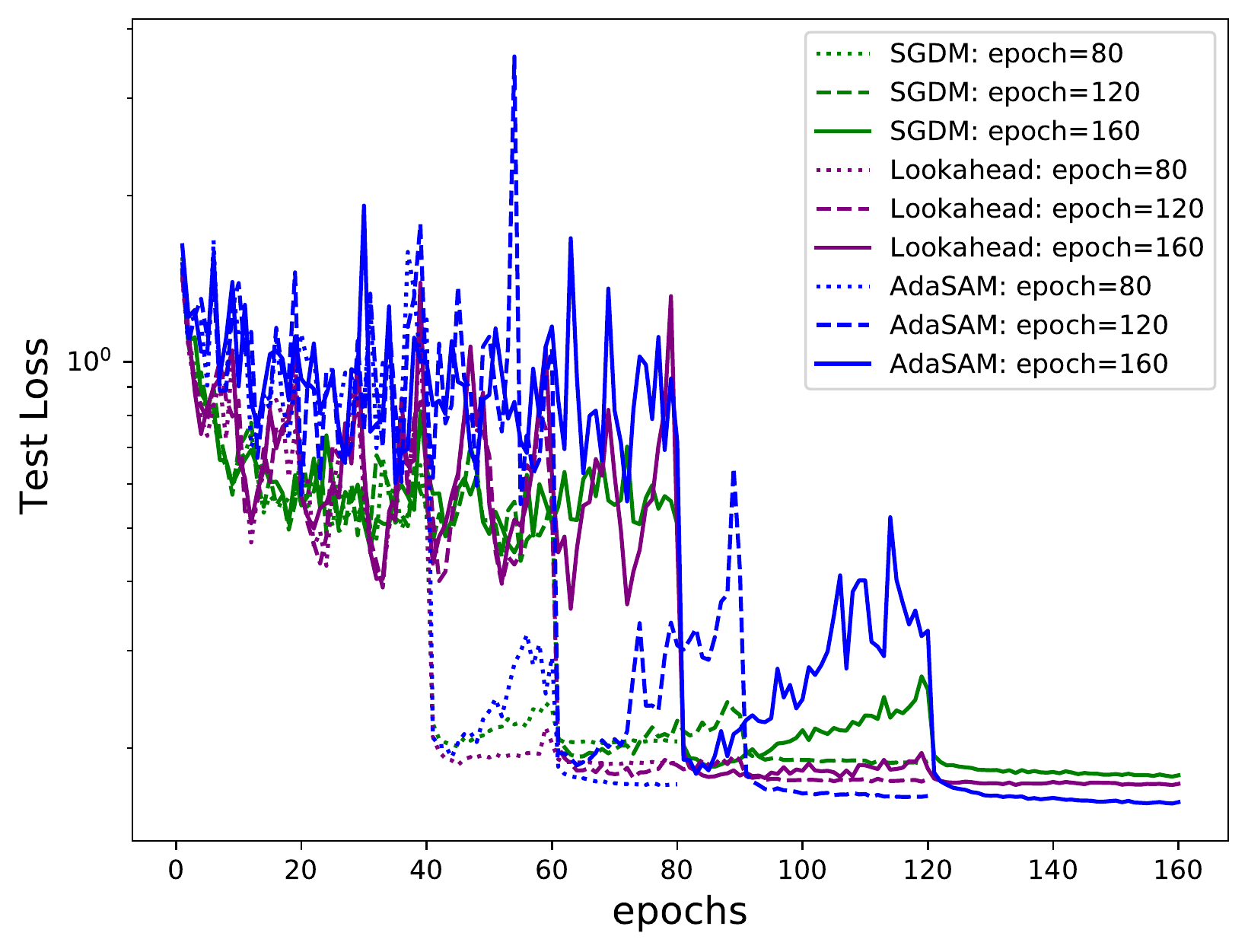}
}
\subfigure[Test accuracy on CIFAR10/WideResNet16-4]{
\includegraphics[width=0.42\textwidth]{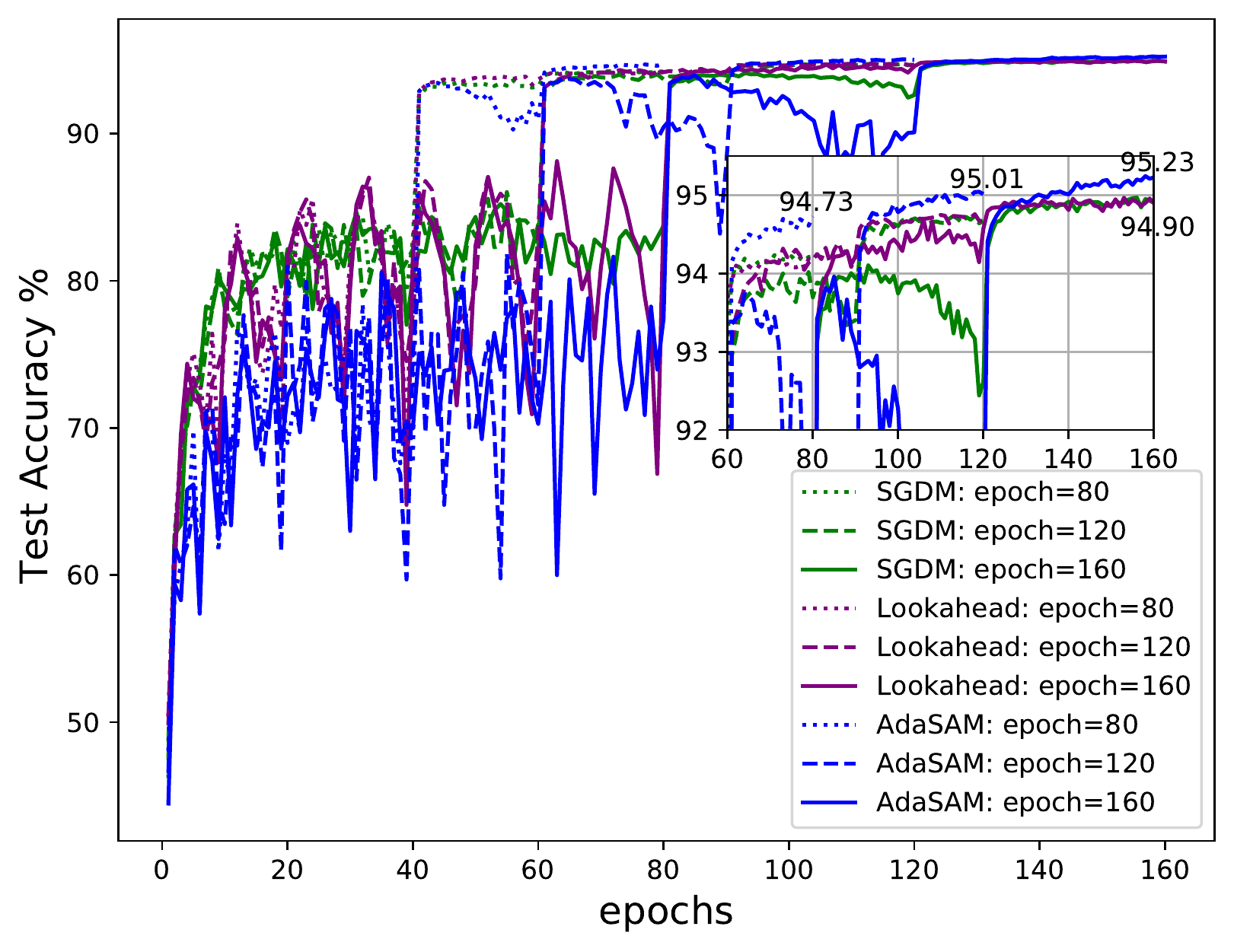}
}
\subfigure[Test loss on CIFAR100/ResNeXt50]{
\includegraphics[width=0.42\textwidth]{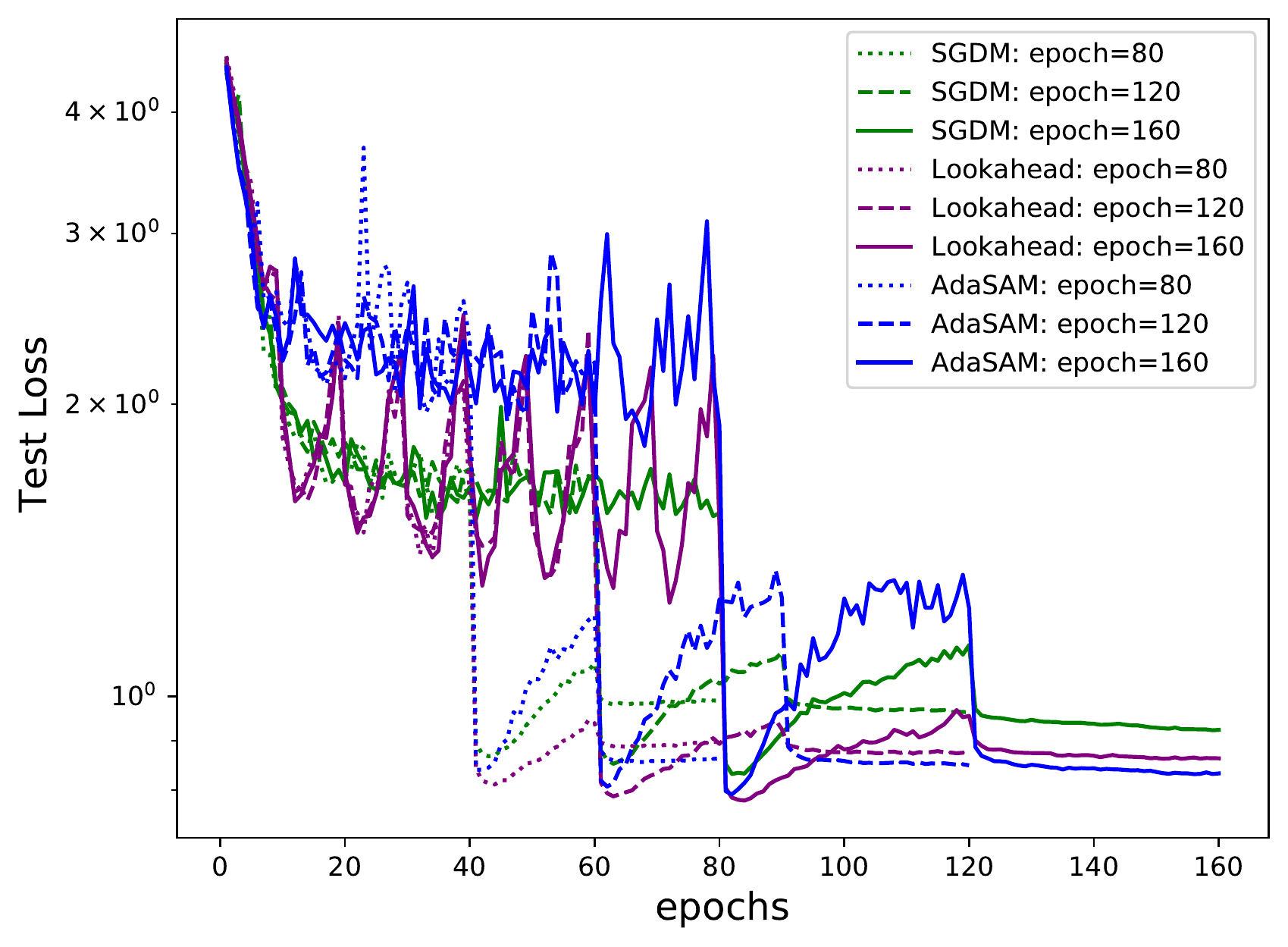}
}
\subfigure[Test accuracy on CIFAR100/ResNeXt50]{
\includegraphics[width=0.42\textwidth]{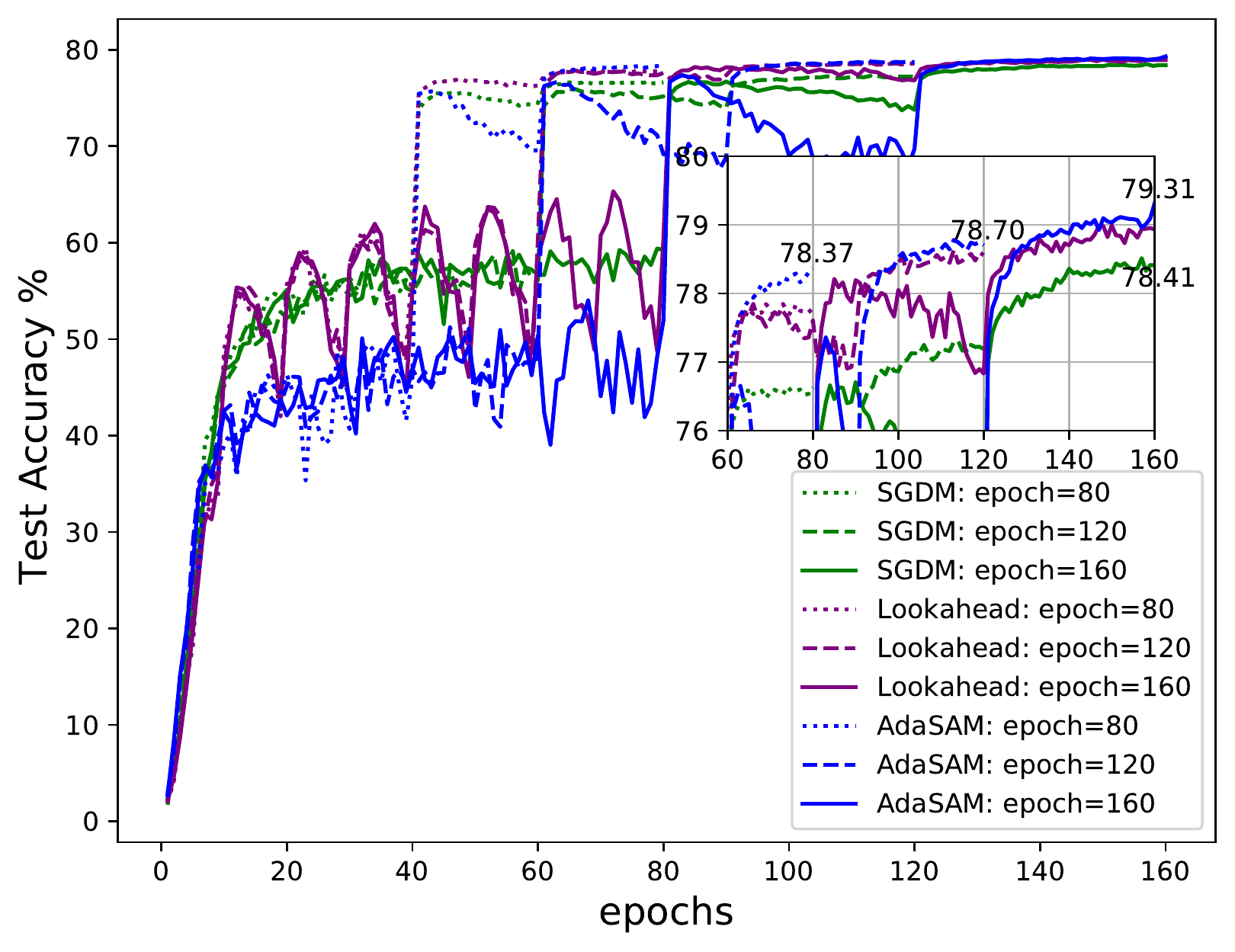}
}
\subfigure[Test loss on CIFAR100/DenseNet121]{
\includegraphics[width=0.42\textwidth]{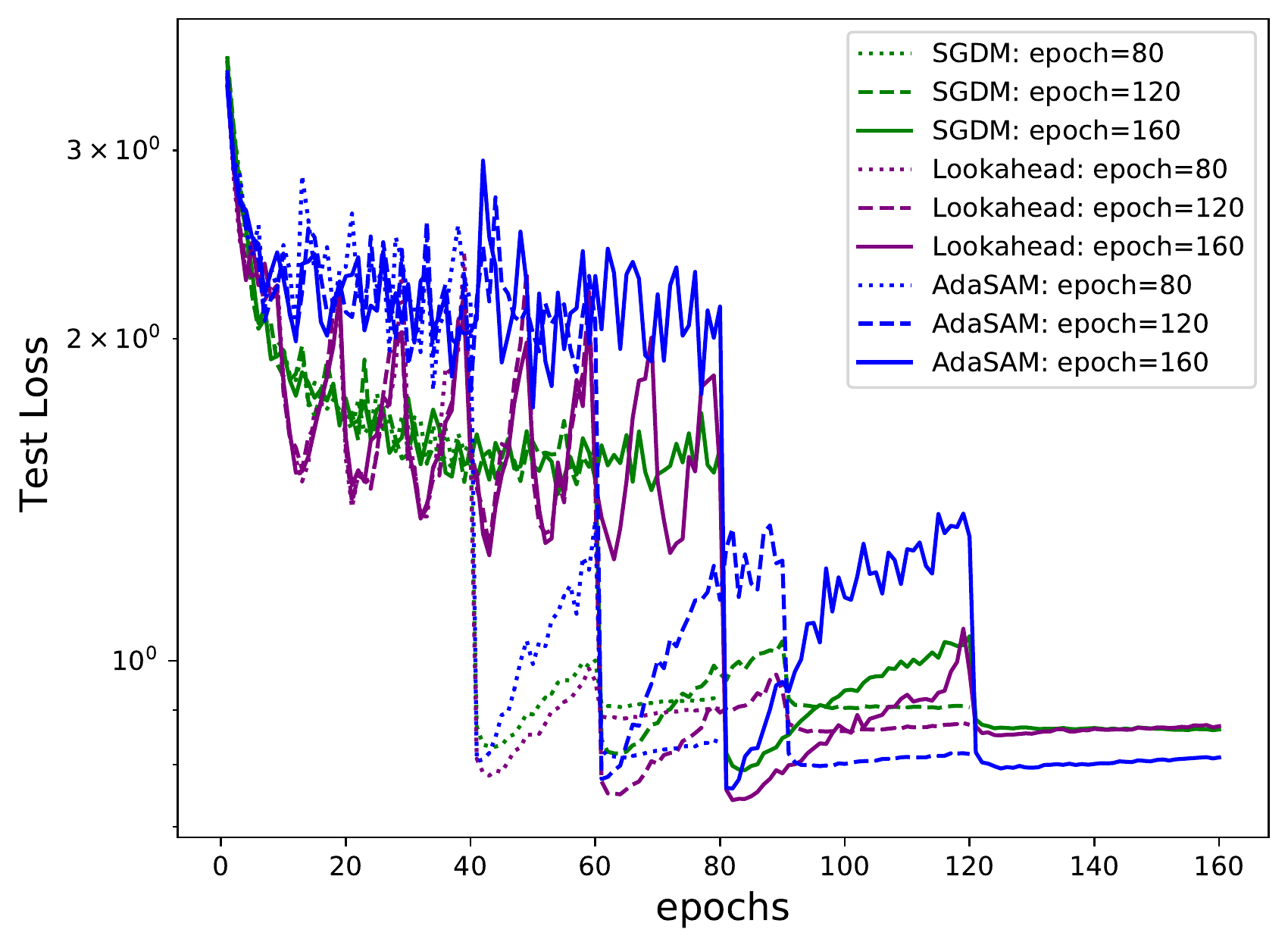}
}
\subfigure[Test accuracy on CIFAR100/DenseNet121]{
\includegraphics[width=0.42\textwidth]{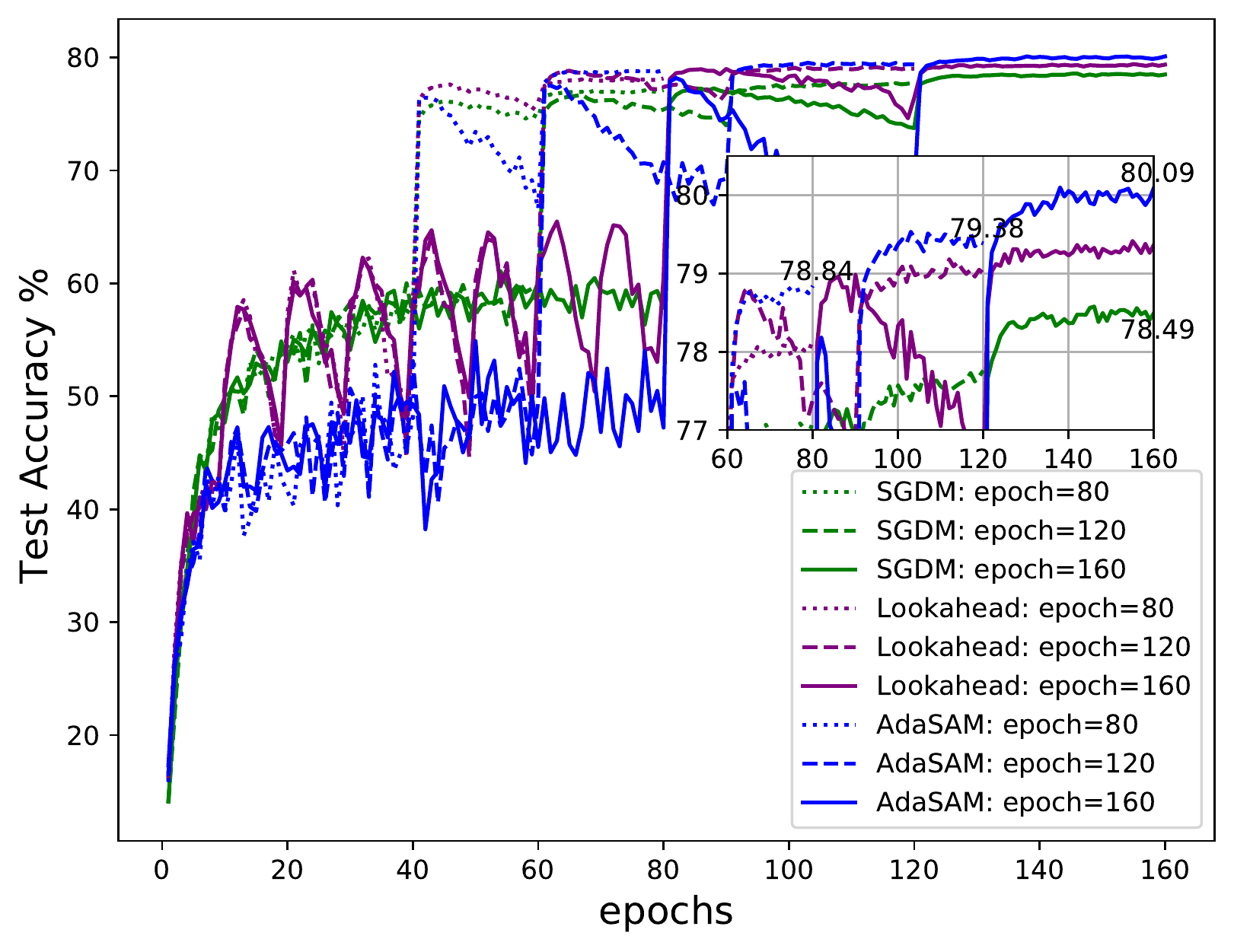}
}
\caption{Training deep neural networks for 80,120,160 epochs. We report the final test accuracy of AdaSAM for training 80,120,160 epochs at nearby point in the nested figure. The final test accuracy of SGDM for training 160 epochs is also reported for comparison.}
\label{fig:appendix_CIFAR_stop}
\end{figure} 
 
 We introduce alternating iteration technique to save the computational cost of AdaSAM, i.e. iterating with an inner optimizer for most of the time while applying AdaSAM periodically. We find AdaSAM can improve the generalization ability of the inner optimizer. Lookahead also has an inner optimizer. However, as shown in Figure~\ref{fig:appendix_CIFAR_alter}, Lookahead cannot improve the generalization ability of Adam. On the contrary, AdaSAM can enhance Adam to match the final test accuracy of SGDM.
 
 \begin{figure}[ht]
\centering 
\subfigure[Test Loss]{
\includegraphics[width=0.45\textwidth]{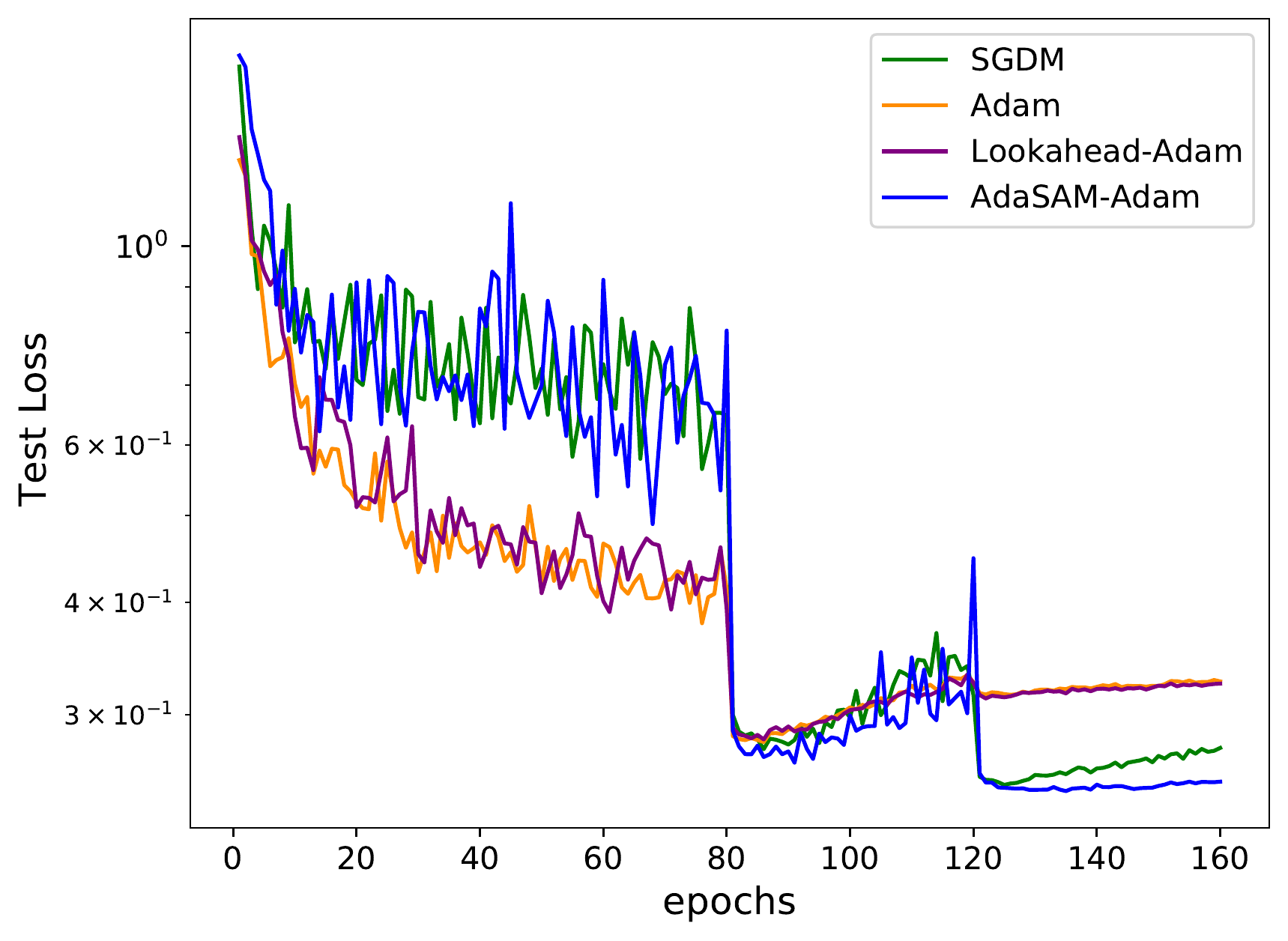}
}
\subfigure[Test Accuracy]{
\includegraphics[width=0.43\textwidth]{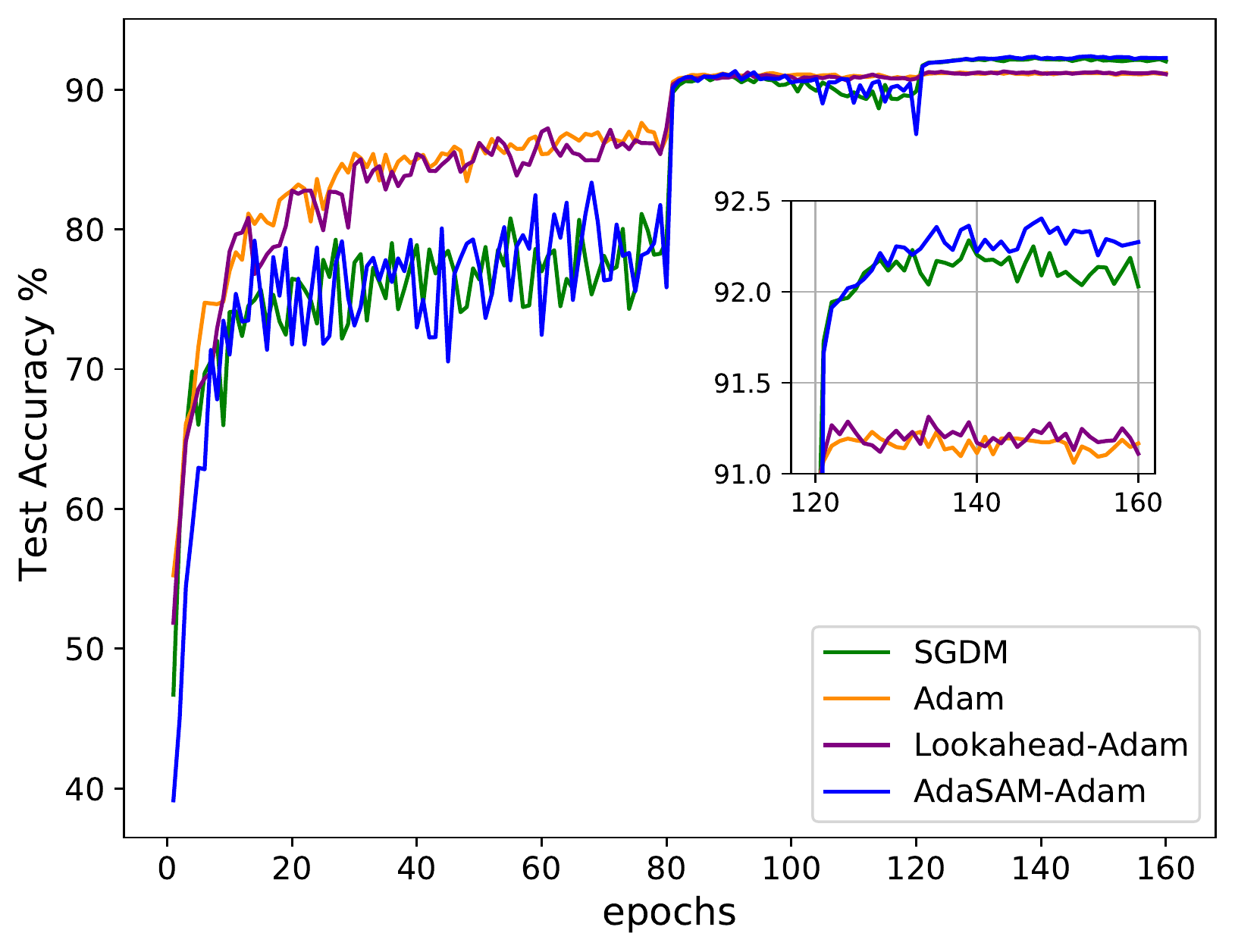}
}
\caption{Testing loss and accuracy on CIFAR-10/ResNet20.}
\label{fig:appendix_CIFAR_alter}
\end{figure} 
 
\subsection{Experiments on Penn TreeBank}
 Our experimental setting on training LSTM models on Penn TreeBank dataset is based on the official implementation of AdaBelief \citep{zhuang2020adabelief}. Results in Table~\ref{table:ptb} were measured across 3 repeated runs with independent initialization. The parameter setting of the LSTM models  are the same as that of AdaBelief.  
 The baseline optimizers are SGDM, Adam, AdaBelief and Lookahead. We tuned hyperparameters on the validation dataset for each optimizer.
 
 For SGDM, we tuned the learning rate (abbr. lr) o via grid-search  in $ \lbrace 1, 10, 30, 100 \rbrace $ and found that lr=10 performs best on 2,3-layer LSTM. For 1-layer LSTM, we set lr=30 and momentum=0 as that in AdaBelief because we found such setting is better.
 
 For Adam, we tuned the learning rate via grid-search in $ \lbrace 1\times 10^{-3}, 2\times 10^{-3}, 5\times 10^{-3}, 8\times 10^{-3}, 1\times 10^{-2}, 2\times 10^{-2} \rbrace $ and found $ 5\times 10^{-3} $ performs best.
 
 For AdaBelief, we tuned the learning rate and found $ 5\times 10^{-3} $ is better than $ 1\times 10^{-2} $ used in \citep{zhuang2020adabelief}.
 
 For Lookahead, as suggested by the authors in \citep{zhang2019lookahead}, Adam with best hyperparameter setting is set as the inner optimizer, then the interpolation parameter $ \alpha=0.5 $ and steps = 5.
 
 The batch size is 20. We trained for 200 epochs and decayed the learning rate by 0.1 at the 100th and 150th epoch. For pAdaSAM, since the learning rate decay has been applied to the inner optimizer, we did not apply decay to $ \alpha_k $ and $ \beta_k $, i.e. $ \alpha_k=\beta_k=1 $ is kept unchanged during the training.
 
 \begin{table*}[ht]
  \caption{Test perplexity on Penn TreeBank for 1,2,3-layer LSTM. Lower is better. AdaSAM* denotes AdaSAM with $ \beta_0=100 $.} \label{table:appendix_ptb}
  \centering
  \begin{tabular}{lccc} 
    \toprule   	
    Method     & 1-Layer     & 2-Layer & 3-Layer \\
    \midrule
    SGDM 		 &  85.21$\pm$.36   & 67.12$\pm$.14     & 61.56$\pm$.14 \\
    Adam     	 &  80.88$\pm$.15   & 64.54$\pm$.18     & 60.34$\pm$.22 \\
    AdaBelief     & 82.41$\pm$.46   & 65.07$\pm$.02    & 60.64$\pm$.14 \\
    Lookahead     & 82.01$\pm$.07   & 66.43$\pm$.33   & 61.80$\pm$.10 \\
    AdaSAM		 & 155.38$\pm$.35  & 159.07$\pm$1.58  & 163.60$\pm$.81 \\
    AdaSAM*	     & 91.23$\pm$.69   & 68.53$\pm$.13    & 63.74$\pm$.09 \\
    pAdaSAM      & \textbf{79.34$\pm$.09}    & \textbf{63.18$\pm$.22}   & \textbf{59.47$\pm$.08} \\
    \bottomrule
  \end{tabular}
\end{table*} 
 
 Our method is pAdaSAM, which set $ optim =$ Adam in Algorithm~\ref{alg:padasam}, where Adam is the tuned baseline. AdaSAM with the default setting is not suitable for this task. To give a full view of the vanilla AdaSAM, we also report the results of AdaSAM with default setting and the tuned AdaSAM ($ \beta_0 = 100 $) in Table~\ref{table:appendix_ptb}. 
 
 \begin{figure}[ht]
\centering 
\subfigure[1-Layer LSTM]{
\includegraphics[width=0.31\textwidth]{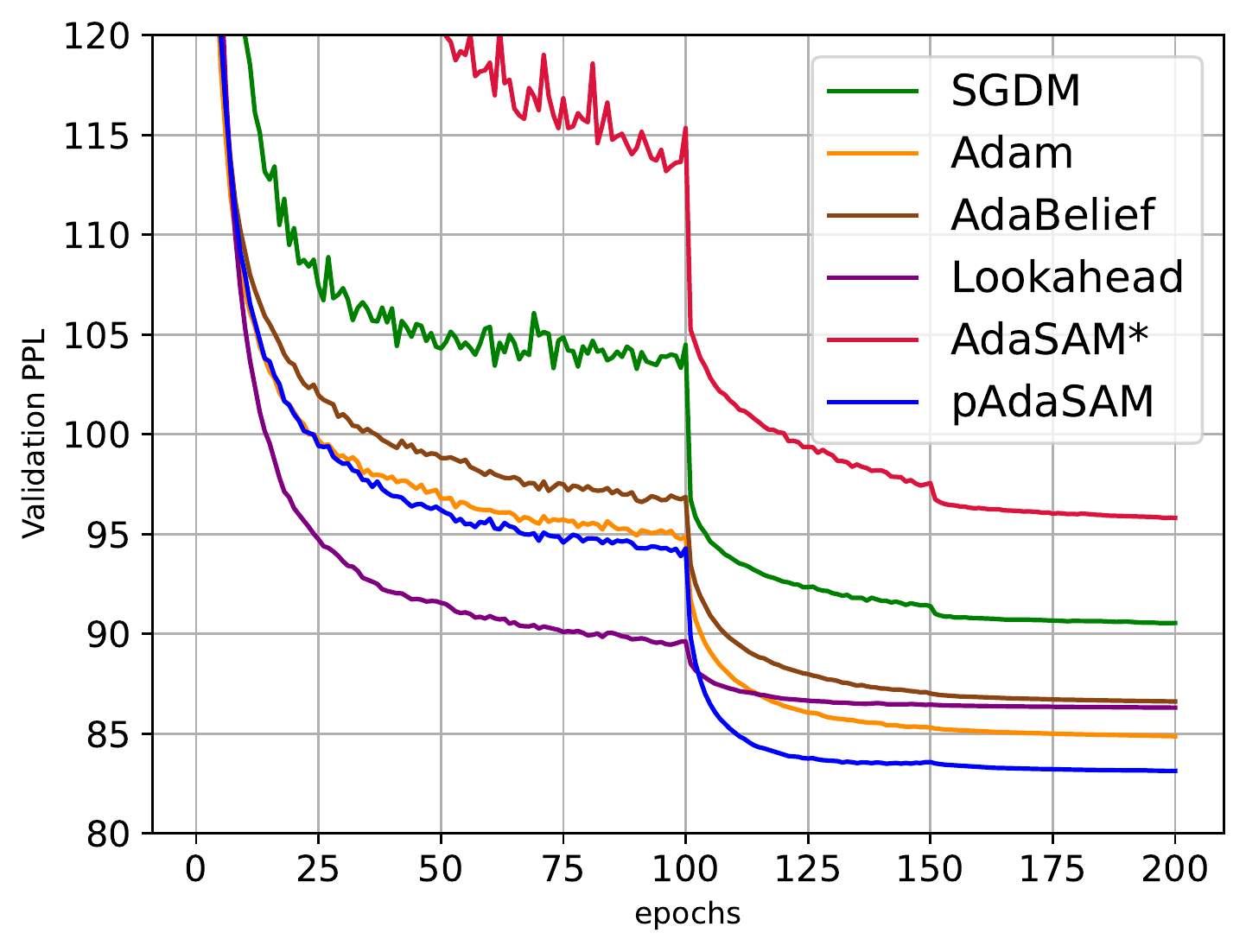}
}
\subfigure[2-Layer LSTM]{
\includegraphics[width=0.31\textwidth]{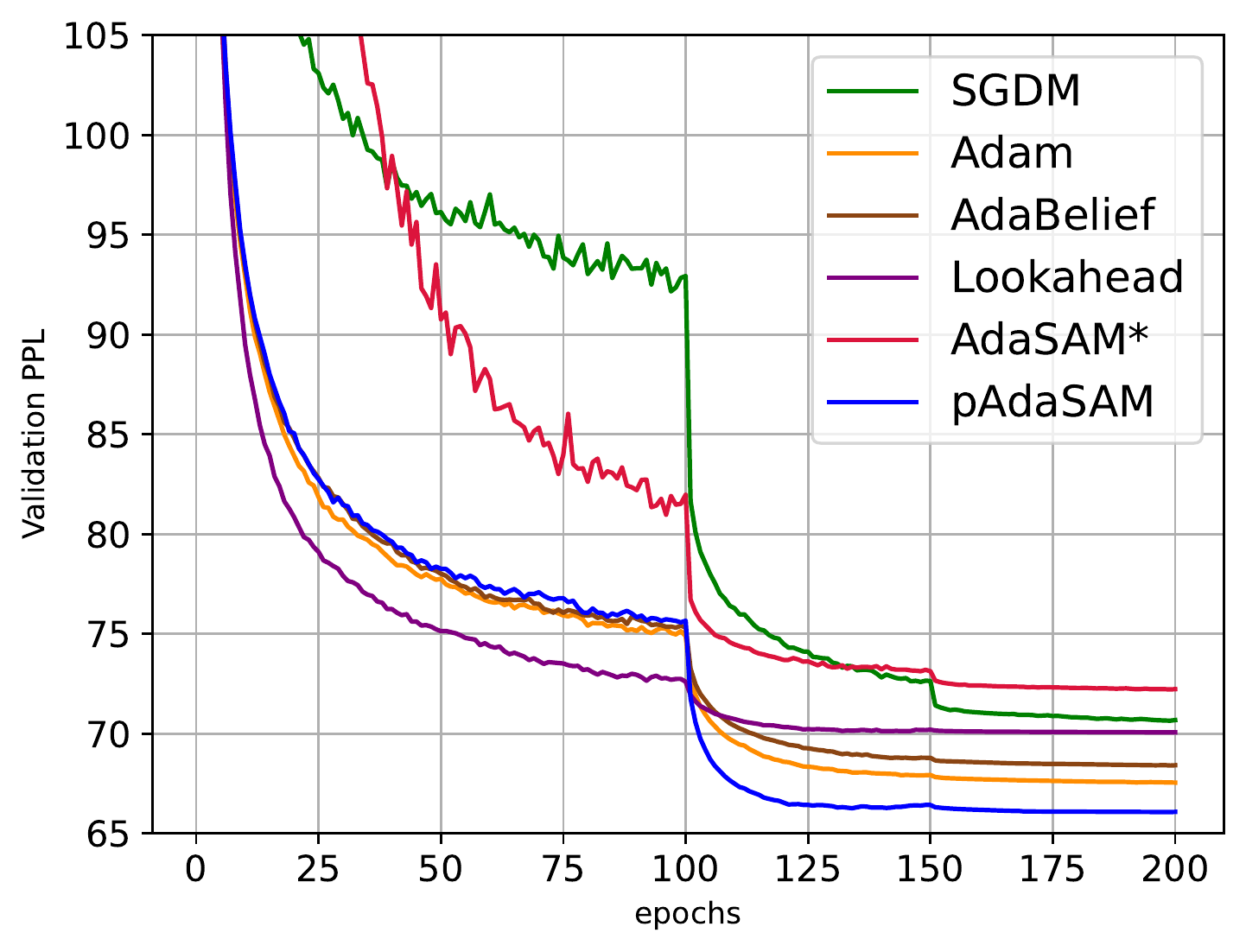}
}
\subfigure[3-Layer LSTM]{
\includegraphics[width=0.31\textwidth]{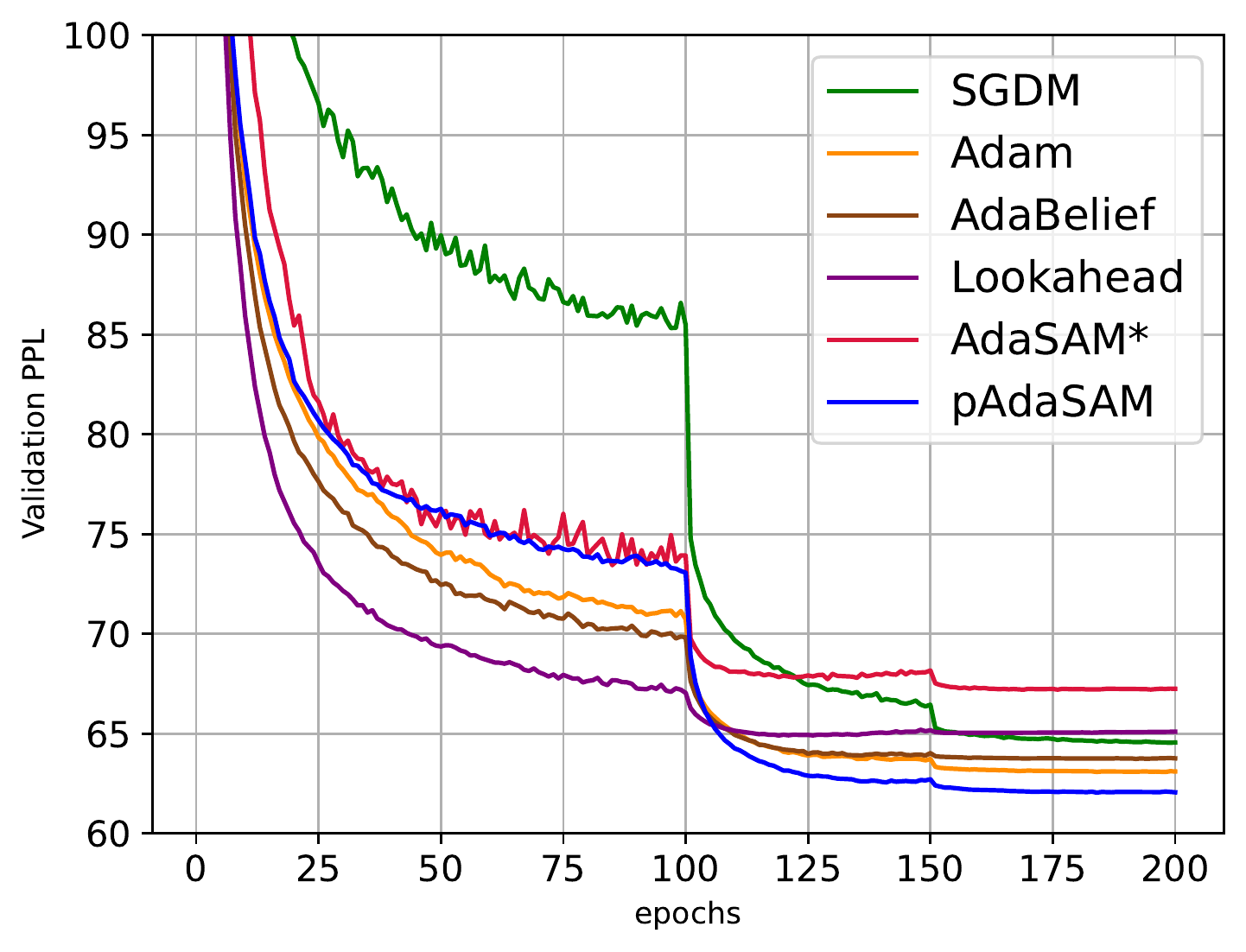}
}
\caption{Experiments on Penn TreeBank. Validation perplexity of training 1,2,3-Layer LSTM.}
\label{fig:appendix_ptb}
\end{figure}
 
 We think  the scaling of the model's parameters is important for  this problem. Since the batch size is very small, the gradient estimation is too noisy to capture the curvature information of the objective function. Hence the quadratic approximation in AdaSAM is rather inaccurate and further scaling by $ \beta_k$ is required. For pAdaSAM,  the scaling of the stochastic gradient is done by the inner optimizer Adam, so $ \beta_k  $ and $ \alpha_k$ can be set as default.

\section{Additional experiments}
 This section is about the techniques and hyperparameters used in our method. The computational cost is also reported in the end.
 
\subsection{Check of positive definiteness }
 In Algorithm~\ref{alg:adasam}, we simplify the check of positive definiteness of $ H_k $ described in Section~\ref{sec:sam} by $ (\Delta x_k)^{\mathrm{T}} r_k > 0 $.  To see the effect of such simplification,  we first give the pseudocode  in Algorithm~\ref{alg:sam_pos} that faithfully follows the procedure of checking positive definiteness in Section~\ref{sec:sam}.  We designate it as AdaSAM0. Note that the check of \eqref{ineq:check_alpha_k} is reflected in Line~16 in Algorithm~\ref{alg:sam_pos}.
 
 We compare AdaSAM (Algorithm~\ref{alg:adasam} with AdaSAM0 via experiments on MNIST and CIFAR-10/ResNet20. The experimental setting is the same as that in Section~\ref{sec:exp_details}. 
 
 The results on MNIST are shown in Figure~\ref{fig:appendix_test_pos1}. We trained 100 epochs with full-batch (batchsize=12K) and mini-batch (batchsize=3K). The evolution of $ \alpha_k $ in Figure~\ref{fig:appendix_test_pos1}(c) implies that the Hessian approximation $ H_k$ in AdaSAM ( $ \alpha_k=\beta_k=1 $) is hardly positive definite. So AdaSAM0 reduces $ \alpha_k $ to ensure Condition~\eqref{ineq:pos_Hk} holds. However, in full-batch training, there exists no noise in gradient evaluations, so $ r_k^{\mathrm{T}}H_kr_k > 0 $ is suffice to ensure $ H_kr_k $ is a descent direction. In other words, Condition~\eqref{ineq:pos_Hk} may be too stringent to prevent the acceleration effect of AdaSAM. We also find that even running without checking of positive definiteness, the result is comparable. The result of AdaSAM0 with $ \mu=0.9 $ suggests that the optimization is trapped in a local minima. In mini-batch training, Condition~\ref{ineq:pos_Hk} is violated more frequently if using $ \alpha_k = 1 $, which can be inferred from the evolution of $ \alpha_k $ of AdaSAM.  Switching to $ optim $ when $ (\Delta x_k)^{\mathrm{T}}r_k \leq 0 $ is better than ignoring the violation of the positive definiteness. For AdaSAM0, we find using small $ \mu $ is proper.
 
 The results on CIFAR-10/ResNet20 are shown in Figure~\ref{fig:appendix_test_pos2}. We test AdaSAM0 with different selections of $ \mu $. We see smaller $ \mu $ is better. We also find that the value of $ \lambda_k$ is restrictive in training ResNet20. For $ \mu = 0 $, the Condition~\eqref{ineq:check_alpha_k} is seldom violated during training, which means $ \alpha_k$ is not need to be reduced to a smaller value to make $H_k$ positive definite. Therefore, AdaSAM0 with $ \mu=0 $ has nearly the same behaviour as AdaSAM. 
 
 With these tests, we confirm that  using the sanity check of positive definiteness of $H_k$ in Algorithm~\ref{alg:adasam} does not lead to any deterioration.

 \begin{algorithm}[ht]
\caption{AdaSAM0. AdaSAM with the check of \eqref{ineq:pos_Hk}}
\label{alg:sam_pos}
\textbf{Input}: $ x_0\in\mathbb{R}^d, m=10, \alpha_k=1, \beta_0 = 0.1, \beta_k=1  (k\geq 1), \gamma=0.9, \mu=10^{-8}, \epsilon = 10^{-8}, max\_iter>0 $.\\
\textbf{Output}: $ x\in\mathbb{R}^d $
\begin{algorithmic}[1] 
\FOR{$k = 0,1,\dots, max\_iter$ }
\STATE $ r_k = -\nabla f_{S_k}\left(x_k\right) $ 
\IF {$k = 0$}
\STATE {$ x_{k+1} = x_k+\beta_kr_k $}
\ELSE
\STATE $ m_k = \min\{m,k\} $
\STATE $ \Delta\hat{x}_k =  \gamma\cdot\Delta\hat{x}_{k-1}+(1-\gamma)\cdot\Delta x_{k-1} $ 
\STATE $ \Delta\hat{r}_k = \gamma\cdot\Delta\hat{r}_{k-1}+(1-\gamma)\cdot\Delta r_{k-1} $
\STATE $ \hat{X}_k = [
\Delta \hat{x}_{k-m_k} , \Delta \hat{x}_{k-m_k+1} , \cdots , \Delta \hat{x}_{k-1} ] $
\STATE $ \hat{R}_k = [
\Delta \hat{r}_{k-m_k} , \Delta \hat{r}_{k-m_k+1} , \cdots , \Delta \hat{r}_{k-1} ] $
\STATE $ \delta_k = c_1 \| r_k\|_2^2 / \left(\|\Delta\hat{x}_k\|_2^2 +\epsilon\right) $
\STATE $ Y_k = \hat{X}_k+\beta_k \hat{R}_k $
\STATE $ Z_k = \hat{R}_k^{\mathrm{T}}\hat{R}_k+\delta_k\hat{X}_k^{\mathrm{T}}\hat{X}_k $
\STATE Compute $ \lambda_k = \lambda_{max}\left( 
 \begin{pmatrix}
 Y_k^\mathrm{T} \\
 \hat{R}_k^\mathrm{T}
 \end{pmatrix} 
 \begin{pmatrix}
 Y_k & \hat{R}_k
 \end{pmatrix}
 \begin{pmatrix}
 0 & Z_k^{\dagger} \\
 Z_k^{\dagger} & 0 
 \end{pmatrix} 
 \right). $
\STATE $ \tilde{\alpha}_k = \alpha_k $
\IF {$ \lambda_k > 0$ }
\STATE $ \tilde{\alpha}_k = \min\lbrace \alpha_k,2\beta_k(1-\mu)/\lambda_k\rbrace $
\ENDIF
\STATE $ x_{k+1} = x_k + \beta_k r_k - \tilde{\alpha}_k Y_k Z_k^{\dagger} \hat{R}_k^{\mathrm{T}}r_k $ 
\ENDIF
\STATE Apply learning rate schedule of $ \alpha_k, \beta_k $
\ENDFOR
\STATE \textbf{return} $ x_k $
\end{algorithmic}
\end{algorithm}

 \begin{figure}[H]
\centering 
\subfigure[Train Loss (batchsize=12K)]{
\includegraphics[width=0.31\textwidth]{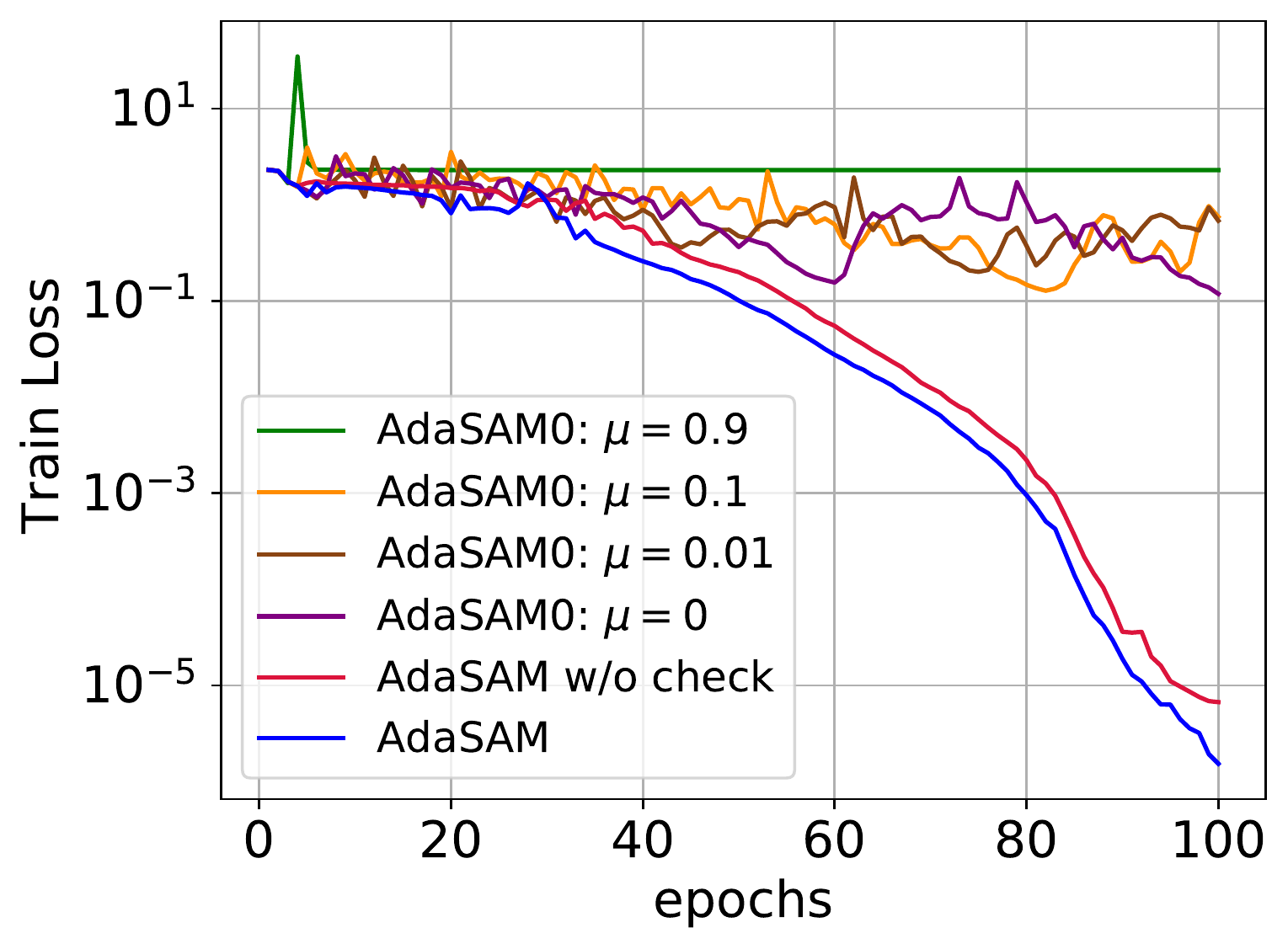}
}
\subfigure[SNG (batchsize=12K)]{
\includegraphics[width=0.31\textwidth]{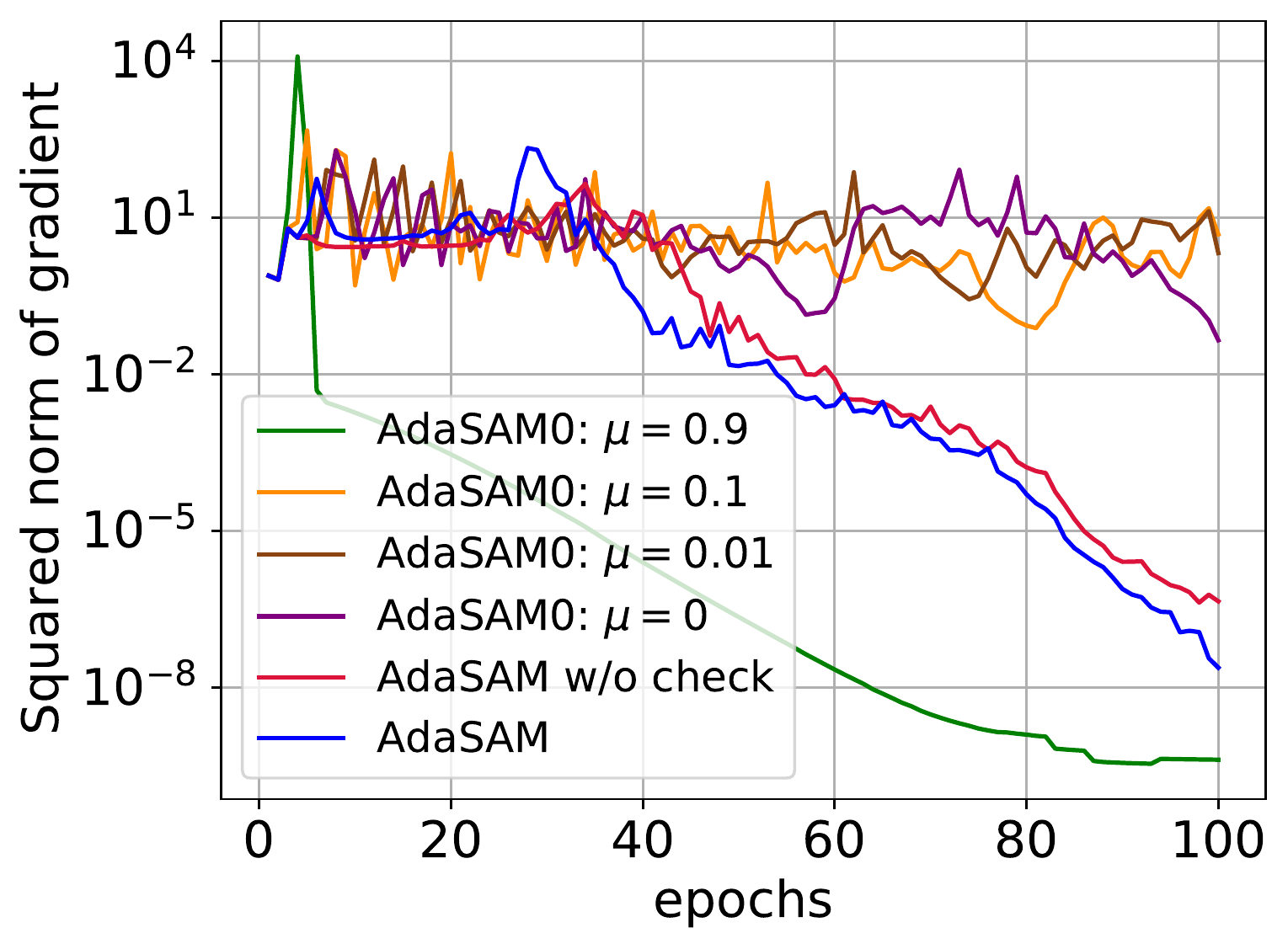}
}
\subfigure[$\alpha_k$ (batchsize=12K)]{
\includegraphics[width=0.31\textwidth]{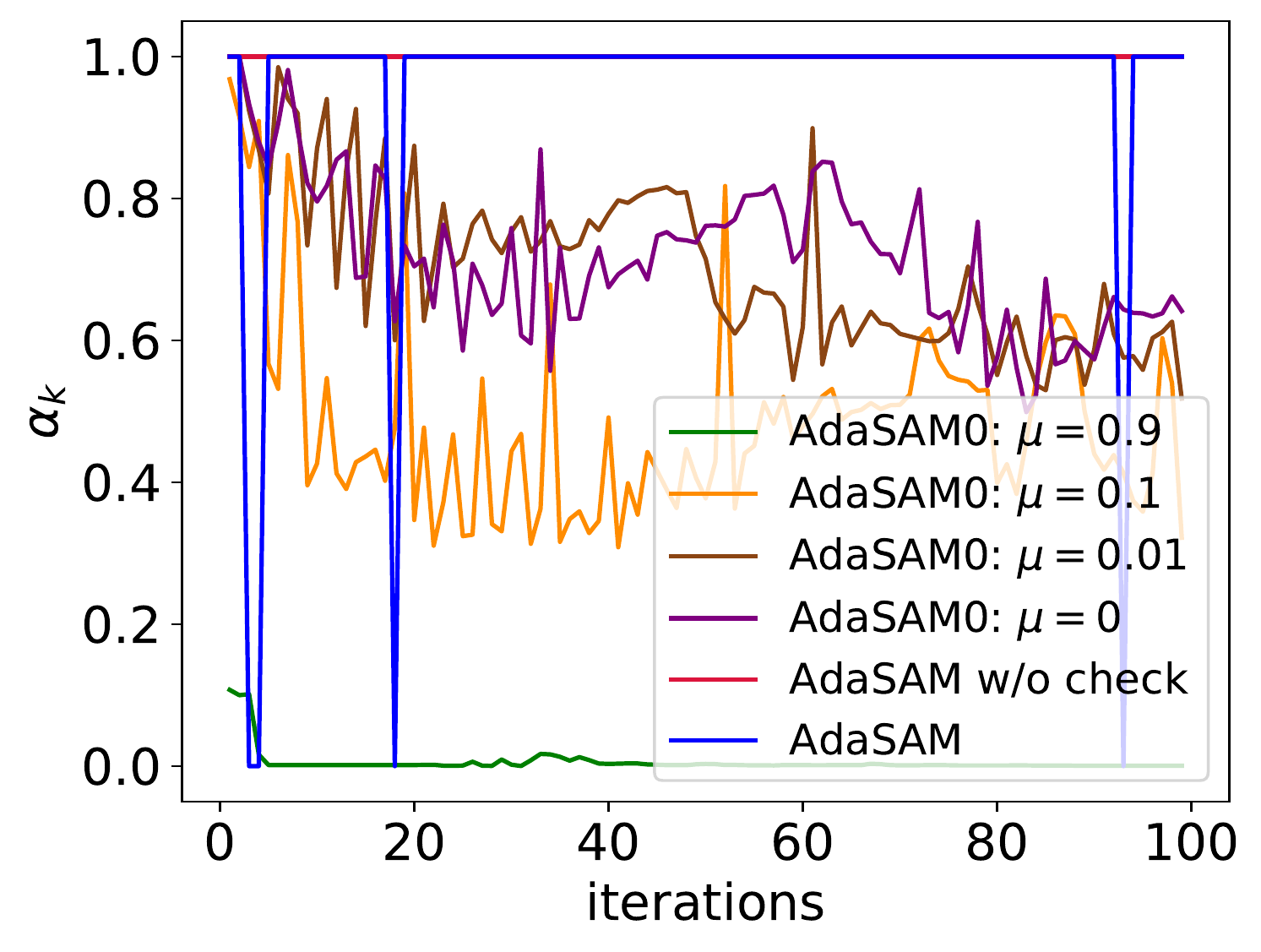}
}
\subfigure[Train Loss (batchsize=3K)]{
\includegraphics[width=0.31\textwidth]{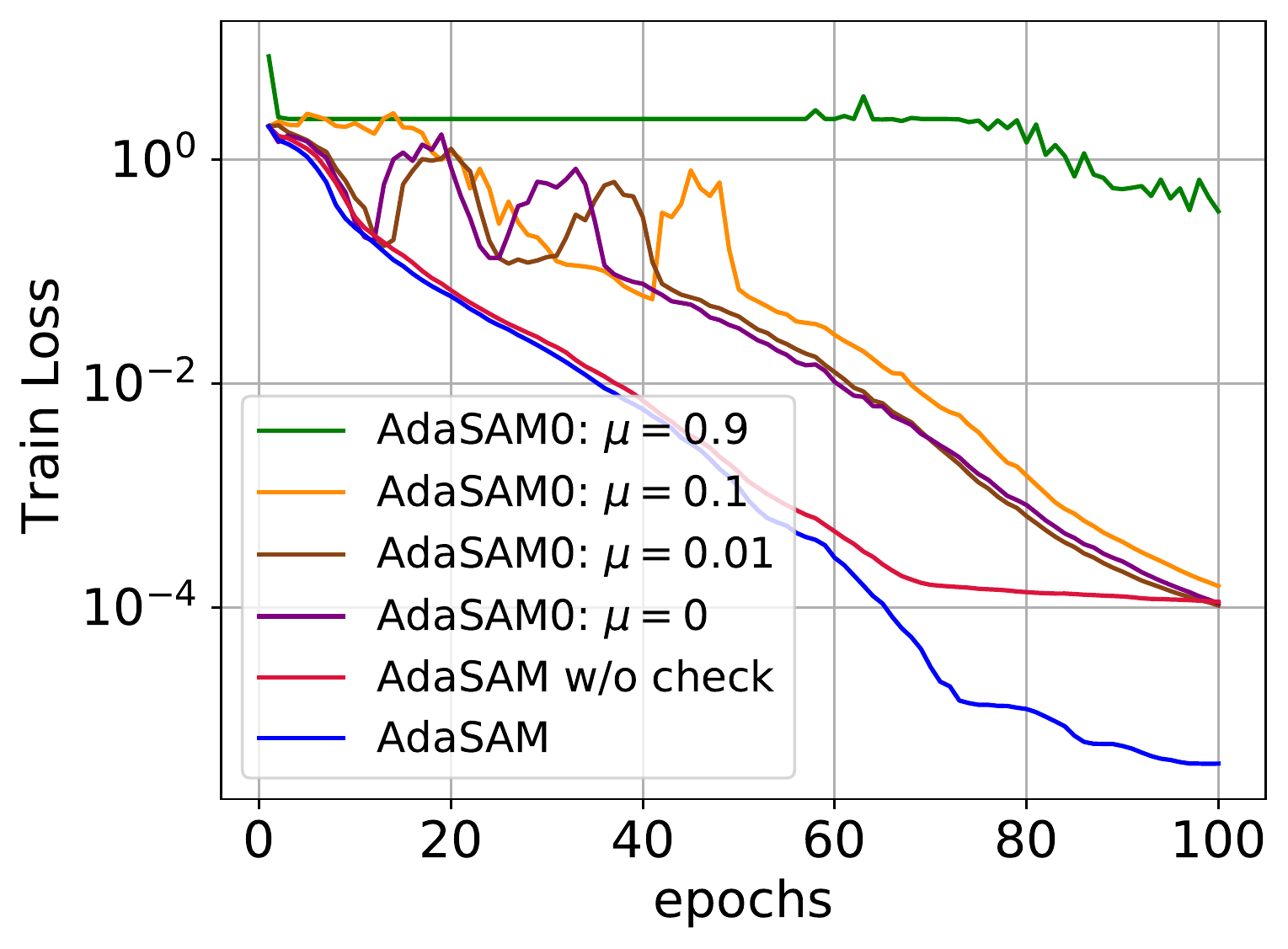}
}
\subfigure[SNG (batchsize=3K)]{
\includegraphics[width=0.31\textwidth]{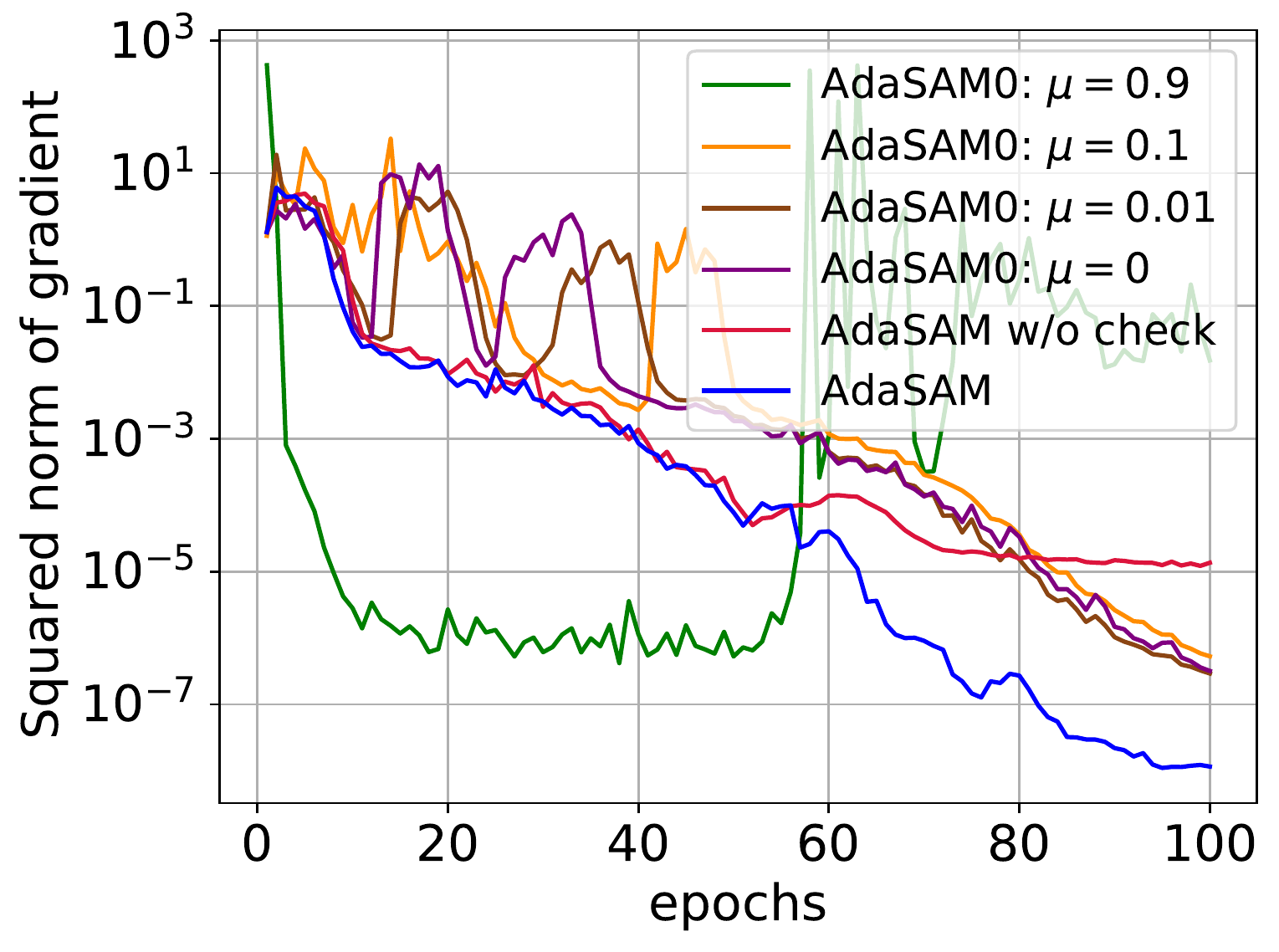}
}
\subfigure[$ \alpha_k$ (batchsize=3K)]{
\includegraphics[width=0.31\textwidth]{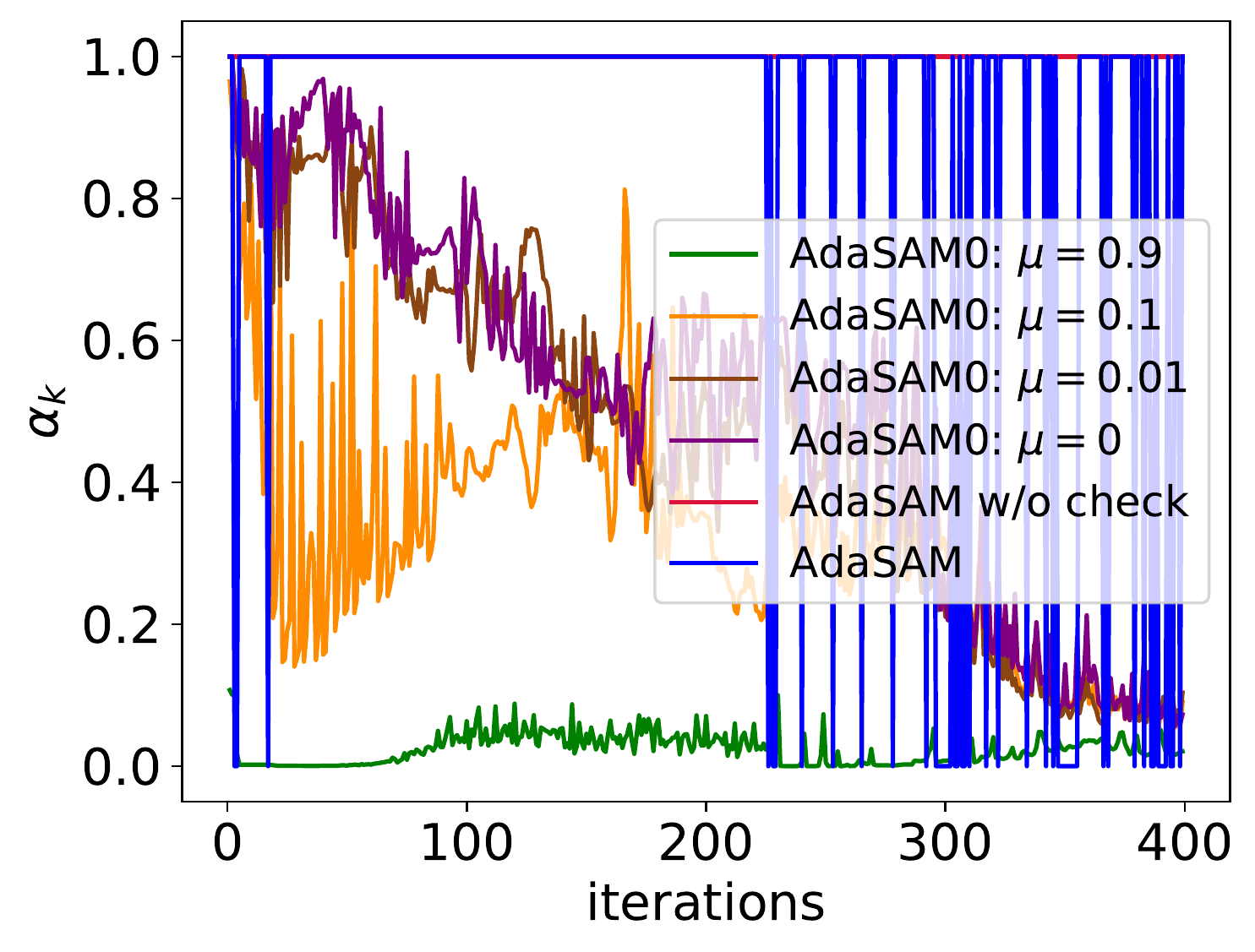}
}
\caption{Experiments on MNIST. Training loss, squared norm of gradient (abbr. SNG) and $ \alpha_k $ for batch size = 12K, 3K.}
\label{fig:appendix_test_pos1}
\end{figure}

\begin{figure}[ht]
\centering 
\subfigure[Train Loss]{
\includegraphics[width=0.31\textwidth]{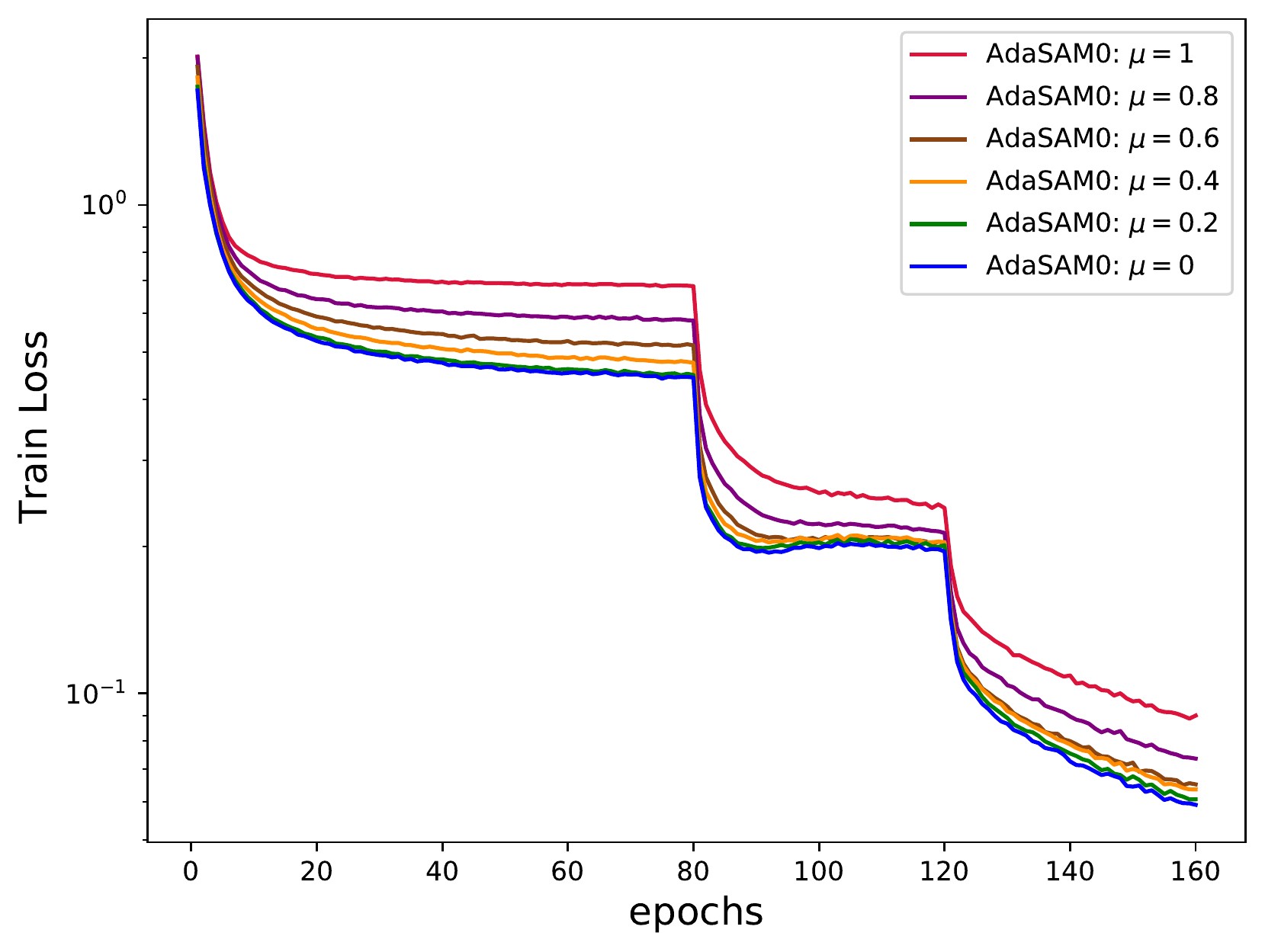}
}
\subfigure[Test Loss]{
\includegraphics[width=0.31\textwidth]{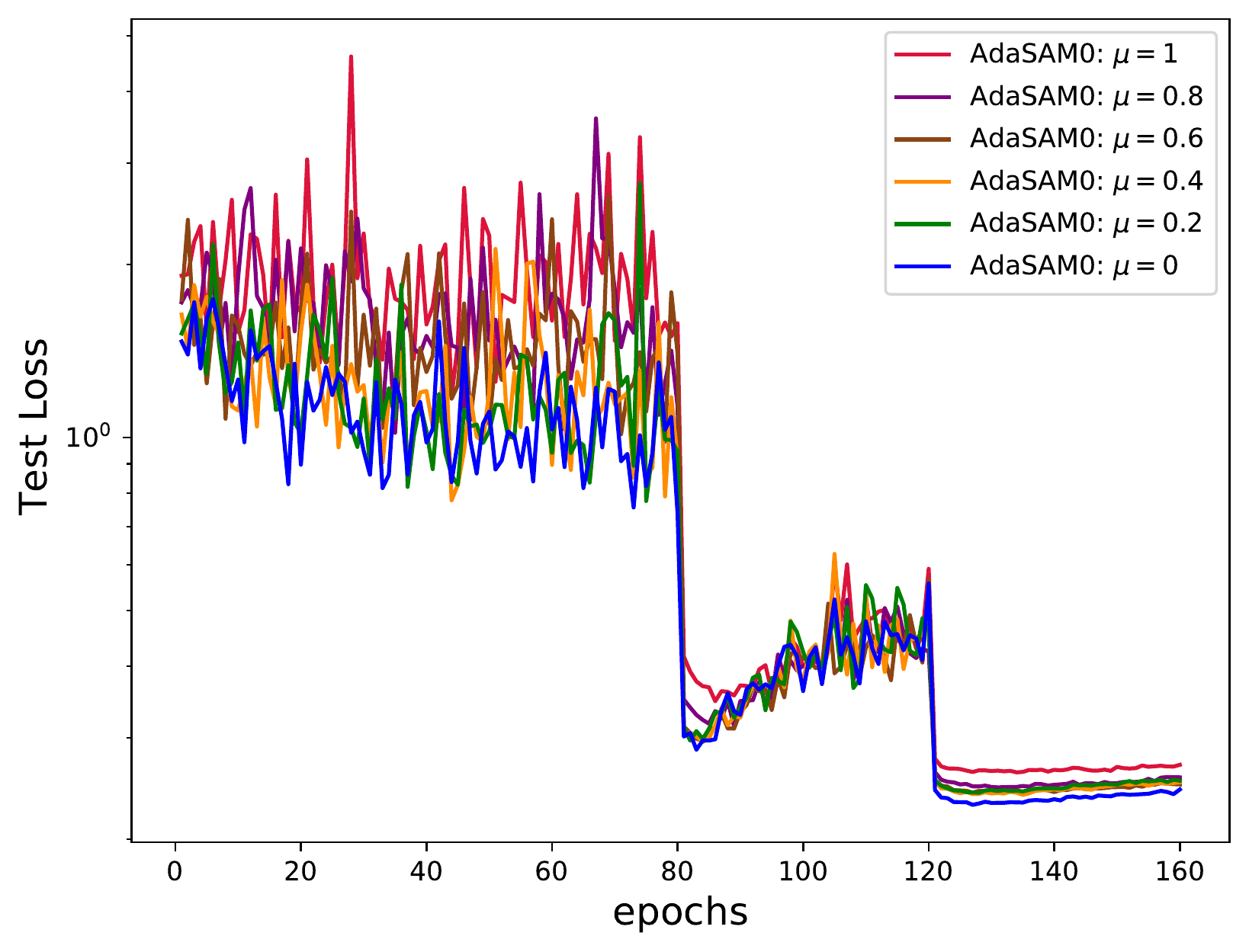}
}
\subfigure[$\lambda_k$]{
\includegraphics[width=0.31\textwidth]{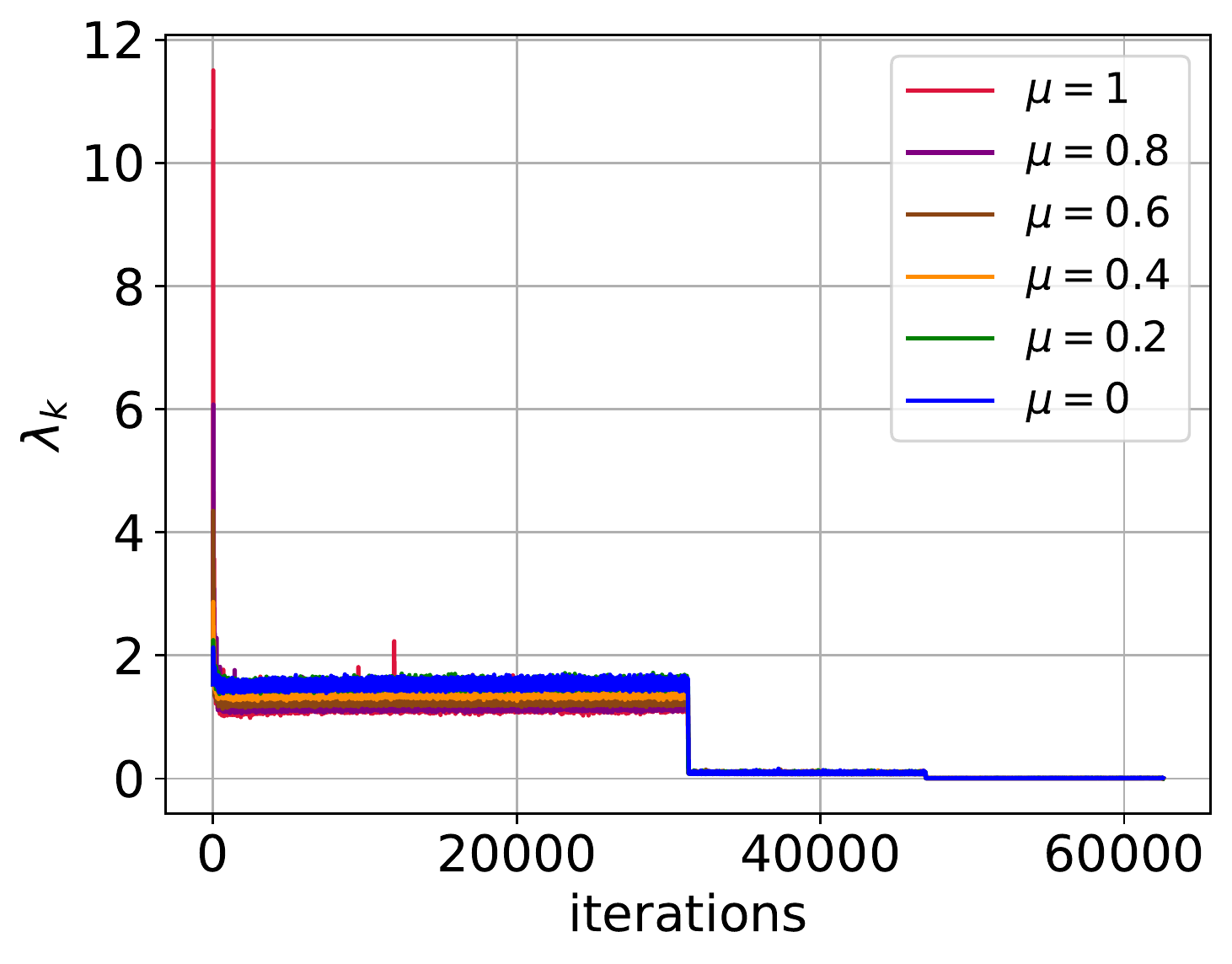}
}
\subfigure[Train Accuracy]{
\includegraphics[width=0.31\textwidth]{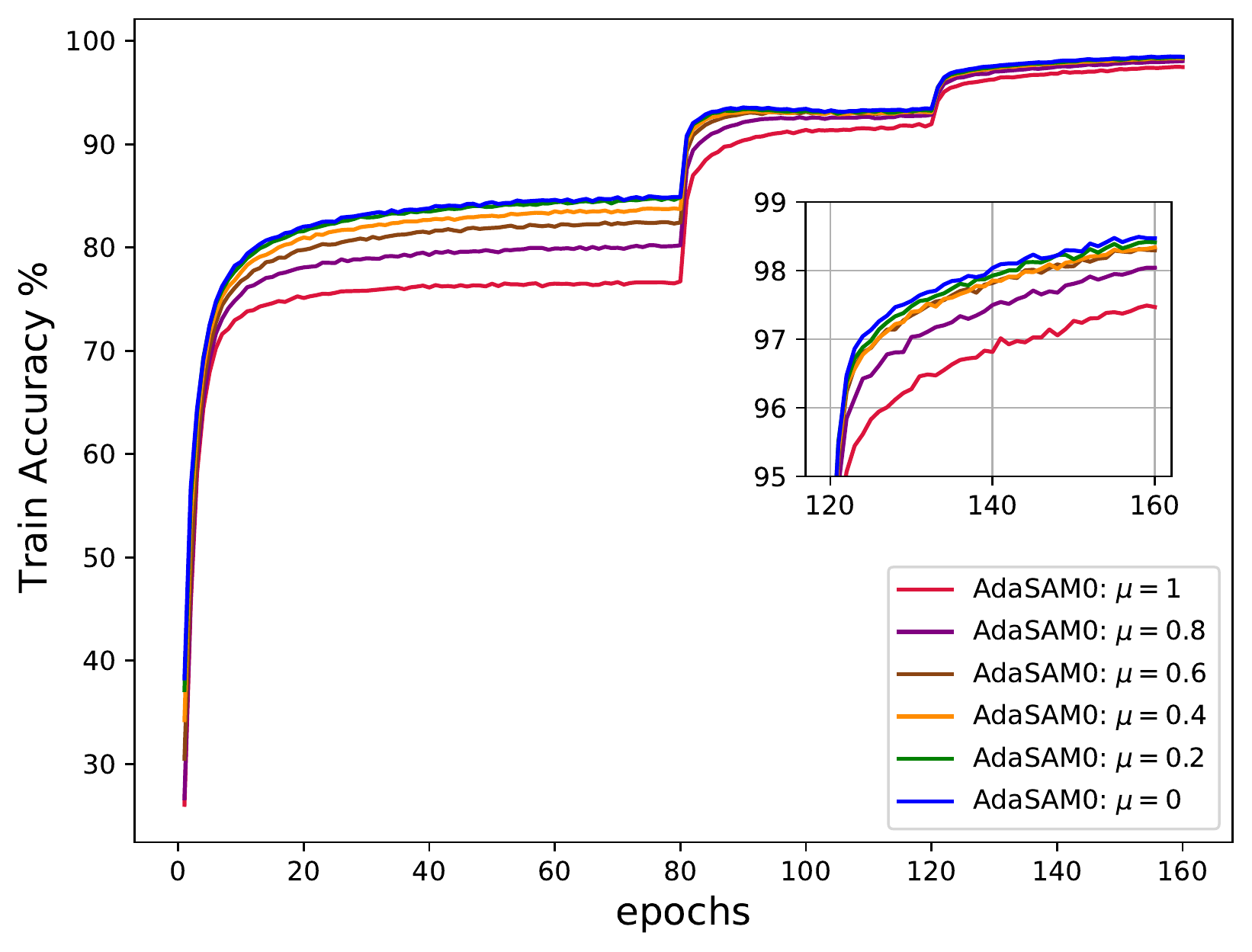}
}
\subfigure[Test Accuracy]{
\includegraphics[width=0.31\textwidth]{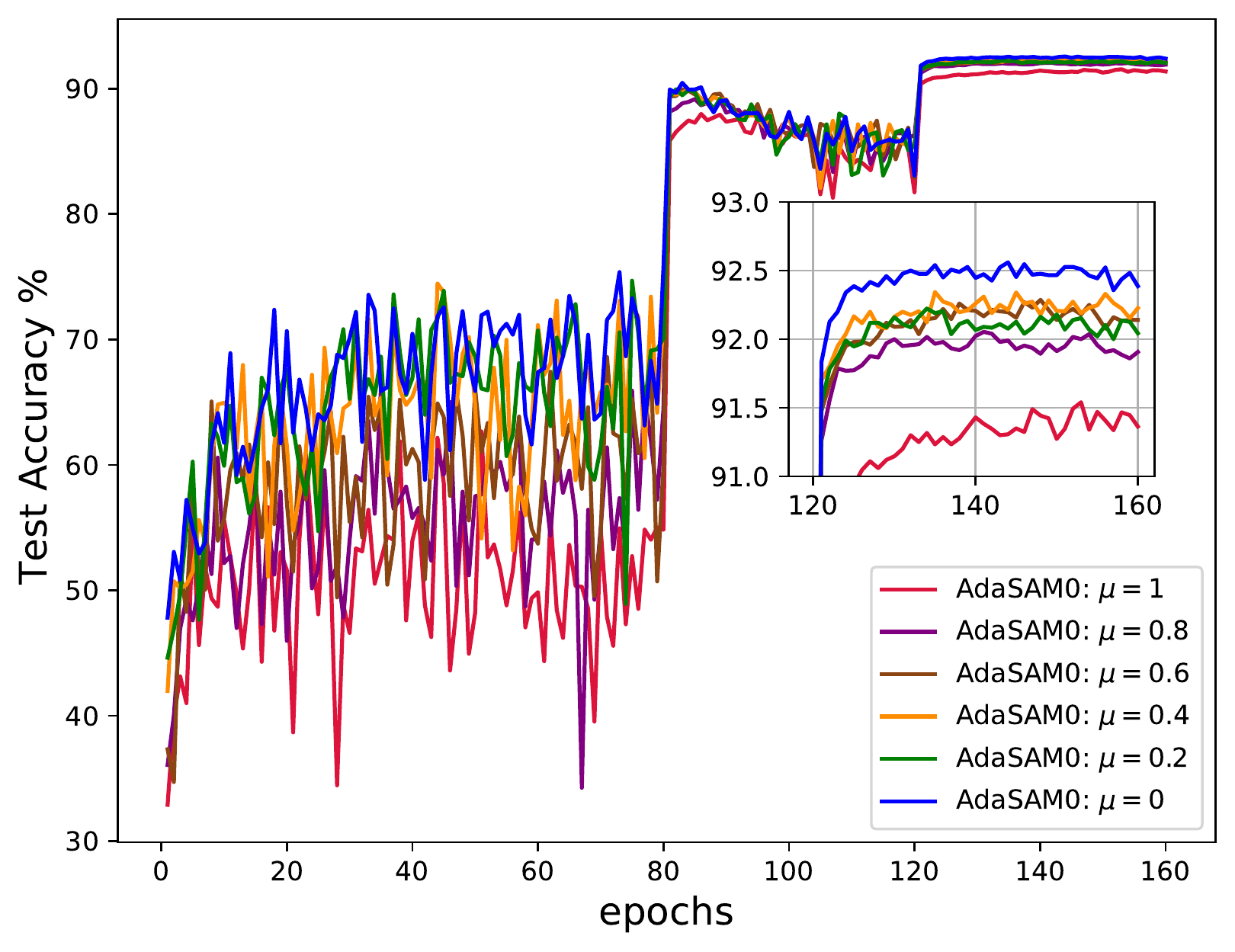}
}
\subfigure[$\alpha_k$]{
\includegraphics[width=0.31\textwidth]{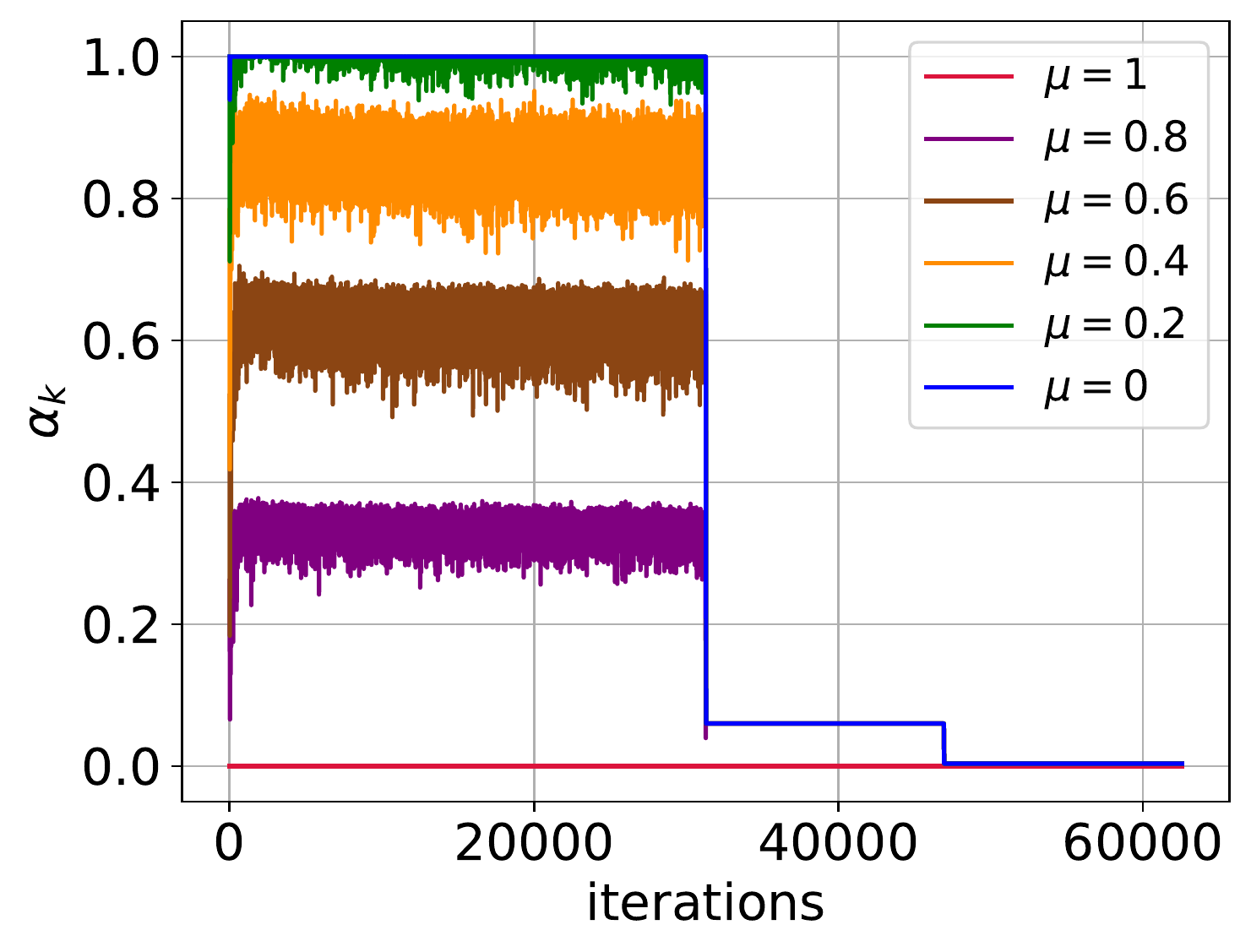}
}
\caption{Experiments on CIFAT-10/ResNet20. Training with AdaSAM0 with $ \mu =$ 0, 0.2, 0.4, 0.6, 0.8, 1. (c) and (d) show the evolutions of $ \lambda_k $ and $\alpha_k$ in AdaSAM0 during training.}
\label{fig:appendix_test_pos2}
\end{figure}


\subsection{Effect of damped projection}
 Damped projection is introduced to overcome the weakness of the potential indefiniteness of $ H_k $ in Anderson mixing. Its necessity has been justified in theory in Section~\ref{sec:theory}. In practice, though we always initially set $ \alpha_k=1 $ in AdaSAM, using damped projection can help improve the effectiveness. As shown in Figure~\ref{fig:appendix_test_pos1}, temporarily setting $ \alpha_k = 0 $ when $ (\Delta x_k)^{\mathrm{T}}r_k \leq 0  $ did improve convergence compared with the way of keeping $ \alpha_k=1$  unchanged. We also conducted tests on CIFAR-10/ResNet20, where the learning rate decay of $ \alpha_k $ was forbidden during training.  The result is shown in Figure~\ref{fig:appendix_test_alpha}. We see the learning rate decay of $ \alpha_k $ improves generalization.
  
 \begin{figure}[ht]
\centering 
\subfigure[Train Loss]{
\includegraphics[width=0.3\textwidth]{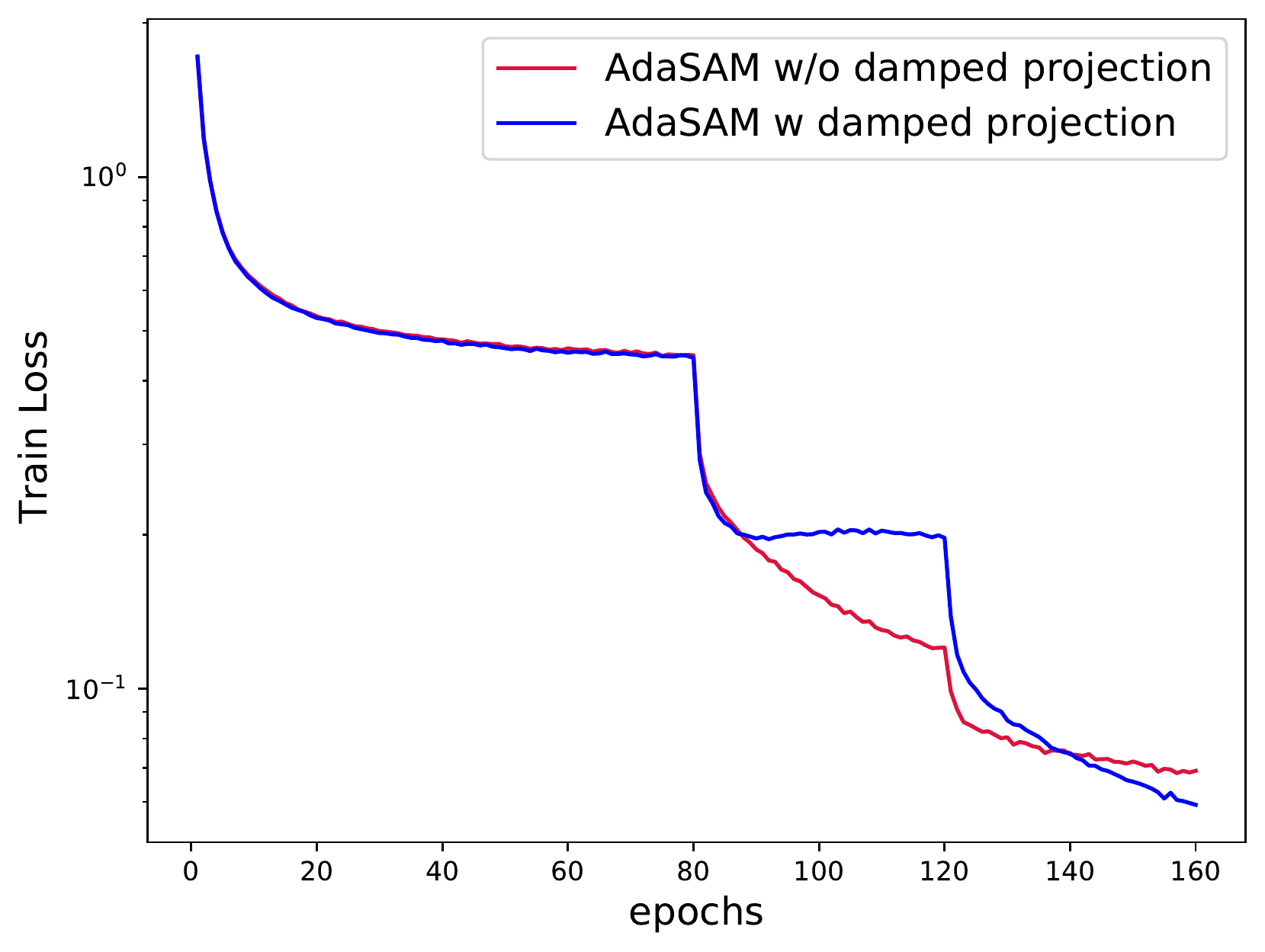}
}
\subfigure[Train Accuracy]{
\includegraphics[width=0.3\textwidth]{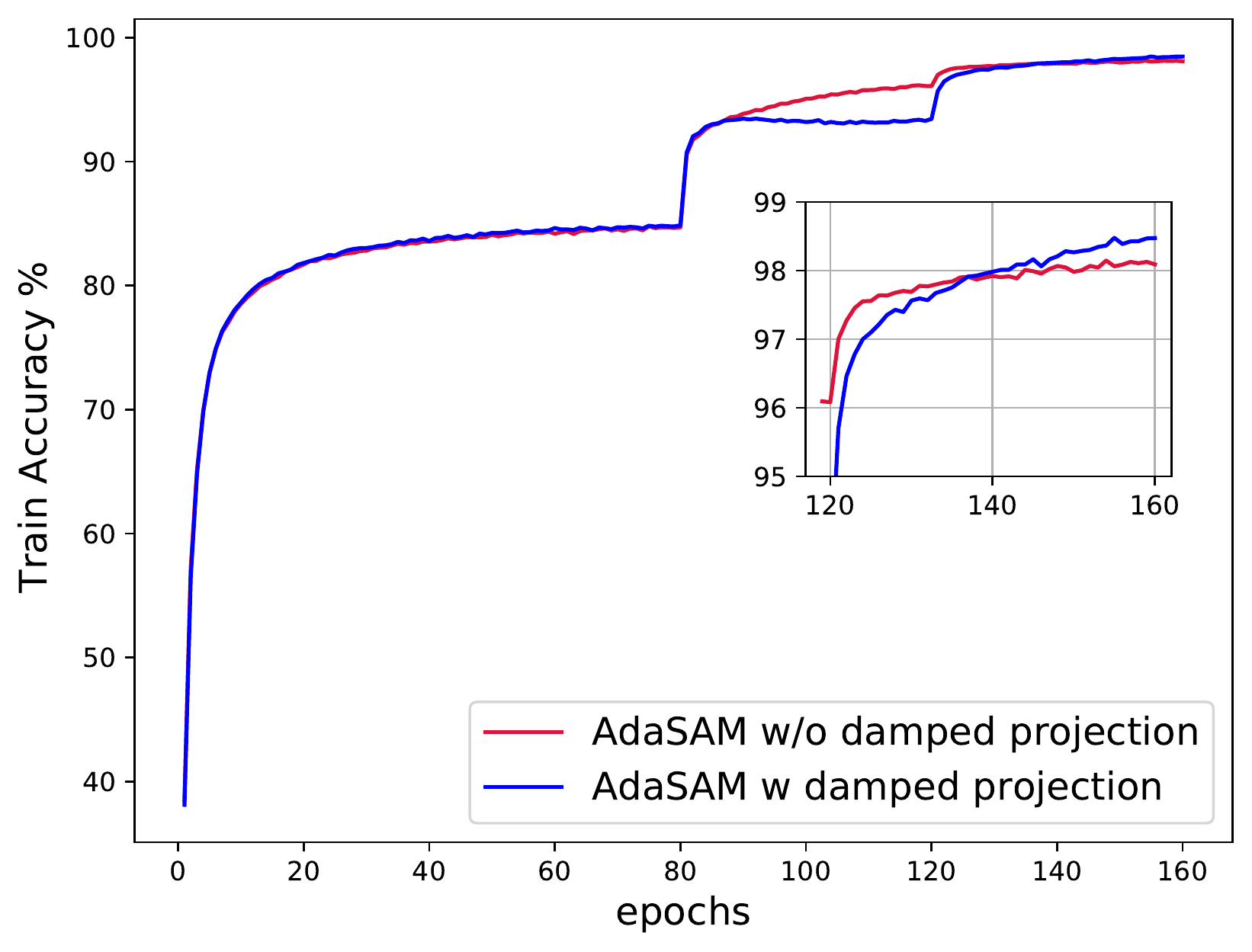}
}
\subfigure[Test Loss]{
\includegraphics[width=0.3\textwidth]{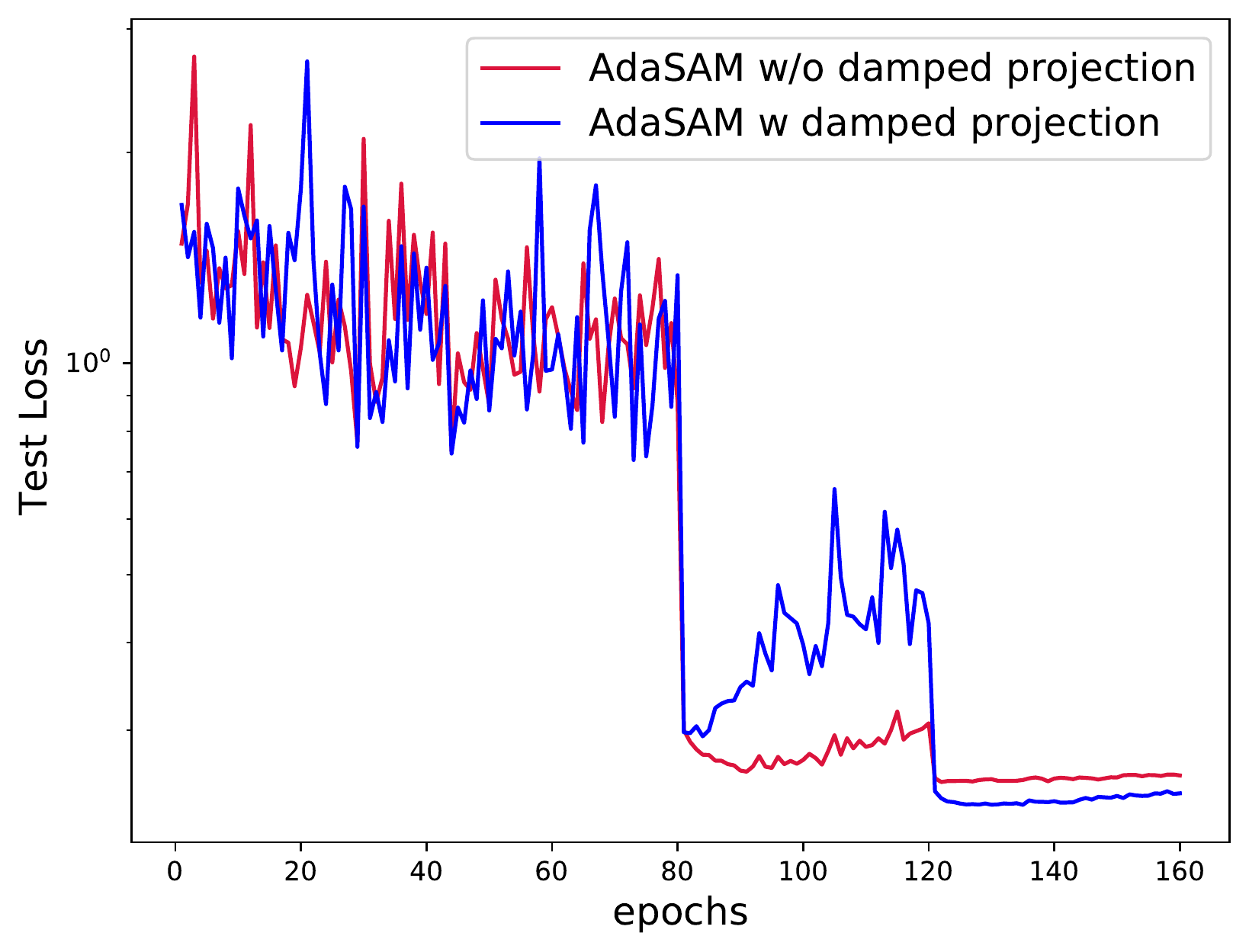}
}
\subfigure[Test Accuracy]{
\includegraphics[width=0.3\textwidth]{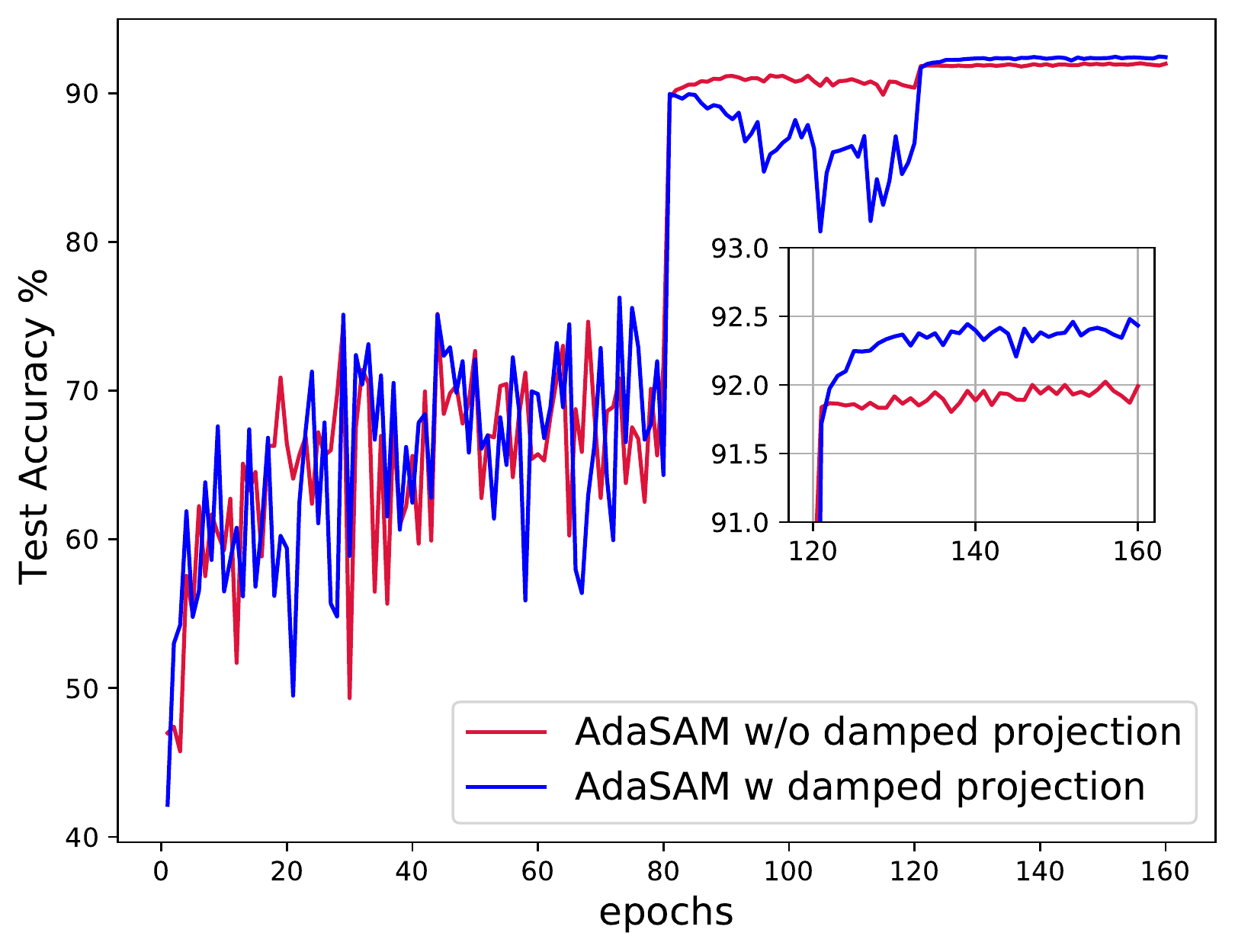}
}
\caption{Experiments on CIFAT-10/ResNet20. Training with/without damped projection.}
\label{fig:appendix_test_alpha}
\end{figure}

\subsection{Effect of adaptive regularization} \label{subsec:adapreg}
 AdaSAM is a special case of SAM with selection of $ \delta_k $ as \eqref{deltak}. Note that in our implementation Algorithm~\ref{alg:adasam}, we omit the term $ c_2\beta_k^{-2}$.  In fact, such special choice is important for SAM to be effective since it can roughly capture the curvature information. 
 The comparison between AdaSAM and RAM in experiments on MNIST and CIFARs confirms the superiority of the regularization term of AdaSAM. Here, we further compare the choice of \eqref{deltak} with two other choices:
 \begin{align*}
 \delta_k &= \delta, \quad\quad\quad\mbox{(Option I),} \\
 \delta_k &= \delta\beta_k^{-2},  ~\quad\mbox{(Option II),} 
 \end{align*}
 We designate SAM with $ \delta_k $ chosen as Option~I (Option~II) as SAM$^\dagger$ (SAM$^\ddagger$). 
 
 \begin{figure}[ht]
\centering 
\subfigure[Train Loss (b=12K)]{
\includegraphics[width=0.23\textwidth]{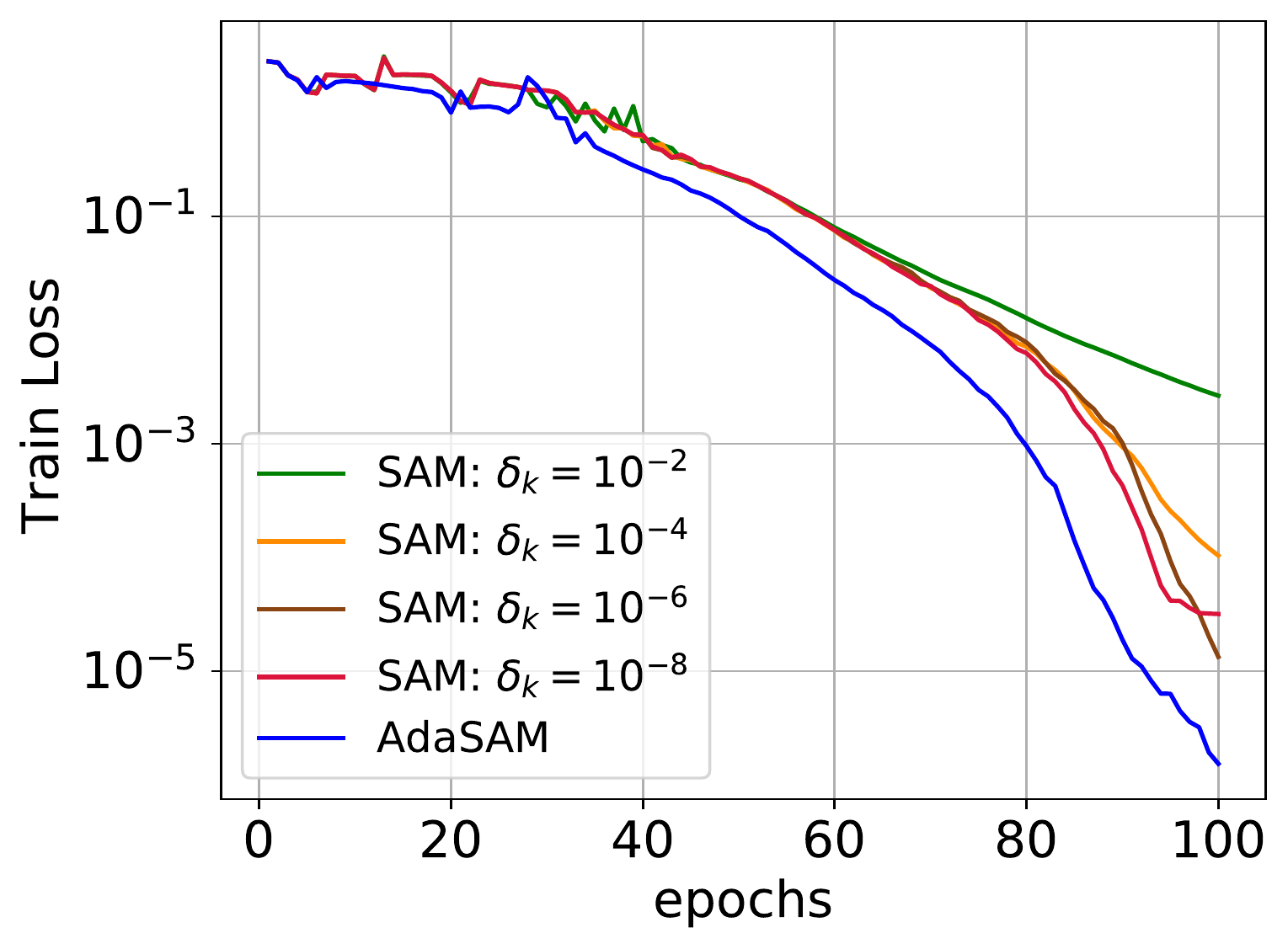}
}
\subfigure[SNG (b=12K)]{
\includegraphics[width=0.23\textwidth]{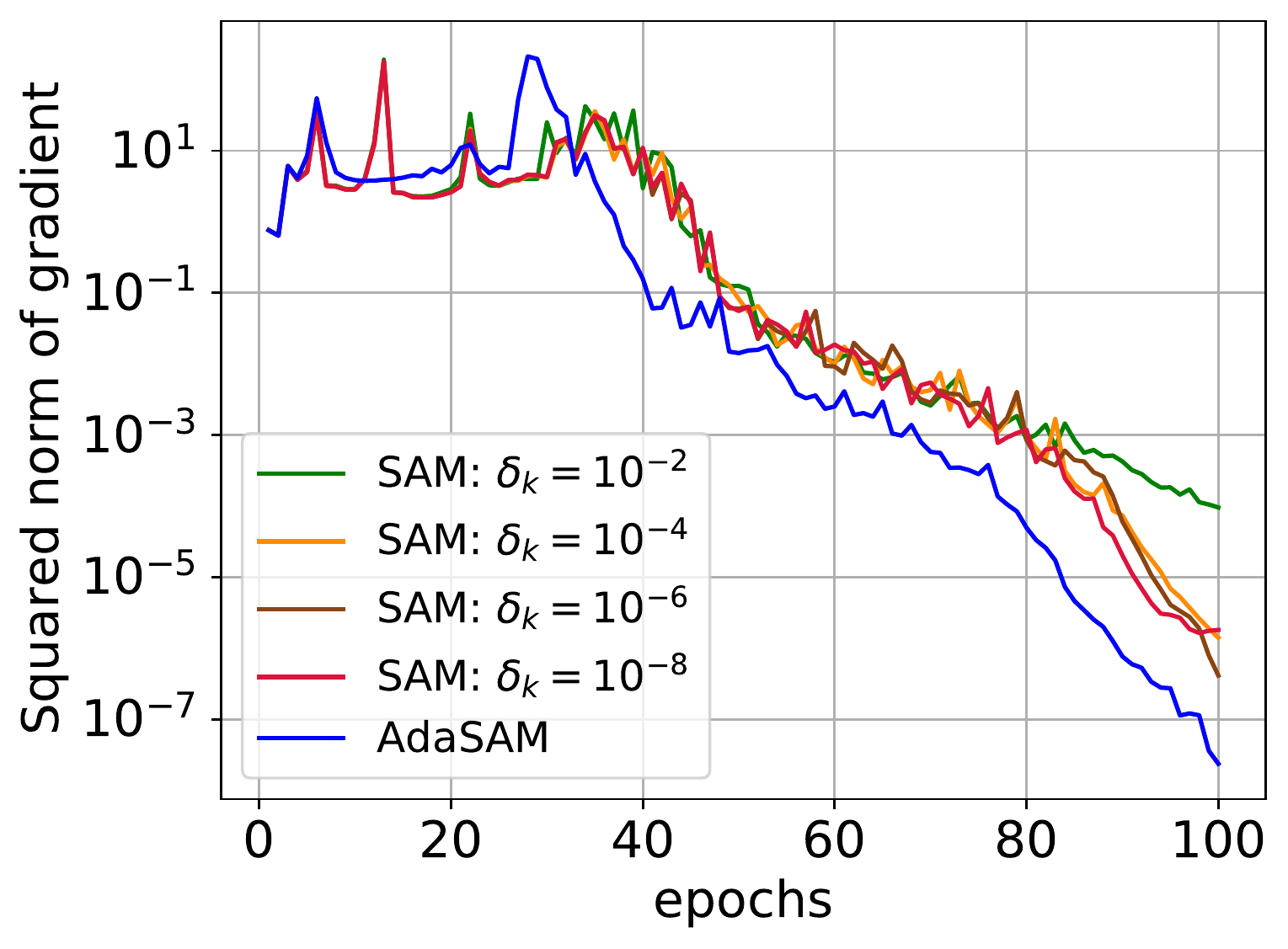}
}
\subfigure[Train accuracy (b=12K)]{
\includegraphics[width=0.23\textwidth]{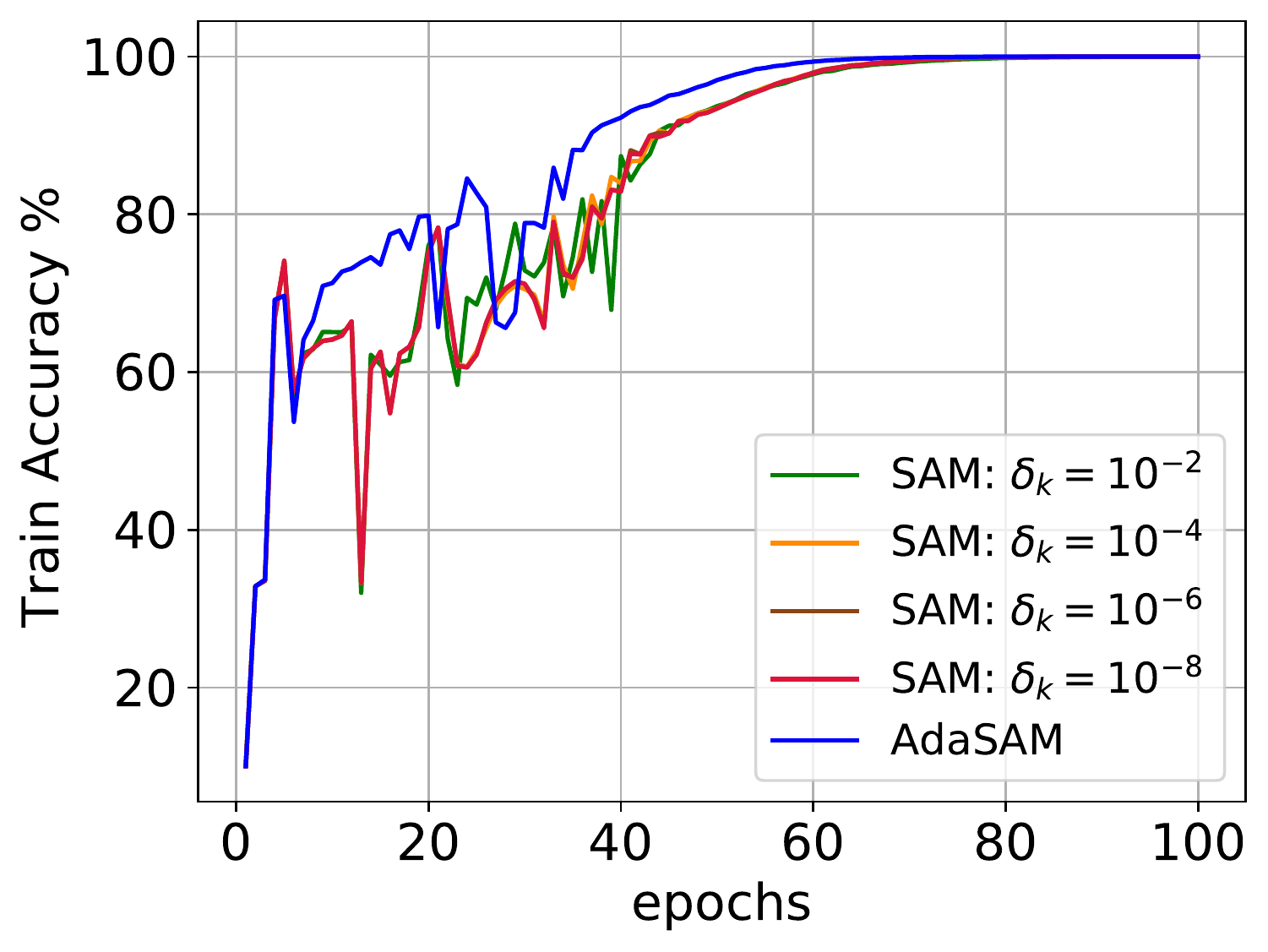}
}
\subfigure[$\delta_k$ (b=12K)]{
\includegraphics[width=0.23\textwidth]{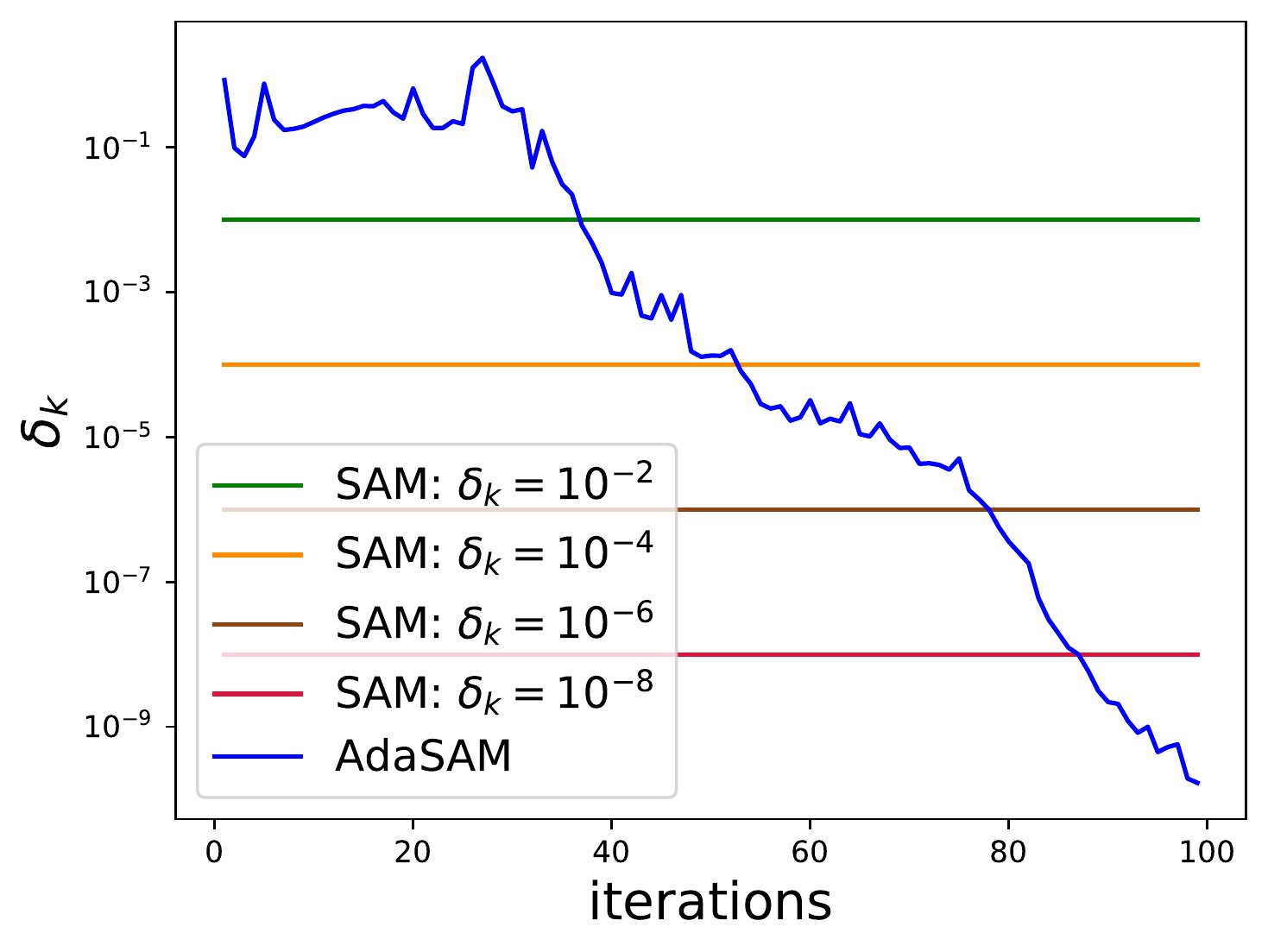}
}
\subfigure[Train Loss (b=3K)]{
\includegraphics[width=0.23\textwidth]{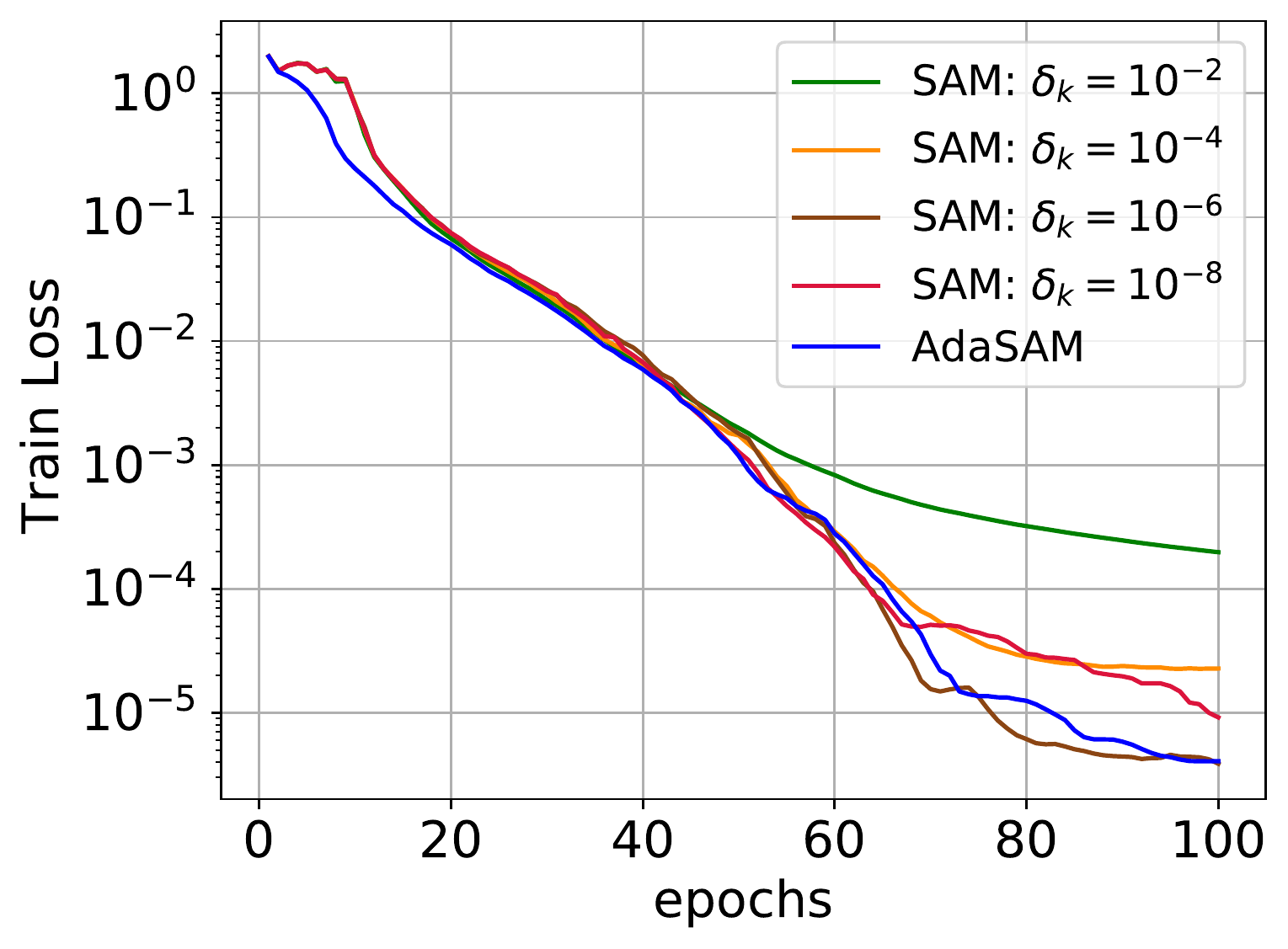}
}
\subfigure[SNG (b=3K)]{
\includegraphics[width=0.23\textwidth]{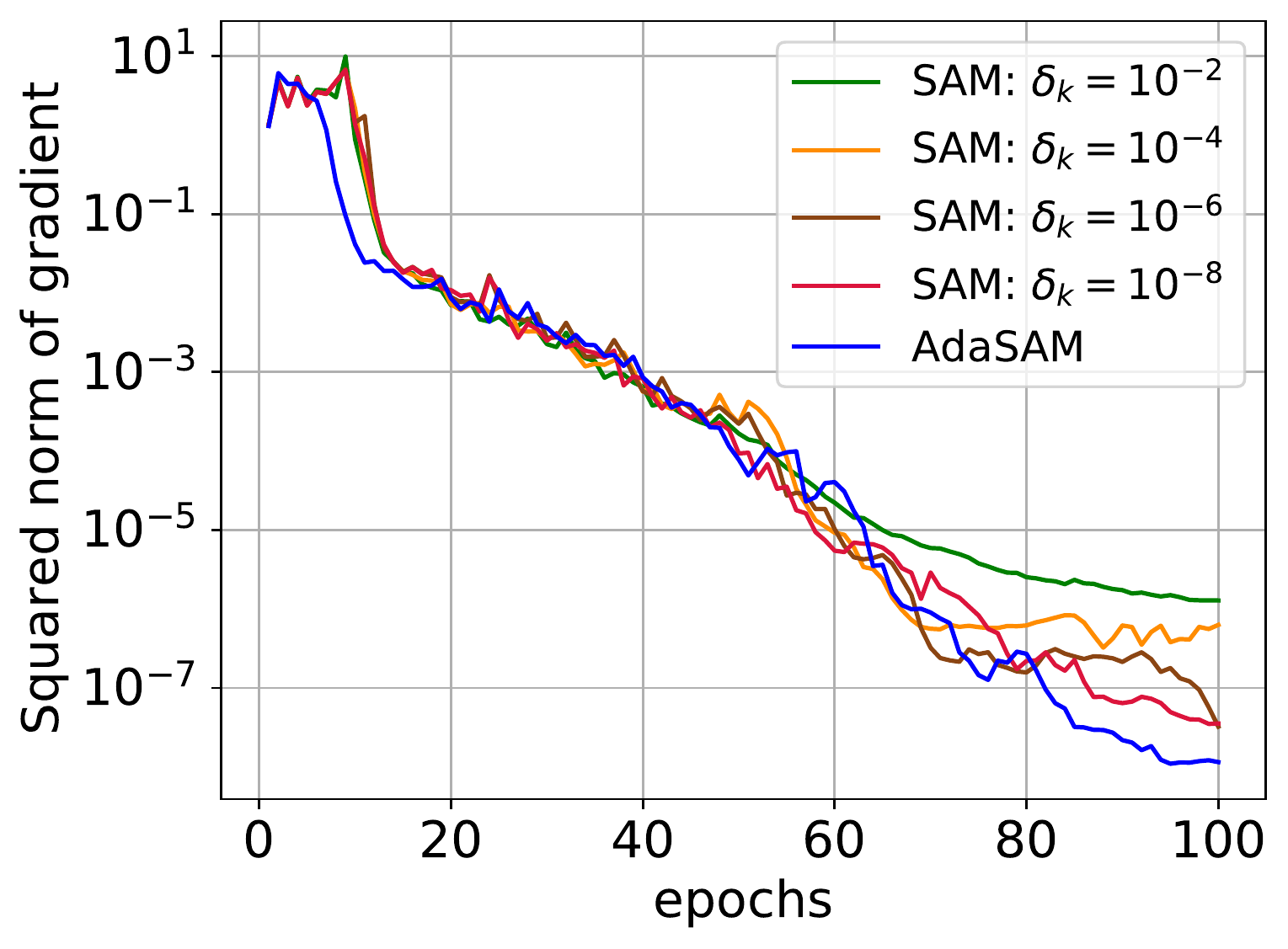}
}
\subfigure[Train accuracy (b=3K)]{
\includegraphics[width=0.23\textwidth]{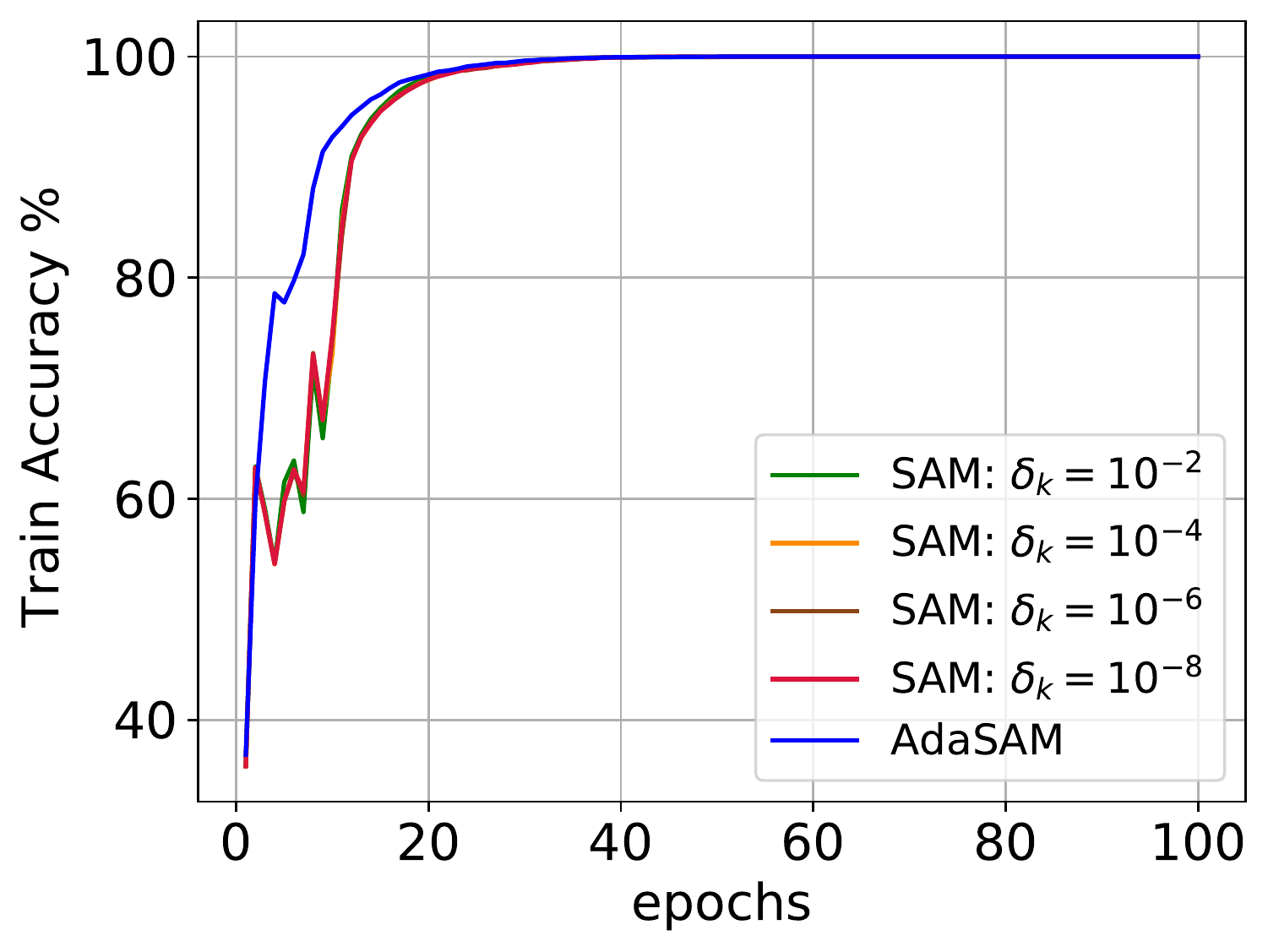}
}
\subfigure[$\delta_k$ (b=3K)]{
\includegraphics[width=0.23\textwidth]{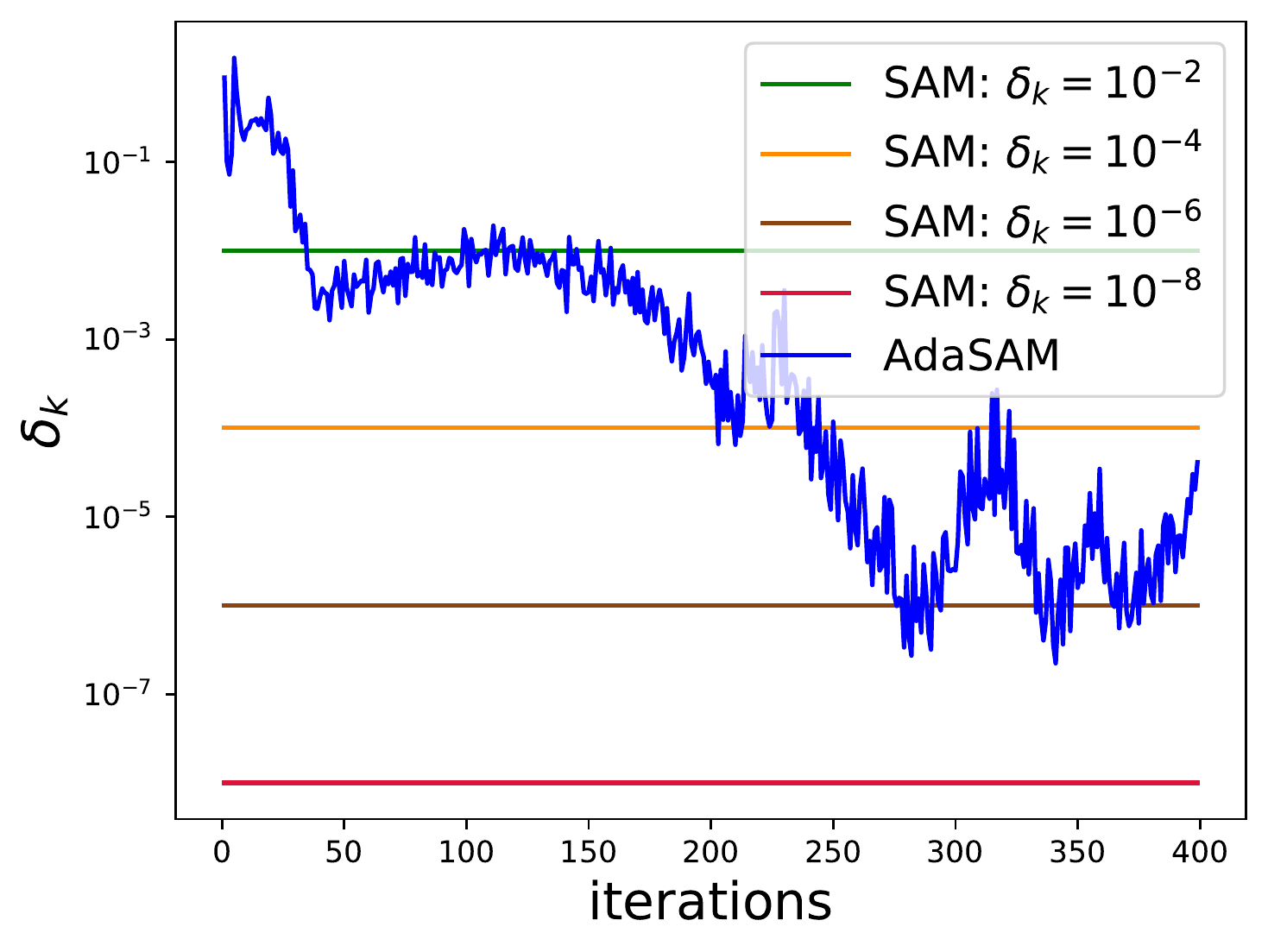}
}
\caption{Experiments on MNIST. SAM$^\dagger $ with $\delta = 10^{-2}, 10^{-4}, 10^{-6}, 10^{-8}$ and AdaSAM. Training loss, training accuracy, squared norm of gradient (abbr. SNG) and $\delta_k$ with batchsize (abbr. b) of 12K, 3K are reported.}
\label{fig:appendix_test_deltak_1}
\end{figure}

 \begin{table*}[ht]
  \caption{Test accuracy on training CIFAR-10/ResNet20. The number on the first row are the regularization parameter $\delta$s.} \label{table:appendix_test_deltak}
  \centering
  \resizebox{\textwidth}{!}{
  \begin{tabular}{lccccccc} 
    \toprule   	
      & $10^{3}$ & $10^{2}$ & $10^{1}$ & 1 & $10^{-1}$ & $10^{-2}$ & $10^{-3}$ \\
    \midrule
    SAM$^{\dagger}$  & 91.57$\pm$.17 & 91.28$\pm$.27 & 91.40$\pm$.03  &  91.63$\pm$.01   & 91.65 $\pm$.24  & 91.60$\pm$.24 & 91.67 $\pm$.31   \\
    SAM$^{\ddagger}$ & 91.48$\pm$.23 & 91.64$\pm$.30 & 91.74$\pm$.20  &  91.91$\pm$.19   & 91.62$\pm$.11   & 91.68$\pm$.34 & 91.48$\pm$.29 
    \\
    \bottomrule
  \end{tabular}
  }
\end{table*} 

 \begin{figure}[ht]
\centering 
\subfigure[Train Accuracy]{
\includegraphics[width=0.31\textwidth]{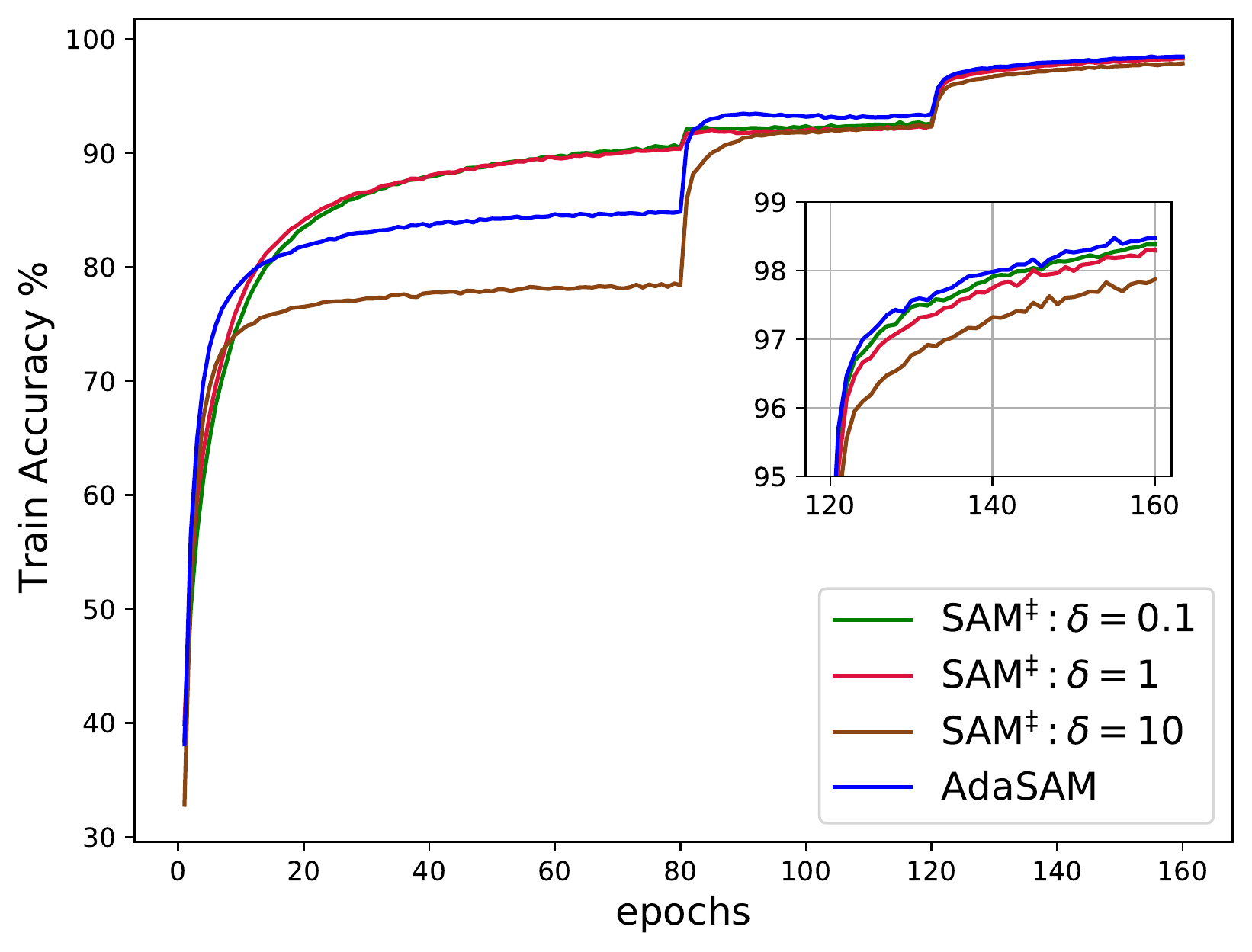}
}
\subfigure[Test Accuracy]{
\includegraphics[width=0.31\textwidth]{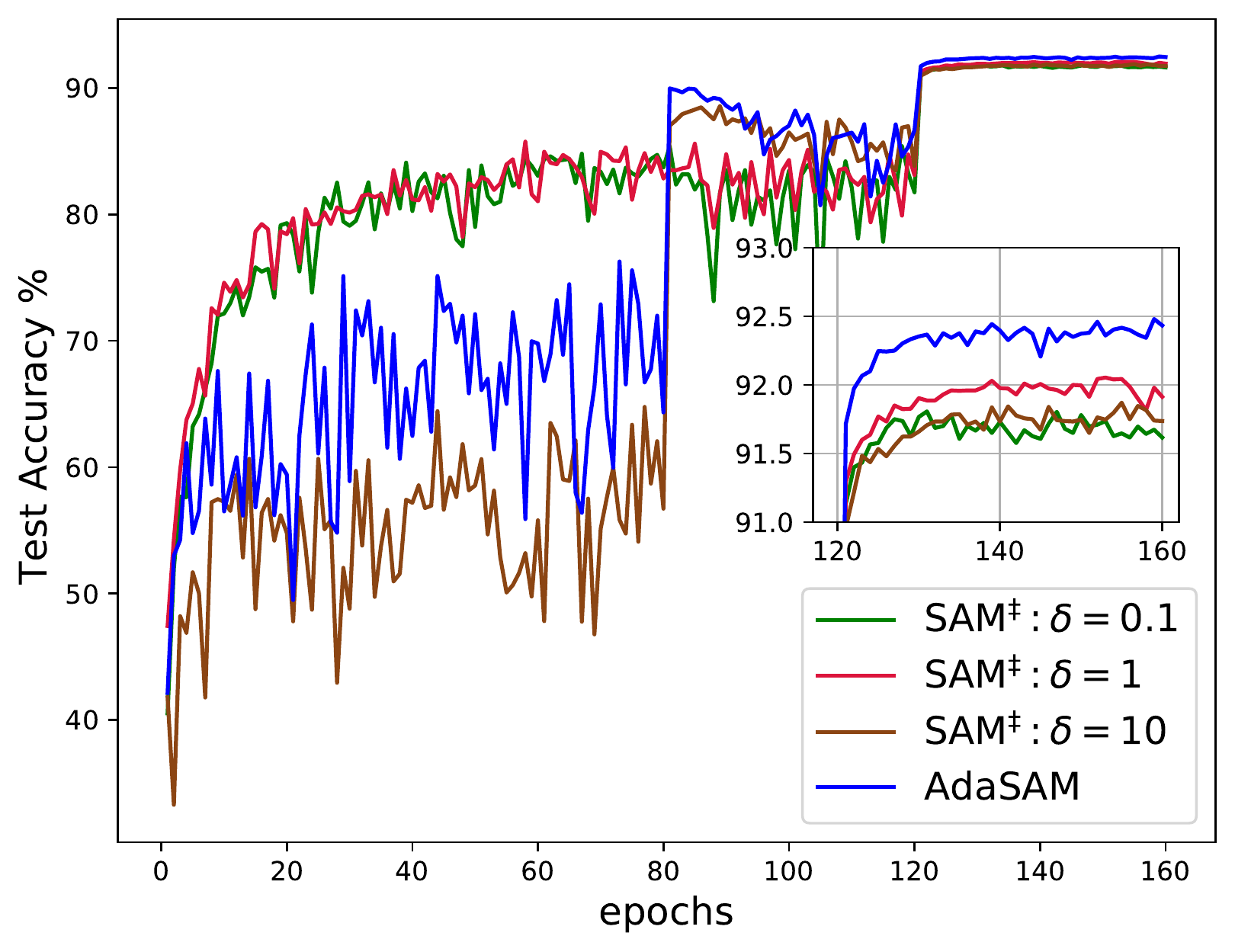}
}
\subfigure[$\delta_k$]{
\includegraphics[width=0.31\textwidth]{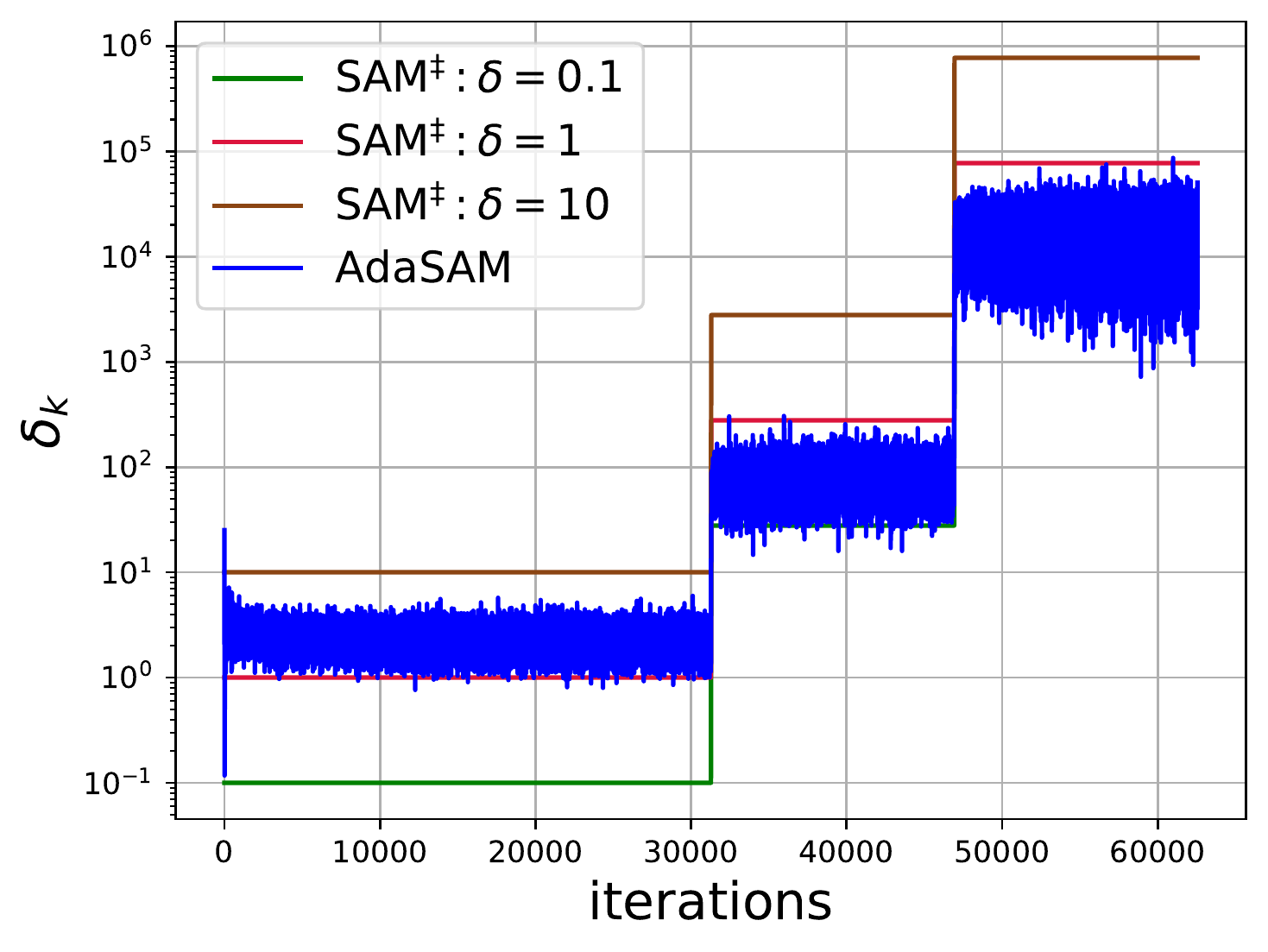}
}
\caption{Experiments on CIFAT-10/ResNet20. Training with SAM$^{\ddagger}$ with $ \delta =$ 0.1, 1, 10. (c) shows the evolution of $ \delta_k $  in SAM$^{\ddagger}$ and AdaSAM during training.}
\label{fig:appendix_test_deltak_2}
\end{figure}

 Experimental results on MNIST when training CNN with batch size of 12K, 3K are reported in Figure~\ref{fig:appendix_test_deltak_1}.  Note that $c_1 = 10^{-4}$ is unchanged across two tests of different batch sizes. We  see AdaSAM adaptively adjusts $ \delta_k $ during training and always achieves the best result. On the contrary, the proper $\delta$ for SAM$^{\dagger} $ is dependent on the batch size.
 
 For the tests on CIFAR-10/ResNet20, we made considerable efforts to tune $\delta$ in SAM$^\dagger$/SAM$^\ddagger$. The results corresponding to different $ \delta $s are shown in Table~\ref{table:appendix_test_deltak}. We also plot related curves of SAM$^{\ddagger} $ and AdaSAM in Figure~\ref{fig:appendix_test_deltak_2}, from which we see the $ \delta_k $ determined in Line~12 in Algorithm~\ref{alg:adasam} roughly matches the scheme of SAM$^{\ddagger}$, i.e. $ \delta_k \geq C \beta_k^{-2} $ for some constant $ C>0$ , thus conforming our heuristic analysis about the convergence of AdaSAM in Section~\ref{anal2}. Observed from Figure~\ref{fig:appendix_test_deltak_2}(c), we set $ \delta = 0.5 $ in SAM$^{\ddagger}$ to roughly match the evolution of $ \delta_k$ in AdaSAM and obtain a slightly better test accuracy 92.05\%.
  These results demonstrate the effectiveness of our choice of $ \delta_k $ in AdaSAM.

\subsection{Moving average}
 For our implementation Algorithm~\ref{alg:adasam} and Algorithm~\ref{alg:padasam}, we incorporate moving average as an option. 
 In deterministic quadratic optimization, the minimal residual property still holds since the relation $ \hat{R}_k = -\nabla f(x_k)\hat{X}_k $ is maintained. In general stochastic optimization, 
 We  find moving average may enhance the robustness to noise or generalization ability.
  
 \begin{figure}[ht]
\centering 
\subfigure[Train Loss]{
\includegraphics[width=0.23\textwidth]{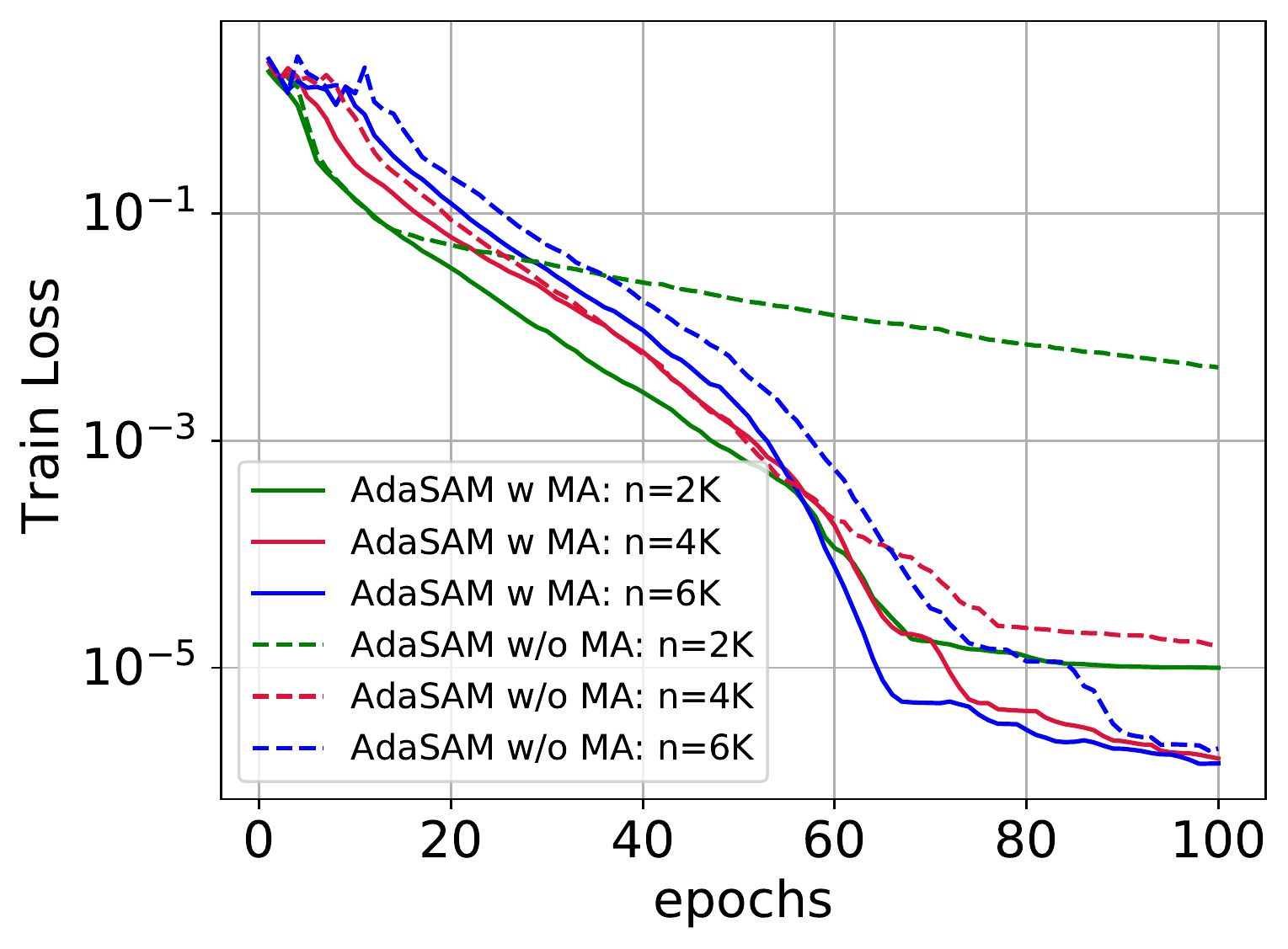}
}
\subfigure[SNG]{
\includegraphics[width=0.23\textwidth]{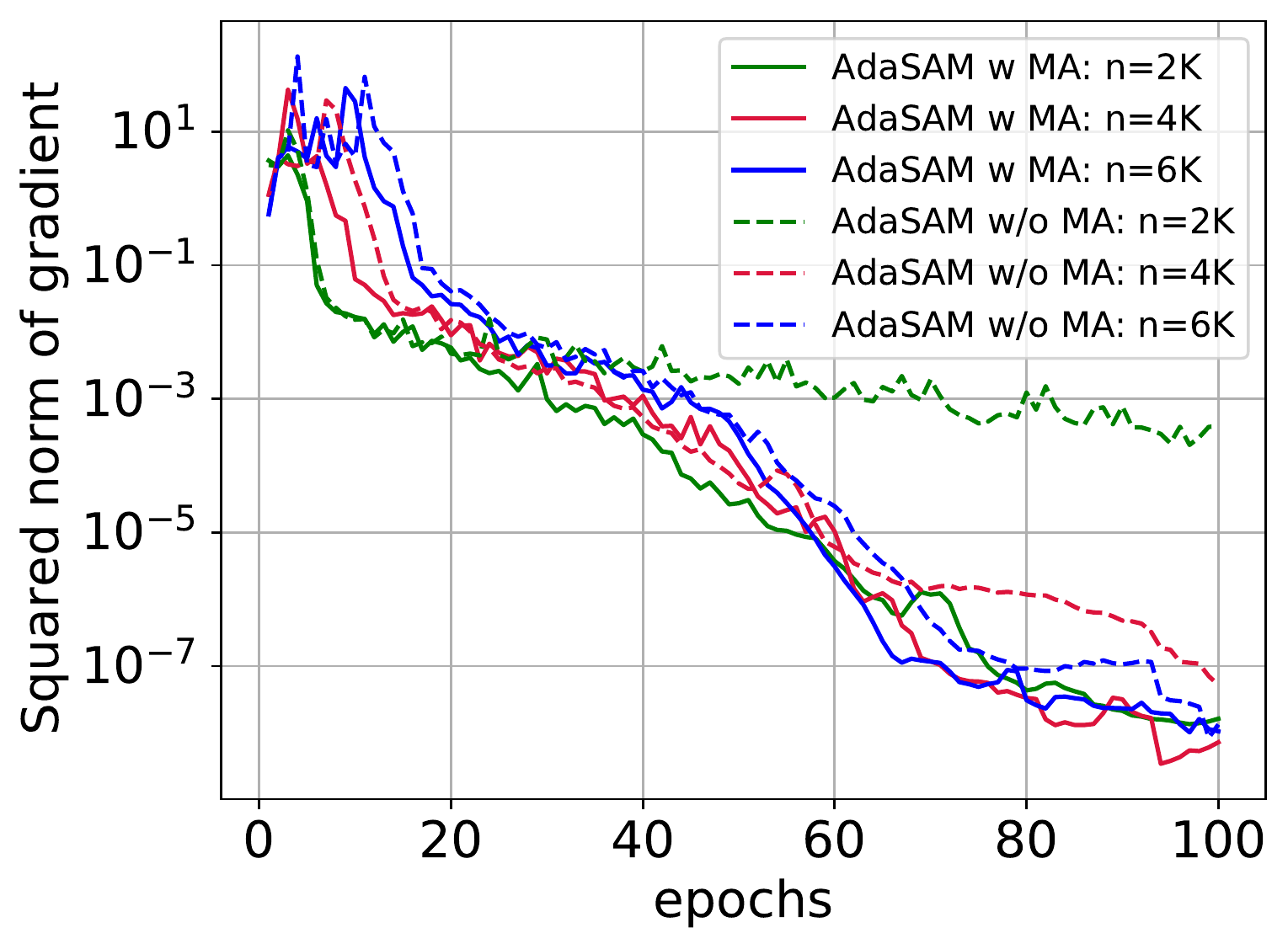}
}
\subfigure[Train Loss]{
\includegraphics[width=0.23\textwidth]{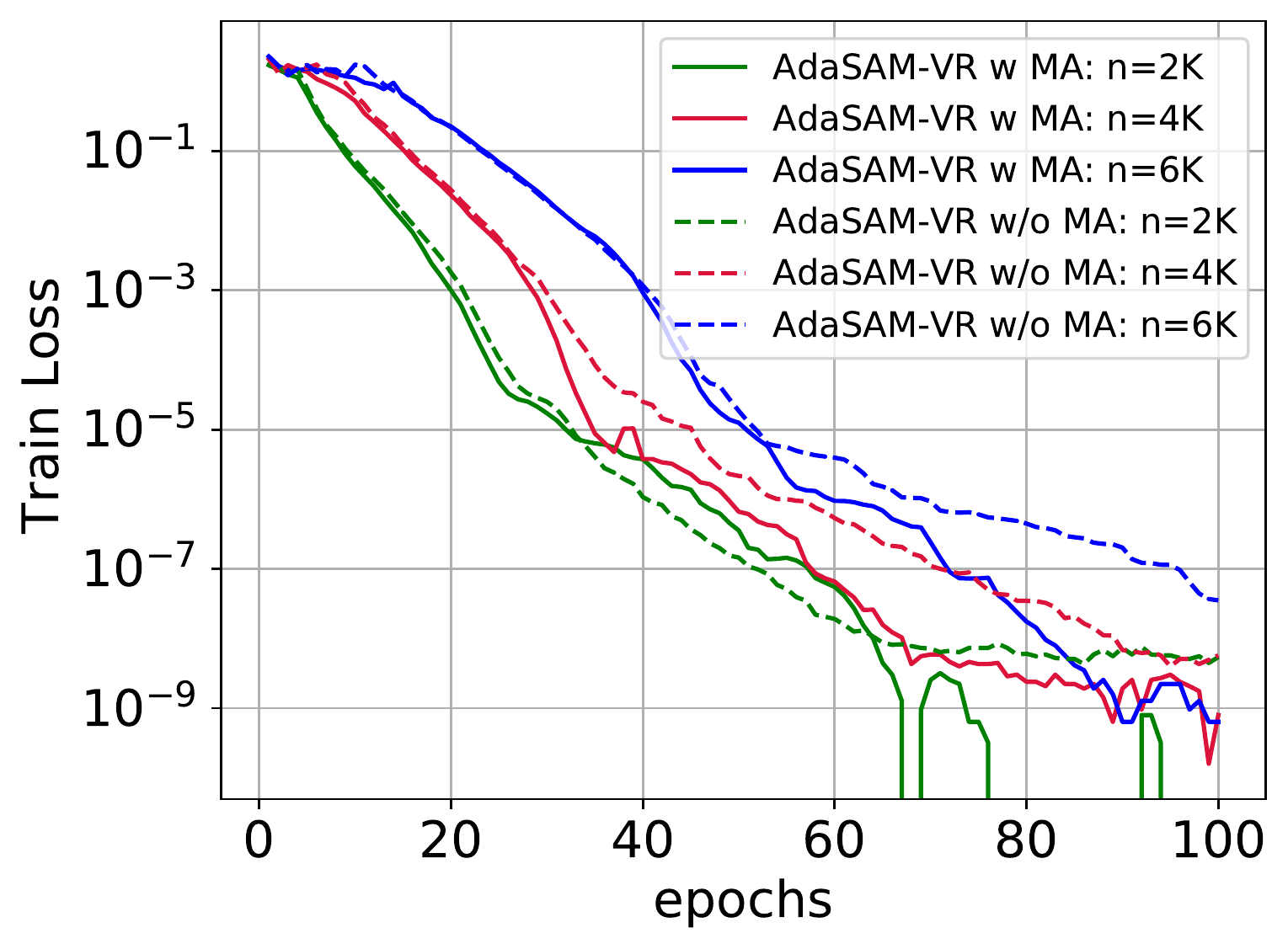}
}
\subfigure[SNG]{
\includegraphics[width=0.23\textwidth]{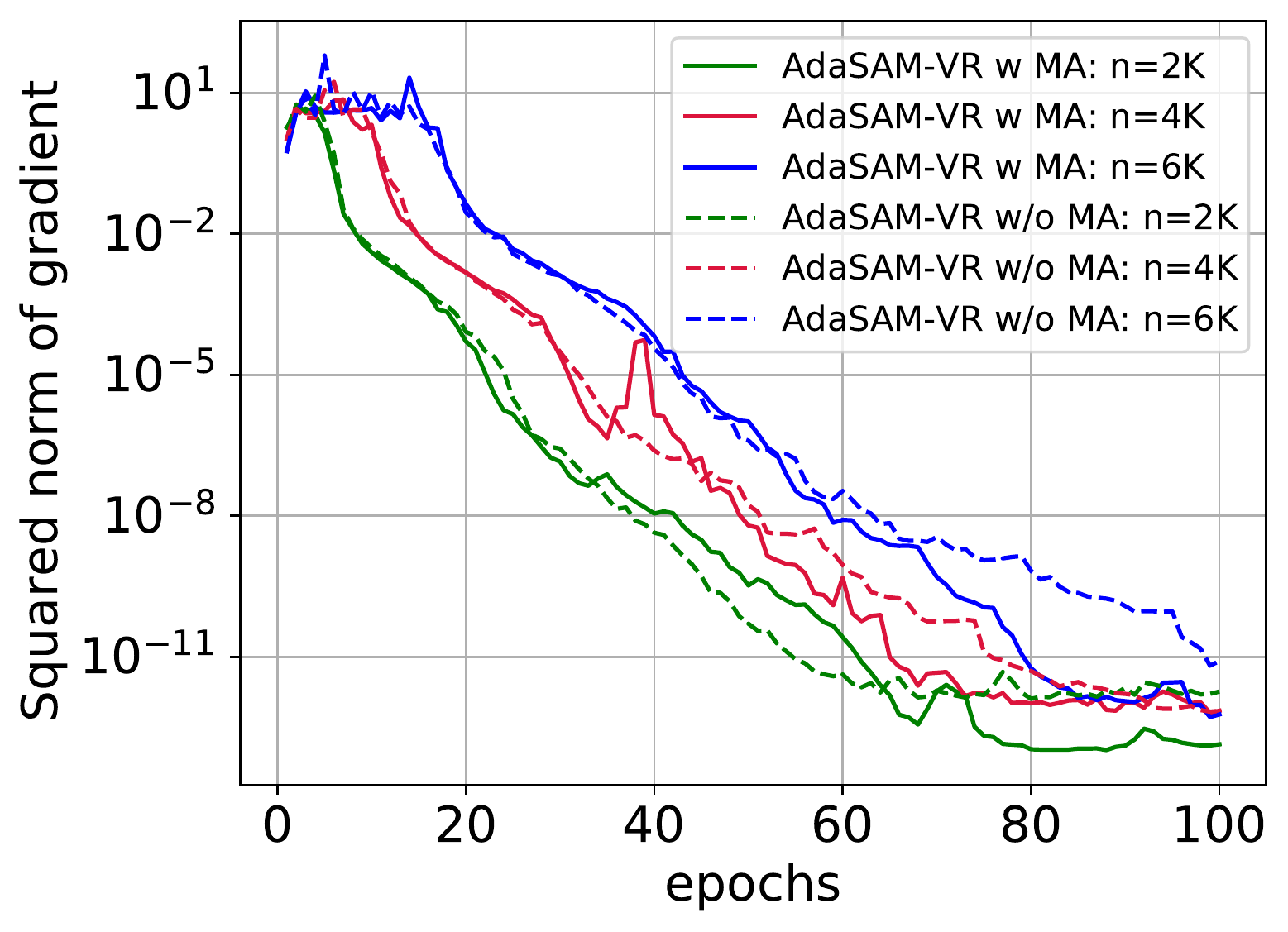}
}
\caption{Experiments on MNIST. (a)(b) Training loss and square norm of gradient (abbr. SNG) of AdaSAM with/without moving average (abbr. MA). (c)(d) Training loss and SNG of AdaSAM-VR with/without MA. Batch size $n = $2K, 4K, 6K.}
\label{fig:appendix_test_movavg_1}
\end{figure}
 
 In Figure~\ref{fig:appendix_test_movavg_1}, we report AdaSAM/AdaSAM-VR with/without moving average for mini-batch training on MNIST. Figure~\ref{fig:appendix_test_movavg_1}(a) indicates that AdaSAM without moving average stagnates when batchsize is 2K due to noise in gradient estimates. By incorporating variance reduction, AdaSAM-VR without moving average recovers the fast convergence rate. From this example, we conclude that moving average may help reduce the variability in gradient estimates and improve convergence.
 
 We also reran the experiments on CIFAR-10/CIFAR-100 to see the effect of moving average. Results are reported in Table~\ref{table:appendix_test_movavg_1} and plotted in Figure~\ref{fig:appendix_test_mvavg_2}. There seems to be no significant differences judging from final test accuracy, while AdaSAM without moving average can be faster at the beginning as indicated from Figure~\ref{fig:appendix_test_mvavg_2}.
 
 We reran the experiments on Penn TreeBank. Results are shown in  Table~\ref{table:appendix_test_mvavg_2} and Figure~\ref{fig:appendix_test_mvavg_3}. Similar to the phenomenon on CIFARs,   pAdaSAM without moving average converges faster at the beginning. However, its final validation perplexity and test perplexity is slightly suboptimal compared with pAdaSAM with moving average. 
 
 With these experimental results, we think although moving average is not needed in our theoretical analysis, it may be beneficial in stabilizing the training or improving generalization ability.
 
 \begin{table*}[ht]
\centering
\caption{ Experiments on CIFAR10/CIFAR100. WideResNet is abbreviated as WResNet. }
\label{table:appendix_test_movavg_1}
\resizebox{\textwidth}{!}{
\begin{tabular}{l c c c c c c c c c}
\toprule
 \multirow{2}{*}{Method} & \multicolumn{6}{c}{CIFAR10} & \multicolumn{3}{c}{CIFAR100}\\
 \cmidrule(lr){2-7} \cmidrule(lr){8-10}
 & ResNet18 & ResNet20 & ResNet32 & ResNet44 & ResNet56 & WResNet & ResNet18  & ResNeXt & DenseNet \\
 \cmidrule(lr){2-7} \cmidrule(lr){8-10}
AdaSAM w MA & 95.17$\pm$.10 & 92.43$\pm$.19 & 93.22$\pm$.32 & 93.57$\pm$.14 & 93.77$\pm$.12 & 95.23$\pm$.07 & 78.13$\pm$.14 & 79.31$\pm$.27 & 80.09$\pm$.52 \\
AdaSAM w/o MA & 95.22$\pm$.13 & 92.52$\pm$.09 & 93.08$\pm$.22 & 93.62$\pm$.05 & 93.89$\pm$.16 &  95.16$\pm$.04 & 78.09$\pm$.27 & 79.57$\pm$.21 & 80.03$\pm$.25 \\
\bottomrule
\end{tabular}}
\end{table*}

 \begin{figure}[ht]
\centering 
\subfigure[Train (ResNet18)]{
\includegraphics[width=0.23\textwidth]{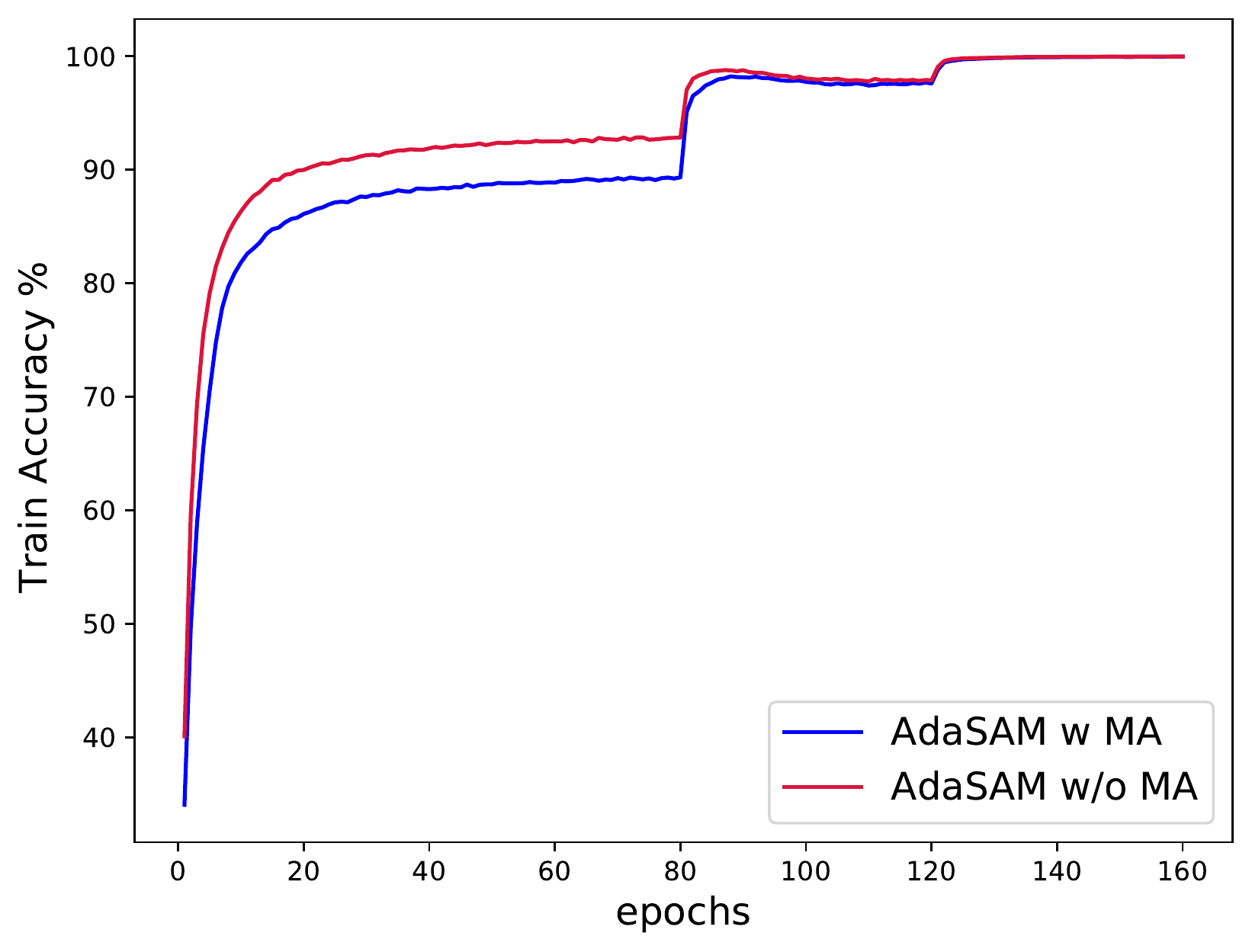}
}
\subfigure[Test (ResNet18)]{
\includegraphics[width=0.23\textwidth]{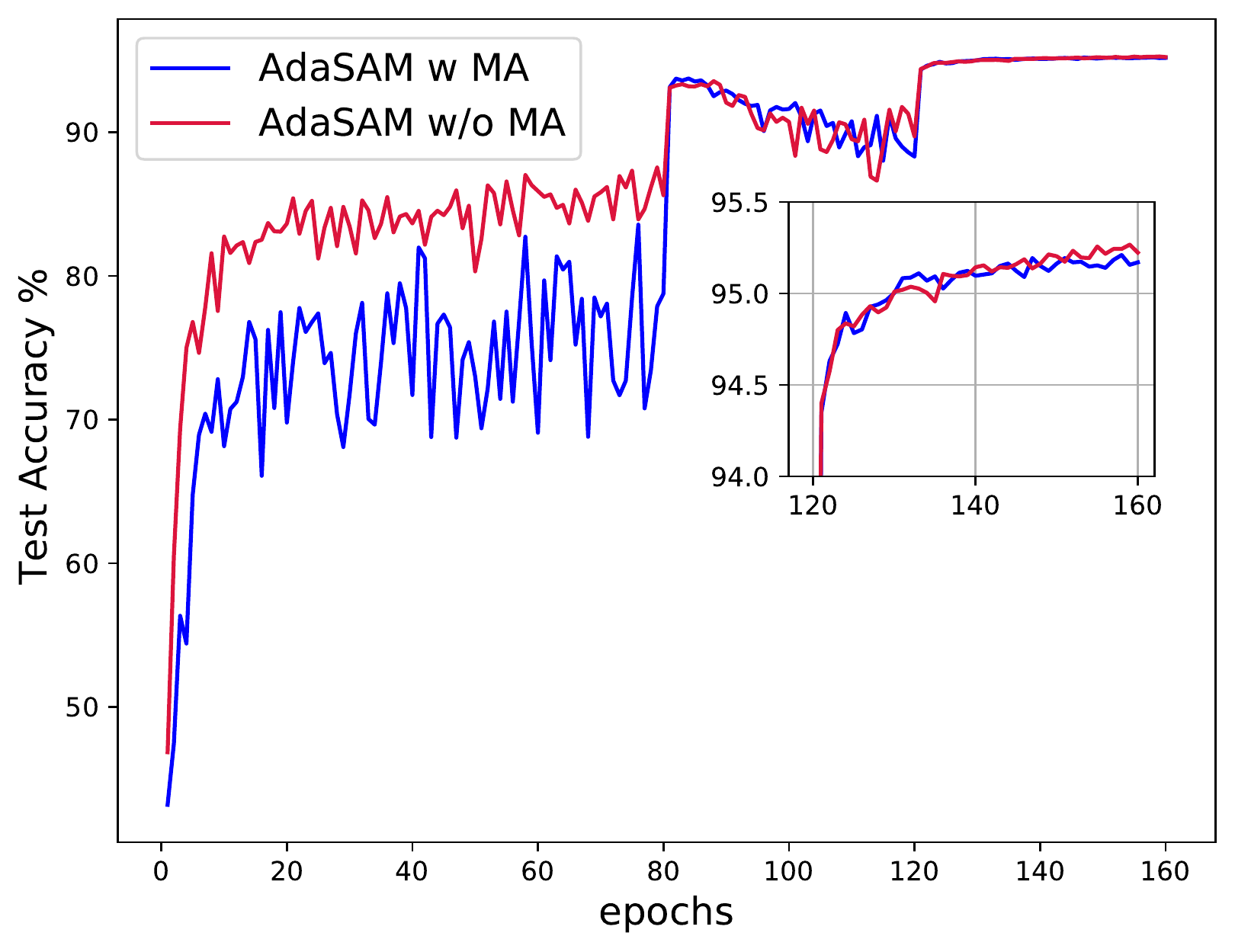}
}
\subfigure[Train (WideResNet) ]{
\includegraphics[width=0.23\textwidth]{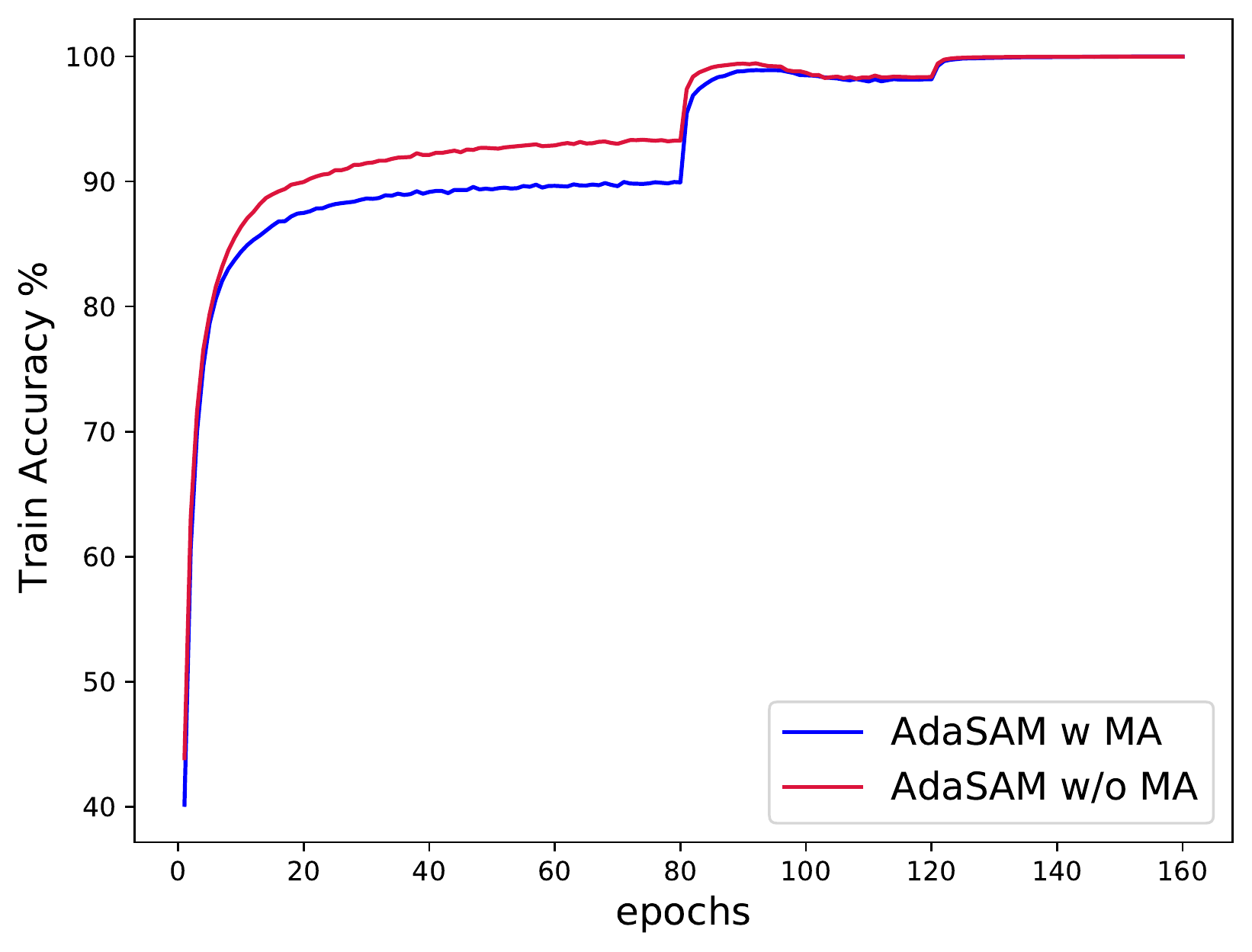}
}
\subfigure[Test (WideResNet)]{
\includegraphics[width=0.23\textwidth]{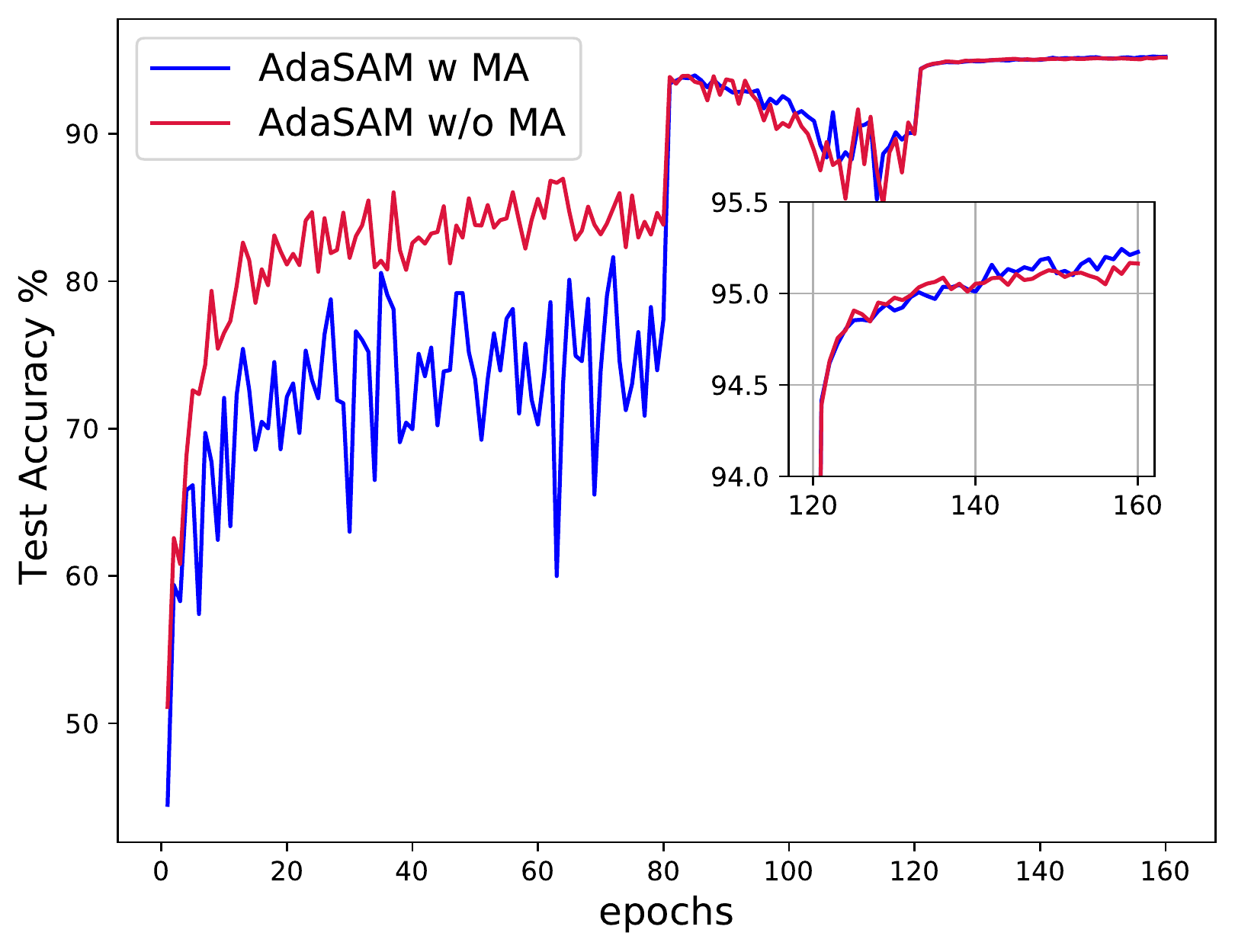}
}
\subfigure[Train (ResNeXt)]{
\includegraphics[width=0.23\textwidth]{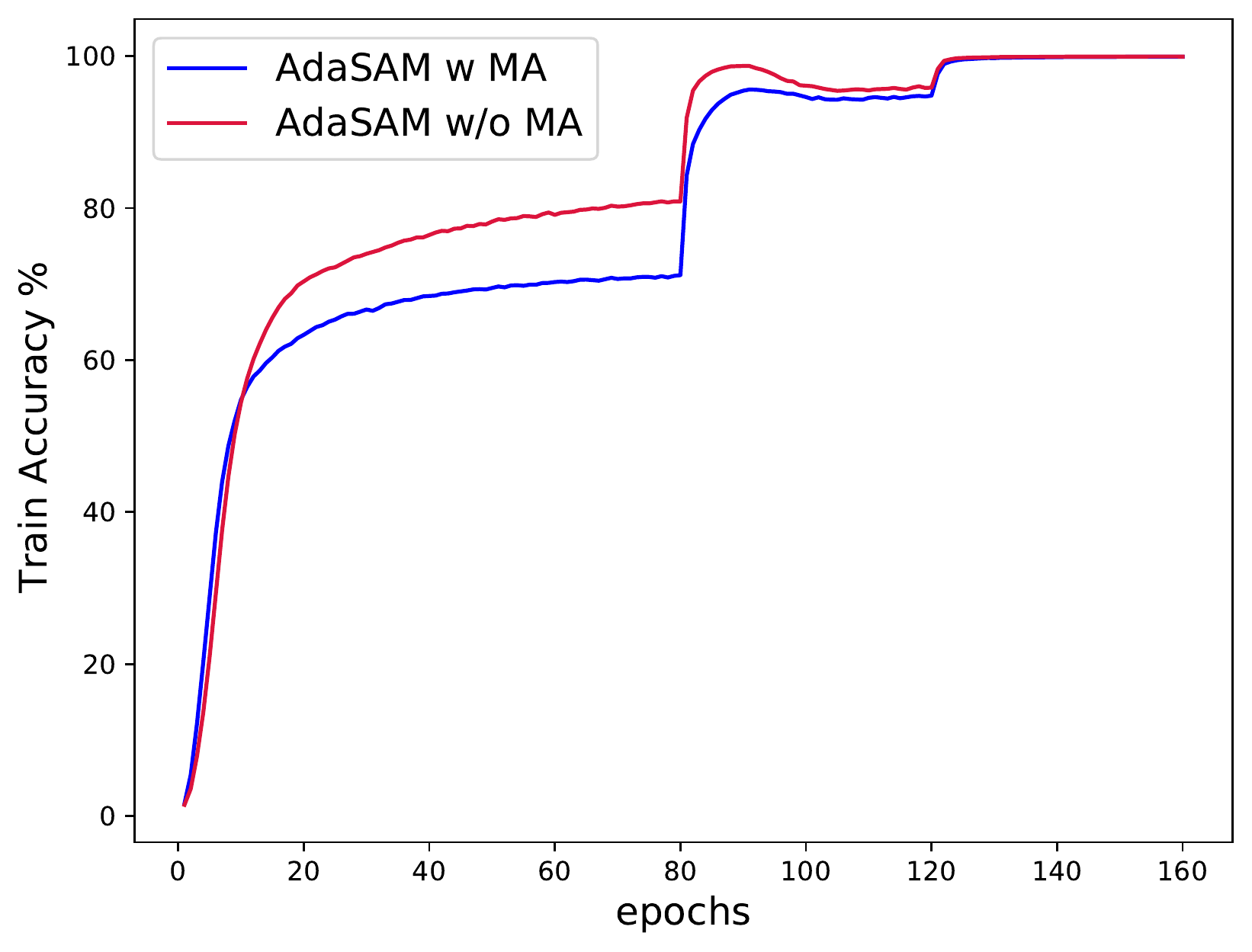}
}
\subfigure[Test (ResNeXt)]{
\includegraphics[width=0.23\textwidth]{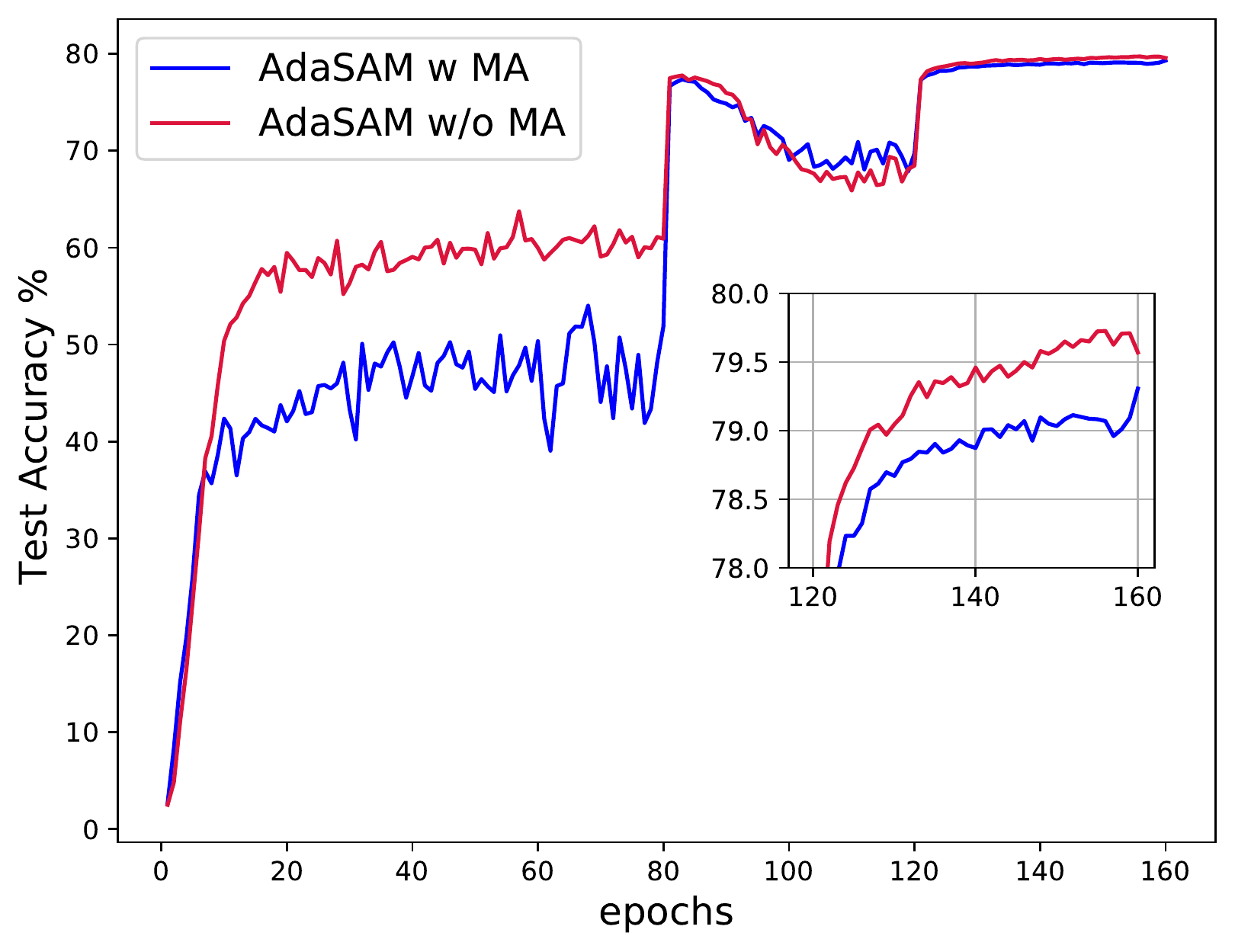}
}
\subfigure[Train (DenseNet)]{
\includegraphics[width=0.23\textwidth]{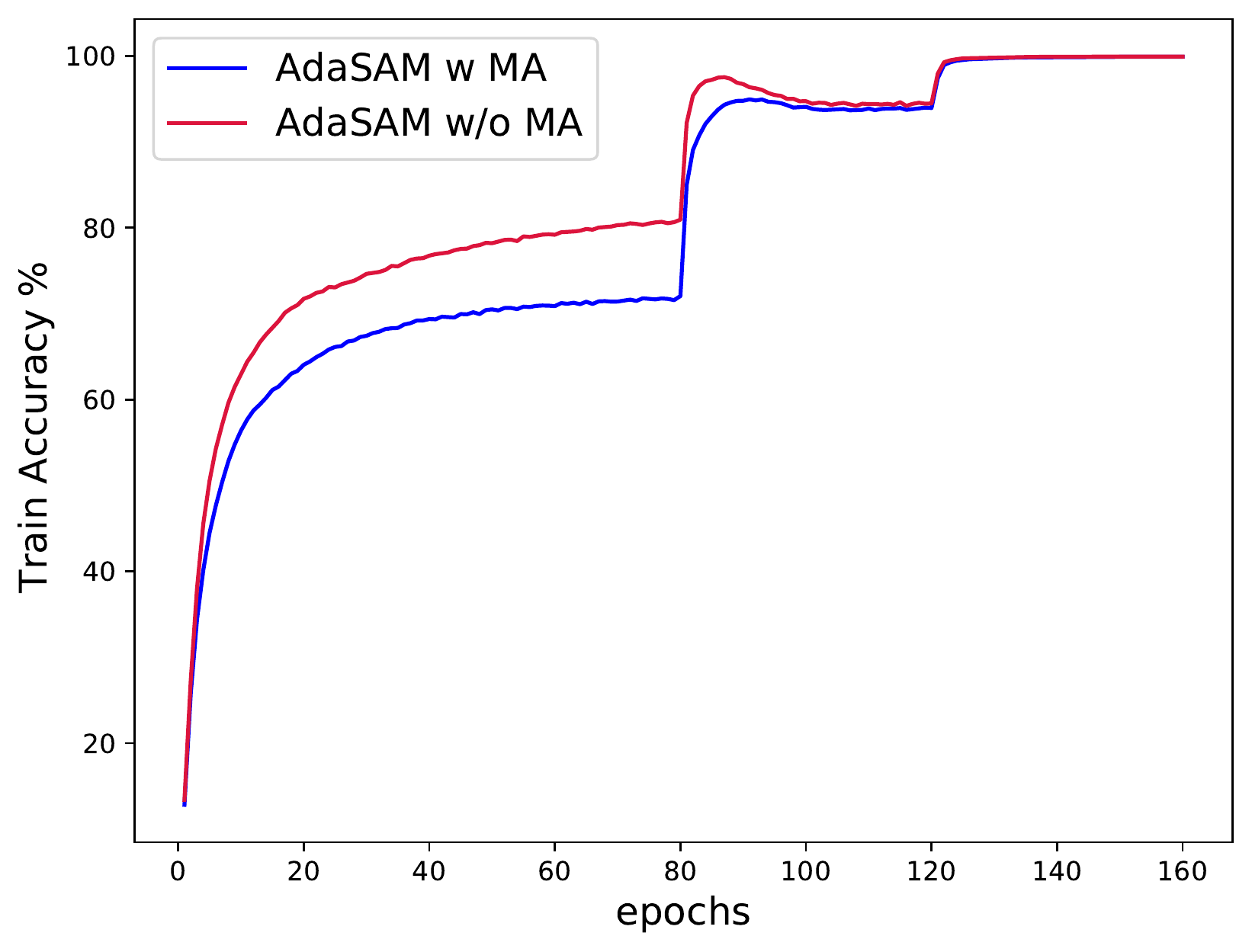}
}
\subfigure[Test (DenseNet)]{
\includegraphics[width=0.23\textwidth]{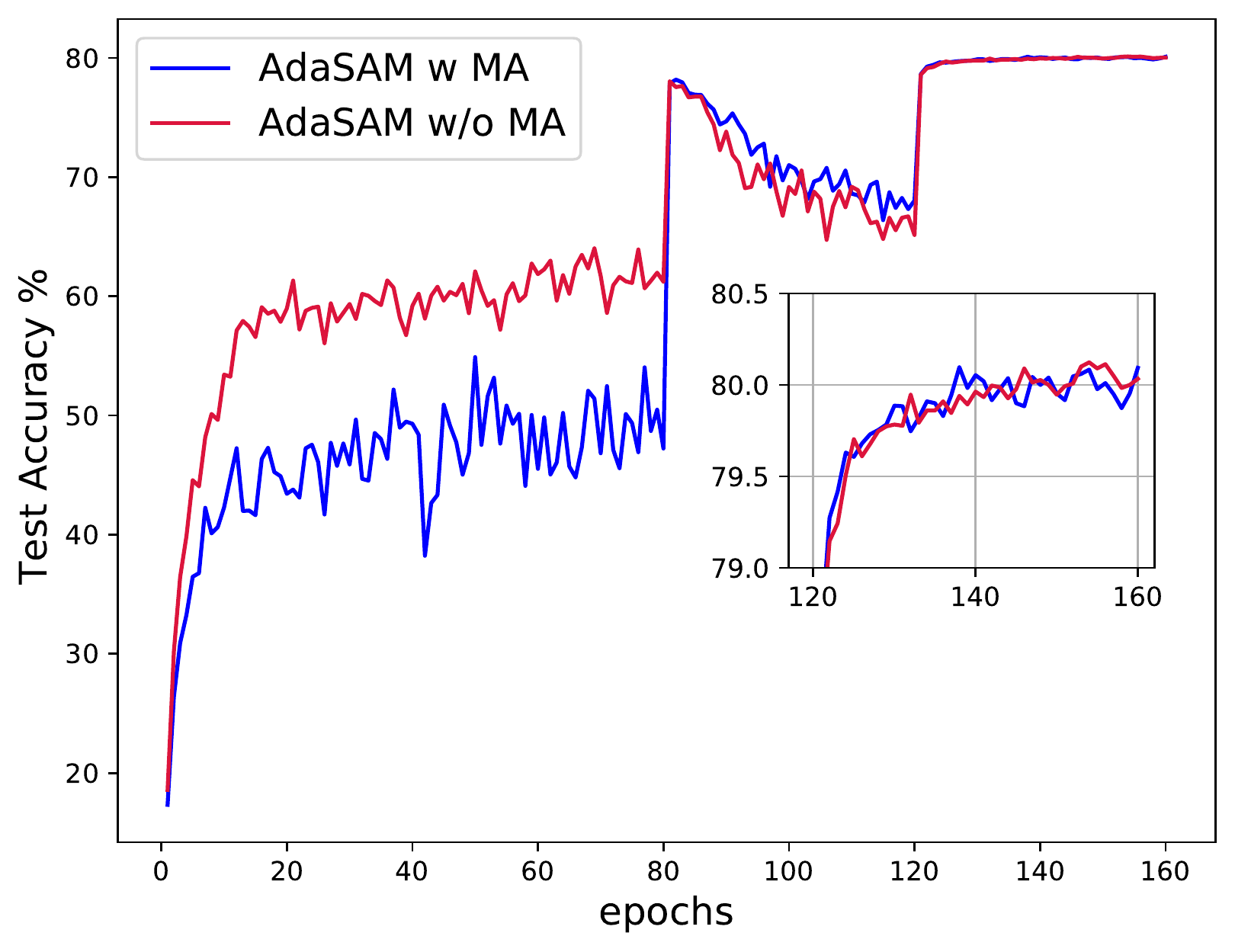}
}
\caption{Experiments on CIFARs. Training CIFAR-10/ResNet18, CIFAR-10/WideResNet16-4, CIFAR-100/ResNeXt50, and CIFAR-100/DenseNet121 using AdaSAM with moving average (abbr. MA) or AdaSAM without moving average. Curves of training accuracy and test accuracy are reported.}
\label{fig:appendix_test_mvavg_2}
\end{figure}
 
 \begin{table*}[ht]
  \caption{Test perplexity on Penn TreeBank for 1,2,3-layer LSTM. Comparison between pAdaSAM with moving average (abbr. MA) and pAdaSAM without MA.} \label{table:appendix_test_mvavg_2}
  \centering
  \begin{tabular}{lccc} 
    \toprule   	
    Method     & 1-Layer     & 2-Layer & 3-Layer \\
    \midrule
    pAdaSAM w/o MA	     & 80.27$\pm$.09   & 64.74$\pm$.02    & 59.72$\pm$.05 \\
    pAdaSAM w MA     & 79.34$\pm$.09    & 63.18$\pm$.22   & 59.47$\pm$.08 \\
    \bottomrule
  \end{tabular}
\end{table*}  
 
 \begin{figure}[H]
\centering 
\subfigure[1-Layer LSTM]{
\includegraphics[width=0.31\textwidth]{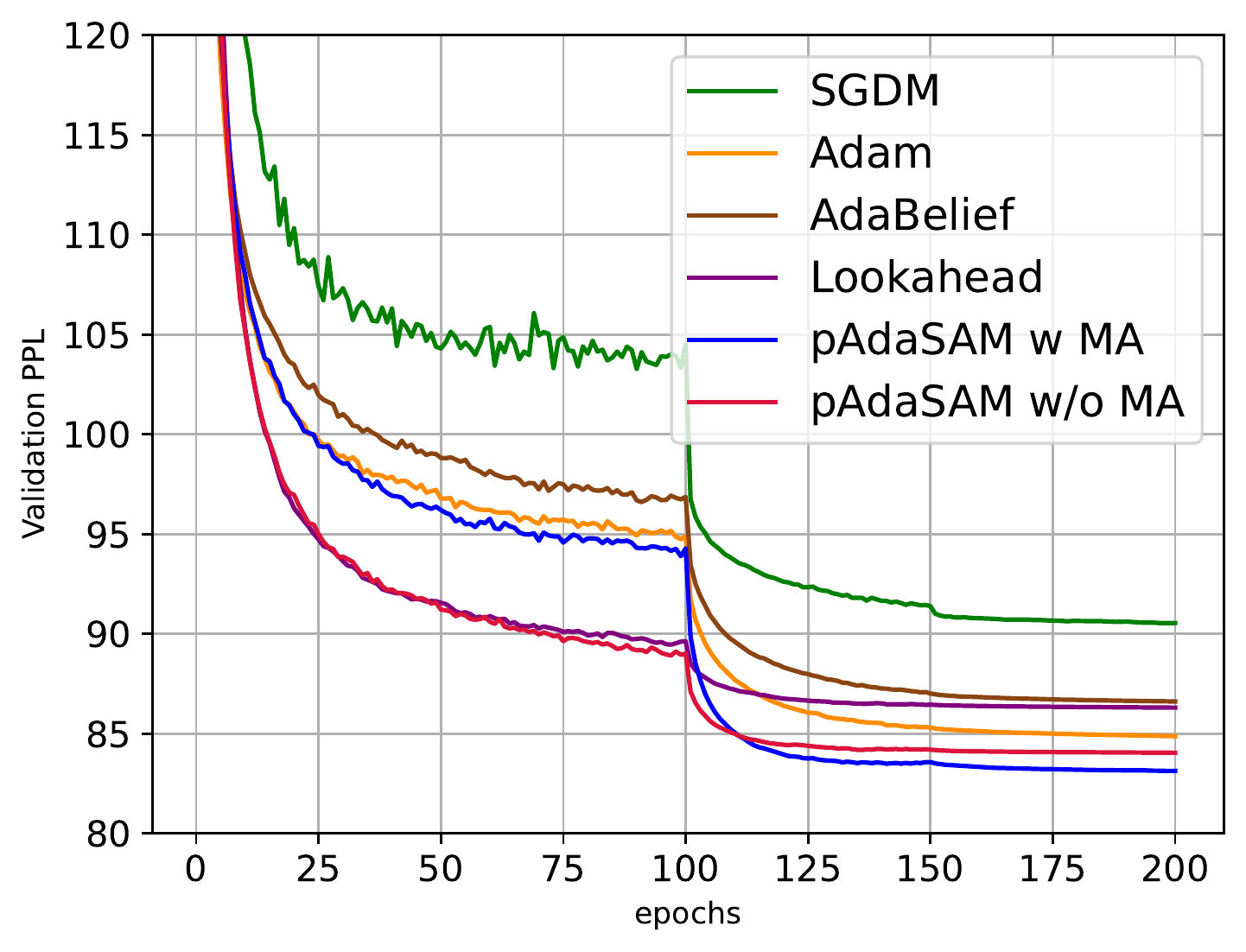}
}
\subfigure[2-Layer LSTM]{
\includegraphics[width=0.31\textwidth]{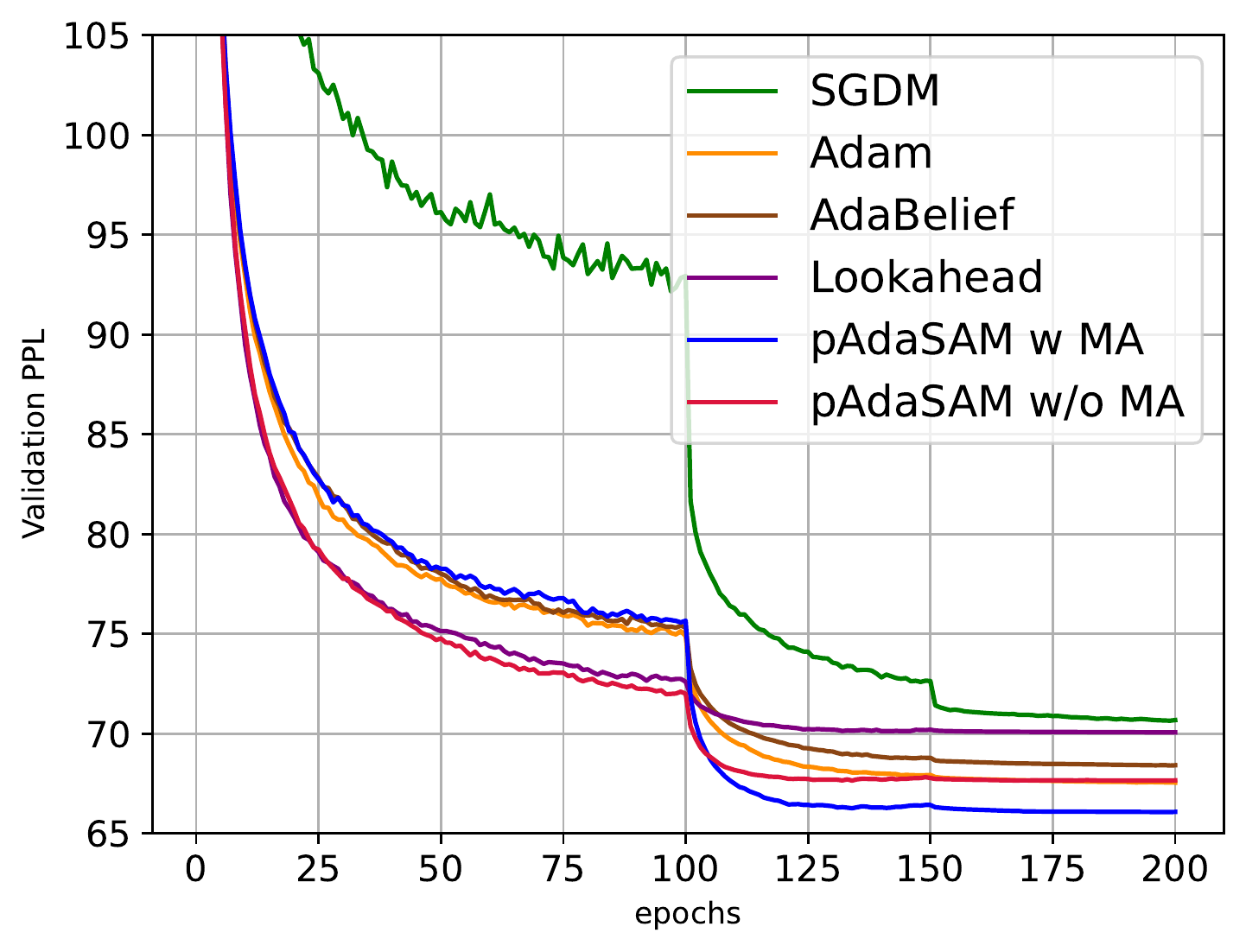}
}
\subfigure[3-Layer LSTM]{
\includegraphics[width=0.31\textwidth]{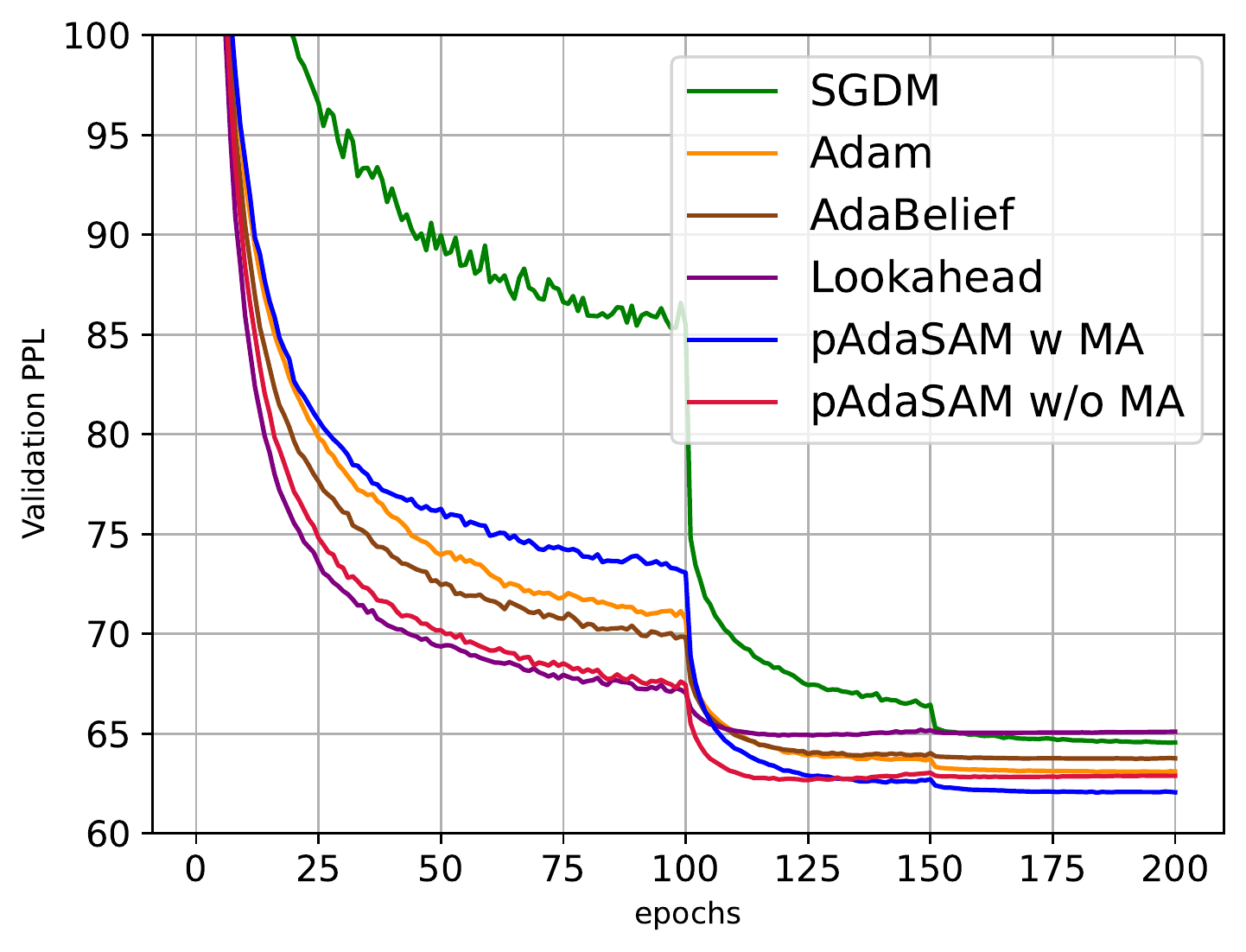}
}
\caption{Experiments on Penn TreeBank. Validation perplexity of training 1,2,3-Layer LSTM. Comparison between pAdaSAM with moving average (abbr. MA) and pAdaSAM without MA.}
\label{fig:appendix_test_mvavg_3}
\end{figure} 
 

\subsection{Additional experiments on MNIST}
 We provide some additional experiments on MNIST that is omitted in the main paper.
 \paragraph{Diminishing stepsize}
  Our theoretical analysis of SAM in Section~\ref{sec:theory} takes the diminishing condition~\eqref{cond:diminish} as an assumption of $ \beta_k $ in Theorem~\ref{them:nonconvexStochastic}, \ref{them:bounded_noisy_gradient}, \ref{them:complexity}. Nonetheless, using constant stepsize and decaying after several epochs is a common way in practice. To test the diminishing condition, we set the $t$-th epoch learning rate for SGD/Adam/SdLBFGS and the $t$-th epoch mixing parameter $ \beta_k $ for RAM/AdaSAM as $ \eta_t =  \eta_0 (1+ \lfloor t/  20 \rfloor )^{-1}$, where $ t$ denotes the number of epochs, $\eta_0$ is tuned for each optimizer. For SGD, Adam and SdLBFGS, $\eta_0$ is 0.2, 0.001, 0.1, respectively. For RAM and AdaSAM, $\eta_0$ is 2. The results of training with batch sizes of 3K and 6K are reported in Figure~\ref{fig:appendix_test_dimi}. AdaSAM still shows the better convergence rate.
  
\begin{figure}[ht]
\centering 
\subfigure[Train Loss ($n$=6K)]{
\includegraphics[width=0.23\textwidth]{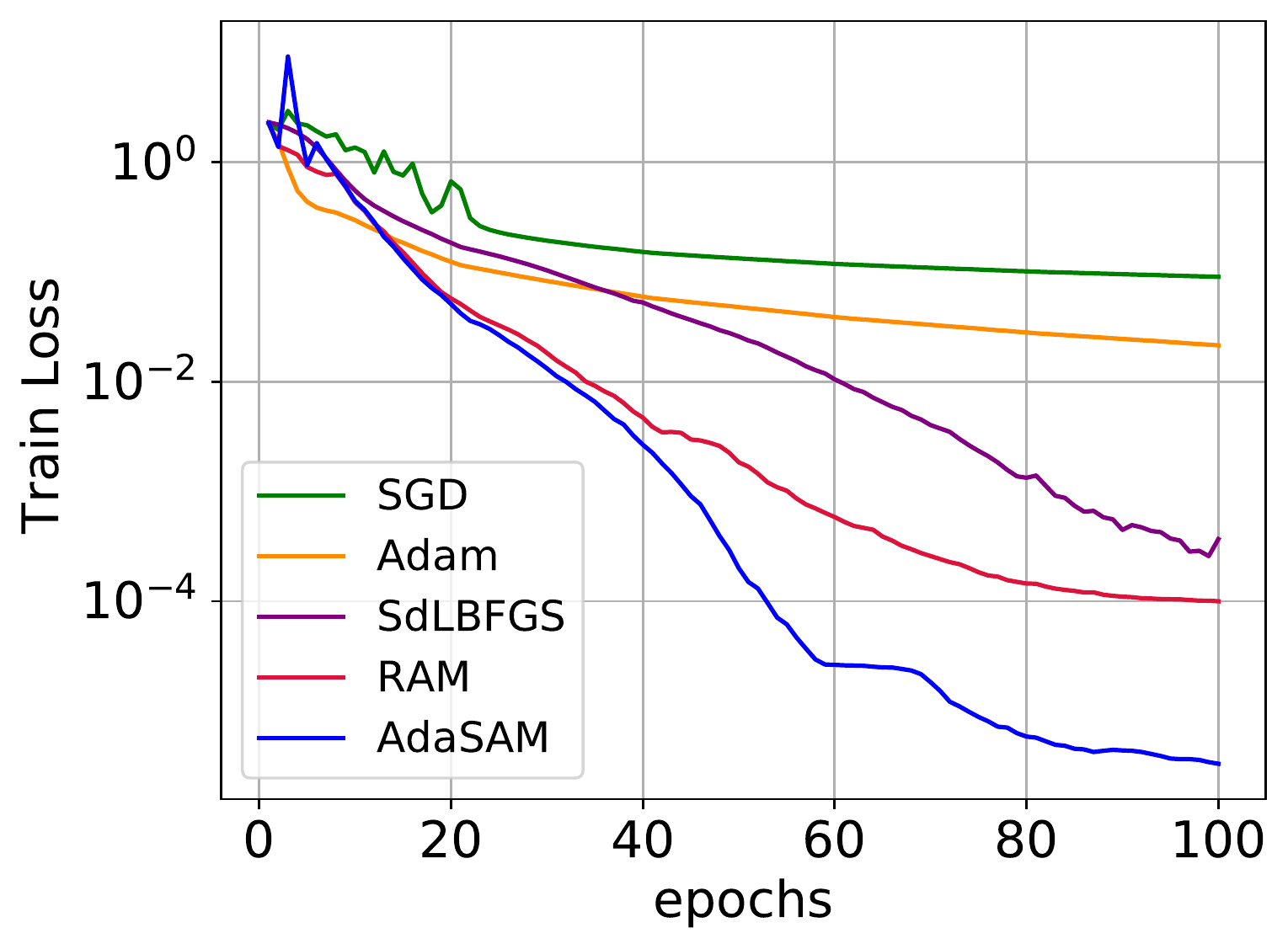}
}
\subfigure[Train SNG ($n$=6K)]{
\includegraphics[width=0.23\textwidth]{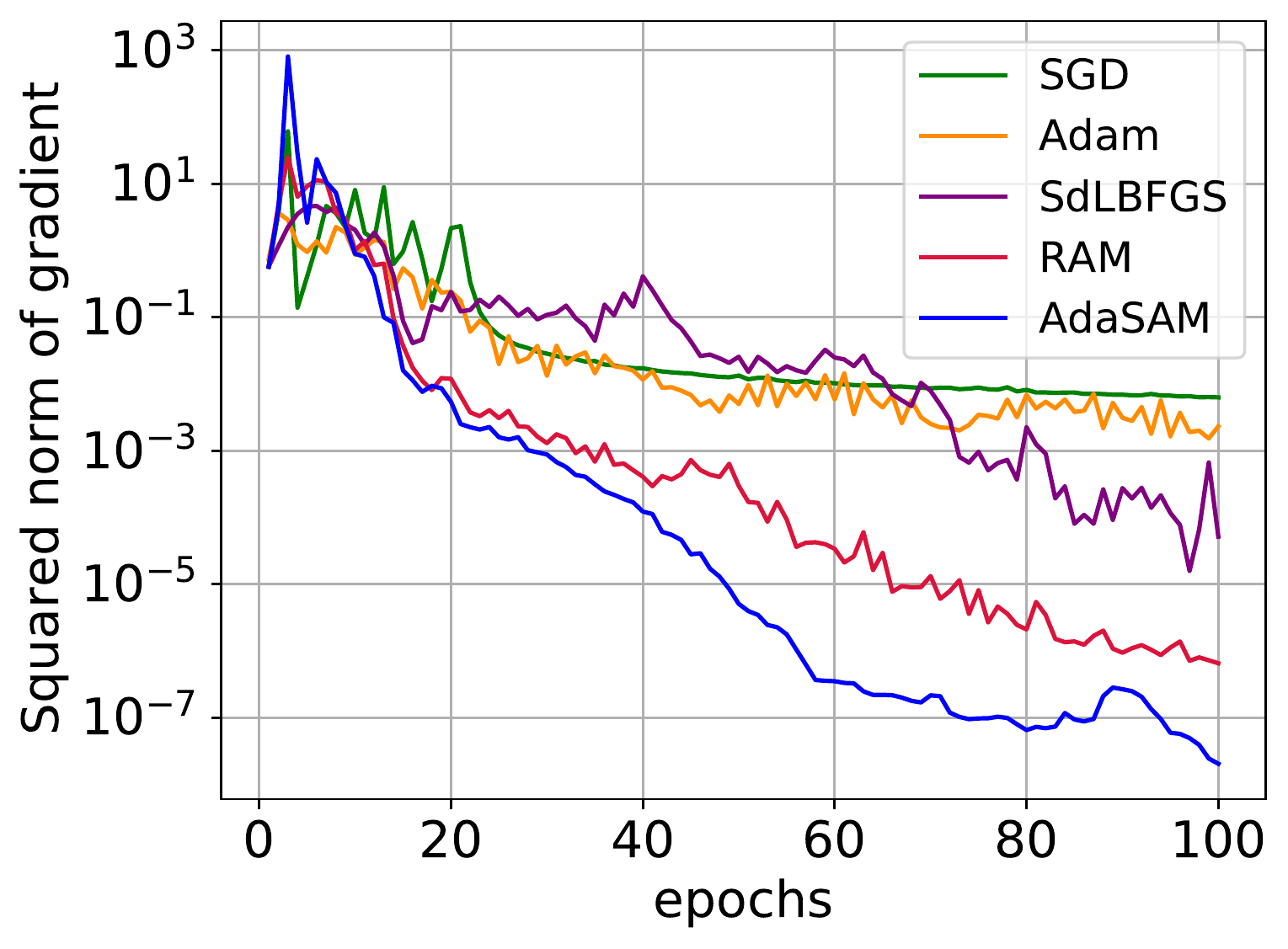}
}
\subfigure[Train Loss ($n$=3K)]{
\includegraphics[width=0.23\textwidth]{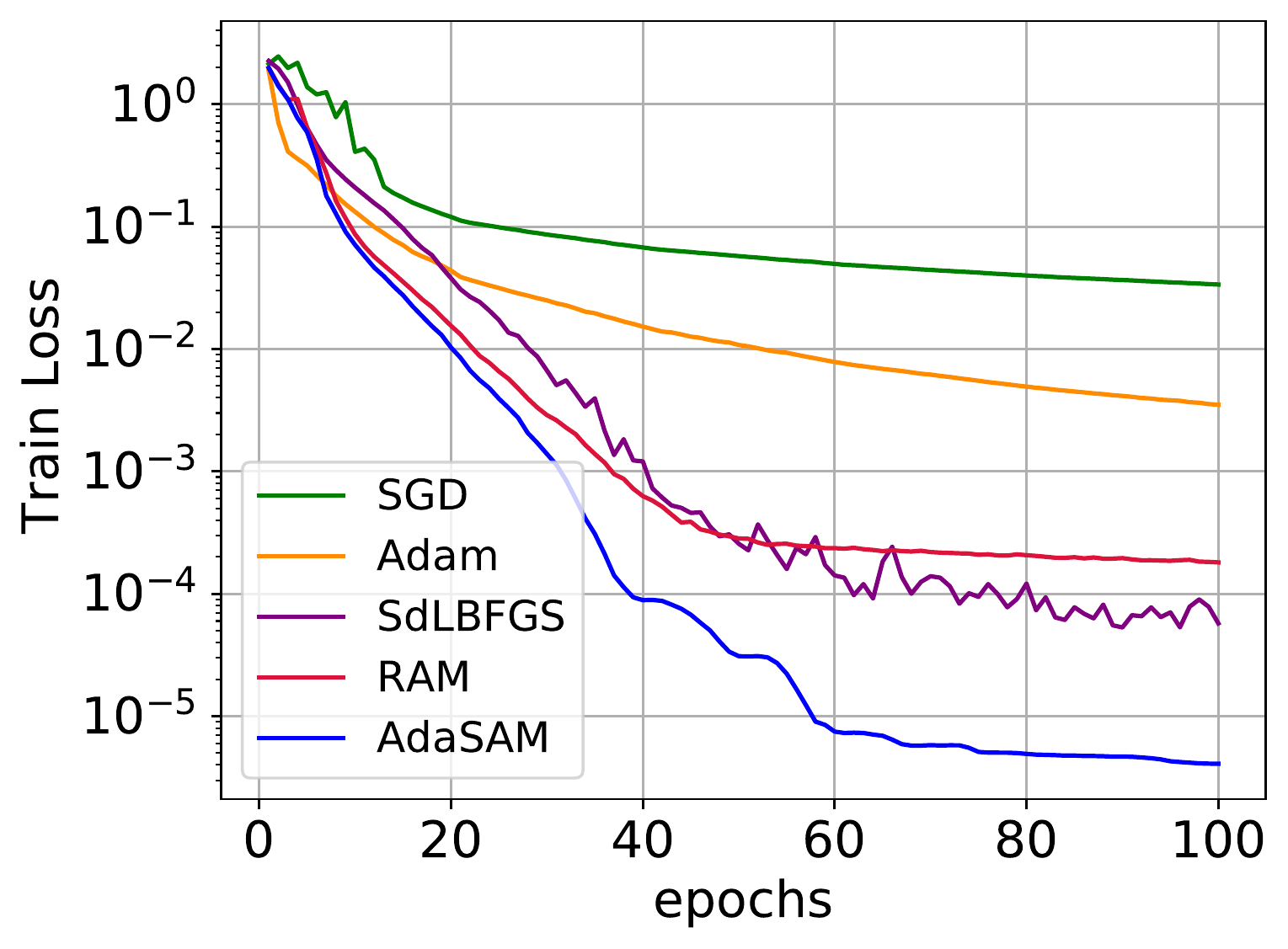}
}
\subfigure[Train SNG ($n$=3K)]{
\includegraphics[width=0.23\textwidth]{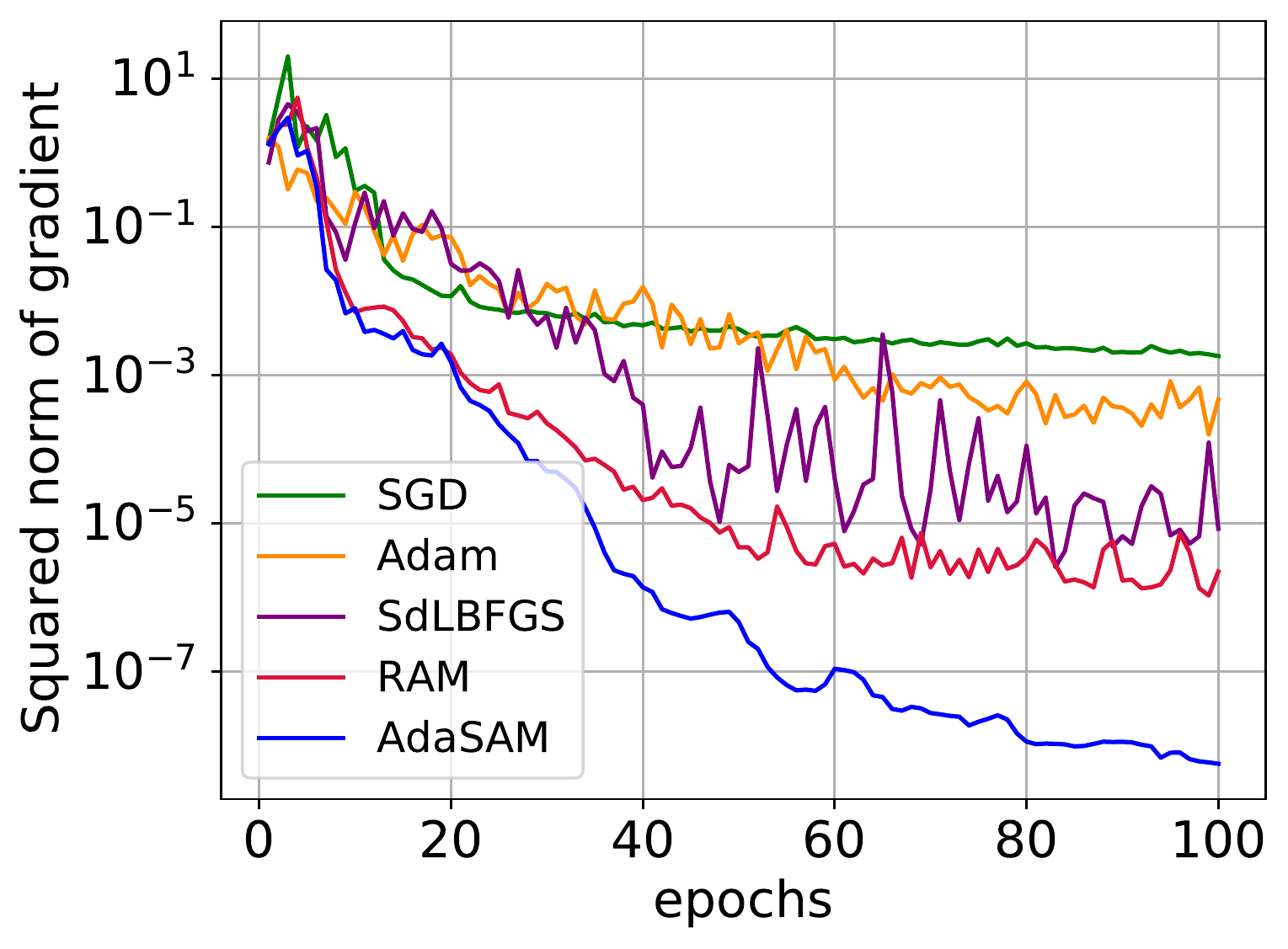}
}
\caption{Experiments on MNIST (with diminishing stepsize). (a)(b) Training loss and square norm of gradient (abbr. SNG) using batchsize $ n$= 6K; (c)(d) Training loss and SNG using $ n$= 3K; }
\label{fig:appendix_test_dimi}
\end{figure}
 
 \begin{figure}[ht]
\centering 
\subfigure[Train Loss ($n$=2K)]{
\includegraphics[width=0.23\textwidth]{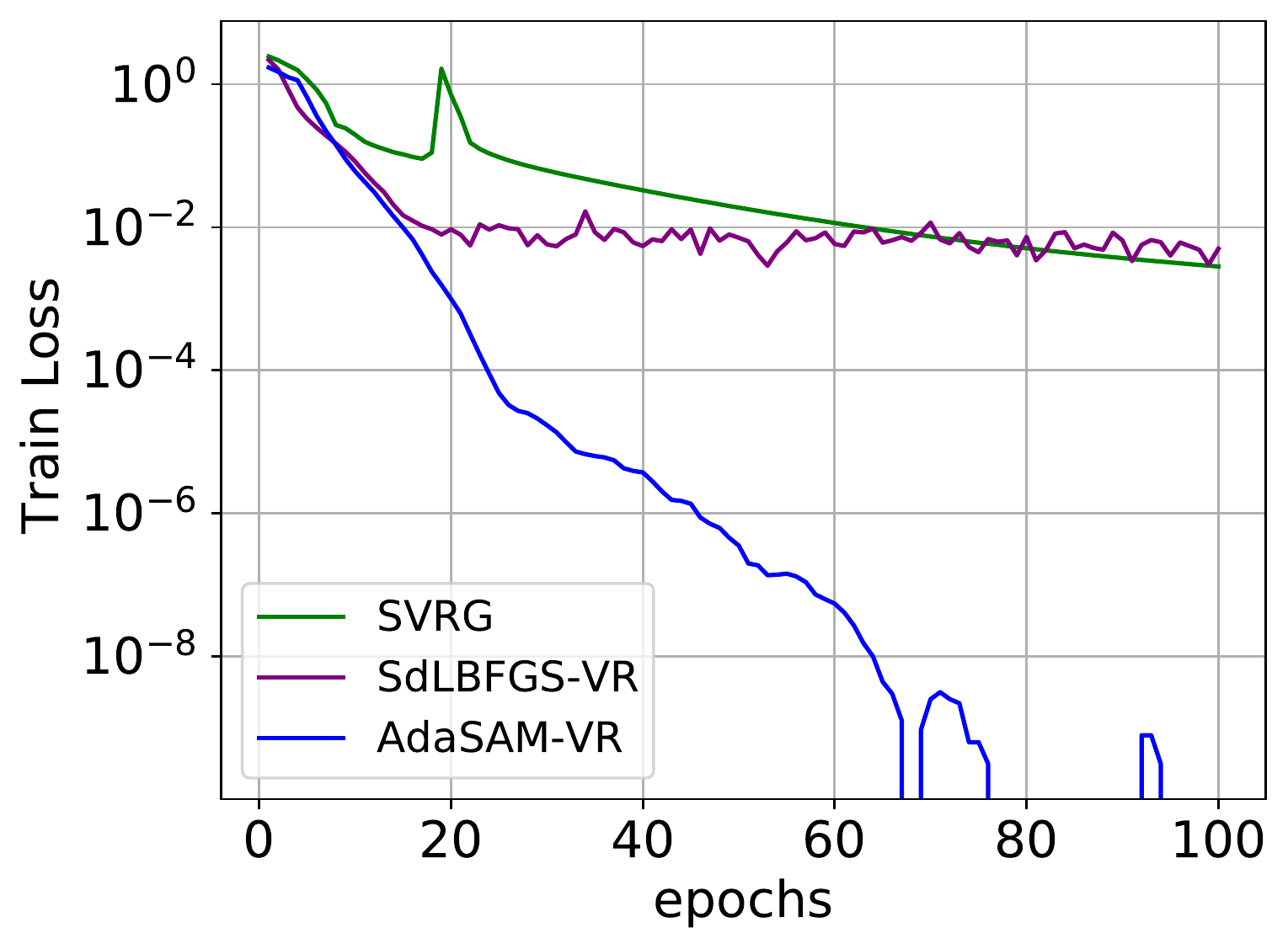}
}
\subfigure[Train SNG ($n$=2K)]{
\includegraphics[width=0.23\textwidth]{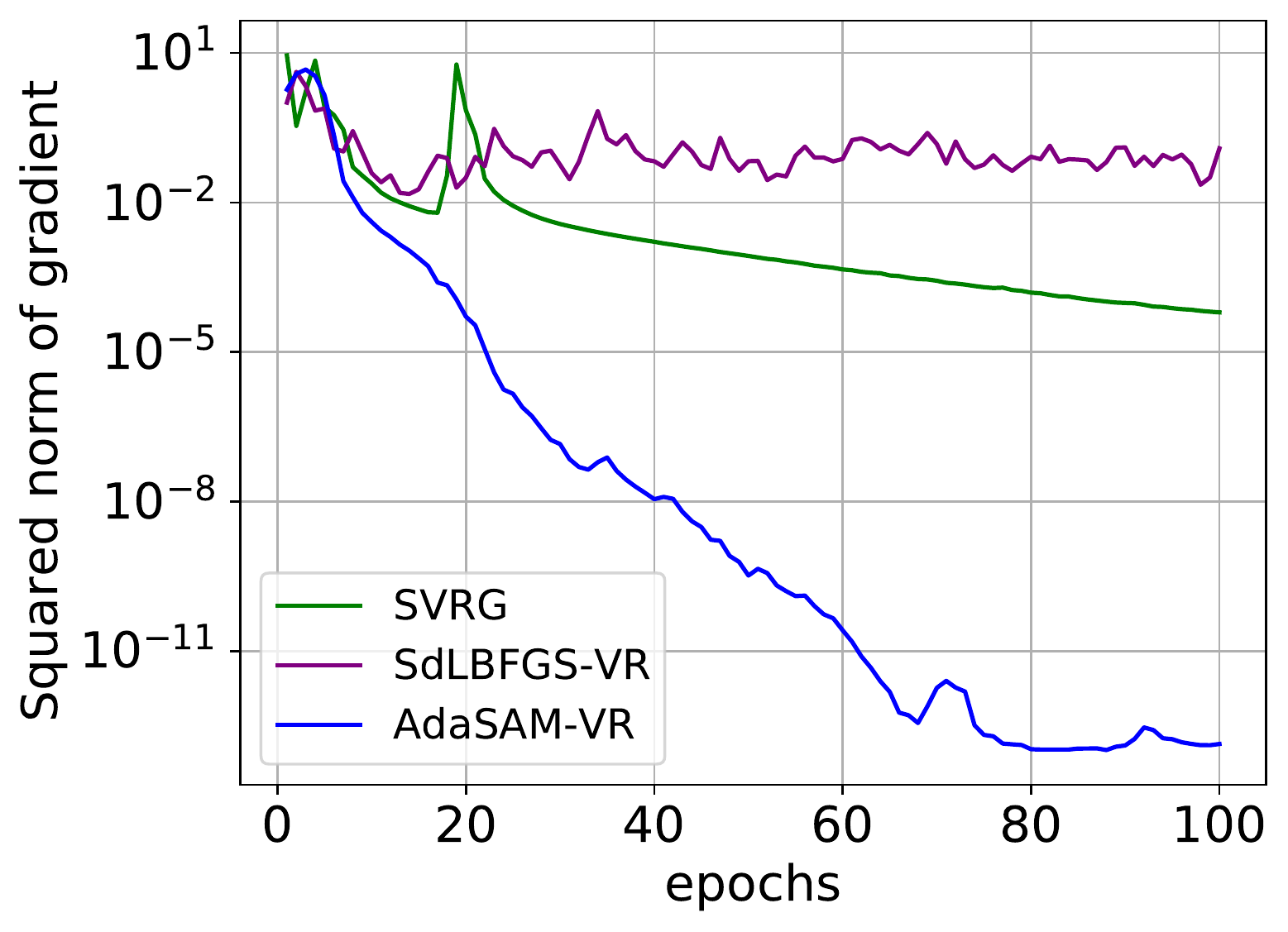}
}
\subfigure[Train Loss ($n$=4K)]{
\includegraphics[width=0.23\textwidth]{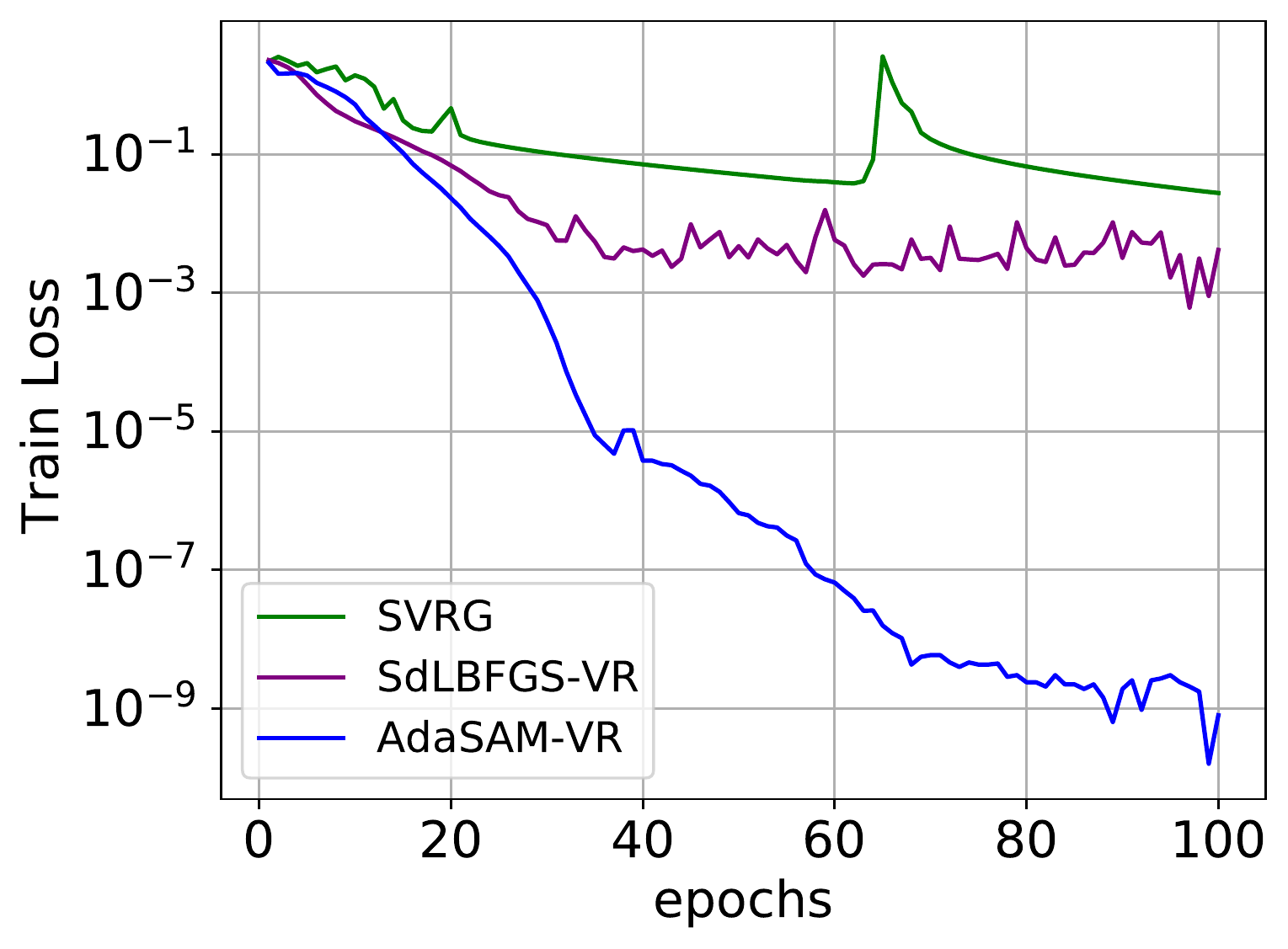}
}
\subfigure[Train SNG ($n$=4K)]{
\includegraphics[width=0.23\textwidth]{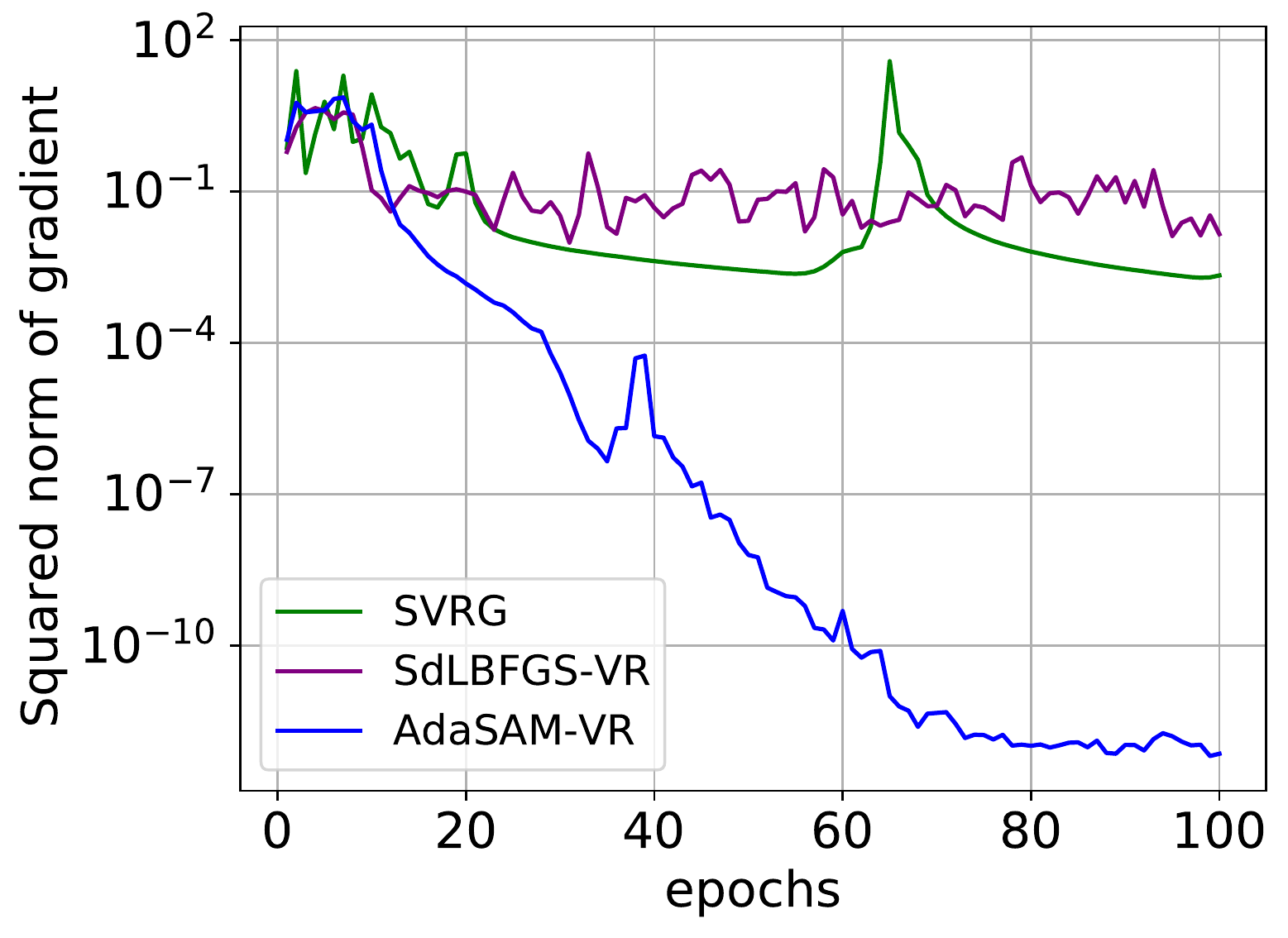}
}
\caption{Experiments on MNIST (with variance reduction). (a)(b) Training loss and square norm of gradient (abbr. SNG) using batchsize $ n$= 2K; (c)(d) Training loss and  SNG using  $ n$= 4K; }
\label{fig:appendix_test_cmpvr}
\end{figure}
  
 \paragraph{Comparisons with SVRG and SdLBFGS-VR}
 We also compared with SVRG and SdLBFGS-VR \citep{wang2017stochastic}. The learning rates for SVRG and SdLBFGS-VR are 0.2 and 0.1. Results of training with batch sizes of 2K and 4K are shown in Figure~\ref{fig:appendix_test_cmpvr}. We find that AdaSAM-VR is more effective compared with the other two variance reduced optimizers.
 
\subsection{Discussion about the hyperparameters}
 As explained in Section~\ref{subsec:para}, though at first glance AdaSAM has several hyperparameters to tune, we actually only need to individually tune the regularization parameter $ c_1 $ except for other common hyperparameter such as weight-decay in almost all the cases. For example, setting $c_1 = 10^{-2}$ is fairly robust in our experiments in image classification on CIFARs and language model on Penn TreeBank. We tested various deep neural networks on CIFAR-10 and CIFAR-100, while the hyperparameters were kept unchanged across different tests. 
 
 We conducted  tests to see the effect of the historical length $ m $ in AdaSAM and pAdaSAM. As pointed in \citep{byrd2016stochastic} that the quasi-Newton updating is inherently an overwriting process rather than an average process, large noise in gradient estimates can make a secant method  rather unstable. On the contrary, since AM is identified as a multisecant method, it leverages more secant conditions in one update which may alleviate the negative impact of a noisy secant condition. Hence, AM may be more tolerant to noise. The historical length $ m $ determines how many secant conditions are taken into consideration at one time, so a larger $ m $ is supposed to make AdaSAM more tolerant to noise.
 
 \begin{figure}[ht]
\centering 
\subfigure[Train Loss ]{
\includegraphics[width=0.23\textwidth]{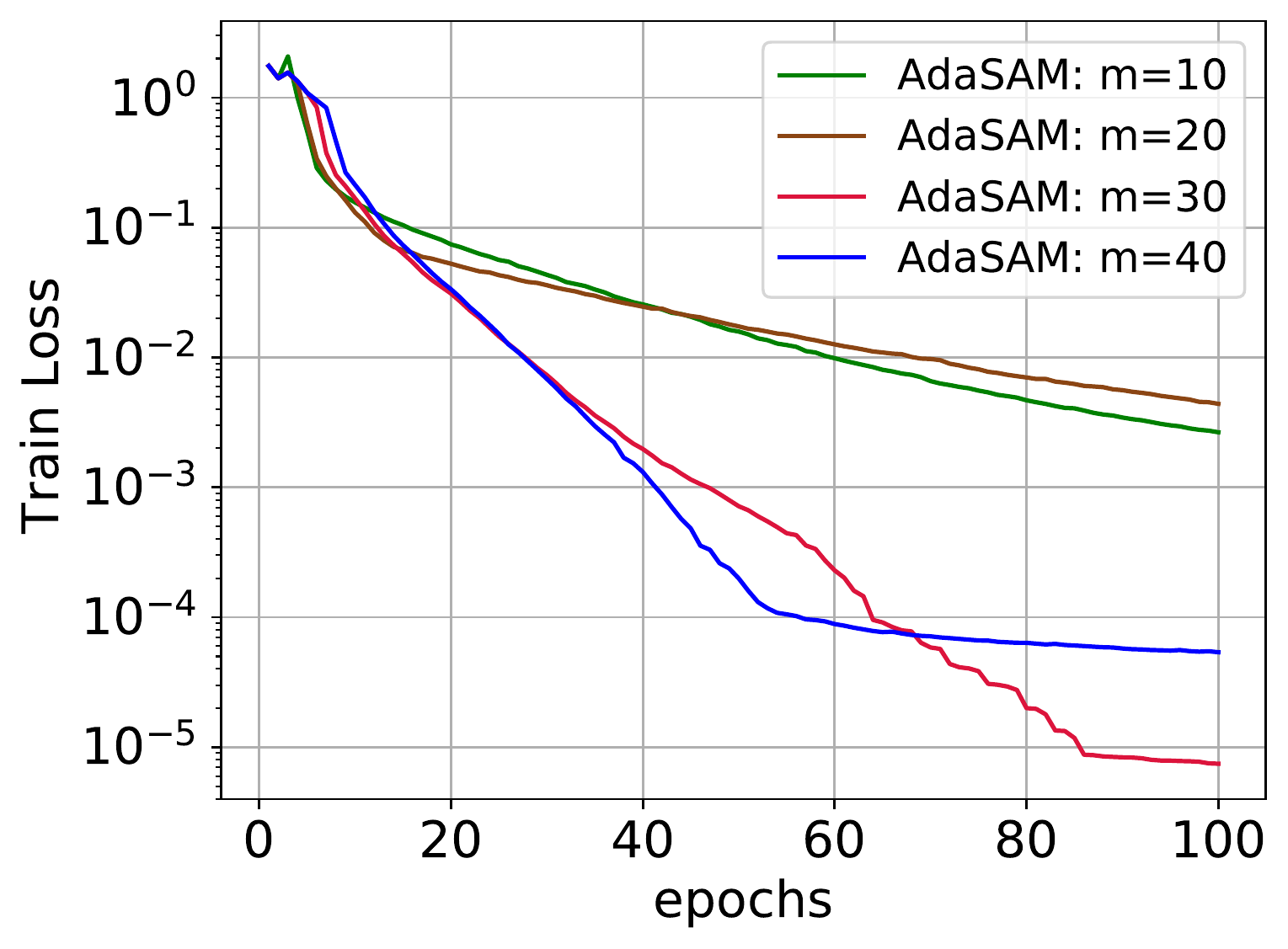}
}
\subfigure[Train SNG ]{
\includegraphics[width=0.23\textwidth]{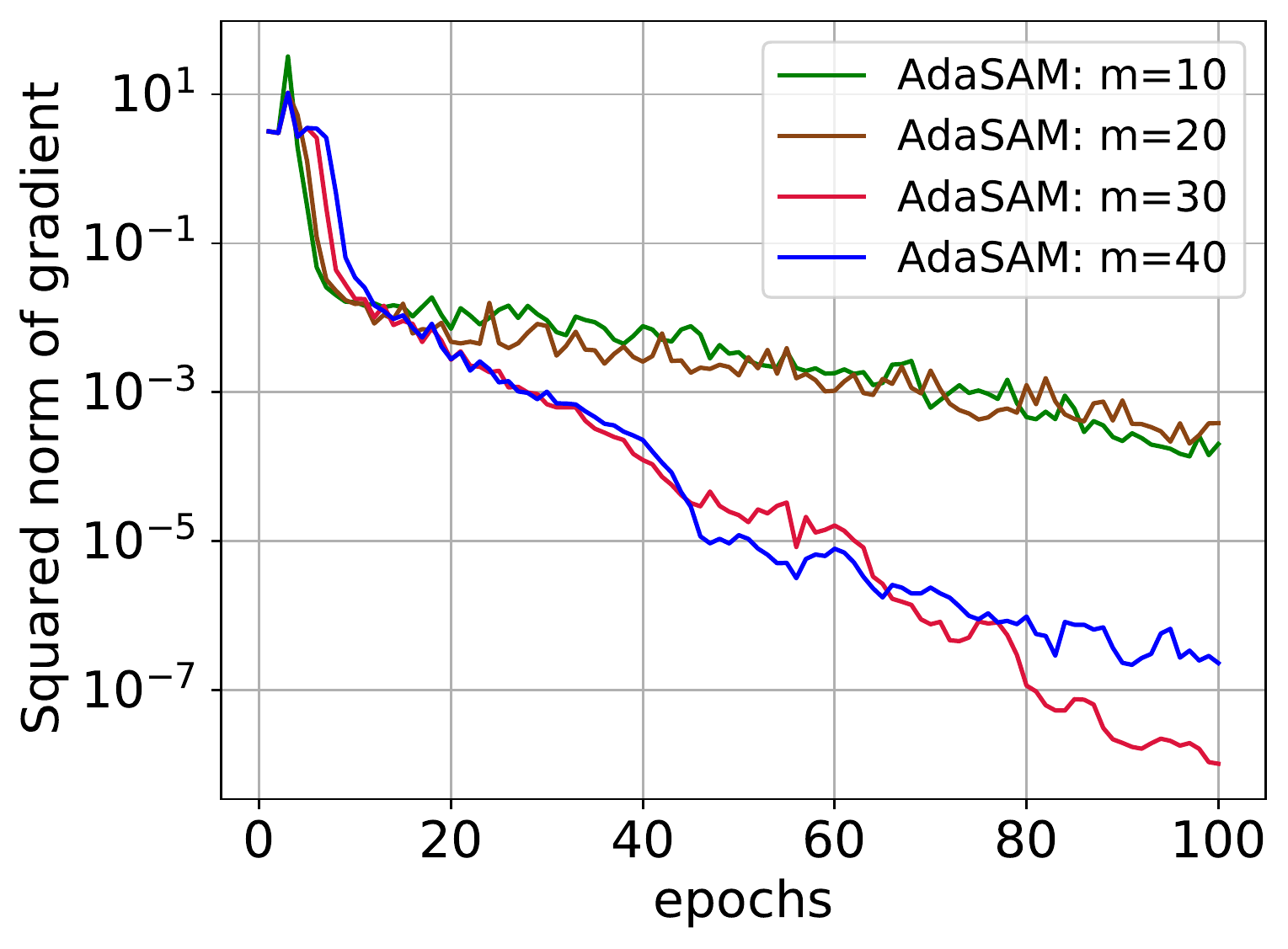}
}
\caption{Experiments on MNIST. Training loss and square norm of gradient (abbr. SNG) using batchsize $ n$= 2K. AdaSAM without moving average and $ m = 10,20,30,40$. }
\label{fig:appendix_test_hist1}
\end{figure}

 We set $ m=20 $ in the experiments on MNIST. In Figure~\ref{fig:appendix_test_movavg_1}(a) and (b), we find AdaSAM without moving average stagnates when training with batchsize = 2K. We set $ m $ to 10,30,40 to see if any difference happens. The result is shown in Figure~\ref{fig:appendix_test_hist1}, from which we see using a larger $ m=30 $ did help convergence. Further increasing $ m=40 $ does not lead to  lower training loss, which may be due to the potential numerical weakness in solving \eqref{adalsq} with \eqref{AdaRAA:Gamma_k} directly.
 
 \begin{figure}[ht]
\centering 
\subfigure[Train Accuracy]{
\includegraphics[width=0.23\textwidth]{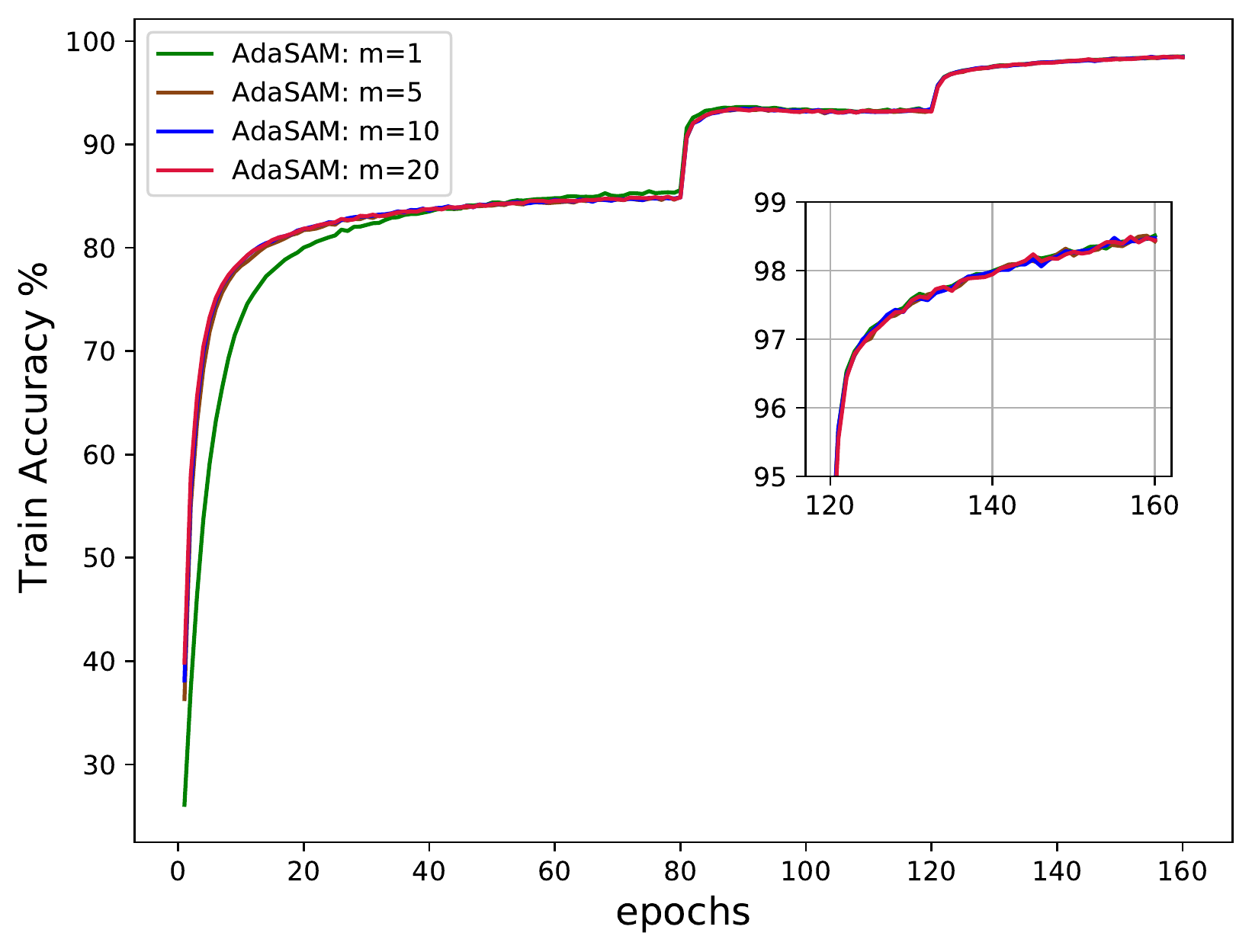}
}
\subfigure[Test Accuracy]{
\includegraphics[width=0.23\textwidth]{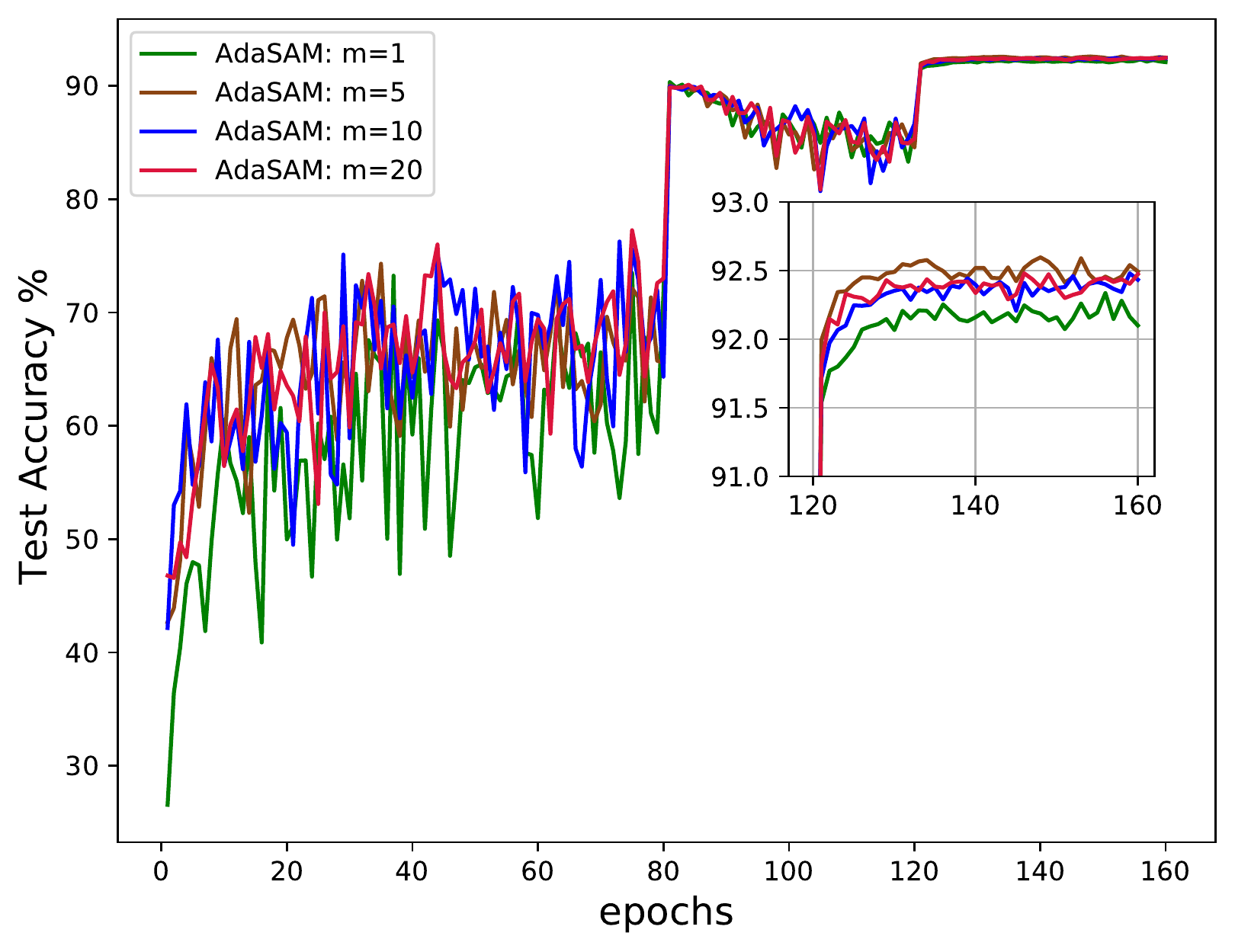}
}
\subfigure[Train Perplexity]{
\includegraphics[width=0.23\textwidth]{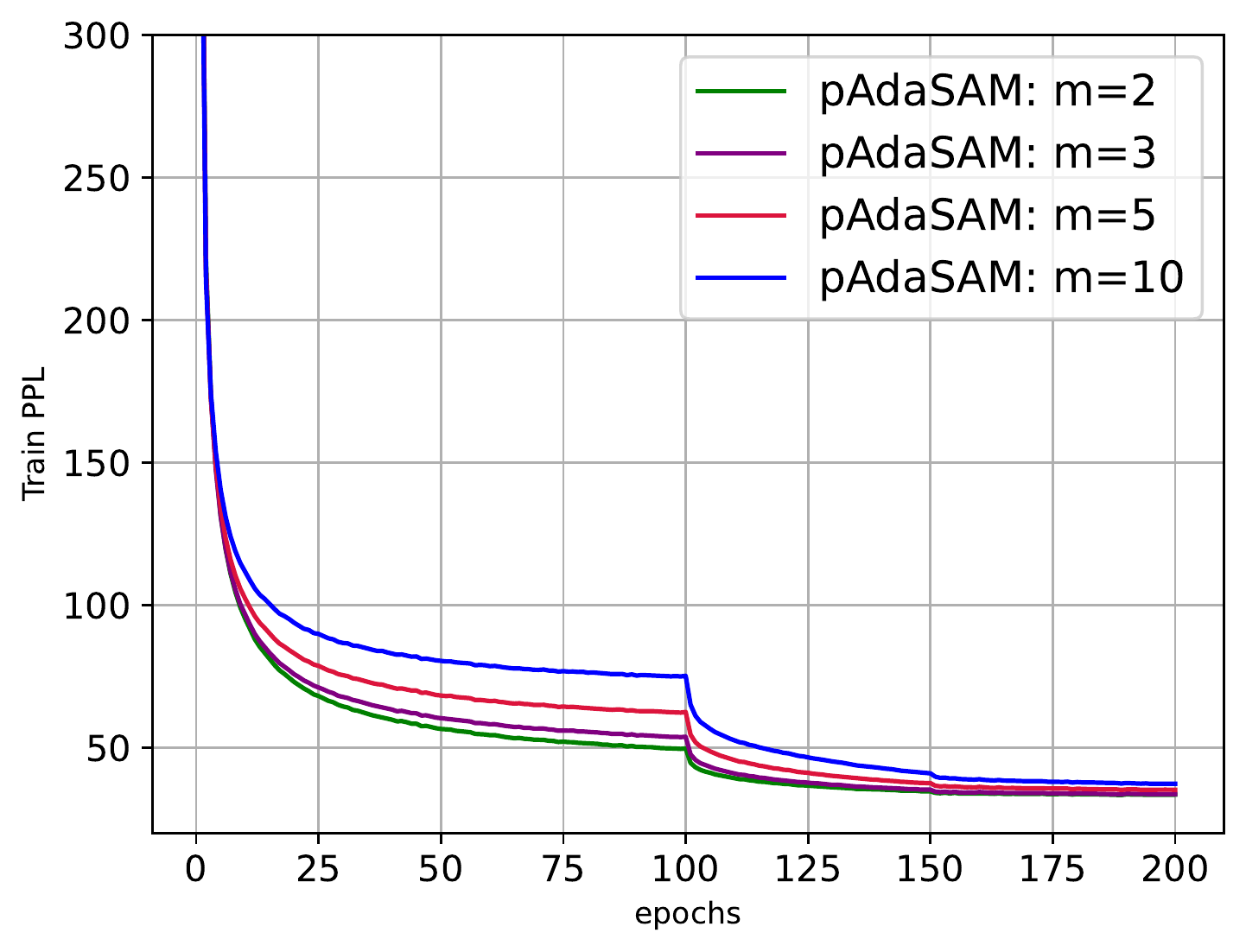}
}
\subfigure[Validation Perplexity]{
\includegraphics[width=0.23\textwidth]{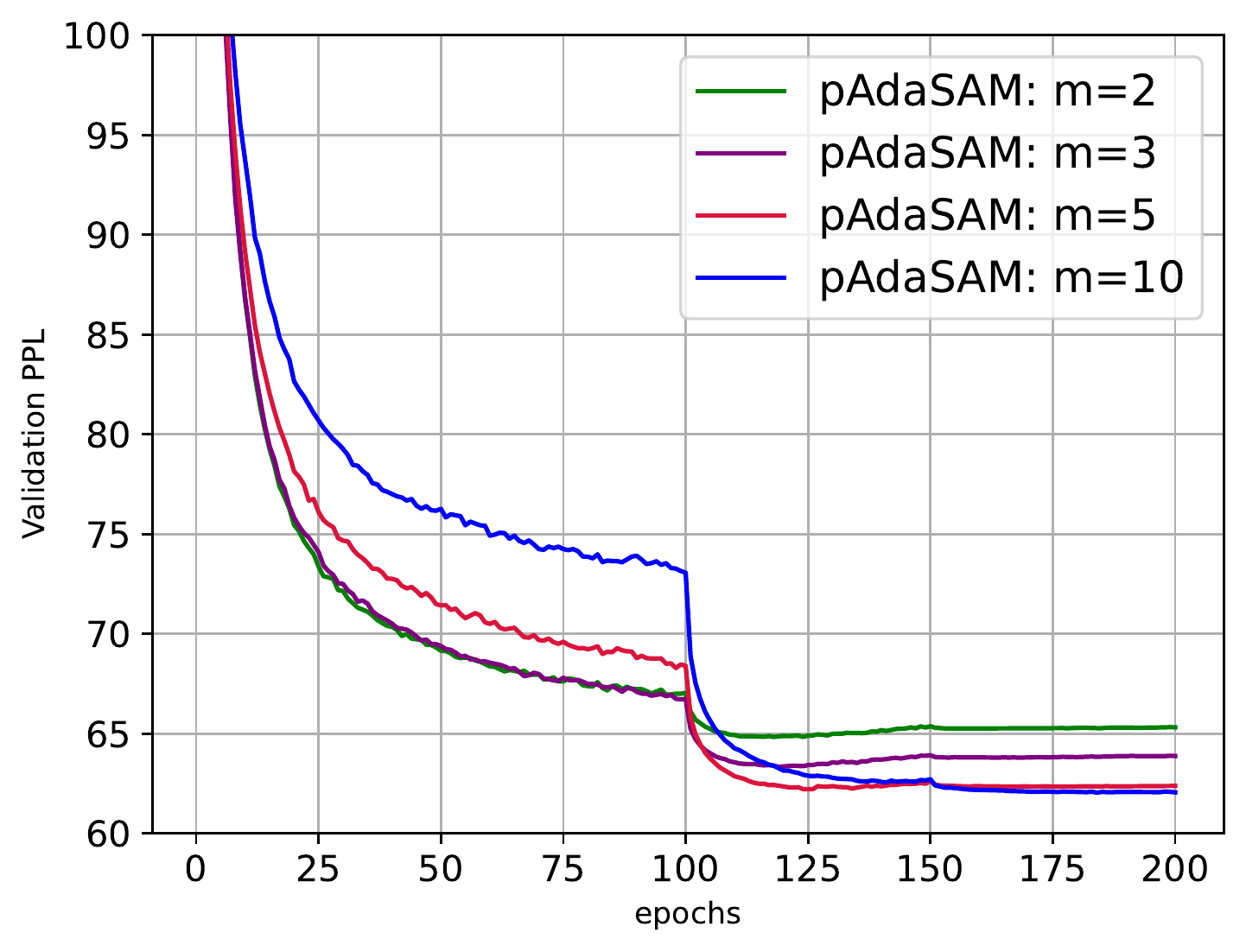}
}
\caption{(a)(b) Experiments on CIFAR-10/ResNet20. AdaSAM with $ m= 1,5,10,20$. (c)(d) Experiments on training a 3-layer LSTM on Penn TreeBank. pAdaSAM with $ m=2,3,5,10. $ }
\label{fig:appendix_test_hist2}
\end{figure}
 
 The results related to different $ m $ in CIFAR-10/ResNet20 and 3-layer LSTM on Penn TreeBank are reported in Figure~\ref{fig:appendix_test_hist2}. A larger $ m $ seems to be beneficial to generalization ability. $m=5$ or 10 is proper for these tests.
 
 \subsection{Computational efficiency}
 The additional computational cost of AdaSAM/pAdaSAM  compared with SGD is mainly due to computing \eqref{AdaRAA:Gamma_k}. The cost is a potential limitation of our method. Fortunately, this part of computation is parallel-friendly since the main operation is dense matrix multiplications. Therefore, when the cost of function evaluations and gradient evaluations dominates the computation, the extra overhead incurred by AdaSAM is negligible. In high performance computing, we expect that the matrix computation in AdaSAM can be further optimized. 
 
 \begin{figure}[ht]
\centering 
\subfigure[Running time of one epoch]{
\includegraphics[width=0.4\textwidth]{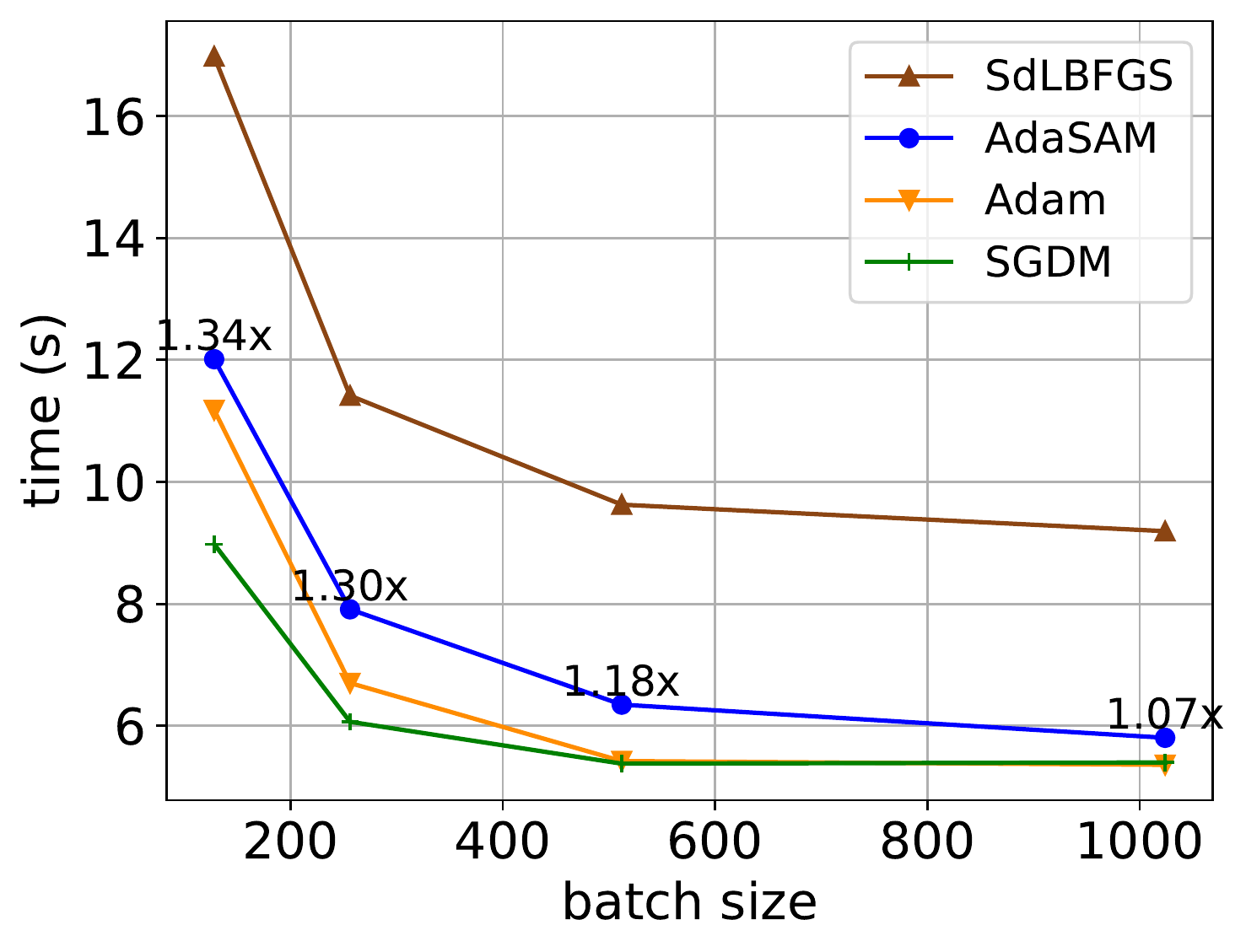}
}
\subfigure[Running time of one epoch]{
\includegraphics[width=0.4\textwidth]{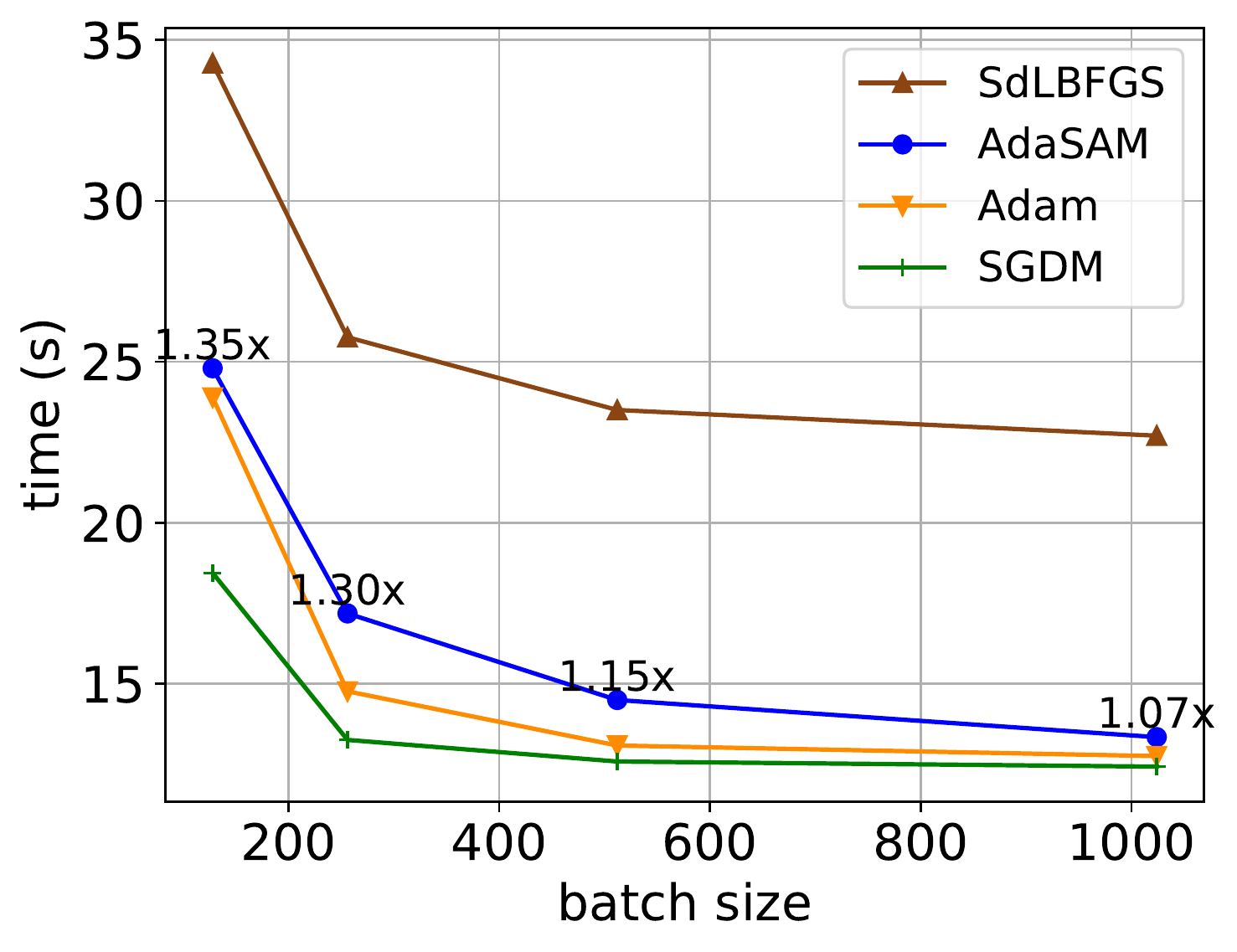}
}
\caption{(a) Running time of one epoch of training CIFAR-10/ResNet20; (b) Running time of one epoch of training CIFAR-10/ResNet56. Batch size = 128, 256, 512, 1024. The numbers marked beside the curve of AdaSAM show the computational time of AdaSAM vs. SGDM.}
\label{fig:appendix_test_batchsize}
\end{figure}
 
 Figure~\ref{fig:appendix_test_batchsize} reports the running time of one epoch of training ResNet20 and ResNet56 on CIFAR-10 with batch size of 128, 256, 512, 1024.   Optimizers are SGDM, Adam, SdLBFGS and AdaSAM. 
 It can be observed that the additional overhead of AdaSAM gradually becomes marginal with the increment of batch size. Therefore, AdaSAM is expected to be more computationally efficient in large mini-batch training. Moreover, as confirmed by the experiments (Figure~\ref{fig:appendix_CIFAR_stop}), AdaSAM can achieve comparable test accuracy while using less epochs of training, thus saving large number of gradient evaluations. The proposed alternating iteration scheme can also serve as a trade-off between computational cost with final accuracy or loss.
 
 

\end{document}